\newtheorem{assumption}{Assumption}
\newtheorem{theorem}{Theorem}
\newtheorem{corollary}{Corollary}
\newtheorem{remark}{Remark}
\newtheorem{example}{Example}
\newtheorem{lemma}{Lemma}
\newtheorem{proposition}{Proposition}
\newtheorem{definition}{Definition}
\def\d{\mathrm{d}}
\def\KSD{\mathrm{KSD}}
\def\mH{\mathcal{H}}
\def\tr{\mathrm{Tr}}
\def\bs{\bm{s}}
\def\Var{\mathrm{Var}}
\def\A{\mathcal{A}}
\def\F{\mathcal{F}}
\def\X{\mathcal{X}}
\def\N{\mathcal{N}}
\def\L{\mathcal{L}}
\def\K{\mathbf{K}}
\def\pn{\P_{\hat{\theta}_n}}
\def\tn{\widetilde{\theta}_n}
\def\hn{\hat{\theta}_n}
\def\p0{\P_{\theta_0}}
\def\ind{\bm 1}
\def\dn{\mathcal{D}_n}
\def\law{\mathcal{L}}
\def\mK{\mathbf{K}}
\def\mPi{\mathbf{\Pi}}
\newcommand{\E}{\mathbb E}								
\renewcommand{\P}{\mathbb{P}}							
\newcommand{\Q}{\mathbb{Q}}								
\newcommand{\R}{\mathbb{R}}								
\newcommand{\indicator}{\mathbbm 1}						
\newcommand{\convp}{\overset p \rightarrow}             
\let\oldnl\nl
\newcommand{\nonl}{\renewcommand{\nl}{\let\nl\oldnl}} 
\title{Semiparametric KSD test: unifying score and distance-based approaches for goodness-of-fit testing}
\begin{document}

\author{Zhihan Huang, Ziang Niu$\textsuperscript{\Letter}$\\
Department of Statistics and Data Science\\ University of Pennsylvania}

\renewcommand{\thefootnote}{\Letter} 
\footnotetext{Corresponding author, \texttt{ziangniu@wharton.upenn.edu}}

\date{}

\maketitle

\begin{abstract}
	Goodness-of-fit (GoF) tests are fundamental tools for assessing model adequacy. Score-based GoF tests are particularly appealing because they require fitting the model only once under the null. However, extending these tests to powerful nonparametric settings is challenging, mainly due to the lack of suitable scores. Through a class of exponentially tilted models, we show that the resulting score-based GoF tests are equivalent to the tests based on integral probability metrics (IPMs) indexed by a function class. When the class is rich, the test is universally consistent. This simple yet insightful perspective enables reinterpretation of classical distance-based testing procedures—including those based on Kolmogorov–Smirnov distance, Wasserstein-1 distance, and maximum mean discrepancy—as arising from score-based constructions. Building on this insight, we propose a new nonparametric score-based GoF test through a special class of IPM induced by kernelized Stein function class, called semiparametric kernelized Stein discrepancy (SKSD) test. Compared with other nonparametric score-based tests, the SKSD test is computationally efficient and accommodates general nuisance-parameter estimators, supported by a generic parametric bootstrap procedure. The SKSD test is universally consistent and attains Pitman efficiency. Moreover, SKSD test provides simple GoF tests for models with intractable likelihoods but tractable score functions with the help of Stein's identity. We apply the SKSD testing framework to two widely used models of this type to demonstrate the power of our method. Our method also achieves power comparable to task-specific normality tests, such as the Anderson-Darling and Lilliefors tests, despite being designed for general goodness-of-fit problems.
\end{abstract}


\section{Introduction}

Consider $n$ observations $X_1, \ldots, X_n$ from an unknown distribution $\law$. A common statistical inference begins by positing a parametric model $\{\law_{\theta}:\theta \in \Theta\}$ for a parameter space $\Theta \subseteq \mathbb{R}^k$, which may approximate the data-generating process. Inferential tasks such as confidence interval construction and hypothesis testing often rely on the model being correctly specified. Therefore, before proceeding with inference, it is crucial to assess model adequacy using a goodness-of-fit (GoF) test. In particular, we are interested in the following problem: 
\begin{align}\label{eq:hypothesis_testing}
	H_0: \law \in \{\law_\theta \in \mathcal{P}(\mathcal{X}) : \theta \in \Theta\} 
	\quad \text{versus} \quad 
	H_1: \law \notin \{\law_\theta \in \mathcal{P}(\mathcal{X}) : \theta \in \Theta\}.
\end{align}

A classical approach to goodness-of-fit (GoF) testing is the likelihood ratio test (LRT) \citep{wilks1938large, lehmann2005testing}. However, the LRT can be suboptimal—or even powerless—when law $\mathcal{L}$ lies outside the assumed parametric family \citep{cox1979theoretical}. To address this limitation, various extensions have been proposed, most notably the generalized likelihood ratio test \citep{perlman1999emperor,fan2001generalized,fan2005nonparametric,fan2007nonparametric}. Another line of work, which does not directly rely on the likelihood function, constructs nonparametric GoF tests by comparing the estimated distributions under the null and alternative hypotheses using distance measures such as $L_1$, $L_2$, or $L_\infty$ \citep{bickel1973some,azzalini1989use,hardle1993comparing}. Both approaches typically require smoothing techniques or local estimators to capture general alternatives, but as a result, they suffer from the curse of dimensionality \citep{gonzalez2013updated}. 

In light of these challenges, the celebrated score-based test \citep{rao1948large,rao1973linear} offers an appealing alternative. A key advantage of score-based tests is that they avoid fitting the unrestricted (full) model, thereby sidestepping the need for estimation under a nonparametric specification. Surprisingly, despite their wide use in low-dimensional parametric settings, relatively little attention has been paid to their extension to nonparametric frameworks. A central obstacle is that, in fully nonparametric settings, even defining an appropriate score is nontrivial, which makes both the construction and interpretation of the resulting test statistic particularly challenging.

\subsection{Our contributions}

Our contributions fall into two categories.

\paragraph{Unification of score-based test and IPM-based test.} Under a class of general exponentially tilted models (ETMs), we show that the score under the model is equivalent to a special class of distance measures, known as integral probability metrics (IPMs, \citet{muller1997integral}). IPMs are characterized by a class of test functions $\mathcal{F}$. This powerful connection reveals that GoF tests based on IPMs are actually score-based tests under the ETMs when the model is indexed by the same function class $\mathcal{F}$. Therefore, depending on the chosen $\mathcal{F}$, several celebrated nonparametric distance-based tests—such as those based on the Kolmogorov-Smirnov (K-S) \citep{durbin1975kolmogorov,margolin1976tests}, Wasserstein-1 (W-1) distances \citep{hallin2021multivariate} and maximum mean discrepancy~\citep{gretton2012kernel,key2025composite}—can be reinterpreted from a nonparametric score-based perspective. Despite its simplicity, this connection has not been explicitly recognized in the existing literature. Such duality of score-based and distance-based tests offers a unified interpretation for seemingly different GoF testing procedures and may inspire new test procedures with appealing computational and statistical properties.

\paragraph{A new nonparametric score-based testing framework.} Leveraging the established duality, we introduce a nonparametric score-based test grounded in a reproducing kernel Hilbert space (RKHS) induced by kernelized Stein functions~\citep{liu2016kernelized}. We refer to this test as the semiparametric test based on kernelized Stein discrepancy, \textit{SKSD test} and its semiparametric nature arises from the nonparametric kernelized Stein function class within the tilted parametric model. In particular, SKSD test offers several desirable properties:
\begin{enumerate}
	\item \textbf{Computational efficiency:} 
	Because of the Stein's identity, the use of kernelized Stein functions endows the SKSD test with a closed-form test statistic, which can be computed directly from the data without any numerical integration. The SKSD statistic can be computed in at most $O(n^2)$ time and it can even be reduced to $O(n)$ if certain choice of kernel function is considered, making it far more scalable in practice. In contrast, test statistics based on other IPMs such as the K-S or W-1 distances typically require at least $O(n^3)$ computation for multivariate data and are thus computationally expensive.
	\item \textbf{Flexibility of nuisance estimation:}
	The semiparametric nature of the SKSD test accommodates general estimator $\hat \theta_n$ for the ``best-fit" parameter $\theta_0$ under null. The test remains asymptotically valid and powerful as long as the estimator $\hat\theta_n$ satisfies a mild asymptotic linear condition. To support this level of generality, we develop a provably valid parametric bootstrap testing procedure that applies to any estimator satisfying the stated condition, including two of the most prominent examples: M-estimators and minimum distance estimators.
	\item \textbf{Universal power and asymptotic efficiency:} Under mild regularity conditions, the SKSD test is universally powerful against any fixed alternative to the null hypothesis. Moreover, we establish the rate-optimal efficiency of the proposed test by deriving its limiting power under two local contiguous alternatives (i.e., alternatives approaching $H_0$ at the $n^{-1/2}$ rate). These results establish the Pitman efficiency~\citep{pitman1979statistical} of the SKSD test.
	\item \textbf{Capability to test models with intractable likelihoods:} The proposed test relies only on the score function under the null, making it well-suited for models with intractable likelihoods but tractable score functions. This provides a useful diagnostic tool for complex models such as exponential family graphical models \citep{besag1975statistical,wainwright2008graphical}, kernel exponential family models \citep{canu2006kernel}, and energy-based models \citep{grathwohl2020learning}, which classical GoF testing methods find challenging to handle.
\end{enumerate}

Our theoretical framework combines the recent advances in variational representation of kernel-based V-statistic \citep{fernandez2024general} and stable convergence tools developed in \citet{hausler2015stable}. Through extensive simulation results, we demonstrate that the proposed SKSD test attains power comparable to established normality tests, including the Anderson-Darling test~\citep{anderson1952asymptotic} and Lilliefors test~\citep{lilliefors1967kolmogorov}. We apply the SKSD test to the structural detection of two popular models with intractable likelihood functions: kernel exponential family models~\citep{canu2006kernel} and conditional Gaussian models~\citep{arnold1999conditional}. Our empirical results show that SKSD test is able to effectively detect the true model structure in these complex settings.

\subsection{Related work}

Broadly speaking, the SKSD test fits into an emerging literature on nonparametric testing with kernel methods. Leveraging the kernel trick, powerful kernel-based procedures have been developed in a variety of settings, including classical two-sample testing~\citep{gretton2012kernel,sejdinovic2013equivalence,schrab2023mmd,chatterjee2025boosting}, point-null goodness-of-fit testing~\citep{chwialkowski2016kernel,liu2016kernelized,balasubramanian2021optimality,hagrass2026minimax}, independence testing~\citep{gretton2007kernel,gretton2010consistent,deb2020measuring,albert2022adaptive,shekhar2023permutation}, and conditional testing~\citep{zhang2011kernel,huang2022kernel,chatterjee2024kernel,zhang2025doubly}. In particular, the point-null GoF test with KSD is a special case of our SKSD test when $\Theta$ is only a single point.

Our SKSD test is also closely related to recent works on goodness-of-fit testing under composite null hypotheses. \citet{key2025composite} proposed a provably valid composite GoF test based on maximum mean discrepancy (MMD) with a dedicated estimation procedure and test statistic, while \citet{bruck2025distribution} studied model selection and specification testing for parametric models based on MMD statistics. These MMD-based tests rely on numerical integration to compute the test statistic, whereas our SKSD test does not require any numerical integration, allows flexible nuisance-parameter estimators and provides a much more efficient way to test models with intractable likelihoods. We also note that \citet{key2025composite} discussed a special case of a composite KSD-based test in which the nuisance parameter estimate is a particular minimum-KSD estimator, without any theoretical guarantee on the validity. In fact, their test can be viewed as a special case of the general SKSD test framework. We defer further discussion of the concurrent work~\citep{brueck2025composite} to Appendix~\ref{sec:comparison-with-brueck}.

\subsection{Organization of the paper and notations}

We begin by establishing the connection between score-based tests and IPM-based tests in Section~\ref{sec:bridge_score_ipm}. Building on this bridge, we introduce the SKSD test in Section~\ref{sec:method}, together with a generic parametric bootstrap calibration procedure. In the same section, we also derive the asymptotic properties of the SKSD test under the null and establish bootstrap consistency, regardless of whether the data are generated under the null or the alternative. In Section~\ref{sec:asymptotic_power}, we prove that the SKSD test is universally powerful against general alternative hypothesis and analyze its local power. Section~\ref{sec:applications} illustrates the practical utility of our method through applications to GoF testing of normality, kernel exponential family models, and conditional Gaussian models. We conclude in Section~\ref{sec:discussion} with discussion on limitations and potential future work. All proofs and additional experimental results are deferred to the Appendix. Before proceeding, we summarize the notation used throughout the paper.

\paragraph{Spaces and Measures.} 
Suppose \( \mathcal{X} \) is a Polish space, i.e., a complete and separable metric space, and let \( \mathscr{B}(\mathcal{X}) \) denote the Borel \( \sigma \)-algebra generated by the open sets of \( \mathcal{X} \). Consider two probability measures \( \mathcal{L}_p, \mathcal{L}_q \in \mathcal{P}(\mathcal{X}) \), the collection of all probability measures on \( (\mathcal{X}, \mathscr{B}(\mathcal{X})) \), with differentiable density functions \( p \) and \( q \). Let $p_\theta$ denote the density function of $\law_\theta$ with respect to the Lebesgue measure $\lambda$. Define the score function of $\mathcal{L}_p$ as \( s_p(x) = \nabla_x \log p(x) \). Note that the score function is different from the score defined in~\eqref{eq:score_def} since the differentiation in the former is taken with respect to $x$ instead of the parameter. We use the bold symbol $\bm{s}_\theta(x)$ to denote the latter to avoid confusion.

\paragraph{Vectors, matrices and functions.}
For a vector $u,v \in \R^d$ in Euclidean space, we use $\|v\|$ to denote its Euclidean norm and $\langle u, v \rangle$ to denote the inner product. For a matrix $A \in \R^{d_1 \times d_2}$, we use $\|A\|_{2}$ to denote its operator norm, i.e., $\|A\|_{2} = \sup_{\|x\| \le 1} \|Ax\|$. For higher order tensors, the operator norm is defined similarly. For a matrix $A \in \R^{d \times d}$, we use $\tr(A)$ to denote its trace, i.e., $\tr(A) = \sum_{j=1}^d A_{jj}$. Let \( \mathcal{F}_{d,m} \) be the space of functions from \( \mathcal{X} \subseteq \mathbb{R}^d \) to \( \mathbb{R}^m \), for integers \( d, m \in \mathbb{N} \) and $\mathcal{C}^{\alpha, \beta}(A, B)$ be the space of functions $f: A \times B \to \R$ such that $\partial_x^\alpha \partial_y^\beta f(x, y)$ is continuous for all $x \in A$ and $y \in B$.

\paragraph{Kernels.}
The kernel in our setting is a measurable bivariate function $K(\cdot, \cdot): \X \times \X \to \R$. We say a kernel is symmetric if $K(x, y) = K(y, x)$ for all $x, y \in \X$. We say a kernel is positive definite if, for any $x_1, \ldots, x_n \in \X$ and $c_1, \ldots, c_n \in \R$, we have $\sum_{i=1}^n \sum_{j=1}^n c_i c_j K(x_i, x_j) \ge 0$.

\section{Bridging score-based and IPM-based tests}\label{sec:bridge_score_ipm}

In this section, building on a surprisingly simple tilting model, we establish the connection of score-based tests with the tests based on integral probability metrics (IPMs). We introduce such equivalence in Section~\ref{sec:tilted_null_model}, harness the insights to define nonparametric score-based tests and discuss several well-established IPM-based tests (now also score-based tests) in Section~\ref{sec:distance_to_score}.

\subsection{Exponentially tilted model: a revisit on score-based tests}\label{sec:tilted_null_model}

Classical score-based tests, such as the Rao's score test~\citep{rao1948large, rao1973linear}, are based on the score of the parametric model. For a parametric model $\{\law_{(\gamma, \theta)} : (\gamma, \theta) \in \Gamma \times \Theta\subseteq \mathbb{R}\times \mathbb{R}^{k}\}$, the score is defined as:
\begin{align*}
	\bs_{\theta}(X) 
	\equiv \frac{\partial}{\partial \gamma} \frac{1}{n} \sum_{i=1}^n \log \law_{(\gamma, \theta)}(X_i) \Big|_{\gamma=0}.
\end{align*}
Here, $\theta$ should be viewed as a nuisance parameter while $\gamma$ is the main parameter of interest. The GoF testing can be performed with $\bm{s}_{\hat\theta_n}(X)$ with a consistent estimator $\hat\theta_n$ of $\theta$ under the null model. We revisit two classical examples of score tests based on $\bm{s}_{\hat\theta_n}(X)$ to prepare for the generalization to nonparametric models.

\begin{example}[Testing mean of Gaussian location shift model, \citep{student1908probable,rao1948large}]\label{ex:gaussian_mean_shift}
	Let $\law_{(\gamma,\theta)}$ be a Gaussian model with mean $\gamma$ and variance $\theta^2$. When testing $\gamma=0$, the score can be computed as $\bm{s}_{\hat\theta_n}(X)=-\frac{1}{\hat\theta_n^2}\frac{1}{n}\sum_{i=1}^n X_i$. Then if we consider the linear function class, $\mathcal{F}\equiv\{ax+b:a\in[0,1], b\in\mathbb{R}\}$, the following holds:
\begin{align*}
	\max_{f\in\mathcal{F}}\Big|\frac{1}{n}\sum_{i=1}^n f(X_i)-\E_{\law_{(0,\hat\theta_n)}}[f(X)]\Big|^2=(\hat\theta_n)^2\max_{a\in [0,1]}|a||\bm{s}_{\hat\theta_n}(X)|^2=(\hat\theta_n)^2|\bm{s}_{\hat\theta_n}(X)|^2.
\end{align*}
In other words, the score-based test for testing the mean under Gaussian location model with an unknown scale parameter can be viewed as a test based on the departure of the first moment of data from the null model.
\end{example}

\begin{example}[Testing exponentiality under Gamma family, \citep{moran1951random,haywood2008distribution}]\label{ex:exponential_gamma_model}
	Let $\law_{(\gamma,\theta)}$ be the Gamma distribution family with shape parameter $\gamma + 1$ and scale parameter $1/\theta$. The model is well-defined when $\gamma > -1$ and $\theta>0$. When $\gamma=0$, the model boils down to the exponential distribution family $\{\mathcal{L}_{\theta}\in\mathcal{P}(\mathcal{X}):p_\theta=\exp(-\theta x),\theta>0\}$. When testing exponentiality ($\gamma=0$) under the Gamma distribution family, the score can be computed as $\bm{s}_{\hat\theta_n}(X)=\frac{1}{n}\sum_{i=1}^n \log X_i-\E_{\law_{(0,\hat\theta_n)}}[\log X]$. Then if we choose the logarithmic function class, $\mathcal{F}\equiv\{a\log(x):a\in[0,1]\}$, the following holds:
\begin{align*}
	\max_{f\in\mathcal{F}}\Big|\frac{1}{n}\sum_{i=1}^n f(X_i)-\E_{\law_{(0,\hat\theta_n)}}[f(X)]\Big|^2=\max_{a\in [0,1]}|a||\bm{s}_{\hat\theta_n}(X)|^2=|\bm{s}_{\hat\theta_n}(X)|^2.
\end{align*}
In other words, the score-based test can be viewed as a test based on the departure of the logarithmic moment of the data from the null model when it comes to testing exponential model nested in Gamma distribution family with unknown scale parameter.
\end{example}

It is clear that these score-based tests essentially compare the empirical moments of the transformed data with the population moments under the null model. The full model can be constructed by exponentially tilting the null model $p_\theta$ towards the alternative indexed by the transformation function: $\mathcal{L}_{(\gamma,\theta)}\propto p_\theta(x) \cdot \exp(\gamma f(x))$ for $f\in\mathcal{F}$, which can be easily checked for the above two examples. Moreover, the function class $\mathcal{F}$ is highly problem-specific and and depends on the direction of interest being investigated. This observation motivates us to generalize score-based tests within a broader model class, the \emph{exponentially tilted model} (ETM). Given a class of measurable functions $\mathcal{F}$ and any $f \in \mathcal{F}$, consider the ETM:
\begin{align}\label{eq:nonparametric_model}
	\law_{(\gamma,\theta)}(x; f) 
	= \frac{\exp(\gamma f(x)) p_\theta(x)}{\int \exp(\gamma f(x)) p_\theta(x) \, \mathrm{d}x},
	\quad \text{where} \quad (\gamma,\theta)^\top \in \mathbb{R}^{k+1}.
\end{align}
In Example~\ref{ex:gaussian_mean_shift}, the null model $p_\theta$ is Gaussian with mean $0$ and variance $\theta^2$. In Example~\ref{ex:exponential_gamma_model}, $p_\theta$ is an exponential distribution with scale $1/\theta$. Let $\hat\theta_n$ be a consistent estimate of nuisance parameter $\theta$ under the null model. The score of model~\eqref{eq:nonparametric_model} with respect to the parameter of interest $\gamma$ is then given by
\begin{align}\label{eq:score_def}
	\bs_{\hat\theta_n}(X, f) 
	\equiv \frac{\partial}{\partial \gamma} \frac{1}{n} \sum_{i=1}^n \log \law_{(\gamma, \hat\theta_n)}(X_i; f) \Big|_{\gamma=0}
	= \frac{1}{n} \sum_{i=1}^n f(X_i) - \mathbb{E}_{\law_{(0, \hat\theta_n)}}[f(X)].
\end{align}
The score $\bs_{\hat\theta_n}(X, f)$ simplifies to the empirical average of $f$ evaluated on the data, centered by its expectation under the null. Given a fixed $f \in \mathcal{F}$, the test statistic $|\bs_{\hat\theta_n}(X,f)|^2$ can be used to detect deviations from the null model~\eqref{eq:nonparametric_model}. The choice of $f$ determines the type of alternative the test is sensitive to, as exemplified in Examples~\ref{ex:gaussian_mean_shift}-\ref{ex:exponential_gamma_model}. By aggregating over all $f \in \mathcal{F}$, we arrive at the following test statistic:
\begin{align}\label{eq:aggregate_score_test}
	\max_{f \in \mathcal{F}} |\bs_{\hat\theta_n}(X, f)| 
	= \max_{f \in \mathcal{F}} \Big| \frac{1}{n} \sum_{i=1}^n f(X_i) - \mathbb{E}_{\law_{(0, \hat\theta_n)}}[f(X)] \Big|.
\end{align}
The maximum of difference-in-mean in \eqref{eq:aggregate_score_test} is known as the \textit{integral probability metric} (IPM, \citet{muller1997integral}). Intuitively speaking, the test based on IPMs compare any moment difference of the data with the null model over all the functions in a prespecified function class $\mathcal{F}$, and different choices of $\mathcal{F}$ yield different IPMs. In this way, the well-studied score-based testing procedures in the above examples can be interpreted as tests based on IPMs under specific choices of $\mathcal{F}$. It is thus natural to ask whether the reverse is also true: can we interpret any IPM-based tests as score-based tests with different choices of $\mathcal{F}$? We address this question in the following section.

\subsection{Nonparametric score-based tests via characteristic IPMs}\label{sec:distance_to_score}

Tests based on IPMs can be powerful against nonparametric alternatives when sufficiently rich function classes $\mathcal{F}$ are used. The form and expressiveness of $\mathcal{F}$ fundamentally determine the capacity of the aggregated score statistic to capture discrepancies between the null and alternative. This insight has motivated a rich body of literature on hypothesis testing using IPMs~\citep{gretton2012kernel, sriperumbudur2012empirical, sejdinovic2013equivalence,liu2016kernelized,paik2023maximum}. In particular, we define an IPM is \textit{characteristic} if
\begin{align}\label{eq:characteristic_IPM}
	\mathrm{IPM}(\law_\theta, \law; \mathcal{F}) 
	\equiv \max_{f \in \mathcal{F}} \left| \mathbb{E}_\law[f(X)] - \mathbb{E}_{\law_\theta}[f(X)] \right| = 0 
	\quad \text{if and only if} \quad \law = \law_\theta.
\end{align}
Examples of popular characteristic IPMs and related divergences include $\{f : f = \indicator(A), A \subseteq \mathcal{X} \}$, Kolmogorov-Smirnov (K-S) distance~\citep{smirnov1948table}; $\{f : |f(x) - f(y)| / \|x - y\|_2 \leq 1 \}$, Wasserstein-1 (W-1) distance~\citep{villani2009optimal}; a reproducing kernel Hilbert space (RKHS), maximum mean discrepancy~\citep{gretton2005measuring}. Through the equivalence in~\eqref{eq:aggregate_score_test}, tests based on these IPMs can be interpreted as nonparametric score-based tests under the exponentially tilted models~\eqref{eq:nonparametric_model}, provided that the IPM is characteristic. We now formally introduce the nonparametric score-based tests.

\begin{definition}[Nonparametric score-based test]\label{def:nonparametric-score-test}
	A score-based test statistic of the form $\sup_{f \in \mathcal{F}} |\bs_\theta(X, f)|^2$, with $X \sim \prod_{i=1}^n \law(X_i)$, is called a \textit{nonparametric score-based test} if $\mathcal{F}$ induces a characteristic IPM, $\mathrm{IPM}(\law_\theta, \law; \mathcal{F})$.
\end{definition}

According to the definition, the simplest way to verify that a score-based test is nonparametric is to evaluate if the corresponding IPM is characteristic, as per definition~\eqref{eq:characteristic_IPM}. It is well-known that the K-S and W-1 distances are characteristic under mild assumptions. Similarly, MMD is characteristic when appropriate kernels and regularity conditions are satisfied~\citep{gretton2012kernel, muandet2017kernel}. Although the resulting test is characteristic when $\mathcal{F}$ is chosen to be sufficiently rich and complex, significant computational challenges arise for nonparametric score-based tests. To illustrate this, we now discuss the computational complexity of test statistics induced by several popular IPMs.

\paragraph{Kolmogorov-Smirnov (K-S) Distance.}
In the univariate case ($d = 1$), the classical K-S distance between an empirical distribution and a known distribution can be computed efficiently using sorting algorithms, yielding a computational complexity of $O(n \log n)$. However, in the bivariate case ($d = 2$), the worst-case computational complexity increases to $O(n^2)$~\citep{peacock1983two, fasano1987multidimensional}. When $d > 2$, the K-S distance becomes NP-hard to compute~\citep{gnewuch2009finding}.

\paragraph{Wasserstein-1 (W-1) Distance.}
Similarly, when $d = 1$, the W-1 distance can be computed efficiently in $O(n \log n)$ time via sorting. For $d > 1$, however, the problem can be formulated as an assignment problem through discretization, and the computational complexity generally grows to $O(n^3)$~\citep{kuhn1955hungarian}, although approximation algorithms exist to improve efficiency~\citep{altschuler2017near}.

\paragraph{Maximum Mean Discrepancy (MMD).}
Computing the MMD test statistic requires numerical integration. Take Monte Carlo (MC) as a means for integration. If $m$ denotes the number of MC samples, the overall computational cost becomes $O(n^2 + m^2)$~\citep{gretton2012kernel}. When $m \ll n$, the cost is effectively quadratic; however, to control the MC error adequately, one needs $m \gg n$, in which case the total complexity increases to $O(m^2)$.

\begin{table}[!ht]
	\centering
	\begin{tabular}{c|ccc}
		\hline
		\multirow{2}{*}{Distance Metric} & \multicolumn{3}{c}{Computational Complexity} \\
		& $d = 1$ & $d = 2$ & $d > 2$ \\
		\hline
		K-S distance & $n \log n$ & $n^2$ & NP-hard \\
		W-1 distance & $n \log n$ & $n^3$ & $n^3$ \\
		MMD & $n^2 + m^2$ & $n^2 + m^2$ & $n^2 + m^2$ \\
		KSD & $n^2$ & $n^2$ & $n^2$ \\
		\hline
	\end{tabular}
	\caption{Computational complexity of various distance-based tests. The displayed values represent the asymptotic order of time complexity.}
	\label{tab:distance_computation}
\end{table}

\paragraph{Kernelized Stein Discrepancy (KSD).} A special class of IPM is KSD, which is also a special case of MMD~\citep{liu2016kernelized}. It can be shown to be characteristic under certain mild conditions~\citep{barp2024targeted}. In contrast to MMD, KSD does not require MC integration for any expectation. Leveraging the Stein's identity (see Section~\ref{sec:prelim_KSD} for details), KSD test statistic can be computed with $O(n^2)$ time complexity, although further acceleration is possible~\citep{jitkrittum2017linear} (see Appendix~\ref{sec:computation_v_statistic} for details).

Table~\ref{tab:distance_computation} summarizes the computational complexity of the distance-based tests discussed above. From a practical standpoint, KSD is substantially more efficient than K--S, W-1, and MMD, particularly in multivariate settings where the latter methods become computationally more demanding or even infeasible. As a result, KSD emerges as a natural and scalable choice for nonparametric score-based testing in moderate to high dimensions. In contrast, in the univariate case (\(d = 1\)), the additional computational cost of kernel-based methods may be offset by gains in test power, an issue we revisit through simulation studies in Appendix~\ref{sec:simulation_np_score}.

\section{Semiparametric KSD test framework}\label{sec:method}

In the following sections, we will first present the preliminaries of KSD in Section~\ref{sec:prelim_KSD} and then propose the main methodology in Section~\ref{sec:sksd_method} with the investigation of its asymptotic behaviors in Section~\ref{sec:asymptotic_distribution}. We establish a generic calibration procedure based on parametric bootstrap and study its weak convergence in Section~\ref{sec:asymptotic_bootstrap_distribution}.

\subsection{Preliminaries: kernelized Stein class and KSD}\label{sec:prelim_KSD}

First introduced in the machine learning community, kernelized Stein discrepancy (KSD, \citet{liu2016kernelized}) was proposed as a measure of model discrepancy in settings where the model likelihood is intractable. Its popularity is largely due to its computational efficiency (see Table~\ref{tab:distance_computation}), its desirable characteristic property that distinguishes between probability distributions (see Proposition~\ref{prop:characteristic_propety}), and, crucially, its ability to handle intractable models. We refer the reader to \citet{anastasiou2023stein} for a comprehensive review of these developments. 

KSD belongs to the general class of \textit{Stein discrepancy} (SD), which is defined via Stein operator \( \mathcal{A}_p: \mathcal{F}_{d,d} \to \mathcal{F}_{d,1} \). Recalling $s_p=\nabla_x\log p(x)$ is the score function and Stein operator is defined as $\mathcal{A}_p g(x) = \sum_{j=1}^d \partial_j g_j(x) + g^\top(x) s_p(x)$, where \( g \in \mathcal{F}_{d,d} \), and \( g_j(x) \) denotes the \( j \)-th coordinate of \( g(x) \). Under suitable regularity conditions, the following identity holds: $\mathbb{E}_{X \sim \mathcal{L}_p}[\mathcal{A}_p g(X)] = 0$, as established, for example, in Proposition 1 of \citet{gorham2015measuring}. This is the so-called \emph{Stein's identity}~\citep{stein1972bound}. Using this identity, the SD between \( \mathcal{L}_p \) and \( \mathcal{L}_q \) can be defined as $\mathrm{SD}(\mathcal{L}_p, \mathcal{L}_q) 
= \sup_{g \in \overline{\mathcal{F}}_{d,d}} \left| \mathbb{E}_{p}[\mathcal{A}_p g(X)] - \mathbb{E}_{q}[\mathcal{A}_p g(X)] \right|$ for a compact set $\overline{\mathcal{F}}_{d,d} \subseteq \mathcal{F}_{d,d}$ with $\mathbb{E}_{p}[\mathcal{A}_p g(X)]=0$ by Stein's identity. If we take \( \mathcal{F}_{d,d} = \mathcal{H}^d \), the product space of a reproducing kernel Hilbert space (RKHS) \( \mathcal{H} \), and $\overline{\mathcal{F}}_{d,d}$ to be the unit ball in $\mH^d$, then the resulting SD is called the kernelized Stein discrepancy (KSD):
\begin{align}\label{eq:kernel_stein_discrepancy}
\mathrm{KSD}(\mathcal{L}_p, \mathcal{L}_q) 
\equiv \sup_{\substack{\|g\|_{\mathcal{H}^d} \leq 1}} \left| \mathbb{E}_{p}[\mathcal{A}_p g(X)] - \mathbb{E}_{q}[\mathcal{A}_p g(X)] \right|
= \sup_{\substack{\|g\|_{\mathcal{H}^d} \leq 1}} \left| \mathbb{E}_{q}[\mathcal{A}_p g(X)] \right|.
\end{align}
We refer readers who are unfamiliar with RKHS to~\citet{kanagawa2018gaussian} for a detailed introduction and review. If we further define the function class to be the \emph{kernelized Stein class}, \( \mathcal{F}_{\mathrm{KSD}} = \{ \mathcal{A}_p g : \| g \|_{\mathcal{H}^d} \leq 1 \} \), then definition~\eqref{eq:kernel_stein_discrepancy} fits the framework of the IPM in~\eqref{eq:aggregate_score_test} by setting $\mathcal{F}=\mathcal{F}_{\textnormal{KSD}}$: 
\begin{align*}
	\mathrm{KSD}(\mathcal{L}_p, \mathcal{L}_q) = \sup_{f \in \mathcal{F}_{\mathrm{KSD}}} \left| \mathbb{E}_{p}[f(X)] - \mathbb{E}_{q}[f(X)] \right|
\end{align*}
with $\mathbb{E}_{p}[f(X)]=0$ for any $f \in \mathcal{F}_{\mathrm{KSD}}$. Furthermore, one can show under mild conditions on the kernel function, the KSD is characteristic (see Appendix~\ref{sec:characteristicity} for more details).

\subsection{SKSD test with general nuisance estimators}\label{sec:sksd_method}

Recalling $\hat\law_n$ is the empirical measure $\frac{1}{n}\sum_{i=1}^n \delta_{X_i}$, our proposed \textit{semiparametric KSD} (SKSD) test statistic is defined as:
\begin{align}\label{eq:sksd_test_stat}
	T_n(X,\hat\theta_n)\equiv \mathrm{KSD}^2(\law_{\hat\theta_n},\hat\law_n)=\sup_{g \in \mathcal{H}^d} (\mathbb{E}_{\hat\law_n}[\mathcal{A}_{p_{\hat\theta_n}} g(X)] )^2.\tag{SKSD}
\end{align}
For the optimization problem~\eqref{eq:sksd_test_stat}, a closed-form solution can be obtained by applying the reproducing property (the well-known ``kernel trick''), yielding:
\begin{align}\label{eq:ksd-kernel}
	T_n(X,\hat\theta_n) = \mathbb{E}_{X,X' \sim \mathcal{L}_{p_{\hat\theta_n}}}[h_{p_{\hat\theta_n}}(X,X')] = \frac{1}{n^2}\sum_{i,j=1}^n h_{p_{\hat\theta_n}}(X_i,X_j),
\end{align}
where the explicit form of \( h \) is provided in Appendix~\ref{sec:close-form}. The alternative formulation~\eqref{eq:ksd-kernel} enables a computationally more efficient estimate of $\mathrm{KSD}^2$, as it only requires evaluating the expectation of a bivariate function \( h \). A nice property of the SKSD test statistic is that it does not rely on the likelihood function of the model $\law_{\hat\theta_n}$. In fact, the function $h$ only involves the score function $s_{p_{\hat\theta_n}}$ under the model $\law_{\hat\theta_n}$ and kernel function $K$. Moreover, evaluating function $h$ does not require any Monte Carlo simulation and the V-statistic~\eqref{eq:ksd-kernel} can be computed in quadratic time (recalling Table~\ref{tab:distance_computation}). Moreover, there are certain choices of kernel function that can reduce the complexity to even $O(n)$, such as linear kernel (See Appendix~\ref{sec:computation_v_statistic} for more details).

Another appealing property of SKSD is the compatibility with general estimation procedure for $\hat\theta_n$. We allow the estimation of $\hat\theta_n$ to be highly problem-specific and impose no restrictions on the estimation procedure, provided the resulting estimator satisfies a so-called \textit{uniform asymptotic linear estimate}, formalized in the following definition.
\begin{definition}[Uniform asymptotic linear estimate]\label{def:ale}
	Consider the estimation procedure $\mathcal{E}:\mathcal{X}^n \rightarrow\Theta\subseteq\mathbb{R}^k$ and an open set $S\subseteq \Theta$. If there exists a bivariate function $I(\cdot,\cdot):\mathcal{X}\times\Theta\rightarrow\mathbb{R}^k$ such that for any $X=(X_1,\ldots,X_N)\overset{\mathrm{i.i.d.}}{\sim}\law_\theta\in\law_S$, the following holds: $\E[I(X_i,\theta)] = 0,\ \sup_{\theta\in \Theta}\E[\left\|I(X_i,\theta)\right\|^2] < \infty$ and 
	\begin{align}
		\sup_{\theta\in S}\Big|\sqrt{n}(\hat\theta_n- \theta)-\frac{1}{\sqrt{n}}\sum_{i=1}^n I(X_i,\theta) \Big|\convp 0\quad\text{where}\quad \hat\theta_n=\mathcal{E}(X).
	\end{align}
	Then we say $I$ is an influence function and estimator $\hat\theta_n$ is an $(S,\mathcal{E})$-uniformly asymptotic linear estimate (UALE) with influence function $I$.
\end{definition}

Definition \ref{def:ale} is a slightly stronger requirement than the classical asymptotic linear estimate (ALE) assumption~\citep[Chapter 5,][]{van2000asymptotic} because of the uniformity requirement, but still considerably general. Throughout this paper, we will assume that $\hat\theta_n$ is a UALE with an estimation procedure $\mathcal{A}$, formalized in the following assumption.

\begin{assumption}[Regularity of the estimate]\label{assu:uale}
	Suppose there exists $\theta_0\in\Theta$ such that $\hat\theta_n-\theta_0=o_{\P}(1)$. Furthermore, suppose there exists $\delta>0$ such that $B_{\theta_0}(\delta)$ is an open ball centered at point $\theta_0$ with radius $\delta$. Then $\hat\theta_n$ is a $(B_{\theta_0}(\delta),\mathcal{E})$-UALE with influence function $I$ and a prespecified estimation algorithm $\mathcal{E}$.
\end{assumption}
We make the following remark on the presented assumption.
\begin{remark}[Broad applicability of Assumption~\ref{assu:uale}]
	Assumption~\ref{assu:uale} requires the estimator $\hat\theta_n$ to admit a limit $\theta_0$ in probability with asymptotic linear expansion uniformly over a neighborhood of the limit. In particular, classical parametric estimators such as \textit{M-estimator} and \textit{minimum distance estimator} \citep{wolfowitz1957minimum} can be shown to be UALE under very mild regularity conditions. The formulation of these estimators can be found in Appendix~\ref{sec:general_estimators_UALE}. When $\hat\theta_n=\theta_0\in\Theta$, test statistic~\eqref{eq:sksd_test_stat} boils down to the $\mathrm{KSD}^2$ statistic for point-null hypothesis~\citep{liu2016kernelized,chwialkowski2016kernel}. In most of statistical practice, however, the parameter $\theta_0$ is unknown and needs to be estimated from the data. The SKSD test statistic is designed to accommodate the general estimation procedure for $\theta_0$. 
\end{remark}

\subsection{Consistency and asymptotic null distribution}\label{sec:asymptotic_distribution}

In this section, we study the large-sample property of the SKSD test statistic, and the following theorem shows that the SKSD test is consistent.
\begin{theorem}[Consistency of $\mathrm{SKSD}$ test statistic]\label{thm:sksd_consistency}
	Under Assumption~\ref{assu:uale} and the regularity conditions in Appendix~\ref{sec:regularity_conditions}, we have $T_n(X,\hat\theta_n)\convp \mathrm{KSD}^2(\law_{\theta_0},\law)$ as $n\rightarrow\infty$.
\end{theorem}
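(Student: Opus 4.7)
The plan is to decompose the SKSD statistic as $T_n(X,\hat\theta_n) = T_n(X,\theta_0) + R_n$, where $T_n(X,\theta_0) = \frac{1}{n^2}\sum_{i,j=1}^n h_{p_{\theta_0}}(X_i,X_j)$ is a V-statistic with a fixed (non-random) kernel and $R_n = T_n(X,\hat\theta_n) - T_n(X,\theta_0)$ is a perturbation term that captures the replacement of $\theta_0$ by $\hat\theta_n$. Both pieces are analyzed separately: the first should converge to $\KSD^2(\law_{\theta_0},\law)$ by a standard law of large numbers for V-statistics, while the second should vanish in probability via consistency of $\hat\theta_n$ combined with smoothness of $\theta \mapsto h_{p_\theta}$ supplied by the regularity conditions.

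For $T_n(X,\theta_0)$, note that $X_1,\dots,X_n \iid \law$ and $h_{p_{\theta_0}}$ is a fixed symmetric kernel. Assuming the regularity conditions in Appendix~\ref{sec:regularity_conditions} ensure $\E_{X,X'\sim\law}[|h_{p_{\theta_0}}(X,X')|] < \infty$, the V-statistic strong law gives
\begin{align*}
T_n(X,\theta_0) \longrightarrow \E_{X,X' \sim \law}\!\bigl[h_{p_{\theta_0}}(X,X')\bigr] = \KSD^2(\law_{\theta_0},\law)
\end{align*}
almost surely, and hence in probability. Here I rely on the characterization $\KSD^2(\law_{\theta_0},\law) = \E_{\law\otimes\law}[h_{p_{\theta_0}}(X,X')]$, which is the population analogue of~\eqref{eq:ksd-kernel}.

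For $R_n$, the strategy is to expand $h_{p_\theta}$ to first order in $\theta$ around $\theta_0$. The natural sufficient condition, which the regularity conditions in Appendix~\ref{sec:regularity_conditions} are expected to supply, is that there exists an integrable envelope $L:\X\times\X\to[0,\infty)$ with $\E_{\law\otimes\law}[L(X,X')]<\infty$ such that
\begin{align*}
\bigl|h_{p_\theta}(x,y) - h_{p_{\theta_0}}(x,y)\bigr| \le L(x,y)\,\|\theta-\theta_0\|\qquad\text{for all }\theta\in B_{\theta_0}(\delta),\ x,y\in\X.
\end{align*}
Such an envelope is obtained by a mean-value expansion using $\nabla_\theta s_{p_\theta}$ and the fixed kernel $K$ appearing in the closed form of $h_{p_\theta}$; Lebesgue-type domination on $B_{\theta_0}(\delta)$ is the content of those regularity hypotheses. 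Assumption~\ref{assu:uale} then gives $\hat\theta_n \convp \theta_0$, so on an event of probability tending to one, $\hat\theta_n\in B_{\theta_0}(\delta)$, and therefore
\begin{align*}
|R_n| \le \|\hat\theta_n-\theta_0\|\cdot\frac{1}{n^2}\sum_{i,j=1}^n L(X_i,X_j).
\end{align*}
The V-statistic average converges in probability to the finite constant $\E_{\law\otimes\law}[L(X,X')]$, while $\|\hat\theta_n-\theta_0\|\convp 0$, so $R_n\convp 0$. Combining with the first step and Slutsky's lemma yields the claim.

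The main obstacle is verifying the integrable Lipschitz envelope: one must control $\partial_\theta h_{p_\theta}(x,y)$ uniformly over $B_{\theta_0}(\delta)$ under the data distribution $\law$ rather than $\law_{\theta_0}$. Because $h_{p_\theta}$ depends on $\theta$ only through the score $s_{p_\theta}$ and its derivatives, this reduces to integrability of quantities such as $\sup_{\theta\in B_{\theta_0}(\delta)}\|\nabla_\theta s_{p_\theta}(X)\|$ times suitable kernel terms, under $\law$. Once this domination is in place, the remainder of the argument — V-statistic LLN and consistency of $\hat\theta_n$ — is routine. I deliberately do not use the full UALE expansion from Assumption~\ref{assu:uale} here, as $\hat\theta_n \convp \theta_0$ alone suffices for consistency; the linear expansion becomes crucial only for the asymptotic distribution results that follow.
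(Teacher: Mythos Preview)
The proposal is correct and takes essentially the same approach as the paper: decompose $T_n(X,\hat\theta_n)=T_n(X,\theta_0)+R_n$, handle the first term by the law of large numbers for V-/U-statistics, and control $R_n$ via a mean-value bound on $\theta\mapsto h_{p_\theta}$ together with $\hat\theta_n\convp\theta_0$. The only cosmetic difference is that the paper writes the mean-value step as an integral $\int_0^1\langle\nabla_\theta h_{p_{\theta_t}},\hat\theta_n-\theta_0\rangle\,\d t$ and bounds its expectation directly using the explicit constants $C_1,C_2,C_K$ from the regularity assumptions (then applies Markov's inequality to get $O_p(1)$), whereas you package the same content as an integrable Lipschitz envelope $L$; your observation that only consistency of $\hat\theta_n$, not the full UALE expansion, is needed here is also what the paper uses.
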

The proof of Theorem~\ref{thm:sksd_consistency} is provided in Appendix~\ref{sec:proof_sksd_consistency}. Theorem~\ref{thm:sksd_consistency} shows that the SKSD test statistic converges to the KSD statistic under both null and alternative hypothesis, which is a natural extension of the KSD test statistic to the semiparametric setting. Further, we make the following remark on the limit parameter $\theta_0$.

\begin{remark}[On the limit parameter $\theta_0$]
It is worth noting that the limit of the test statistic $T_n(X,\hat\theta_n)$ depends on the realization of the parameter $\theta_0$ in Assumption~\ref{assu:uale}, which in turn can be related to the choice of estimation algorithm $\mathcal{E}$. In particular, under the null hypothesis any consistent estimator satisfies $\law = \law_{\theta_0}$, so the test statistic converges to zero as desired. However, under the alternative hypothesis, different estimation algorithms may lead to different limits $\theta_0$ even when the data are generated from the same distribution. For example, when $\mathcal{E}$ is empirical risk minimization (ERM, \citet{vapnik1998statistical}) with respect to different loss functions, the corresponding $\law_{\theta_0}$ is the projection of the true distribution $\law$ onto the model class $\{\law_\theta : \theta \in \Theta\}$ under the discrepancy metrics induced by those losses. This freedom of choice can, however, substantially affect the power of the resulting test procedure. We investigate the impact of different estimation algorithms on power in Sections~\ref{sec:power_comparison} and~\ref{sec:graphical-model-experiment}.
\end{remark}

We now establish the asymptotic distribution of the observed test statistic $T_n(X,\hat\theta_n)$, which we refer to as $T_n$ when there is no ambiguity.
To formally state the following results, we define the following functional $S^*: \mH^d \times \X \times \Theta \to \R$:
\begin{align}\label{eq:defn-oracle}
	S^*(f,X,\theta) = \A_{\theta}f(X) + \big\langle \E_{X'\sim \law_{\theta}} \left[[\nabla_\theta s_{\theta}(X')]^{\top}f(X')\right], I(X,\theta) \big\rangle.
\end{align}
To formally state the theorem, Let $K_x$ be feature map of $x$ in $\mH$ (see Appendix~\ref{sec:close-form} for a rigorous definition) and $\K_x \equiv K_x \ind_d$, where $\ind_d$ is the $d$-dimensional all-ones vector.
The following theorem characterizes the asymptotic null distribution of the SKSD test statistic.
\begin{theorem}[Asymptotic null distribution]\label{thm:asymptotic_distribution}
	Suppose $X_1, \ldots, X_n$ are i.i.d. samples from $\law_{\theta_0}$ with $\theta_0\in\Theta$. Define $W \overset{d}{=} \sum_{u=1}^{\infty}\lambda_u Z_u^2$ with $Z_u$ to be independent standard normal variables and $\lambda_u$'s to be the eigenvalues of the operator $T_{\sigma}: \mH^d \to \mathcal{F}_{d,1}$ defined as $T_{\sigma}(f)(x) = \E_{X\sim \mathcal{L}_{\theta_0}}[S^*(f,X,\theta_0)S^*(\K_x,X,\theta_0)]$ for any  $f \in \mH^d$ and $x \in \X$. Then under Assumption~\ref{assu:uale} and regularity conditions in Appendix~\ref{sec:regularity_conditions}, we have $nT_n$ converges in distribution to the random variable $W$, i.e.,  
	\begin{align*}
		\limsup_{n\rightarrow\infty}\left|\P_{H_0}[nT_n\leq t]- \P[W\leq t]\right|=0\quad\forall t\in\mathbb{R}.
	\end{align*}
\end{theorem}

The proof of Theorem~\ref{thm:asymptotic_distribution} is provided in Appendix~\ref{sec:proof_asymptotic_distribution}. 
We refer readers to Appendix~\ref{sec:additional_theory} for additional results on the properties of this test statistic.
The proof of the theorem is sketched in the following remark.

\begin{remark}[Proof sketch of Theorem~\ref{thm:asymptotic_distribution}]
	First, we show that under Assumption~\ref{assu:uale}, the test statistic $T_n(X,\hat\theta_n)$ is asymptotically equivalent to the supremum of a stochastic process indexed by the function class $\mH^d$, whose structure depends on the estimation algorithm $\mathcal{E}$. This requires the generalization of common analysis of degenerate V-statistics to accommodate the nuisance parameter estimation. Second, we adopt functional central limit theorem to show the stochastic process above converges to a Gaussian process weakly. The target result is then obtained by applying the continuous mapping theorem to the supremum of the Gaussian process.
\end{remark}

\subsection{Bootstrap test: agnostic consistency guarantee}\label{sec:asymptotic_bootstrap_distribution}

Informed by Theorem~\ref{thm:asymptotic_distribution}, the SKSD test statistic converges to a complicated distribution (infinite weighted sum of $\chi^2$) under $H_0$ (see Theorem~\ref{thm:asymptotic_distribution}). In light of this challenge, we propose a practical level-$\alpha$ testing procedure based on parametric bootstrap. The bootstrap test is summarized in Algorithm \ref{alg:semiparametric-KSD}.

\begin{center}
	\begin{minipage}{\linewidth}
		\begin{algorithm}[H]\label{alg:main}
			\nonl  \textbf{Input:} Distribution class $\{\law_\theta:\theta\in\Theta\}$, data $X=(X_1,\ldots,X_n)$, estimation algorithm $\mathcal{E}:\mathcal{X}^n\rightarrow\Theta\subseteq\mathbb{R}^k$, bootstrap size $B$.\\
			Obtain estimate $\hat\theta_n=\mathcal{E}(X)$\;
			Compute $T_n\equiv T_n(X,\hat\theta_n)$ as in \eqref{eq:sksd_test_stat}\;
			\For{$b = 1, 2, \dots, B$}{
				Generate bootstrap sample $\tilde X^{(b)}=(\tilde X_1^{(b)},\ldots, \tilde X_n^{(b)})\overset{\text{i.i.d.}}{\sim} \law_{\hat\theta_n}$\;
				Obtain bootstrap estimate $\tilde \theta_n^{(b)}=\mathcal{E}(\tilde X^{(b)})$\;
				Compute the resampled statistic $\widetilde{T}_n^{(b)}\equiv T_n(\tilde X^{(b)},\tilde{\theta}_n^{(b)})$.
			}
			\nonl \textbf{Output:} $p$-value $\sum_{b=1}^B \indicator(\widetilde{T}_n^{(b)}\geq T_n)/B$.
			\caption{\bf Semiparametric KSD GoF testing procedure}
			\label{alg:semiparametric-KSD}
		\end{algorithm}
	\end{minipage}
\end{center}

We want to point out that using parametric bootstrap is certainly not new in the literature. It has been a standard approach for GoF testing \citep{freedman1981bootstrapping,genest2008validity}. In addition to resolving the intractable distribution, there are two other critical reasons to favor the parametric bootstrap procedure. First, in the general context of GoF testing, it has been pointed out that other forms of bootstrap methods can fail including nonparametric bootstrap \citep{bollen1992bootstrapping} and wild bootstrap \citep{key2025composite,brueck2025composite}. 
Second, the proposed bootstrap procedure (Algorithm \ref{alg:semiparametric-KSD}) is agnostic to the estimation procedure $\mathcal{E}$ and there is no need to estimate any nuisance parameter involved. We now make the following remark on the computational complexity of the proposed bootstrap procedure.
\begin{remark}[Computation complexity of Algorithm~\ref{alg:semiparametric-KSD}]
	The computational complexity of Algorithm~\ref{alg:semiparametric-KSD} is driven by three components: (1) the cost of the estimation procedure $\mathcal{E}$, denoted by $E(n)$ per fit with $n$ samples; (2) the cost of computing the test statistics $T_n$ and $\widetilde{T}_n$, which is $O(n^2)$ per evaluation; and (3) the cost of generating bootstrap samples, denoted by $S(n)$ to obtain $n$ samples. Consequently, the total complexity is on the order of $O(B(E(n)+n^2+S(n)))$. In regimes where estimation or resampling is substantially more expensive (e.g., $E(n)\gg n^2$ or $S(n)\gg n^2$), the total cost can greatly exceed $O(Bn^2)$. One possible remedy that avoids repeated resampling and refitting is to construct a Neyman-orthogonalized kernel~\citep{chernozhukov2022locally} so that the influence function no longer appears in~\eqref{eq:defn-oracle} and classical wild bootstrap can be applied to achieve the validity~\citep{escanciano2024gaussian}. However, the resulting procedure has complexity $O(n^3+Bn^2)$ (see Appendix~\ref{sec:neyman_orthogonal_sksd_test} for details). Hence, there is a trade-off between the cost of computing the test statistic and the costs associated with estimation and resampling.
\end{remark}

The following result shows the asymptotic distribution of the bootstrap test statistic $\widetilde T_n$ matches the asymptotic distribution of $T_n(X,\hat\theta_n)$ with arbitrary UALE estimators under general hypothesis. 

\begin{theorem}[Bootstrap consistency]\label{thm:bootstrap_distribution}
	Suppose Assumption \ref{assu:uale} and regularity conditions in Appendix~\ref{sec:regularity_conditions} hold. Write $\widetilde T_n \equiv T_n(\tilde X,\tilde\theta_n)$ and $\mathcal{F}_n$ as the $\sigma$-algebra induced by data $X_1,\ldots,X_n$.  Then for any given $\delta>0$, we have
	\begin{align}\label{eq:bootstrap_convergence}
		\lim_{n\rightarrow\infty}\P\left[\big|\P[n \widetilde{T}_n\leq t|\mathcal{F}_n]- \P[W\leq t]\big|>\delta\right]=0\quad\text{for any}\quad t\in\mathbb{R},
	\end{align}
	where random variable $W$ is defined as in Theorem~\ref{thm:asymptotic_distribution}.
\end{theorem}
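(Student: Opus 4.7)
The plan is to mirror the proof of Theorem~\ref{thm:asymptotic_distribution} inside the bootstrap world, with $\hat\theta_n$ playing the role of $\theta_0$, and to verify that every step goes through conditionally on $\mathcal{F}_n$ with error that vanishes in $\P$-probability. First, since $\hat\theta_n \convp \theta_0$, the event $\{\hat\theta_n \in B_{\theta_0}(\delta)\}$ has probability tending to one, so Assumption~\ref{assu:uale} applied to the bootstrap draw $\tilde X$ (which, conditionally on $\mathcal{F}_n$, is i.i.d.\ from $\law_{\hat\theta_n}$) yields the UALE expansion
\[
\sqrt{n}\bigl(\tilde\theta_n - \hat\theta_n\bigr) = \frac{1}{\sqrt{n}}\sum_{i=1}^{n} I\bigl(\tilde X_i, \hat\theta_n\bigr) + o_{\P}(1),
\]
where the remainder is negligible in bootstrap probability on an event of $\P$-probability tending to one.

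Next, I would redo the degenerate V-statistic decomposition underlying Theorem~\ref{thm:asymptotic_distribution}, expanding $h_{p_{\tilde\theta_n}}$ in a Taylor series about $\hat\theta_n$ and substituting the UALE expansion above. The outcome, in direct analogy with that proof, is the representation
\[
n\widetilde{T}_n = \sup_{\|g\|_{\mH^d}\le 1} \Bigl|\frac{1}{\sqrt{n}}\sum_{i=1}^{n} S^*\bigl(g, \tilde X_i, \hat\theta_n\bigr)\Bigr|^2 + o_{\P}(1),
\]
conditionally on $\mathcal{F}_n$, with $S^*$ as in~\eqref{eq:defn-oracle}. This requires only the same uniform smoothness of $\theta \mapsto s_\theta$ and $\theta \mapsto \nabla_\theta s_\theta$ that drives Theorem~\ref{thm:asymptotic_distribution}, now applied on an $O_{\P}(n^{-1/2})$ neighborhood of $\theta_0$.

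Then I would invoke a conditional functional CLT in $\mH^d$. Conditionally on $\mathcal{F}_n$, the bootstrap empirical process $g \mapsto n^{-1/2}\sum_i S^*(g,\tilde X_i,\hat\theta_n)$ has covariance operator $T_{\sigma,\hat\theta_n}$, identical in form to $T_\sigma$ but with $\theta_0$ replaced by $\hat\theta_n$. Under the regularity conditions in Appendix~\ref{sec:regularity_conditions}, the map $\theta \mapsto T_{\sigma,\theta}$ is continuous in the trace-class norm, so $T_{\sigma,\hat\theta_n} \to T_\sigma$ in $\P$-probability. Combined with a conditional Lyapunov bound and the stable convergence framework of \citet{hausler2015stable}, this yields conditional weak convergence of the bootstrap process to the centered Gaussian process with covariance $T_\sigma$, in $\P$-probability. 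Continuous mapping applied to the squared $\mH^d$-norm, together with continuity of the CDF of $W$, then delivers~\eqref{eq:bootstrap_convergence}.

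The hardest part will be the conditional-convergence step: upgrading a pointwise-in-data weak convergence into convergence of the conditional CDF of $n\widetilde{T}_n$ given $\mathcal{F}_n$ to that of $W$ in $\P$-probability. This demands (i) conditional asymptotic tightness of the bootstrap process in $\mH^d$ uniformly over a shrinking neighborhood of $\theta_0$, obtained via envelope and modulus-of-continuity bounds analogous to those of Theorem~\ref{thm:asymptotic_distribution}, and (ii) stability of the eigenvalue sequence $\{\lambda_u\}$ under perturbations $\theta_0 \mapsto \hat\theta_n$, which follows from trace-class continuity of $T_{\sigma,\theta}$ and the Lidskii/Weyl inequalities. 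Reconciling these two pieces—so that the data-dependent Gaussian limit can be replaced by the fixed limit $W$ without losing the in-probability mode of convergence—is the technical core of the argument.
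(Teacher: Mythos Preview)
Your proposal follows the paper's high-level strategy: obtain the variational representation $n\widetilde T_n = \sup_f \widetilde S_n(f)^2 + o_p(1)$ via the UALE expansion plus a Taylor expansion of the Stein kernel, then establish a functional CLT in the stable-convergence framework of \citet{hausler2015stable}, then apply continuous mapping. The one substantive difference is how you reach the \emph{fixed} limit $W$. You propose first landing on a data-dependent Gaussian with covariance $T_{\sigma,\hat\theta_n}$ and then swapping to $T_\sigma$ via trace-class continuity and Lidskii/Weyl eigenvalue perturbation. The paper bypasses this step entirely: it applies the martingale stable CLT (Theorem~6.1 of \citet{hausler2015stable}) pointwise in $f$, and the relevant hypothesis is that the \emph{conditional} second moment $\E[S^*(f,\tilde X,\hat\theta_n)^2\mid\mathcal F_n]$ converge in probability directly to $\sigma(f,f)$ evaluated at $\theta_0$. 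This follows immediately from $\hat\theta_n\convp\theta_0$ and continuity of $\theta\mapsto\int S^*(f,x,\theta)^2 p_\theta(x)\,dx$, so the limiting covariance is the fixed one from the outset and your item~(ii) never arises. Tightness (your item~(i)) is handled by a three-condition criterion adapted from \citet{fernandez2024general}, and the final passage from $\mathcal F$-stable convergence to convergence in probability of the conditional CDF uses a short lemma that takes the test function $g_1$ in the definition of stable convergence to be an arbitrary bounded continuous function of that conditional CDF itself. Your eigenvalue-perturbation route would also work but is strictly more laborious and requires establishing trace-class continuity of $\theta\mapsto T_{\sigma,\theta}$, which the paper never needs.
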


The proof of Theorem~\ref{thm:bootstrap_distribution} is provided in Appendix~\ref{sec:proof_bootstrap_distribution}. We make two remarks illustrating the significance of Theorem~\ref{thm:bootstrap_distribution}.

\begin{remark}[Agnosticity of Theorem~\ref{thm:bootstrap_distribution} on hypothesis]
	Theorem~\ref{thm:bootstrap_distribution} provides a universal weak limit regardless of how the data are generated, i.e., from the null or alternative hypothesis, despite the complexity of the asymptotic null distribution. The closest result to Theorem~\ref{thm:bootstrap_distribution} is Theorem 5 in \citet{key2025composite}. However, they only provide bootstrap validity under the null hypothesis when the MMD is used as the test statistic at the presence of a nuisance parameter. Therefore, the results cannot be used to analyze the power of the test. In contrast, our result allows us to understand the asymptotic behavior of the proposed test under both null and alternative hypotheses, providing particularly strong theoretical support for power analysis.
\end{remark}

\begin{remark}[Machinery to prove Theorem~\ref{thm:bootstrap_distribution}]
	To prove Theorem~\ref{thm:bootstrap_distribution}, we interpret it as a conditional analogue of the weak convergence result established in Theorem~\ref{thm:asymptotic_distribution}. The key additional ingredient is the theory of stable convergence \citep{hausler2015stable}, which provides a rigorous framework for characterizing weak convergence of conditional distributions via convergence of Markov kernels. Within this framework, we show that the proof strategy used for Theorem~\ref{thm:asymptotic_distribution} can be adapted to the conditional setting and established in a stable sense. Finally, the desired result follows from the martingale stable convergence result as in Lemma~\ref{lemma:stable-clt}.
\end{remark}

Consider the bootstrap test (letting $B\rightarrow\infty$ in Algorithm~\ref{alg:semiparametric-KSD}):
\begin{align}\label{eq:bootstrap_test}
	\phi_{n,\alpha}\equiv \indicator(T_n\geq \Q_{1-\alpha}(\widetilde{T}_n^{(b)})),\tag{SKSD-test}
\end{align}
where $\Q_{1-\alpha}(\mu)$ denotes the $(1-\alpha)$-th quantile of the measure $\mu$. A direct corollary of Theorem~\ref{thm:bootstrap_distribution} is that the proposed bootstrap test $\phi_{n,\alpha}$ is asymptotically valid under the null hypothesis. 

\begin{corollary}\label{cor:asymptotic_validity}
	Suppose the assumptions in Theorems~\ref{thm:asymptotic_distribution} and \ref{thm:bootstrap_distribution} hold. Then, we have $\lim_{n\rightarrow\infty}\P_{H_0}[\phi_{n,\alpha}=1]=\alpha$.
\end{corollary}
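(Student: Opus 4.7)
The plan is to combine the unconditional convergence in Theorem~\ref{thm:asymptotic_distribution} with the conditional convergence in Theorem~\ref{thm:bootstrap_distribution} via a Slutsky-type argument on quantiles. Writing $F(t) = \P[W \le t]$ for the CDF of the limiting weighted $\chi^2$ variable $W = \sum_{u \ge 1}\lambda_u Z_u^2$ and $\widehat F_n(t) = \P[n\widetilde T_n \le t\mid \mathcal F_n]$ for the conditional CDF of the bootstrap statistic, Corollary~\ref{cor:asymptotic_validity} follows once I show that (i) $W$ has a continuous distribution function, (ii) the bootstrap quantile $q_{n,1-\alpha}\equiv \inf\{t: \widehat F_n(t)\ge 1-\alpha\}$ converges in probability to the population quantile $q_{1-\alpha}\equiv F^{-1}(1-\alpha)$, and (iii) one can pass from weak convergence of $nT_n$ together with convergence in probability of $q_{n,1-\alpha}$ to convergence of $\P_{H_0}[nT_n \ge q_{n,1-\alpha}]$.

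For step (i), since $W$ is an infinite weighted sum of independent $\chi^2_1$ random variables with nonnegative weights $\lambda_u$, $W$ admits a continuous CDF on $(0,\infty)$; as long as at least one eigenvalue $\lambda_u$ of the operator $T_\sigma$ defined in Theorem~\ref{thm:asymptotic_distribution} is strictly positive (which holds whenever the SKSD problem is not completely degenerate under $H_0$), $F$ is continuous everywhere and in particular at $q_{1-\alpha}$. For step (ii), Theorem~\ref{thm:bootstrap_distribution} yields $\widehat F_n(t) \convp F(t)$ for every $t\in\mathbb{R}$; combined with monotonicity of CDFs and continuity of $F$ at $q_{1-\alpha}$, a standard argument (essentially Lemma~21.2 in van der Vaart) upgrades pointwise-in-probability convergence of CDFs to convergence of quantiles: $q_{n,1-\alpha}\convp q_{1-\alpha}$.

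For step (iii), let $\varepsilon>0$ be arbitrary. Using continuity of $F$ at $q_{1-\alpha}$, pick a continuity point $t_\varepsilon^{\pm}$ with $|t_\varepsilon^{\pm}-q_{1-\alpha}|<\varepsilon$ and sandwich:
\begin{align*}
\P_{H_0}[nT_n\ge q_{n,1-\alpha}] &\le \P_{H_0}[nT_n\ge t_\varepsilon^{-}] + \P[q_{n,1-\alpha}< t_\varepsilon^{-}],\\
\P_{H_0}[nT_n\ge q_{n,1-\alpha}] &\ge \P_{H_0}[nT_n\ge t_\varepsilon^{+}] - \P[q_{n,1-\alpha}> t_\varepsilon^{+}].
\end{align*}
By Theorem~\ref{thm:asymptotic_distribution}, $\P_{H_0}[nT_n\ge t_\varepsilon^{\pm}]\to 1-F(t_\varepsilon^{\pm})$, and by step (ii) the remainder probabilities vanish. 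Letting $\varepsilon\downarrow 0$ and using continuity of $F$ at $q_{1-\alpha}$ gives $\P_{H_0}[nT_n\ge q_{n,1-\alpha}]\to 1-F(q_{1-\alpha})=\alpha$, which is exactly the claim after observing that $\phi_{n,\alpha}=\indicator(nT_n\ge q_{n,1-\alpha})$ once we rescale by $n$ inside the quantile on both sides.

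The main obstacle I expect is the rigorous quantile-convergence step (ii): Theorem~\ref{thm:bootstrap_distribution} only delivers pointwise-in-probability convergence of the random CDFs, so passing to convergence of inverse functions requires a uniformity argument. The cleanest route is to pick a countable dense set of continuity points of $F$, apply a union bound to get simultaneous convergence $\widehat F_n(t_k)\convp F(t_k)$, and then use monotonicity of $\widehat F_n$ plus continuity of $F$ to interpolate. A minor subsidiary issue is ruling out the degenerate case in which all $\lambda_u$ vanish; this is excluded under the regularity conditions in Appendix~\ref{sec:regularity_conditions} because $\mathrm{KSD}$ is characteristic on a nontrivial ETM, ensuring that the null covariance operator $T_\sigma$ is not identically zero and hence $F$ is genuinely continuous at $q_{1-\alpha}$.
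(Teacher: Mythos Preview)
Your proposal is correct and follows essentially the same route as the paper: both arguments hinge on (a) continuity of the limiting distribution $W$, (b) convergence in probability of the bootstrap quantile to $w_{1-\alpha}$ deduced from Theorem~\ref{thm:bootstrap_distribution}, and (c) a Slutsky-type combination with the weak convergence of $nT_n$ from Theorem~\ref{thm:asymptotic_distribution}. The only cosmetic difference is that the paper packages your step~(iii) sandwich argument and your step~(ii) quantile-convergence argument into two cited lemmas from \citet{niu2022reconciling} (their Lemmas~1 and~3, appearing here as Lemmas~\ref{lemma:quantile-converge-in-prob} and~\ref{lemma:equivalence-test}), whereas you write both out directly.
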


Corollary~\ref{cor:asymptotic_validity} establishes the asymptotic validity of the SKSD test under a generic nuisance estimation procedure. In practice, when applying the proposed method with a specific class of estimators, analysts can readily verify Assumption~\ref{assu:uale} together with other standard regularity conditions.

\section{Power analysis of SKSD test}\label{sec:asymptotic_power}

With the consistency result of $T_n(X,\hat\theta_n)$ (Theorem~\ref{thm:sksd_consistency}) and the bootstrap consistency (Theorem~\ref{thm:bootstrap_distribution}), we can now establish the power of the proposed bootstrap test $\phi_{n,\alpha}$. We begin with the consistency against fixed alternative distributions.
\begin{theorem}[Universal power against fixed alternative]\label{thm:asymptotic_power}
	Suppose Assumption \ref{assu:uale} and regularity conditions in Appendix~\ref{sec:regularity_conditions} hold. If $\inf_{\theta\in\Theta}\mathrm{KSD}(\law_\theta,\law)>0$ under $H_1$, then we have $\lim_{n\rightarrow\infty}\P_{H_1}[\phi_{n,\alpha}=1]= 1$.
\end{theorem}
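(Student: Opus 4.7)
The plan is to decouple the behaviors of the observed statistic $T_n$ and the bootstrap critical value $\Q_{1-\alpha}(\widetilde T_n^{(b)} \mid \mathcal{F}_n)$ under $H_1$, and show that the former stays bounded away from zero while the latter shrinks to zero at rate $O(1/n)$. Concretely, I will establish (a) $T_n \convp c^2 > 0$ under $H_1$, (b) $n\,\Q_{1-\alpha}(\widetilde T_n^{(b)} \mid \mathcal{F}_n) \convp \Q_{1-\alpha}(W)$, where $W$ is the weighted-$\chi^2$ null limit from Theorem~\ref{thm:asymptotic_distribution}, and then combine (a) and (b) to conclude that $\P_{H_1}[T_n \geq \Q_{1-\alpha}(\widetilde T_n^{(b)})]\to 1$.

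\textbf{Step 1: lower bound on the observed statistic.} By the assumption $\inf_{\theta\in\Theta}\KSD(\law_\theta,\law) > 0$ under $H_1$, the probability limit $\theta_0$ provided by Assumption~\ref{assu:uale} satisfies $c := \KSD(\law_{\theta_0},\law) > 0$. Theorem~\ref{thm:sksd_consistency} then yields $T_n \convp c^2 > 0$, and hence $nT_n \to \infty$ in probability.

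\textbf{Step 2: the bootstrap critical value is $O_\P(1/n)$.} Theorem~\ref{thm:bootstrap_distribution} is agnostic to whether the data come from the null or the alternative, so for every fixed $t\in\mathbb{R}$,
\begin{align*}
    \P\big[\,|\P[n\widetilde T_n \leq t \mid \mathcal{F}_n] - \P[W\leq t]| > \delta\,\big]\to 0.
\end{align*}
Since $W = \sum_u \lambda_u Z_u^2$ is a nontrivial sum of independent $\chi_1^2$'s (assuming at least one $\lambda_u > 0$, which is part of the regularity setup), the CDF $F_W$ is continuous and strictly increasing at its $(1-\alpha)$-quantile $q_{1-\alpha}$. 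A standard argument converts pointwise-in-probability CDF convergence to a continuous limit into convergence of the corresponding quantile functional: for any $\varepsilon>0$, the event $\{\Q_{1-\alpha}(n\widetilde T_n \mid \mathcal{F}_n)\in(q_{1-\alpha}-\varepsilon,q_{1-\alpha}+\varepsilon)\}$ is implied (up to a vanishing probability) by joint approximate matching of the conditional CDF at $q_{1-\alpha}\pm\varepsilon$, both of which hold with probability tending to one. Hence $n\,\Q_{1-\alpha}(\widetilde T_n^{(b)}\mid\mathcal{F}_n) \convp q_{1-\alpha}$, i.e., the critical value itself is $q_{1-\alpha}/n + o_\P(1/n)$.

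\textbf{Step 3: combining.} Writing the test in the scaled form,
\begin{align*}
    \P_{H_1}[\phi_{n,\alpha}=1] = \P_{H_1}\big[nT_n \geq n\,\Q_{1-\alpha}(\widetilde T_n^{(b)}\mid\mathcal{F}_n)\big].
\end{align*}
By Step~1 the left-hand side inside the probability diverges to $+\infty$ in probability, while by Step~2 the right-hand side converges in probability to the finite constant $q_{1-\alpha}$. Slutsky's lemma then yields $\P_{H_1}[\phi_{n,\alpha}=1]\to 1$, as claimed.

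\textbf{Main obstacle.} The only non-routine piece is Step~2: the bootstrap consistency statement of Theorem~\ref{thm:bootstrap_distribution} is written as pointwise-in-$t$ convergence in probability of conditional CDFs, not as convergence in distribution of $n\widetilde T_n$ under a fixed probability measure, so the usual ``weak convergence plus continuous quantile'' arguments need to be routed through a sub-subsequence argument (extract an a.s.-convergent subsequence, apply Polya's theorem to the continuous limit $F_W$ along it, and then invoke the quantile continuity of $F_W^{-1}$ at $1-\alpha$). Once this quantile convergence is in hand, everything else reduces to elementary probability manipulations.
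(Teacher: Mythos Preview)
Your proposal is correct and follows essentially the same three-step structure as the paper's proof: show $T_n$ stays bounded away from zero under $H_1$, show the bootstrap quantile is $O_\P(1/n)$ via Theorem~\ref{thm:bootstrap_distribution} plus a quantile-convergence argument, and combine. The only cosmetic difference is that for Step~1 the paper invokes a uniform-in-$\theta$ convergence lemma for $\KSD(\law_\theta,\hat\law_n)$ (so that the bound uses $\inf_\theta\KSD(\law_\theta,\law)$ directly rather than the particular limit $\theta_0$), but your route via Theorem~\ref{thm:sksd_consistency} is equally valid and arguably cleaner.
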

Theorem~\ref{thm:asymptotic_power} shows that the proposed bootstrap test is consistent against any fixed alternative distribution that is separated from the null model in terms of KSD. This result is quite general as it holds for any UALE estimator $\hat\theta_n$ and does not require any further specification on the alternative distribution. We emphasize that the agnostic bootstrap consistency result under general hypotheses (Theorem~\ref{thm:bootstrap_distribution}) is what makes such power analyses possible. 

In practice, what is more interesting is the power of the proposed test under local alternative models, which we will study now. Throughout the rest of the section, we allow the law generating data, $\law$, to depend on the sample size $n$ and adopt the triangular array setup. We will use $\law_n$ to emphasize the dependence. Consider the following assumption.
\begin{assumption}[Local altervatives with regularity]\label{ass:local}
Suppose either of the two conditions holds:
\begin{enumerate}
	\item[(a)] \textbf{(Multiplicative local alternative, \citet{jankova2020goodness})} Consider the alternative laws $H_{1,n}:\law_n\propto p_{\theta_0}(x)(1+h(x)/\sqrt{n})$ for some function $h(x): \R^d \to \R$. Further, suppose $h(X)$ has finite second moment under the null, i.e., $\E_{X\sim \L_{\theta_0}}[h^2(X)] < \infty.$
	\item[(b)] \textbf{(Additive local alternative, \citep{huber1965robust,niu2022distribution,chatterjee2025boosting})} Consider the alternative laws $H_{1,n}:\law_n = (1-h/\sqrt{n})p_{\theta_0}(x) + (h/\sqrt{n})g(x)$ for some density function $g(x): \R^d \to \R$ and constant $h>0$. Further, suppose $\E_{X\sim \L_{\theta_0}}\big[\big(\frac{g(X)}{p_{\theta_0}(X)}\big)^2\big] < \infty$ and denote $\mathcal{L}_g$ as the law with density $g$.
\end{enumerate}
\end{assumption}

The regularity conditions in each case essentially imply the existence of the local alternatives as proper distributions. We are now ready to state the main result on the power analysis under multiplicative local alternatives. The proof of Theorem~\ref{thm:power_local_alternative} is provided in Appendix~\ref{sec:proof_asymptotic_power_local}.
\begin{theorem}[Power under local alternative models]\label{thm:power_local_alternative}
	Suppose Assumptions~\ref{assu:uale}, \ref{ass:local} and regularity conditions in Appendix~\ref{sec:regularity_conditions} hold. 
	Then for data generated from local alternative model , we have 
	\begin{align*}
		\lim_{n\rightarrow\infty}\P_{H_{1,n}}[\phi_{n,\alpha}=1]=\beta\quad\text{where}\quad \beta=\P[W_1\geq w_{1-\alpha}]\ge \alpha.
	\end{align*}
	The random variable $W_1\overset{d}{=} \sum_{u=1}^{\infty}\lambda_u(Z_u+\mu_u)^2$ and $w_{1-\alpha}$ is the $(1-\alpha)$-quantile of $W$, where $\lambda_u$'s, $Z_u$'s and $W$ are defined in Theorem~\ref{thm:asymptotic_distribution}. If we write the eigenfunction corresponding to eigenvalue $\lambda_u$ of the operator $T_{\sigma}$ (defined in Theorem~\ref{thm:asymptotic_distribution}) as $\phi_u$, then we have $\mu_u = \E_{X\sim \law_{\theta_0}}[S^*(\phi_u,X,\theta_0)h(X)]$ under Assumption~\ref{ass:local}(a) and $\mu_u = h\E_{X\sim \L_g}[S^*(\phi_u,X,\theta_0)]$ under Assumption~\ref{ass:local}(b). In particular, when there exists $u\in\mathbb{N}_{+}$ such that $\mu_u \neq 0$, we have $\beta>\alpha$. 
\end{theorem}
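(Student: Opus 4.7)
The overall strategy is a Le Cam contiguity argument combined with a triangular-array functional CLT in $\mH^d$. First, using Le Cam's first lemma together with the finite-second-moment conditions in Assumption~\ref{ass:local} (either case), the product sequence $\law_n^{\otimes n}$ is mutually contiguous to $\p0^{\otimes n}$. This immediately lifts every $o_{\P_{\theta_0}}(1)$ statement to $o_{\P_n}(1)$. In particular, the consistency $\hat\theta_n\convp\theta_0$ from Assumption~\ref{assu:uale} and the stochastic expansion
\begin{align*}
n T_n \;=\; n\,\|\Psi_n\|_{\mH^d}^{2} \;+\; o_{\P}(1),
\end{align*}
obtained in the proof of Theorem~\ref{thm:asymptotic_distribution} (where $\Psi_n$ is the Riesz representer of $f\mapsto \tfrac{1}{n}\sum_{i=1}^n S^*(f,X_i,\theta_0)$), remain valid under $\law_n^{\otimes n}$.

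Second, I would apply a Hilbert-space CLT to $\sqrt{n}\,\Psi_n$ under $\law_n$. The Lindeberg condition and the convergence of the covariance operator to $T_\sigma$ follow because $\law_n$ perturbs $\p0$ only at the $1/\sqrt{n}$ scale, so second moments of $S^*(f,X,\theta_0)$ shift by $O(1/\sqrt{n})$ (Cauchy--Schwarz against the $L^2$ bounds on $h$ or $g/p_{\theta_0}$ provided by Assumption~\ref{ass:local}). The mean drift satisfies $\langle m,f\rangle_{\mH^d}=\lim_n\sqrt{n}\,\E_{\law_n}[S^*(f,X,\theta_0)]$; using $\E_{\theta_0}[S^*(f,X,\theta_0)]=0$ (from Stein's identity together with $\E[I(X,\theta_0)]=0$), a direct computation gives $\langle m,f\rangle_{\mH^d}=\E_{\theta_0}[S^*(f,X,\theta_0)h(X)]$ in case (a) and $\gamma\,\E_{\L_g}[S^*(f,X,\theta_0)]$ in case (b). Therefore $\sqrt{n}\,\Psi_n \convd G+m$ in $\mH^d$, where $G$ is centered Gaussian with covariance operator $T_\sigma$.

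Third, the continuous mapping theorem applied to $\|\cdot\|_{\mH^d}^{2}$, combined with the Karhunen--Lo\`eve expansion of $G$ along the orthonormal eigenfunctions $\{\phi_u\}$ of $T_\sigma$, gives $n T_n\convd \sum_u \lambda_u(Z_u+\mu_u)^2 = W_1$ under $H_{1,n}$, with $\mu_u$ as in the statement. For the bootstrap threshold, Theorem~\ref{thm:bootstrap_distribution} is agnostic to the data-generating hypothesis, so $\Q_{1-\alpha}(\widetilde T_n^{(b)})\convp w_{1-\alpha}$ holds under $\law_n^{\otimes n}$ as well (using continuity of the limiting CDF of $W$ at $w_{1-\alpha}$). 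Slutsky's theorem then yields $\lim_n\P_{H_{1,n}}[\phi_{n,\alpha}=1]=\P[W_1\ge w_{1-\alpha}]$, and the strict inequality $\beta>\alpha$ whenever some $\mu_u\neq 0$ follows from the stochastic dominance of the noncentral quadratic form over its central counterpart.

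I expect the main obstacle to be the rigorous justification of the Hilbert-space FCLT in the triangular-array regime with a drifting mean, together with the uniform tightness of $\sqrt{n}\,\Psi_n$ in $\mH^d$; this amounts to checking a Lindeberg-type condition under the perturbed laws and bounding the covariance perturbation via Cauchy--Schwarz against the $L^2$ density ratios controlled by Assumption~\ref{ass:local}. A secondary subtlety is that Assumption~\ref{assu:uale} is stated for i.i.d. samples from $\law_{\theta_0}$, not from the drifting $\law_n$; contiguity is the clean mechanism to avoid re-verifying the UALE property along the alternative sequence.
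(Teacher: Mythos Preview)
Your proposal is correct and follows the same architecture as the paper: mutual contiguity via Le Cam's first lemma to transfer the UALE expansion and all $o_p(1)$ remainders to $\law_n$, the variational/Riesz representation $nT_n=\|\sqrt n\,\Psi_n\|_{\mH^d}^2+o_p(1)$, a functional CLT in $\mH^d$ yielding the noncentral quadratic form, and the hypothesis-agnostic bootstrap consistency (Theorem~\ref{thm:bootstrap_distribution}) to pin the threshold at $w_{1-\alpha}$.

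The one substantive technical difference lies in how the finite-dimensional marginals under $\law_n$ are obtained. You propose to verify Lindeberg and covariance-operator convergence directly under the drifting measures $\law_n$---which you correctly flag as the main obstacle. The paper instead invokes Le Cam's \emph{third} lemma: it expands the log-likelihood ratio as $\log(d\law_n/d\p0)=n^{-1/2}\sum_i(h(X_i)-\E_{\theta_0}h)-\tfrac12\Var_{\theta_0}[h]+o_p(1)$, establishes the joint CLT of $\bigl(S_n(f),\,\log(d\law_n/d\p0)\bigr)$ under $\p0$, reads off the cross-covariance $\mu(f)=\E_{\theta_0}[S^*(f,X,\theta_0)h(X)]$, and transfers the marginal limit $S_n(f)\convd\N(\mu(f),\sigma(f,f))$ to $\law_n$ without ever re-checking Lindeberg there. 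This neatly sidesteps your anticipated obstacle, since all the null-side moment and Lindeberg work was already done for Theorem~\ref{thm:asymptotic_distribution}; only the tightness condition is re-verified under $\law_n$, and that goes through by the same Markov-inequality argument (using $\sum_u\mu_u^2<\infty$). For case~(b) the paper does not run a parallel argument but simply rewrites the additive mixture as a multiplicative tilt with $h(x)=\gamma\bigl(g(x)/p_{\theta_0}(x)-1\bigr)$ and reduces to case~(a).
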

Theorem~\ref{thm:power_local_alternative} characterizes the asymptotic power function of our test under both multiplicative and additive local alternatives. The asymptotic power is fully characterized by the mean shift parameters $\mu_u$ and SKSD test $\phi_{n,\alpha}$ will have non-trivial power for local alternatives scaling with $1/\sqrt{n}$ as long as some $\mu_u\neq 0$. We make the following remark further discussing Theorem~\ref{thm:power_local_alternative}.

\begin{remark}[Implication of Theorem~\ref{thm:power_local_alternative}]
	For multiplicative local alternatives, equivalently, this means the existence of some $f\in \mH^d$ such that $\E_{X\sim \law_{\theta_0}}[S^*(f,X,\theta_0)h(X)]\neq 0$. This requires that under measure $\P_{\theta_0}$, function $h(X)$ is correlated with at least one element in the set $\{S^*(f,X,\theta_0): f \in \mH^d\}$. For additive local alternatives, correspondingly, this is equivalent to the existence of some $f \in \mH^d$ such that $\E_{X\sim \L_g}[S^*(f,X,\theta_0)]\neq 0$. Overall, Theorem~\ref{thm:power_local_alternative} shows the Pitman efficiency of our test. We demonstrate the finite-sample power of the SKSD test under these alternatives in Appendix~\ref{sec:local_power_validation}.
\end{remark}

\section{Application with SKSD test framework}\label{sec:applications}
In this section, we demonstrate the practical utility of our proposed SKSD test framework through three representative applications: testing normality in Section~\ref{sec:power_comparison}, determining order of kernel exponential family in Section~\ref{sec:kef-experiment}, and detecting graphical structure in conditional Gaussian models in Section~\ref{sec:graphical-model-experiment}. 

\subsection{Testing normality}\label{sec:power_comparison}

Testing normality is arguably the most classical GoF testing problem. A variety of specialized goodness-of-fit tests for normality have also been developed. In this section, we conduct simulations to assess the Type-I error and power of SKSD test for normality under different nuisance estimation procedures. 

\paragraph{Experiment setup.} 

We consider four different data generating processes (DGPs) with Gaussian distribution being the null model: (1) Gaussian distribution with different means $\mu$; (2) non-centered Student-$t$ distribution with varying degrees of freedom $\nu$; (3) mixture of two Gaussians with equal mixture weights and mean shift parameter $\delta$; and (4) non-centered generalized $\chi^2$ distribution with varying signal parameter $\alpha$. We refer to Appendix~\ref{sec:additional-gaussianity} for the explicit forms of the distributions. Setup (1) is under null while the others are under alternative.

\begin{figure}[!ht]
	\centering
	\begin{subfigure}{0.45\textwidth}
	  \includegraphics[width=\linewidth]{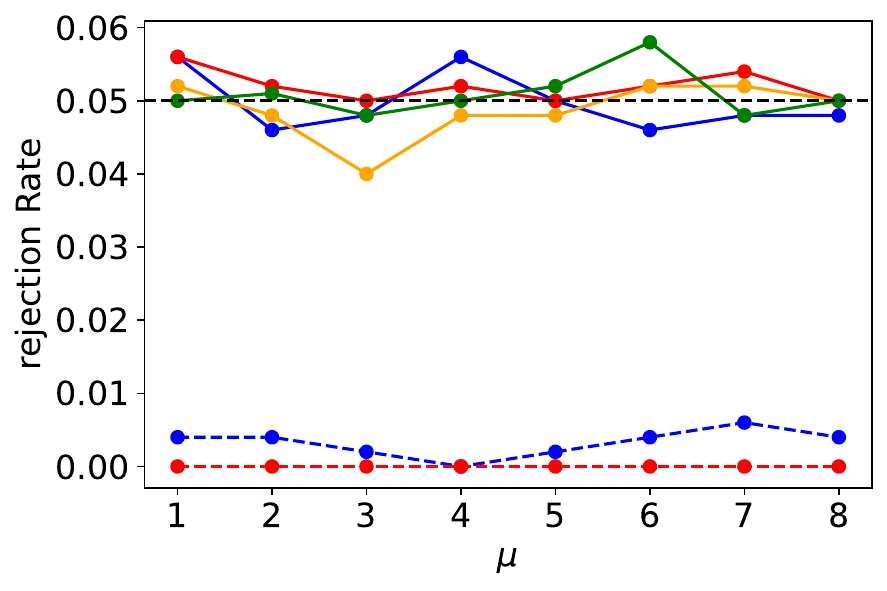}
	\end{subfigure}
	\begin{subfigure}{0.45\textwidth}
	  \includegraphics[width=\linewidth]{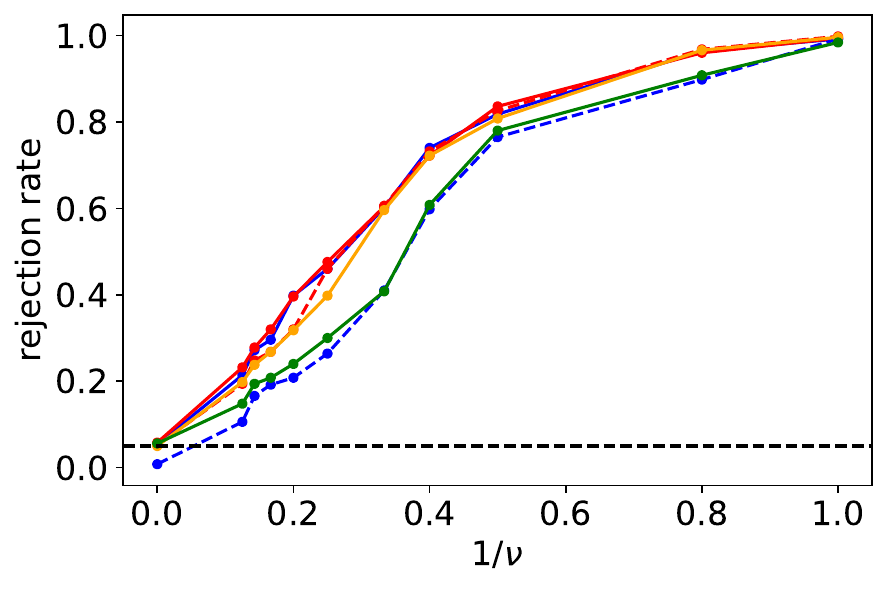}
	\end{subfigure}

	\begin{subfigure}{0.45\textwidth}
	  \includegraphics[width=\linewidth]{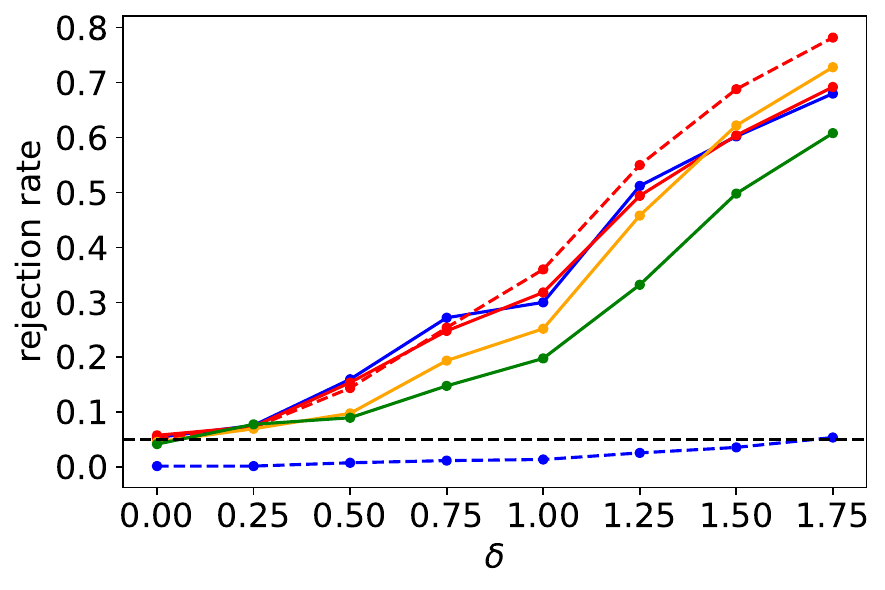}
	\end{subfigure}
	\begin{subfigure}{0.45\textwidth}
	  \includegraphics[width=\linewidth]{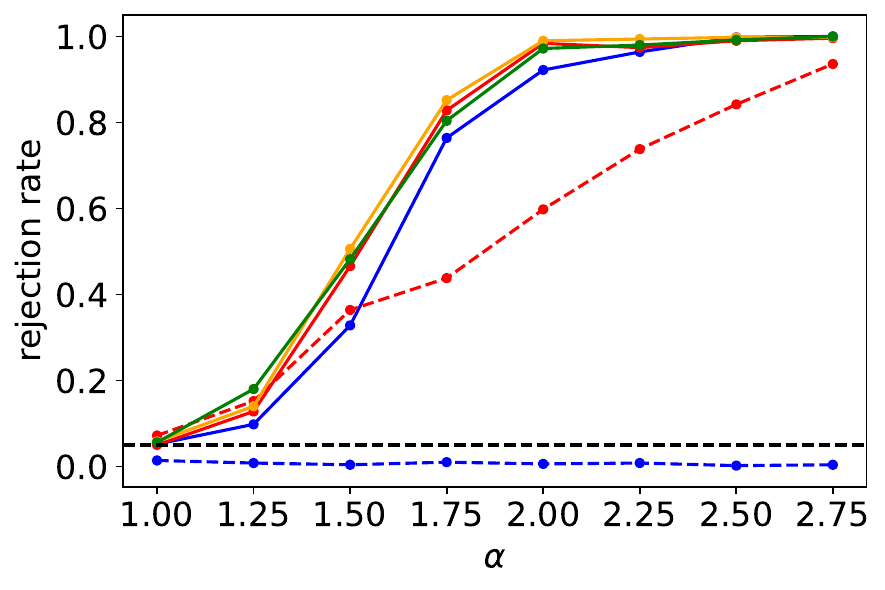}
	\end{subfigure}

	\begin{subfigure}{0.66\textwidth}
	  \includegraphics[width=\linewidth]{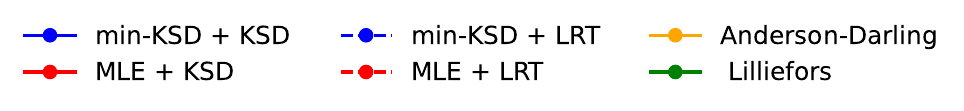}
	\end{subfigure}
	\vspace{-0.3em}
	
	\caption{\small Power curves of SKSD tests for Gaussian models under null and alternative DGPs. Top left: Gaussian distribution. Top right: non-centered Student-$t$ distribution. Bottom left: mixture of Gaussians. Bottom right: non-centered generalized $\chi^2$ distribution. The black dashed line indicates the test level.}
	\label{fig:ksd-grid}
\end{figure}

\paragraph{Methods compared.} For estimation, we use minimum-KSD estimator~\citep{barp2019minimum} and MLE are used to estimate the parameters $(\mu,\sigma)$ under the null model $N(\mu, \sigma^2)$. For testing, we consider the SKSD test, likelihood-ratio test (LRT), specialized normality tests, the Anderson-Darling test \citep{anderson1952asymptotic} and the Lilliefors test \citep{lilliefors1967kolmogorov}, which are widely used in practice. In this context, the LRT coincides with the well-known Vuong test~\citep{vuong1989likelihood}, under the assumption that the alternative distribution family is known. We use parametric bootstrap to approximate the null distribution of all tests. 

\paragraph{Simulation parameter and kernel choices.} The sample size is set to $n = 100$. The kernel used in the SKSD test is the Gaussian kernel with bandwidth selected based on the median heuristic \citep{gretton2012kernel}. The nominal level $\alpha$ is set to be $0.05$. We keep the kernel and level choices fixed across all following experiments in this section.

The results are summarized in Figure~\ref{fig:ksd-grid}, where ``min-KSD + KSD" and ``MLE + KSD" curves correspond to our proposed test. First, when data are generated under the null hypothesis, the SKSD test and the specialized normality tests (Anderson-Darling and Lilliefors) achieve the nominal level without noticeable inflation. The LRT-based test with either the minimum-KSD or MLE estimator is substantially conservative. Second, under the three alternatives, the SKSD tests exhibit power comparable to that of the specialized normality tests. In some settings, such as the Gaussian mixture model, the SKSD tests even outperform both the Anderson-Darling and Lilliefors tests. The power of the LRT-based tests, however, are quite unstable across different setups. Third, the choice of estimation algorithm $\mathcal{E}$ on the nuisance parameter does lead to differences in the power of the SKSD tests although the power gap is not  substantial.

\subsection{Order detection of kernel exponential model}\label{sec:kef-experiment}

Let $q$ be a reference density on $\mathbb{R}^d$, and let $\kappa : \mathbb{R}^d \times \mathbb{R}^d \to \mathbb{R}$ be a reproducing kernel associated with RKHS $\mH_{\kappa}$. The kernel exponential family model~\citep{canu2006kernel} is indexed by a function $f \in \mathcal{H}_\kappa^d$ and is given by
\begin{align*}
    p_f(x) \propto q(x)\exp(\langle f, \kappa(x, \cdot)\mathbf{1}_d\rangle_{\mathcal{H}_\kappa^d}),
    \quad \text{where}\quad x \in \mathbb{R}^d
\end{align*}
and we consider $\kappa$ to be the standard Gaussian kernel and $q(x)$ to be the density of $N(0, 9)$. In this case, $p_f$ is parameterized by the functional element $f\in\mathcal{H}_{\kappa}^d$, and the normalizing constant is typically intractable, involving evaluation of high-dimensional integral. This model offers a nonparametric and hence flexible modeling framework, which has been widely studied in the recent machine learning literature \citep{strathmann2015gradient,sutherland2018efficient}. 

A common strategy in this line of work is to approximate the unknown data-generating density using a finite-rank representation with sufficient complexity, i.e., for some $r \in \mathbb{N}^{+}$ and basis functions $\{\phi_\ell\}_{\ell=1}^r$ of $\mH_\kappa^d$, approximate $f$ by $f_r = \sum_{\ell=1}^r \theta_\ell \phi_\ell(\cdot)$ for some coefficients $\theta_\ell \in \mathbb{R}$. This leads to a finite-dimensional parameterization of the model, which is more amenable to statistical inference and computationally efficient.
Accordingly, in this experiment we focus on GoF testing for finite-rank approximations of the kernel exponential family.

\paragraph{Experiment setup.}
Throughout this simulation, we consider the density of data are generated by a linear subspace of RKHS: $\mathcal{P}_r \coloneqq \{p_{f_r} :f_r=\sum_{\ell=1}^r \theta_\ell \phi_\ell(\cdot),\theta_{\ell}\in\mathbb{R}, \ell \in[r]\}$. 
Two settings are applied: (1) The DGPs are chosen in $\mathcal{P}_2$ with $\theta_1 = 10$ and various $\theta_2$ to simulate different levels of signal strength, while the null model hypothesis is fixed as $\mathcal{P}_1$; 
(2) The DGPs are chosen as distributions in $\mathcal{P}_5$ with random chosen coefficients, while we consider different null models: $\mathcal{P}_r$ for $r = 1, 2, 3, 4$.
Noting that $\mathcal{P}_{r_1} \subseteq \mathcal{P}_{r_2}$ for any $r_1 < r_2$, we simulate a progressive model selection scenario here, where the null model becomes increasingly complex and approaches the true data-generating distribution as $r$ increases.

We generate the sample using metroplis-adjusted Langevin algorithm (MALA, \citet{roberts2001optimal}) and set sample size $n = 200$. We use the minimum-KSD estimator to estimate the parameters in the density model, which has become a standard approach in the literature~\citep{matsubara2022robust,liu2024robustness}. We refer the reader to Appendix~\ref{sec:additional-kef} for the complete simulation setup.

\paragraph{Simulation results.}

\begin{figure}[!ht]
  \centering

  \begin{minipage}[t]{0.45\textwidth}
    \vspace{11pt}
    \centering
    \includegraphics[width=\linewidth]{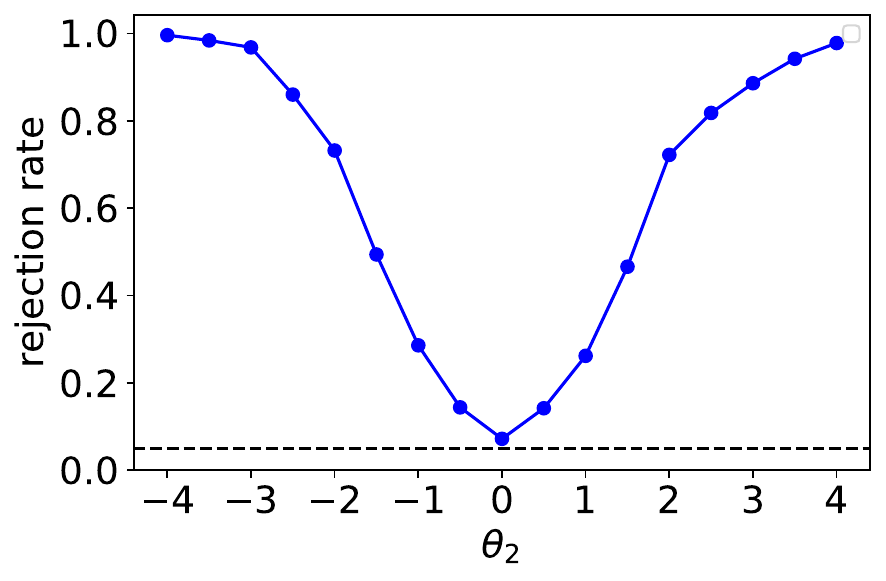}
    \captionof{figure}{\small Power curve of SKSD tests for kernel exponential family. The black dashed line indicates the test level.}
    \label{fig:kernel-exp-family}
  \end{minipage}
  \hspace{10pt}
  \begin{minipage}[t]{0.45\textwidth}
  \vspace{0pt}
  \centering
  {\small 
    \renewcommand{\arraystretch}{1.2}
    \setlength{\tabcolsep}{7pt}
    \begin{tabular}{c|c|c|c|c}
      \hline
      \textbf{$-\theta_2$} & $r=1$ & $r=2$ & $r=3$ & $r=4$ \\
      \hline
      $0.0$  & 0.900 & 0.354 & 0.250 & 0.080 \\ 
      $0.5$  & 0.916 & 0.360 & 0.322 & 0.138 \\ 
      $1.0$  & 0.784 & 0.450 & 0.280 & 0.202 \\ 
      $1.5$  & 0.862 & 0.404 & 0.354 & 0.236 \\ 
      $2.0$  & 0.938 & 0.382 & 0.348 & 0.148 \\ 
      $2.5$  & 0.952 & 0.436 & 0.378 & 0.222 \\ 
      $3.0$  & 0.932 & 0.334 & 0.446 & 0.232 \\ 
      \hline
    \end{tabular}
	\vspace{1em}
    \captionof{table}{SKSD test power for various values of $\theta_2$ and null model rank $r$.}
    \label{tab:exp-record-side}}
\end{minipage}
\end{figure}

In the first setting, data are generated from the null model when $\theta_2 = 0$ and from the alternative otherwise. Hence, the rejection rate should be close to the nominal level $\alpha = 0.05$ at $\theta_2 = 0$ and heuristically, increase as $|\theta_2|$ grows. Figure~\ref{fig:kernel-exp-family} confirms this: the Type-I error is close to $\alpha$ at $\theta_2 = 0$, and the power increases with $|\theta_2|$, reaching nearly 1 once $|\theta_2| \ge 3$, illustrating that the SKSD test is consistent and powerful in this setting. In the second setting, as $r$ increases and approaches $5$, the model class expands in complexity, forming increasingly higher-rank approximations of the kernel exponential family and thus capturing more structure in $f$. Table~\ref{tab:exp-record-side} shows that the power is highest at $r = 1$ and generally decreases with $r$ given the same DGP. This suggests that SKSD test can be a powerful stepwise model selection strategy for the kernel exponential family model.

\subsection{Graph structure detection in conditional Gaussian model}\label{sec:graphical-model-experiment}
In this application, we consider quadratic interaction model in Gaussian conditional family~\citep{gelman1991note, arnold1999conditional}, which is defined as follows:
\begin{align}\label{eq:conditional-gaussian}
	p_\theta(x) \propto \exp\left(\sum_{1\le i\neq j\le d}\Sigma_{ij} (x^{(i)})^2 (x^{(j)})^2 + \sum_{k=1}^d \gamma_k^{(2)}(x^{(k)})^2 + \sum_{l=1}^d \gamma_\ell^{(1)}x^{(\ell)}\right)
\end{align}
where $x=(x^{(1)},\ldots,x^{(d)})^\top \in \R^d$ with parameters $\theta = (\Sigma, \gamma^{(1)}, \gamma^{(2)})$. Here, $\Sigma \in \R^{d \times d}$ and $\gamma^{(1)}, \gamma^{(2)} \in \R^d$. Without loss of generality, we consider $\Sigma$ is a symmetric matrix, i.e., $\Sigma_{ij} = \Sigma_{ji}$ for all $i,j$, and make the following constraints on the parameter space: $\Sigma_{ij} \le 0$, $\gamma_i^{(2)} < 0$ for any $i,j\in[d]$, which ensures the existence of the joint and conditional probability distributions.

The key property of this model is that the conditional distributions are in Gaussian family. It is an appropriate modeling choice when inference targets are conditional in nature and Gaussianity provides a tractable working approximation for conditional noise, even when the joint distribution of covariates and responses is complex or misspecified~\citep{gelman1991note,white1996estimation}. Note that the conditional Gaussian model is substantially different from the Gaussian graphical model and indeed, using graphical Lasso will lead to poor performance in terms of the graph recovery task \citep{lin2016estimation}. Note the normalizing constant in~\eqref{eq:conditional-gaussian} is infeasible to compute as it involves an high-dimensional intractable integral. However, the score function of the model can be easily computed and we refer to Appendix~\ref{sec:additional-conditional-gaussian} for the explicit form of the score function.
 
\paragraph{Experiment setup.}

We want to investigate if a quadratic interaction model with a ring structure, i.e., $\Sigma_{ij} = 0$ for all $|i - j| \neq 1 \text{ or } d - 1$, can fit the model well enough. 
When generating the data, we consider distribution introducing second-order interaction terms between non-adjacent nodes in the graph: $\Sigma_{ij} = 0$ for all $|i - j| > 2$ or $|i - j| < d-2$. The parameter $\varepsilon \ge 0$ controls the magnitude of the second-order interaction terms in both settings (see Appendix~\ref{sec:additional-conditional-gaussian} for the definition of $\varepsilon$). Sampling from model~\eqref{eq:conditional-gaussian} can be easily implemented by Gibbs sampler~\citep{arnold2001conditionally}. 

We are interested in two DGPs: (1) fix dimension $d = 8$ and vary the magnitude of the second-order interaction $\varepsilon$. In this case, we are under the null model when $\varepsilon = 0$ and under the alternative otherwise; (2) fix $\varepsilon =0.5$ and generate data from different distributions by varying the dimension $d$. In this case, all the data are generated under the alternatives. We use sample size $n = 500$ throughout the experiments and refer to Appendix~\ref{sec:additional-conditional-gaussian} for additional details on the experiments.

\paragraph{Methods compared.}
For estimation, we use minimum-KSD estimator~\citep{barp2019minimum} and score matching estimator~\citep{hyvarinen2005estimation,lin2016estimation} to estimate the nuisance parameters. Both estimators admit closed-form solutions, which are given in Appendix~\ref{sec:additional-conditional-gaussian}. For testing, the SKSD test with parametric bootstrap cutoff is considered (Algorithm~\ref{alg:semiparametric-KSD}).

\paragraph{Simulation results.}
The simulation results for the two settings are summarized in Figure~\ref{fig:graphical-model}. In the first setup with fixed $d$, the power of the tests increase with $\varepsilon$ for both the score-matching and the minimum-KSD estimators. Under the null, the Type-I error of both estimators is well controlled at the nominal level. In the second setup, the power increases with $d$. This is because, intuitively, there are more node pairs in higher dimensions so that the cumulative signal for detecting second-order interactions is stronger. In both setups, when the signal is weak (small $\varepsilon$ or small $d$), the score-matching estimator yields higher power. As the signal strengthens, however, the minimum-KSD estimator becomes more powerful. These results suggest that the choice of estimation algorithm can substantially affect the power of the SKSD test. We investigate the mechanism behind this pattern in the next section.

\begin{figure}[!ht]
	\centering
	\begin{subfigure}{0.45\textwidth}
	  \centering
	  \includegraphics[width=\textwidth]{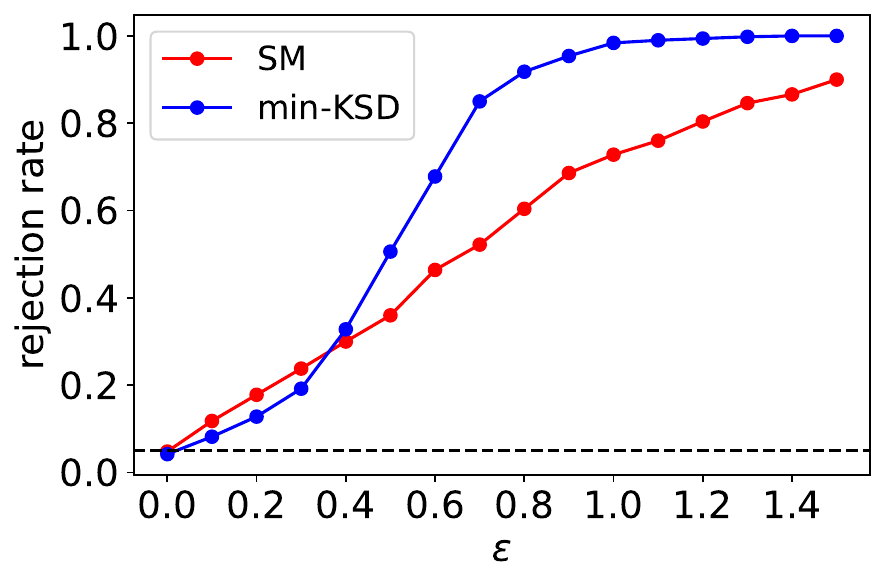}
	  \label{fig:graphical-model-8d}
	\end{subfigure}
	\begin{subfigure}{0.45\textwidth}
	  \centering
	  \includegraphics[width=\textwidth]{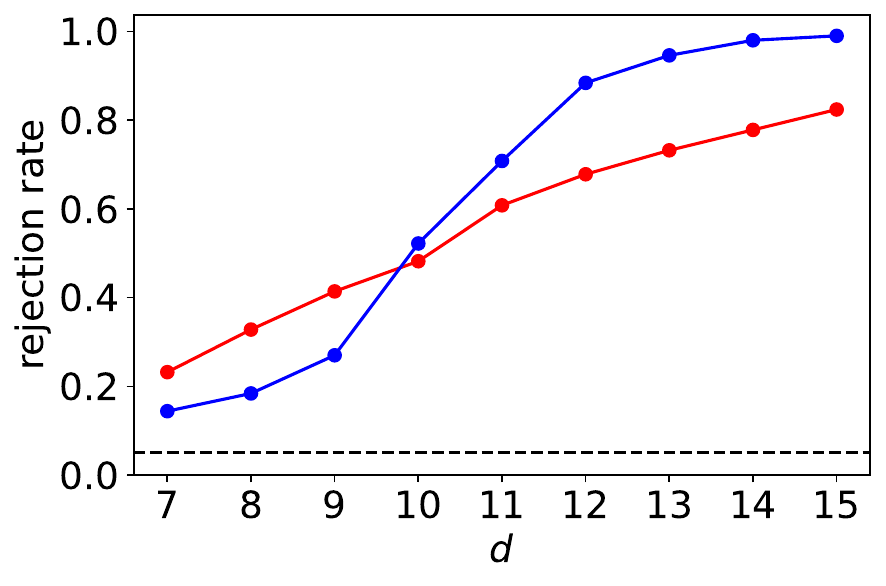}
	  \label{fig:graphical-model-change-d}
	\end{subfigure}

	\begin{subfigure}{0.45\textwidth}
	  \centering
	  \includegraphics[width=\textwidth]{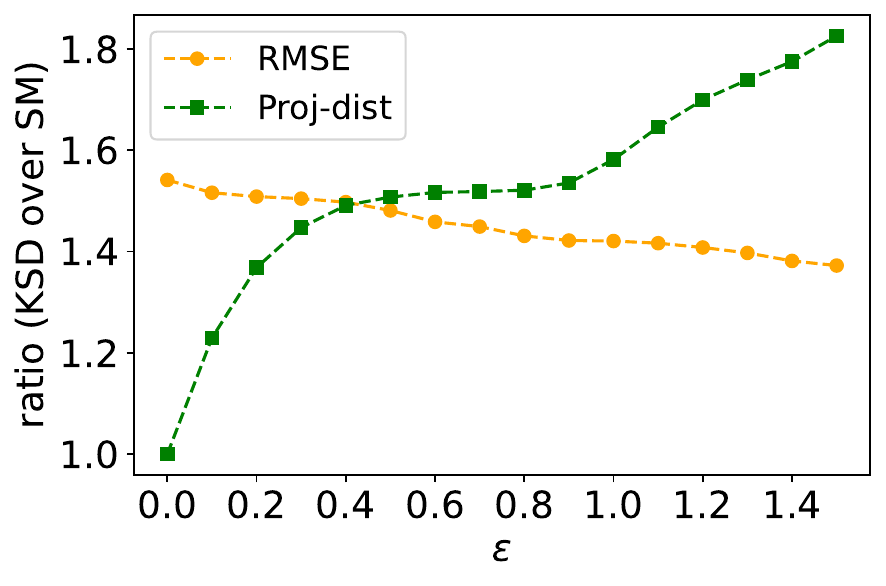}
	  \label{fig:graphical-model-8d-int}
	\end{subfigure}
	\begin{subfigure}{0.45\textwidth}
	  \centering
	  \includegraphics[width=\textwidth]{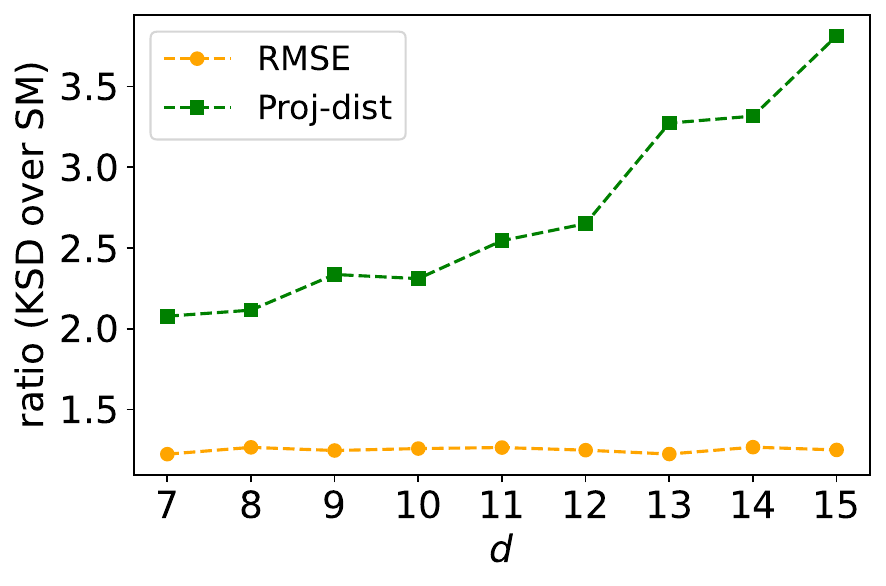}
	  \label{fig:graphical-model-change-d-int}
	\end{subfigure}
	\vspace{-1em}
	\caption{Top: power curves of SKSD goodness-of-fit tests for quadratic interaction graphical model. The left panel demonstrates the power of our proposed test with varying $\varepsilon$. The right panel shows the power of the same test with fixed $\varepsilon$ and varying $d$. Bottom: the orange curves are the ratios between RMSEs of the estimators and the green curves are the ratios of projection distances between estimators onto the null model, both defined as in~\eqref{eq:error_metric}.
	}
	\label{fig:graphical-model}
  \end{figure}

\paragraph{Unveiling the mystery of test power performances.}

It turns out that different estimators for $\theta$ used in the simulation trade off (1) estimation error, and (2) separation distance from the null in SKSD tests. To further understand the patterns, we compare the root mean squared error (RMSE) of two estimators and the distance of their population projections onto the null model. Let $\Sigma^{\mathrm{proj}}_{\mathrm{KSD}}$ (resp.\ $\Sigma^{\mathrm{proj}}_{\mathrm{SM}}$) be the projection of the interaction matrix $\Sigma$ generating data onto the null model parameter space, under the loss functions used in minimum-KSD (resp.\ score-matching) estimation procedure. For each estimator $\mathcal{A} \in \{\mathrm{KSD}, \mathrm{SM}\}$, we define
\begin{align}\label{eq:error_metric}
    \mathrm{RMSE}(\mathcal{A})= \Bigl(\mathbb{E}\bigl[\|\hat{\Sigma}_{\mathcal{A}} - \Sigma^{\mathrm{proj}}_{\mathcal{A}}\|_{\mathrm{F}}^2\bigr]\Bigr)^{1/2},\quad \mathrm{Proj\text{-}dist}(\mathcal{A})= \|\Sigma^{\mathrm{proj}}_{\mathcal{A}} - \Sigma\|_{\mathrm{F}},
\end{align}
where $\|\cdot\|_{\mathrm{F}}$ is the Frobenius norm. The RMSE reflects estimation accuracy of different estimators, while the projection distance measures how far the projected model is from the null under alternative. Intuitively speaking, lower RMSE and larger projection distance are expected to yield higher power. To deconfound the finite-sample error, we compute the population projections using $10^4$ Monte Carlo samples and the expectation in RMSE (defined as in~\eqref{eq:error_metric}) using $10^3$ independent repetitions under the simulation setup.

The resulting ratios $\mathrm{RMSE}(\mathrm{KSD})/\mathrm{RMSE}(\mathrm{SM})$ and $\mathrm{Proj\text{-}dist}(\mathrm{KSD})/\mathrm{Proj\text{-}dist}(\mathrm{SM})$ are also shown in Figure~\ref{fig:graphical-model}. In the first simulation setup, as $\varepsilon$ increases, the RMSE ratio decreases while the projection-distance ratio increases, matching the trend of power curves: score matching is favored when $\varepsilon$ is small (RMSE dominates), and minimum-KSD gains an advantage for larger $\varepsilon$ (projection distance dominates). In the second setting, as $d$ increases, the RMSE ratio remains nearly constant while the projection-distance ratio increases, consistent with the faster growth in power for minimum-KSD relative to score matching. In other words, it is the projection distance, i.e., the distance between the projected model and the null model, that matters more in determining the relative performance of the two estimators, at least under the current simulation setup.

\section{Discussion}\label{sec:discussion}

In this paper, we establish a simple yet powerful connection between score-based and distance-based GoF tests through the lens of general exponentially tilted models. This framework generalizes the classical parametric score-based test to a nonparametric setting and allows us to reinterpret many existing IPM-based methods as nonparametric score-based tests. Motivated by the computational and statistical limitations of existing tests, we propose a new nonparametric score-based test, the SKSD test paired with parametric bootstrap calibration, which can be efficiently computed and is broadly applicable to a wide range of parametric models. We prove its asymptotic properties under both the null and (local) alternative hypotheses, and we demonstrate the flexibility and effectiveness of the proposed test through extensive experiments on various models, including challenging modeling settings in which only the score function is available.

There are many promising directions for future work. First, it would be interesting to extend our framework to discrete distributions, where the score function can be defined through the discrete Stein operator~\citep{yang2018goodness}. Second, the choice of bandwidth in the kernel function of the SKSD test is a challenging problem and can have a substantial impact on the power of the resulting test. It would be valuable to draw inspiration from recent work on improving the power of kernel-based tests by aggregating multiple kernels~\citep{schrab2022ksd,schrab2023mmd,chatterjee2025boosting}. Lastly, the null model can be more complex, going beyond the fixed-dimensional setting to high- or even infinite-dimensional cases~\citep{zhu2020distance,zhu2021interpoint,yan2023kernel,huang2023high,gao2023two}, and the composite nature of the GoF problem further complicates the analysis. An important direction is to investigate how estimation error in high-dimensional settings manifests in the nonparametric GoF testing problem and how this, in turn, suggests principled solutions.

\printbibliography

\newpage
\appendix

\section{Additional details on KSD}\label{sec:additional_details}

\subsection{Derivation of closed-form solution for KSD}\label{sec:close-form}

We first explicitly define the function $h_p$ for some density $p$ introduced in Eqn.~\eqref{eq:ksd-kernel}:
\begin{align*}
	h_p(x,y) &= s_p(x)^{\top}s_p(y)K(x,y) + s_p(x)^{\top}\nabla_{y}K(x,y)\\ 
	&\qquad \qquad \qquad \qquad + s_p(y)^{\top}\nabla_{x}K(x,y) + \mathrm{Tr}(\nabla_x \nabla_y K(x,y)).
\end{align*}

Then, we show how to derive the closed-form solution of KSD in Eqn.~\eqref{eq:kernel_stein_discrepancy}.
To see this, we recall the Riesz representation theorem \citep[Theorem II.4,][]{reed1972methods} so that for an RKHS $\mH$, there exists a positive definite kernel $K : \mathcal X \times \mathcal X \rightarrow \mathbb R$ such that the feature map $\psi_x\in\mH$ for all $x\in \mathcal{X}$ satisfies $K(x,\cdot) = \psi_{x}(\cdot)$ and $K (x,y) = \langle \psi_x,\psi_y\rangle_{\mathcal{H}}$. 
We additionally define the operation of Stein operator $\A_p$ on a real-value function (e.g., feature map) as $\xi_{p,x}(\cdot) \in \mH^d$:
\begin{align*}
	\xi_{p,x}(\cdot) \equiv s(x)\psi_x(\cdot) + \nabla \psi_x(\cdot).
\end{align*}
The reproduciable property of $\mH$ leads to 
\begin{align}
	\sup_{g \in B_{\mH^d}(1)} \left| \mathbb{E}_{q}[\mathcal{A}_p g(X)] \right|^2 &= \sup_{g \in B_{\mH^d}(1)} \left\langle g(\cdot),  \E_{X\sim q}\left[\xi_{p,X}(\cdot)\right]\right\rangle_{\mH^d}^2 \notag\\
	&= \left\|\E_{X\sim q}\left[\xi_{p,X}(\cdot)\right]\right\|_{\mH^d}^2 \notag\\
	&= \left\langle \E_{X\sim q}\left[\xi_{p,X}(\cdot)\right], \E_{Y\sim q}\left[\xi_{p,Y}(\cdot)\right]\right\rangle_{\mH^d}\notag\\
	&= \E_{X,Y \sim q} \left[\langle \xi_{p,X}(\cdot), \xi_{p,Y}(\cdot)\rangle_{\mH^d}\right]\notag\\
	&= \E_{X,Y \sim q}\left[\A_p^1 \A_p^2 [K(X, Y)I_d]\right],\label{eq:sup-to-A}
\end{align}
where the superscript $1$ (resp. $2$) means operating on the first (resp. second) argument of the function.
By direct calculation, we have
\begin{align}\label{eq:A-to-h-kernel}
	\A_p^1 \A_p^2 K(X, Y)I_d = h_p(X, Y),
\end{align}
where the function $h_p$ is defined above.
Plugging Eqn.~\eqref{eq:A-to-h-kernel} back into Eqn.~\eqref{eq:sup-to-A} gives
\begin{align*}
	\KSD^2(\law_p, \law_q) = \sup_{g \in B_{\mH^d}(1)} \left| \mathbb{E}_{q}[\mathcal{A}_p g(X)] \right|^2= \E_{q}[h_p(X, Y)],
\end{align*}
which proves Eqn.~\eqref{eq:kernel_stein_discrepancy}.

\subsection{Computation of V-statistic for KSD}\label{sec:computation_v_statistic}
Recall that the formula of V-statistic for KSD is given by
\begin{align*}
	T_n(X,\theta) = \mathrm{KSD}^2(\law_{\theta},\hat\law_n) = \frac{1}{n^2}\sum_{i,j=1}^n h_{p_{\theta}}(X_i,X_j).
\end{align*}
For general kernel function, $h_{p_\theta}(X_i, X_j)$ can be computed through the closed-form expression above, so the computation cost of the statistic is $O(n^2)$.
When linear kernel $K(x, y) = x^\top y$ is applied in the computation, we have
\begin{align*}
	h_{p_\theta}(x, y) &= s_{p_\theta}(x)^{\top}s_{p_\theta}(y)(x^\top y) + s_{p_\theta}(x)^{\top} \nabla_y (x^\top y)\\ 
	&\qquad \qquad \qquad \qquad + s_{p_\theta}(y)^{\top}\nabla_x (x^\top y) + \mathrm{Tr}(\nabla_x \nabla_y x^\top y)\\
	&= s_{p_\theta}(x)^{\top}s_{p_\theta}(y)(x^\top y) + s_{p_\theta}(x)^{\top} x + s_{p_\theta}(y)^{\top} y + d\\
	&= \mathrm{Tr}(x s_{p_\theta}(x)^{\top} y s_{p_\theta}(y)^{\top} + x s_{p_\theta}(x)^{\top} + y s_{p_\theta}(y)^{\top} + I_d)\\
	&= \mathrm{Tr}\left(\left(x s_{p_\theta}(x)^{\top} + I_d\right)(y s_{p_\theta}(y)^{\top} + I_d)\right).
\end{align*}
By plugging the above equation into the V-statistic, we have
\begin{align*}
	T_n(X,\theta) &= \frac{1}{n^2} \sum_{i,j=1}^n h_{p_\theta}(X_i,X_j) \\
	&= \frac{1}{n^2}\sum_{i,j=1}^n \mathrm{Tr}\left(\left(X_i s_{p_\theta}(X_i)^{\top} + I_d\right)(X_j s_{p_\theta}(X_j)^{\top} + I_d)\right)\\
	&= \mathrm{Tr}\left(\left(\frac{1}{n}\sum_{i=1}^n X_i s_{p_\theta}(X_i)^{\top} + I_d\right)^2\right),
\end{align*}
which requires only $O(n)$ time to compute.

\subsection{Two general classes of estimators}\label{sec:general_estimators_UALE}

We present two most popular estimation frameworks to estimate the nuisance parameter $\theta_0$.

\begin{itemize}
	\item \textbf{M-estimator:} For some general loss function $\ell(\cdot,\cdot)$, we define the true yet unknown estimand $\theta_0$ to be the population risk minimizer and empirical risk minimizer:
	\begin{align}\label{eq:m_estimator}
		\theta_n = \arg\min_{\theta \in \Theta} \frac{1}{n}\sum_{i=1}^n \ell(\theta, X_{i}).
	\end{align}
	Special instances in this class include \textit{maximum likelihood estimator} (MLE), \textit{maximum pseudo-likelihood estimate} (MPLE) for models with intractable likelihood \citep{besag1975statistical}.
	\item \textbf{Minimum distance estimator:} For some statistical distance $D(\cdot,\cdot)$, we define the estimand and estimator as, respectively:
	\begin{align}\label{eq:mde}
		\theta_n = \arg\min_{\theta \in \Theta} D(\law_{\theta}, \law_n).
	\end{align}
	Special instances in this class include \textit{minimum Wasserstein estimator} \citep{bernton2019parameter}, \textit{minimum MMD estimator}\citep{briol2019statistical, niu2023discrepancy, alquier2024universal} and \textit{minimum-KSD estimator} \citep{barp2019minimum,matsubara2022robust}.
\end{itemize}

\subsection{Nonparametric characteristicity}\label{sec:characteristicity}

Moreover, as an IPM, KSD is \emph{characteristic} (see Definition~\eqref{eq:characteristic_IPM}) by the following proposition.
\begin{proposition}[Theorem 2.2 in~\citet{chwialkowski2016kernel}]\label{prop:characteristic_propety}
	Suppose $\E_{\law_q}h_P(X,X) < \infty$, and $\E_{\law_q}\Vert \nabla \log p(X) - \nabla \log q(X)\Vert^2 < \infty$. If kernel $K$ is $C_0$-universal.
	Then $\mathrm{KSD}$ is characteristic, i.e., $\mathrm{KSD}(\law_p,\law_q) = 0$ if and only if $\law_p = \law_q$.
\end{proposition}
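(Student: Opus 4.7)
The plan is to establish both directions of the iff via the closed-form variational representation of KSD already laid out in Appendix~\ref{sec:close-form}. The forward direction ($\law_p=\law_q\Rightarrow \mathrm{KSD}(\law_p,\law_q)=0$) is immediate from Stein's identity applied to $\law_p$: for every $g\in\mH^d$, $\E_{X\sim\law_p}[\A_p g(X)]=0$, so the supremum defining $\mathrm{KSD}(\law_p,\law_q)$ vanishes.

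For the nontrivial direction, the first step is to rewrite the inner expectation in a form that isolates the discrepancy of the score functions. Using Stein's identity for $\law_q$ (so that $\E_q[\A_q g]=0$) together with the fact that $\A_p g(x)-\A_q g(x)=g(x)^\top(s_p(x)-s_q(x))$, I obtain
\begin{align*}
\E_{X\sim\law_q}[\A_p g(X)]=\E_{X\sim\law_q}\bigl[g(X)^\top (s_p(X)-s_q(X))\bigr].
\end{align*}
Next, using the reproducing property componentwise, $g(x)=\bigl(\langle g_j,K(x,\cdot)\rangle_{\mH}\bigr)_{j=1}^d$, this can be recast as a single inner product in $\mH^d$:
\begin{align*}
\E_{X\sim\law_q}[\A_p g(X)]=\bigl\langle g,\,\Phi\bigr\rangle_{\mH^d},\qquad \Phi:=\E_{X\sim\law_q}\bigl[(s_p(X)-s_q(X))\,K(X,\cdot)\bigr].
\end{align*}
The moment hypotheses $\E_{\law_q}h_p(X,X)<\infty$ and $\E_{\law_q}\|s_p-s_q\|^2<\infty$ are precisely what guarantees, via Cauchy--Schwarz and the bound $\|K(X,\cdot)\|_{\mH}^2=K(X,X)$, that $\Phi$ is a well-defined Bochner integral in $\mH^d$. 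Taking the supremum over the unit ball of $\mH^d$ then yields $\mathrm{KSD}(\law_p,\law_q)=\|\Phi\|_{\mH^d}$, so the assumption $\mathrm{KSD}(\law_p,\law_q)=0$ forces $\Phi=0$ in $\mH^d$.

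The final step is the measure-theoretic one, where $C_0$-universality enters. For each coordinate $j\in\{1,\dots,d\}$, the signed measure $\nu_j(dx):=(s_p^{(j)}(x)-s_q^{(j)}(x))\,q(x)\,dx$ is finite (again by the moment conditions), and the identity $\Phi_j\equiv 0$ says exactly that $\int K(x,\cdot)\,d\nu_j(x)=0$ in $\mH$, i.e., the kernel mean embedding of $\nu_j$ vanishes. Since a $C_0$-universal kernel is characteristic on the space of finite signed Borel measures, this forces $\nu_j=0$, hence $s_p=s_q$ almost everywhere with respect to $\law_q$. Equivalently, $\nabla\log(p/q)=0$ on the support of $q$, so $\log(p/q)$ is constant (assuming the support is connected, as is standard for densities on $\mathbb{R}^d$), and normalization gives $p=q$.

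The main obstacle I anticipate is the last measure-theoretic step: carefully justifying the passage from ``$\nu_j=0$ for every $j$'' to ``$p=q$'' requires a small amount of care regarding the support of $q$ and the constancy of $\log(p/q)$. Establishing $\Phi\in\mH^d$ from the stated moment conditions is routine but should be spelled out, and the invocation of $C_0$-universality must be to its equivalent characterization in terms of injectivity of the mean embedding on finite signed measures (as in Sriperumbudur et al., or Proposition 5 of \citet{chwialkowski2016kernel}); everything else is algebraic manipulation powered by Stein's identity and the reproducing property.
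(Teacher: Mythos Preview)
The paper does not give its own proof of this proposition; it is stated as Theorem~2.2 of \citet{chwialkowski2016kernel} and used as a black box. Your proposal is correct and reproduces the standard argument from that reference: rewrite $\E_q[\A_p g]$ via Stein's identity for $q$ as $\E_q[g^\top(s_p-s_q)]$, identify this with an inner product against the kernel mean embedding of the signed measure with density $(s_p-s_q)q$, invoke $C_0$-universality to force that measure to vanish, and conclude $p=q$ from equality of scores. The only caveats you already flag---that Stein's identity for $q$ requires the usual boundary/integrability conditions, and that passing from $s_p=s_q$ on $\mathrm{supp}(q)$ to $p=q$ uses connectivity of the support---are exactly the ones handled in the original reference.
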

Proposition \ref{prop:characteristic_propety} ensures that SKSD test statistic, $T_N(X,\hat\theta_N)$, can be used to distinguish the true data generating distribution $\law$ from the fitted model $\law_{\hat\theta_N}$ as long as $\hat\theta_N\rightarrow\theta_0$ for some $\theta_0\in\Theta$. The characteristic property is especially important for the diagnostic purpose of the nonparametric GoF test. This results in the power against any alternative distribution $\law$ that is not in the model $\law_{\Theta}$ (see Theorem \ref{thm:asymptotic_power}). 

\subsection{Connection to moment test}

The score statistic can be interpreted from the moment equation guaranteed by Stein identity, $\E_{\hat \law_n}[\mathcal{A}_{p_{\hat\theta_n}}g(X)]=0$ for any test function $g$ in the RKHS $\mathcal{H}^d$. In other words, \eqref{eq:sksd_test_stat} is a nonparametric \textit{moment test}. It turns out that the moment test has been widely studied in the goodness-of-fit (GoF) testing literature, especially in the econometrics community~\citep{white1996estimation}.

\subsection{Comparison to the approach by~\citet{brueck2025composite}}\label{sec:comparison-with-brueck}

Concurrent work by~\citet{brueck2025composite} studies composite goodness-of-fit testing via the kernel Stein discrepancy (KSD) with an estimated nuisance parameter, and develops a bootstrap framework for parameter-dependent degenerate U-statistics. Their theory is stated under high-level joint weak and bootstrap convergence conditions for the plug-in estimator, the KSD U-statistic, and its parameter gradient (see their Assumptions~5-7). In particular, their bootstrap calibration is proved to be consistent only under the null, leaving it unclear whether the procedure attains nontrivial power against contiguous local alternatives. In contrast, we introduce a semiparametric KSD—motivated by the exponentially tilted/IPM perspective—and provide a self-contained bootstrap calibration that applies to any nuisance estimator admitting a uniform asymptotic linear expansion (UALE). Our bootstrap consistency result delivers a universal conditional limit under both the null and the alternative, thereby providing firm theoretical support for our local power analysis.

\section{Auxiliary results in Section~\ref{sec:asymptotic_bootstrap_distribution}}\label{sec:additional_theory}

Theorem \ref{thm:sksd_consistency} shows that the limit distribution of the test statistic is a mixture chi-squared random variables. In this case, we can show that the limit distribution is almost surely finite, which ensures the existence of the quantile we use in the follow-up test and prevents the degenerations of the test statistic.
We summarize this result in the following proposition, the proof of which is provided in Appendix~\ref{sec:finite_w_proof}.
\begin{proposition}\label{prop:finite_w}
	The limit distribution of the test statistic $n T_n(X,\hat{\theta}_n)$ is almost surely finite, i.e., for $W$ and $\{\lambda_u\}_{u=1}^\infty$ defined in Theorem~\ref{thm:asymptotic_distribution},
	\begin{align}
		\P\left(W < \infty\right) = \P\left(\sum_{u=1}^\infty \lambda_u Z_u^2 < \infty\right) = 1. 
	\end{align}
\end{proposition}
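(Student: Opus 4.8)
The statement to prove is that $W = \sum_{u=1}^\infty \lambda_u Z_u^2$ is almost surely finite, where the $Z_u$ are i.i.d. standard normal and the $\lambda_u$ are the eigenvalues of the operator $T_\sigma$. The natural route is via the Kolmogorov three-series theorem (or monotone convergence combined with finiteness of the expectation): since each term $\lambda_u Z_u^2$ is nonnegative, $W$ is finite almost surely whenever $\E[W] = \sum_{u=1}^\infty \lambda_u < \infty$, because $\sum_u \lambda_u < \infty$ forces the monotone partial sums to converge a.s. (indeed $\E[W] < \infty$ directly implies $W < \infty$ a.s.). So the entire problem reduces to showing that the eigenvalues of $T_\sigma$ are summable, i.e. that $T_\sigma$ is trace-class.

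The plan is therefore: first, identify $T_\sigma$ as a positive, self-adjoint, trace-class operator on $\mathcal{H}^d$ (or on an appropriate $L^2$ space) by recognizing it as the covariance operator of a suitable $\mathcal{H}^d$-valued random element. Concretely, from the definition $T_\sigma(f)(x) = \E_{X \sim \p0}[S^*(f,X,\theta_0) S^*(\K_x, X, \theta_0)]$, one sees that $T_\sigma$ is (up to the usual identifications) the second-moment/covariance operator of the random function $x \mapsto S^*(\K_x, X, \theta_0)$, or equivalently of the random element of $\mathcal{H}^d$ obtained by applying the adjoint Stein-type map to $X$. For such a covariance operator, the trace equals $\E_{X \sim \p0}\big[\, \lVert \Psi(X) \rVert_{\mathcal{H}^d}^2 \,\big]$ for the relevant feature map $\Psi$, so I would compute (or bound) this expected squared norm. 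Using the reproducing property as in Appendix~\ref{sec:close-form}, $\lVert \Psi(X) \rVert^2$ unwinds into an expression involving $h_{p_{\theta_0}}(X,X)$ together with the extra terms coming from the second summand of $S^*$ in~\eqref{eq:defn-oracle}, i.e. terms of the form $\lVert \E_{X' \sim \P_{\theta_0}}[(\nabla_\theta s_{\theta_0}(X'))^\top f(X')] \rVert$ paired with $I(X,\theta_0)$. The required finiteness then follows from the regularity conditions in Appendix~\ref{sec:regularity_conditions} (which should guarantee $\E_{\p0}[h_{p_{\theta_0}}(X,X)] < \infty$, as in Proposition~\ref{prop:characteristic_propety}) together with the moment condition $\sup_\theta \E[\lVert I(X_i,\theta)\rVert^2] < \infty$ from Definition~\ref{def:ale}, via Cauchy–Schwarz on the cross terms.

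Then $\sum_u \lambda_u = \operatorname{Tr}(T_\sigma) = \E_{\p0}[\lVert \Psi(X)\rVert^2_{\mathcal{H}^d}] < \infty$, so $\E[W] = \sum_u \lambda_u < \infty$, hence $\P(W < \infty) = 1$, which is the claim.

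The main obstacle I anticipate is the bookkeeping needed to express $\operatorname{Tr}(T_\sigma)$ cleanly as an expected squared RKHS-norm: one has to verify that $T_\sigma$ genuinely acts as a covariance operator on the right Hilbert space (taking care that $S^*$ mixes an $\mathcal{H}^d$-linear part $\A_\theta f(X)$ with a finite-dimensional correction through $I(X,\theta)$, so the natural ambient space is something like $\mathcal{H}^d \oplus \R^k$ or $\mathcal{H}^d$ after absorbing the correction into a modified feature map), and that the map $f \mapsto S^*(f,X,\theta_0)$ is bounded and representable via an element of that space for $\P_{\theta_0}$-a.e.\ $X$. Once that representation is set up, the trace computation and the invocation of the regularity and moment conditions are routine. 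I would also remark that the same estimate is presumably already established (implicitly) in the proof of Theorem~\ref{thm:asymptotic_distribution}, where tightness of the relevant stochastic process and the existence of the Gaussian-process limit require exactly $\operatorname{Tr}(T_\sigma) < \infty$; if so, Proposition~\ref{prop:finite_w} is essentially a corollary and I would simply point to that part of the argument.
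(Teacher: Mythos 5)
Your proposal is correct and follows essentially the same route as the paper: both reduce the claim to showing $\sum_u \lambda_u = \operatorname{Tr}(T_\sigma) < \infty$, which the paper has already established while verifying Condition~\eqref{eq:c2} in the proof of Theorem~\ref{thm:bootstrap_distribution} (that is exactly the $\sum_i \sigma(\phi_i,\phi_i) < \infty$ bound via reproducing property and the regularity conditions), and both then conclude $W < \infty$ a.s. The only cosmetic difference is the final step: you use the elementary observation that a nonnegative random variable with finite expectation is a.s.\ finite, whereas the paper invokes Kolmogorov's three-series theorem; your version is if anything slightly more direct.
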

Proposition~\ref{prop:finite_w} also implies other limit distributions used in the follow-up sections are also almost surely finite.
For example, the limit of test statistics under multiplicative local alternatives, i.e. $W_1$ defined in Section~\ref{sec:asymptotic_power}, is almost surely finite.
In fact, notice that
\begin{align*}
	W_1 \le 2 \sum_{u=1}^\infty \lambda_u Z_u^2 + 2\sum_{u=1}^\infty \lambda_u \mu_u^2 \le 2 W + 2 \sum_{u=1}^\infty \lambda_u \mu_u^2.
\end{align*}
Combining with $\sum_{u=1}^\infty \mu_u^2 < \infty$ (shown in the proof of Theorem~\ref{thm:power_local_alternative}), we have $W_1 < \infty$ a.s.
The similar argument can be applied to $W_1'$ in Section~\ref{sec:asymptotic_power}, which ensures it is also almost surely finite.

Although Theorem~\ref{thm:bootstrap_distribution} is enough for the asymptotic validity of the bootstrap tests considered in this paper, we mention here that the stronger results automatically hold in this setting.
Indeed, the convergence holds uniformly in the cutoff $t$, and we summarize it as an corollary for further use.
The quantities in Corollary~\ref{cor:bootstrap_convergence} follow the definitions in Theorem~\ref{thm:bootstrap_distribution}.
The proof of Corollary~\ref{cor:bootstrap_convergence} is provided in Appendix~\ref{sec:proof_bootstrap_as_convergence}.
\begin{corollary}\label{cor:bootstrap_convergence}
	Under the same assumptions as in Theorem~\ref{thm:bootstrap_distribution}, we have
	\begin{align}\label{eq:main_strong_convergence}
		\lim_{n\rightarrow\infty}\P\left[\sup_{t\in\R}\left|\P[n\tilde{T}_n\leq t|\mathcal{F}_n]- \P[W\leq t]\right|>\delta\right]=0.
	\end{align}
	With $\lim_{n \to \infty} \hat{\theta}_n = \theta_0$ almost surely, we additionally have
	\begin{align}\label{eq:main_as_convergence}
		\lim_{n\rightarrow\infty}\sup_{t\in\R}\left|\P[n\tilde{T}_n\leq t|\mathcal{F}_n] - \P[W\leq t]\right| = 0 \quad \text{almost surely}.
	\end{align}
\end{corollary}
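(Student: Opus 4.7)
\textbf{Proof proposal for Corollary~\ref{cor:bootstrap_convergence}.} The plan is to derive both uniform statements from the pointwise in-probability result of Theorem~\ref{thm:bootstrap_distribution} by combining it with a Polya-type argument for random cdfs, after first recording that the limiting cdf $F(t) \equiv \P[W \le t]$ is continuous on $\R$. Continuity follows because $W = \sum_u \lambda_u Z_u^2$: excluding the degenerate case in which all $\lambda_u$ vanish (where the statement is trivial), pick any $u_0$ with $\lambda_{u_0}>0$ so that $\lambda_{u_0}Z_{u_0}^2$ has a Lebesgue density on $(0,\infty)$, whence convolving with the (independent) remaining tail preserves absolute continuity.

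For the first claim~\eqref{eq:main_strong_convergence}, write $F_n(t) \equiv \P[n\widetilde T_n \le t \mid \mathcal F_n]$, fix $\epsilon>0$, and use continuity of $F$ to choose a finite partition $-\infty = t_0 < t_1 < \cdots < t_K = +\infty$ with $F(t_k) - F(t_{k-1}) \le \epsilon$ for every $k$. Monotonicity of both $F_n$ and $F$ gives, for any $t \in [t_{k-1}, t_k]$,
\begin{align*}
F_n(t) - F(t) &\le F_n(t_k) - F(t_{k-1}) \le |F_n(t_k) - F(t_k)| + \epsilon, \\
F(t) - F_n(t) &\le F(t_k) - F_n(t_{k-1}) \le |F_n(t_{k-1}) - F(t_{k-1})| + \epsilon,
\end{align*}
so that $\sup_{t\in\R} |F_n(t) - F(t)| \le \max_{0\le k \le K}|F_n(t_k) - F(t_k)| + \epsilon$. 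Applying Theorem~\ref{thm:bootstrap_distribution} at each of the finitely many cutoffs $t_k$ forces the maximum to vanish in probability; letting $\epsilon \downarrow 0$ then yields~\eqref{eq:main_strong_convergence}.

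For the second claim~\eqref{eq:main_as_convergence}, I first upgrade Theorem~\ref{thm:bootstrap_distribution} to an \emph{almost sure} pointwise statement. The bootstrap cdf factors as $F_n(t) = G_n(t;\hat\theta_n)$, where $G_n(t;\theta) \equiv \P[nT_n(\widetilde X,\mathcal E(\widetilde X)) \le t]$ with $\widetilde X \iidsim \law_{\theta}$. Revisiting the proof of Theorem~\ref{thm:bootstrap_distribution}, the stable-convergence machinery and the functional CLT there actually establish the stronger deterministic-sequence statement that $G_n(t;\theta_n) \to F(t)$ for every continuity point $t$ of $F$ and every \emph{deterministic} sequence $\theta_n \to \theta_0$ (indeed, the operator $T_\sigma$ depends continuously on $\theta$ near $\theta_0$, and the U-statistic linearisation is uniform over the shrinking neighbourhood of $\theta_0$ supplied by the UALE Assumption~\ref{assu:uale}). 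Under the additional hypothesis $\hat\theta_n \to \theta_0$ almost surely, there exists an event $\Omega_0$ with $\P(\Omega_0) = 1$ on which $\hat\theta_n(\omega)$ is such a deterministic convergent sequence; consequently, $F_n(t)(\omega) \to F(t)$ for each $t \in \R$ and each $\omega \in \Omega_0$. Intersecting over a countable dense set $\{t_m\}_{m\ge1} \subset \R$ retains a full-probability event, and a final application of the same monotonicity-plus-continuity argument as in the first claim promotes this countable pointwise a.s.\ convergence to the uniform a.s.\ convergence in~\eqref{eq:main_as_convergence}.

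The main obstacle will be the first step of Part 2, namely verifying that the proof of Theorem~\ref{thm:bootstrap_distribution} actually delivers the deterministic-sequence statement $G_n(t;\theta_n)\to F(t)$ rather than merely a random-$\hat\theta_n$ in-probability statement. This requires tracing through the stable-convergence decomposition to confirm that every intermediate convergence (linearisation remainder, eigen-spectrum of $T_\sigma(\theta)$, Gaussian-process tightness) is locally uniform in $\theta$; the uniform-in-neighbourhood portion of Assumption~\ref{assu:uale} is exactly what makes this local-uniformity available. Once this is in place, the rest of the argument is routine bookkeeping with Polya's theorem applied to random cdfs.
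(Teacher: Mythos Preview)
Your proposal is correct and follows essentially the same route as the paper: for~\eqref{eq:main_strong_convergence} both apply a conditional Polya argument (the paper packages this as Lemma~\ref{lemma:conditioanl-polya}), and for~\eqref{eq:main_as_convergence} both fix $\omega$ on the full-measure event $\{\hat\theta_n\to\theta_0\}$ and re-establish weak convergence for the resulting deterministic sequence, then upgrade via Polya. One minor clarification: rather than extracting the deterministic-sequence statement from the stable-convergence proof of Theorem~\ref{thm:bootstrap_distribution}, the paper re-runs the argument with the simpler non-stable machinery (ordinary Lindeberg--Feller CLT and the unconditional functional convergence Lemma~\ref{lemma:functional-convergence} with conditions~\eqref{eq:c1'},~\eqref{eq:c2},~\eqref{eq:c3'}), which is exactly the ``tracing through'' you flag as the main obstacle.
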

Eqn.~\eqref{eq:main_strong_convergence} is a stronger version of Theorem~\ref{thm:bootstrap_distribution} in the sense that the quantile convergence holds uniformly in $t$.
This is particularly useful when we want to do testing with multiple levels.
Eqn.~\eqref{eq:main_as_convergence} reveals that when stronger assumption on the estimator is imposed, stronger convergence guarantees on the parametric bootstrap can be obtained.
With the almost sure convergence, the bootstrap test is asymptotic valid for almost all sequences of data generated from $\law$, while in Theorem~\ref{thm:bootstrap_distribution}, only a probabilistic guarantee is provided.

\section{Neyman orthogonal SKSD test}\label{sec:neyman_orthogonal_sksd_test}
In this section, we discuss how to construct a Neyman orthogonal SKSD test and its computational cost discussed in Section~\ref{sec:asymptotic_bootstrap_distribution}.
For SKSD test, we prove in Eqn.~\eqref{eq:sn} that
\begin{align*}
    T_n(X, \hat{\theta}_n) = \KSD^2(\law_{\hn}, \law_n) = \sup_{f \in \mathcal{B}_{\mH^d}(1)}S_n(f)^2 + o_p(1),
\end{align*}
where $\mathcal{B}_{\mH^d}(1)$ is the unit ball in $\mH^d$, and the functional $S_n(f)$ is defined as
\begin{align*}
		S_n(f)
		&= \frac{1}{\sqrt{n}}\sum_{i=1}^n \left[\A_{\theta_0}f(X_i) + \Big\langle \E_{X'\sim P_{\theta_0}} \left[[\nabla_\theta s_{\theta_0}(X')]^{\top}f(X')\right], I(X_i,\theta_0) \Big\rangle\right],
\end{align*}
where the second term in the summation is due to the effect of parameter estimation, and leads to the incorrectness of the wild bootstrap procedure discussed in Section~\ref{sec:asymptotic_bootstrap_distribution}.
To fix, it is possible to consider the following transformation of $f$ such that
\begin{align*}
    \widetilde{f} \coloneqq f - \nabla_\theta s_{\theta_0}(X) \left(\E_{\theta_0}[\nabla_\theta s_{\theta_0}(X)^\top \nabla_\theta s_{\theta_0}(X)]\right)^{-1} \E_{\theta_0}[\nabla_\theta s_{\theta_0}(X)^\top f(X)],
\end{align*}
which leads to the Neyman orthogonality condition:
\begin{align*}
	\text{and} \quad \E_{\theta_0}[[\nabla_\theta s_{\theta_0}(X)]^{\top} \widetilde{f}(X)] = 0 \text{ for any } f \in \mH^d.
\end{align*}
Let $r_\theta(X) = \nabla_\theta s_\theta(X)$ and $\mK(x, y) \in \R^{d \times d}$ be the matrix-value kernel.
The transform function $\widetilde{f}$ corresponds to a transformed kernel, which we refer as the Neyman orthogonal kernel \citep{escanciano2024gaussian} defined as
\begin{align*}
    \widetilde{\mK}(x_i, x_j) &= \mK(x_i, x_j) - r_{\theta_0}(x_i) \left(\E_{\theta_0}[r_{\theta_0}(X)^\top r_{\theta_0}(X)]\right)^{-1} \E_{\theta_0}[r_{\theta_0}(X)^\top \mK(X, x_j)]\\
    &\quad - \E_{\theta_0}[\mK(x_i, X) r_{\theta_0}(X)] \left(\E_{\theta_0}[r_{\theta_0}(X)^\top r_{\theta_0}(X)]\right)^{-1} r_{\theta_0}(x_j)^\top\\
    &\quad + r_{\theta_0}(x_i) \left(\E_{\theta_0}[r_{\theta_0}(X)^\top r_{\theta_0}(X)]\right)^{-1} \E_{\theta_0}[r_{\theta_0}(X)^\top \mK(X, X') r_{\theta_0}(X')]\\
	&\qquad \qquad \qquad \qquad \qquad \qquad\qquad \qquad \qquad\left(\E_{\theta_0}[r_{\theta_0}(X)^\top r_{\theta_0}(X)]\right)^{-1} r_{\theta_0}(x_j)^\top.
\end{align*}
Let $\mPi(x_i, x_j) = r_{\theta_0}(x_i) \left(\E_{\theta_0}[r_{\theta_0}(X)^\top r_{\theta_0}(X)]\right)^{-1} r_{\theta_0}(x_j)^\top$.
For simplicity, we then omit the dependence on $\theta_0$ in all expectation calculation in the following.
Then the Neyman orthogonal kernel can be written as
\begin{align*}
    \widetilde{\mK}(x_i, x_j) = \mK(x_i, x_j) - \E[\mPi(x_i, X) \mK(X, x_j)] &- \E[\mK(x_i, X) \mPi(X, x_j)^\top]\\
	&+ \E[\mPi(x_i, X) \mK(X, X') \mPi(X', x_j)^\top].
\end{align*}
In the manuscript, we consider the scalar kernel case, i.e., $K(x, y) \in \R$, which is equivalent to consider $\mK(x, y) = K(x, y)I_d$. 
Then, the Neyman orthogonal kernel is no longer diagonal, so we need to calculate KSD with general matrix-value kernel.
With the derived Neyman orthogonal kernel, we can compute the corresponding SKSD test statistic. To emphasize the difference, we denote this new test statistic with different index, i.e., $T_{n, \widetilde{\mK}}(X, \hat{\theta}_n) \coloneqq n \KSD^2(\law_{\hat{\theta}_n}, \law_n; \widetilde{\mK})$.
Standard computation as in Appendix~\ref{sec:additional_details} gives
\begin{align}\label{eq:ksd-neyman}
    \KSD^2(\law_{\hn}, \law_n; \widetilde{\mK}) &= \E_{X, X' \sim \law_n}[u_{\hn, \widetilde{\mK}}(X, X')]
    = \frac{1}{n^2} \sum_{i=1}^n \sum_{j=1}^n h_{p_{\theta_n}}(X_i, X_j; \widetilde{\mK}),
\end{align}
where the function $h_{p_{\theta}}(x, y; \widetilde{\mK})$ is defined as
\begin{align}\label{eq:ksd-neyman-h}
    &h_{p_{\theta}}(x, y; \widetilde{\mK})\notag\\ 
	&= s_{\theta}(x)^\top \widetilde{\mK}(x, y) s_{\theta}(y) + s_{\theta}(x)^\top \nabla_2 \cdot \widetilde{\mK}(x, y) + \nabla_1 \cdot \widetilde{\mK}(x, y) s_{\theta}(y) + \nabla_1 \cdot \nabla_2 \cdot \widetilde{\mK}(x, y),
\end{align}
where $\nabla \cdot$ is the divergence, and we adopt the convention that $\nabla_1 \cdot \mK \in \R^{1 \times d}$ and $\nabla_2 \cdot \mK \in \R^{d \times 1}$ for matrix-value kernel $\mK \in \R^{d \times d}$.
For the derivative of the kernel, we have
\begin{align*}
    \nabla_1 \cdot \widetilde{\mK}(x_i, x_j) =& \nabla_{1} \cdot \mK(x_i, x_j) - \E[\nabla_{1} \cdot \mPi(x_i, X) \mK(X, x_j)] - \E[\nabla_{1} \cdot \mK(x_i, X) \mPi(X, x_j)^\top]\\
    &+ \E[\nabla_{1} \cdot \mPi(x_i, X) \mK(X, X') \mPi(X', x_j)^\top],\\
    \nabla_2 \cdot \widetilde{\mK}(x_i, x_j) =& \nabla_{2} \cdot \mK(x_i, x_j) - \E[\mPi(x_i, X) \nabla_{2} \cdot \mK(X, x_j)] - \E[\nabla_{2} \cdot \mK(x_i, X) \mPi(X, x_j)^\top]\\
    &+ \E[\mPi(x_i, X) \nabla_{2} \cdot \mK(X, X') \mPi(X', x_j)^\top],\\
    \nabla_1 \cdot \nabla_2 \cdot \widetilde{\mK}(x_i, x_j) =& \nabla_{1} \cdot \nabla_{2} \cdot \mK(x_i, x_j) - \E[\nabla_{1} \cdot  \mPi(x_i, X) \nabla_{2} \cdot \mK(X, x_j)]\\
	&- \E[\nabla_{1} \cdot \mK(x_i, X) \nabla_{2} \cdot \mPi(X, x_j)^\top]\\
    &+ \E[\nabla_{1} \cdot  \mPi(x_i, X) \mK(X, X') \nabla_{2} \cdot  \mPi(X', x_j)^\top].
\end{align*}
For the Neyman orthogonalized SKSD statistic, we can show that wild bootstrap is valid to approximate the null distribution of the test statistic.
The test procedure is summarized in Algorithm~\ref{alg:neyman-SKSD}.
\begin{center}
	\begin{minipage}{\linewidth}
		\begin{algorithm}[H]
			\nonl  \textbf{Input:} Distribution class $\{\law_\theta:\theta\in\Theta\}$, data $X=(X_1,\ldots,X_n)$, Neyman orthogonal kernel $\widetilde{\mK}$, estimation algorithm $\mathcal{E}:\mathcal{X}^n\rightarrow\Theta\subseteq\mathbb{R}^k$, bootstrap size $B$.\\
			Obtain estimate $\hat\theta_n=\mathcal{E}(X)$\;
			Compute $T_{n, \widetilde{\mK}} \equiv T_{n, \widetilde{\mK}}(X,\hat\theta_n)$ as in Eqns.~\eqref{eq:ksd-neyman} and \eqref{eq:ksd-neyman-h}\;
			\For{$b = 1, 2, \dots, B$}{
				Generate wild bootstrap weight $\{w_i^{(b)}\}_{i \in [n]} \overset{\mathrm{i.i.d.}}{\sim} \text{Unif}\{-1, +1\}$\;
				Compute $\widetilde{T}_{n, \widetilde{\mK}}^{(b)}\equiv 1/n^2 \sum_{i=1}^n \sum_{j=1}^n w_i^{(b)} w_j^{(b)} h_{p_{\hat\theta_n}}(X_i, X_j; \widetilde{\mK})$.
			}
			\nonl \textbf{Output:} $p$-value $\sum_{b=1}^B \ind(\widetilde{T}_{n, \widetilde{\mK}}^{(b)}\geq T_{n, \widetilde{\mK}})/B$.
			\caption{\bf Neyman orthogonalized SKSD GoF testing procedure}
			\label{alg:neyman-SKSD}
		\end{algorithm}
	\end{minipage}
\end{center}
To compute the test statistic $T_{n, \widetilde{\mK}}$, we need to apply $n \times n$ matrix multiplications, which has computational complexity $O(n^3)$.
For each resampled statistic based on wild bootstrap, we need additional $O(n^2)$ computation.
Thus, the total computational complexity of the Neyman orthogonalized SKSD test is $O(n^3 + n^2 B)$.
When $n \gg B$, the computational complexity is dominated by $O(n^3)$, which can be more expensive than the standard SKSD test.
Additionally, notice that the construction of the Neyman orthogonal kernel $\widetilde{\mK}$ involves several expectation terms under the null model, which may not have closed-form expressions.
We may need to approximate these expectation terms using Monte Carlo methods, which introduces additional computational overhead in sampling and potential approximation errors on the kernel function.

\section{Regularity conditions}\label{sec:regularity_conditions}

\begin{assumption}[Regularity on the support set $\Theta$]\label{assu:regularity_Theta}
	Suppose $\Theta\subseteq \mathbb{R}^k$ is a bounded open convex set.
\end{assumption}

\begin{assumption}[Regularity on the distribution]\label{ass:p}
	$\{\law_\theta\}_{\theta \in \Theta}$ is a family of distribution in $\R^d$ with support $\X$
	and density function $p_\theta(x)$.
	For $P_\theta$-almost every $x$, $p_\theta(x) \in \mathcal{C}^{1,4}(\X,\Theta)$, and for all $\theta \in \Theta$, there exists constants $C_1, \ldots C_4$ such that for $r = 1, 2$, $m = 2, 3$, the score function $s_\theta(x) = \nabla_x \log p_\theta(x)$ satisfies
	\begin{align*}
		&\E_{X \sim \law}\left[ \|s_{\theta}(X)\|^r \right] \le C_1 < \infty.\\
		&\E_{X \sim \law}\left[ \|\nabla_\theta s_{\theta}(x)\|_{2}^r \right] \le C_2 < \infty.\\
		&\E_{X \sim \law}\left[ \|\nabla_\theta^{(m)} s_{\theta}(x)\|_{2}^{r} \right] \le C_3 < \infty,
	\end{align*}
	where $\nabla^{(m)}$ denotes the $m$-th gradient operator.
\end{assumption}

\begin{remark}
	The last line in Assumption~\ref{ass:p} can be weaken to
	\begin{align*}
		\E_{X \sim \law}\left[ \|\nabla_\theta^{(m)} s_{\theta}(x)\|_{2}^{4/3 + \kappa} \right] \le C_3 < \infty,
	\end{align*}
	for some $\kappa > 0$, while all the results keep the same.
\end{remark}

\begin{assumption}[Regularity on the kernel]\label{ass:k}
	The kernel $K(x,x')$ is a measurable function in $\X \times \X$, and is symmetric and positive definite.
	There exists some universal constant $C_K$ such that for all $x,x' \in \X$, we have
	\begin{align*}
		K(x,x'), \|\nabla_x K(x,x')\|, \tr(\nabla_x \nabla_{x'} K(x,x')) \leq C_K.
	\end{align*}
\end{assumption}

\section{Preliminaries on stable convergence}\label{sec:intro_stable_convergence}
For our testing procedure presented in Algorithm~\ref{alg:main}, we calibrate the distribution of the test statistic under the null hypothesis using parametric bootstrap, i.e., resample from the null model conditional on the observed data.
Thus, to establish the asymptotic validity of the bootstrap test~\eqref{eq:bootstrap_test}, the limit behaviors of the test statistic and the bootstrap statistic need to be analyzed.

In Theorem~\ref{thm:asymptotic_distribution}, we establish the weak convergence of the test statistic.
However, similar results on the bootstrap statistic $\widetilde{T}_n$ do not imply the validity of the test, since $(T_n, \widetilde{T}_n^{(1)}, \ldots, \widetilde{T}_n^{(B)})$ need to be considered jointly.
Therefore, it suffices to study the asymptotic behavior of the conditional quantile function $\P[n\widetilde{T}_n \le t | \mathcal{F}_n]$, which is a random variable.

Classical weak convergence results are established for the convergence of the quantile function, which is not sufficient for our purpose in this conditional convergence scenario, and a stronger notion of weak convergence is needed.
To this end, we apply the notion of stable convergence \citep{hausler2015stable}, which is a stronger form of weak convergence and is suitable for studying the asymptotic behavior of conditional distributions.
\begin{definition}[Stable convergence]\label{defn:stable_convergence}
	Suppose there is a sequence of random variables $(Y_n)_{n \geq 1}$ defined on the probability space $(\Omega, \F, \P)$ with values in a Polish space $(\mathcal{Y}, \mathcal{B})$.
	We say that $Y_n$ converges $\mathcal{G}$-stably to a Markov kernel $Q(\omega, \cdot)$ for a sub-$\sigma$-algebra $\mathcal{G} \subseteq \mathcal{F}$, and write as
	\begin{align*}
		Y_n \overset{d}{\rightarrow} Q(\omega, \cdot) \quad \mathcal G-\text{stably},
	\end{align*}
	if for every $\mathcal{G}$-measurable random variable $Z \in L_1(\P)$ and every bounded continuous function $f: \mathcal{Y} \to \mathbb{R}$, we have
	\begin{align}\label{eq:defn-stable-convergence}
		\lim_{n \to \infty} \E[Zf(Y_n)] = \int_{\Omega}\int_{\mathcal{Y}} Z(\omega) f(y) Q(\omega, \d y) \P(\d \omega).
	\end{align}
\end{definition}
When the limit Markov kernel $Q(\omega, \cdot)$ does not depend on $\omega$, we may omit the dependence on $\omega$ and write the limit as a probability measure.
For simplicity, in the manuscript, we also write
\begin{align*}
	Y_n \overset{d}{\to} Y \quad \mathcal G-\text{stably}
\end{align*}
for a random variable $Y$ defined on the same probability space, if $Y_n$ converges $\mathcal{G}$-stably to the Markov kernel induced by the conditional distribution of $Y|\mathcal{G}$.

Generally speaking, stable convergence requires the weak limit of the sequence $Y_n$ keeps ``stable'' when conditioning on different sub-$\sigma$-algebra $\mathcal{G}$.
The following lemma, adapted from Theorem 3.2 in \citet{hausler2015stable}, provides an equivalent characterization of stable convergence from this perspective.
\begin{lemma}\label{lemma:stable_equivalence}
	Suppose the setting in Definition~\ref{defn:stable_convergence} holds. 
	Then $Y_n$ converges $\mathcal{G}$-stably to $Y$ is equivalent to
	\begin{align*}
		Y_n \overset{d}{\rightarrow} Y \enspace \text{under } \P_F \text{ for all } F \in \mathcal{G} \text{ with } \P(F) > 0,
	\end{align*}
	where $\P_F(\cdot) = \P(\cdot|F) \equiv \P(\cdot \cap F)/ \P(F)$.
\end{lemma}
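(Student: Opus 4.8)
The plan is to establish the two implications separately, using throughout the identification of the limiting Markov kernel: since $(\mathcal{Y},\mathcal{B})$ is Polish, $Y$ admits a regular conditional distribution $Q(\omega,\cdot)$ given $\mathcal{G}$, and $\int_{\mathcal{Y}}f(y)\,Q(\omega,\d y)=\E[f(Y)\mid\mathcal{G}](\omega)$ holds for $\P$-a.e.\ $\omega$ and every bounded measurable $f$. So the task reduces to relating the defining relation~\eqref{eq:defn-stable-convergence}, with $Q$ the conditional law of $Y$, to the stated family of conditional weak limits.

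For the direction ``$\mathcal{G}$-stable $\Rightarrow$ conditional weak convergence under every $\P_F$'', I would fix $F\in\mathcal{G}$ with $\P(F)>0$ and apply~\eqref{eq:defn-stable-convergence} with the $\mathcal{G}$-measurable, bounded (hence $L_1(\P)$) choice $Z=\indicator_F/\P(F)$. The left side of~\eqref{eq:defn-stable-convergence} becomes $\E_{\P_F}[f(Y_n)]$, while the right side becomes $\P(F)^{-1}\int_F\E[f(Y)\mid\mathcal{G}]\,\d\P=\P(F)^{-1}\E[f(Y)\indicator_F]=\E_{\P_F}[f(Y)]$; letting $n\rightarrow\infty$ gives $\E_{\P_F}[f(Y_n)]\rightarrow\E_{\P_F}[f(Y)]$ for every bounded continuous $f$, which is exactly $Y_n\overset{d}{\rightarrow}Y$ under $\P_F$.

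For the converse, I would start with simple $\mathcal{G}$-measurable integrands: if $Z=\sum_{k=1}^m c_k\indicator_{F_k}$ with disjoint $F_k\in\mathcal{G}$ (dropping those of probability zero, which is harmless), then linearity and the hypothesis give $\E[Zf(Y_n)]=\sum_k c_k\P(F_k)\E_{\P_{F_k}}[f(Y_n)]\rightarrow\sum_k c_k\P(F_k)\E_{\P_{F_k}}[f(Y)]=\E\big[Z\,\E[f(Y)\mid\mathcal{G}]\big]$, which is~\eqref{eq:defn-stable-convergence} for such $Z$. For a general $\mathcal{G}$-measurable $Z\in L_1(\P)$, I would approximate by simple $\mathcal{G}$-measurable $Z_j\rightarrow Z$ in $L_1(\P)$; since $f$ is bounded, say by $M$, the error $|\E[(Z-Z_j)f(Y_n)]|\le M\|Z-Z_j\|_{L_1(\P)}$ is controlled uniformly in $n$, and the analogous bound holds for the limiting functional $Z\mapsto\E[Z\,\E[f(Y)\mid\mathcal{G}]]$, so a three-$\varepsilon$ argument (pick $j$ to make the two tail terms small, then $n$ large for the simple-function term) upgrades the convergence from simple $Z$ to all $\mathcal{G}$-measurable $Z\in L_1(\P)$, giving~\eqref{eq:defn-stable-convergence}.

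I do not expect a genuine obstacle here: the steps requiring mild care are the existence of a regular conditional distribution of $Y$ given $\mathcal{G}$ (this is where the Polish assumption on $\mathcal{Y}$ enters, and it is needed for the limit kernel in~\eqref{eq:defn-stable-convergence} to even make sense), the harmless removal of $\P$-null sets among the $F_k$, and the observation that boundedness of the test functions $f$ is precisely what makes the $L_1(\P)$-approximation of $Z$ uniform in $n$. The only conceptual point is that probing against indicators of $\mathcal{G}$-sets is, by density in $L_1(\P)$ restricted to $\mathcal{G}$-measurable functions, equivalent to probing against all $\mathcal{G}$-measurable $L_1(\P)$ integrands.
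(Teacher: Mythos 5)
Your proof is correct, and since the paper itself does not prove this lemma but cites it as Theorem~3.2 of \citet{hausler2015stable}, there is no in-paper argument to compare against. Your two-step argument — test against $Z=\indicator_F/\P(F)$ for the forward direction, then go from indicators to simple $\mathcal{G}$-measurable functions to general $Z\in L_1(\P)$ via the uniform-in-$n$ bound $|\E[(Z-Z_j)f(Y_n)]|\le\|f\|_\infty\|Z-Z_j\|_{L_1}$ — is exactly the standard argument in the cited reference, and the one place needing care (identifying the limit kernel with the regular conditional distribution of $Y$ given $\mathcal{G}$, which exists because $\mathcal{Y}$ is Polish) is addressed correctly.
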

With Lemma~\ref{lemma:stable_equivalence}, it is clear that if we take $\mathcal{G} = \{\emptyset, \Omega\}$, then stable convergence reduces to the usual weak convergence.
In the context of our bootstrap test, we aim to show that the bootstrap statistic converges stably to some limit distribution under different realizations of the data generating process.
Thus, by Eqn.~\eqref{eq:defn-stable-convergence}, the joint distribution of $(T_n, \widetilde{T}_n^{(1)}, \ldots, \widetilde{T}_n^{(B)})$ can be characterized, which leads to the asymptotic validity of the bootstrap test.

A key technical result applied in the proofs is the stable version of the central limit theorem.
The following lemma, adapted from Theorem 6.1 in \citet{hausler2015stable}, states the conditions under which a martingale difference array converges stably to a normal distribution:
\begin{lemma}[Stable Central Limit Theorem]\label{lemma:stable-clt}
	Let $(Y_{nk})_{1\le k \le k_n, n \in \mathbb{N}}$ be a square integrable martingale difference
	array adapted to the array $(\F_{nk})_{1\le k \le k_n, n \in \mathbb{N}}$.
	Let $\mathcal{G}_{nk} = \cap_{m \ge n}\F_{mk}$ for $n \in \mathbb N$ and $1 \le k \le k_n$, and $\mathcal G = \sigma(\cup_{n \in \mathbb{N}}\mathcal{G}_{n k_n})$.
	Assume that for some $\mathcal G$-measurable real, non-negative random variable $\eta$, as $n\to\infty$,
	\begin{align}\label{eq:stable-clt-1}
		\sum_{k=1}^{k_n} \E[Y_{nk}^2 |\F_{n,k-1}] \overset{p}{\to} \eta^2,\tag{N}
	\end{align}
	and for every $\varepsilon > 0$, 
	\begin{align}\label{eq:stable-clt-2}
		\sum_{k=1}^{k_n} \E[Y_{nk}^2\ind(|Y_{nk}| \ge \varepsilon) |\F_{n,k-1}] \overset{p}{\to} 0.\tag{CLB}
	\end{align}
	The condition~\eqref{eq:stable-clt-1} and \eqref{eq:stable-clt-2} can be regarded as the conditional form of moment condition and Lindeberg's condition.
	Then
	\begin{align}
		\sum_{k=1}^{k_n}Y_{nk} \overset{d}{\to} \mathcal{N}(0, \eta^2) \quad \mathcal G-\text{stably}.
	\end{align} 
\end{lemma}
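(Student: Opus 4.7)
The plan is to prove the stable CLT via the characteristic-function criterion arising from Definition~\ref{defn:stable_convergence}. Write $S_n := \sum_{k=1}^{k_n}Y_{nk}$. By a standard approximation argument on bounded continuous functions, it suffices to establish that for every bounded $\mathcal{G}$-measurable $Z$ and every $t\in\mathbb{R}$,
\begin{align*}
	\E[Ze^{itS_n}] \;\longrightarrow\; \E\bigl[Z\exp(-t^2\eta^2/2)\bigr],
\end{align*}
since the right-hand side equals $\int_\Omega Z(\omega)\int_{\mathbb{R}} e^{ity}\,Q(\omega,dy)\,\P(d\omega)$ for the Markov kernel $Q(\omega,\cdot) = \mathcal{N}(0,\eta(\omega)^2)$, and complex exponentials form a separating family for stable convergence in this sense.

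First I would carry out the classical Dvoretzky--McLeish factorization. Set $\psi_{nk}(t) := \E[e^{itY_{nk}}\mid\F_{n,k-1}]$ and write $e^{itS_n} = A_n(t)\,M_n(t)$, where $A_n(t):=\prod_{k=1}^{k_n} \psi_{nk}(t)$ is predictable while $M_n(t):=\prod_{k=1}^{k_n} e^{itY_{nk}}/\psi_{nk}(t)$ is a complex martingale with $\E[M_n(t)]=1$. Using $|e^{iy}-1-iy+y^2/2|\leq \min(y^2,|y|^3)$ together with the martingale-difference property $\E[Y_{nk}\mid\F_{n,k-1}]=0$, the Lindeberg-type condition~\eqref{eq:stable-clt-2} annihilates the large-deviation remainder while the normalization condition~\eqref{eq:stable-clt-1} controls the quadratic part, producing $A_n(t) \convp \exp(-t^2\eta^2/2)$; since $|A_n(t)|\leq 1$, bounded convergence then lets us pass to the limit under $\E[Z \cdot \,\cdot\,]$.

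The main obstacle is to establish $\E[Z M_n(t)] \to \E[Z]$ for every bounded $\mathcal{G}$-measurable $Z$, because this is where the nesting $\mathcal{G}_{nk} := \bigcap_{m\ge n}\F_{mk}$, $\mathcal{G} := \sigma(\bigcup_n \mathcal{G}_{n k_n})$ plays its essential role: it permits $L^1$-approximation of $Z$ by variables measurable at the start of each row, after which the tower property combined with the martingale identity $\E[M_n(t)\mid\F_{n,0}]=1$ collapses $\E[ZM_n(t)]$ to $\E[Z]$ up to a vanishing error. The execution of this approximation, together with a dominated-convergence argument that uses a CLB-driven uniform lower bound on $|\psi_{nk}(t)|$ to control $|M_n(t)|$, is carried out in Theorem~6.1 of~\citet{hausler2015stable}, from which we import the conclusion rather than reproducing the full technical machinery.
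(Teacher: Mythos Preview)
The paper does not supply its own proof of this lemma: it is stated as a direct adaptation of Theorem~6.1 in \citet{hausler2015stable} and is used as a black-box tool. Your proposal sketches the standard characteristic-function/Dvoretzky--McLeish route and then, for the delicate step (controlling $\E[ZM_n(t)]$ via the nesting structure of the $\sigma$-algebras), explicitly imports the conclusion from the same Theorem~6.1 of \citet{hausler2015stable}. So your treatment and the paper's are aligned: both ultimately rely on that reference, with your version adding a correct high-level outline of how the proof goes.
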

With Lemma~\ref{lemma:stable-clt}, we further establish the stable convergence of random processes in Lemma~\ref{lemma:functional-stable-convergence} and finally prove the main parametric bootstrap result in Theorem~\ref{thm:bootstrap_distribution}.

\section{Proofs of results in Section~\ref{sec:method}}
\subsection{Proof of Theorem~\ref{thm:sksd_consistency}}\label{sec:proof_sksd_consistency}
\begin{proof}[Proof of Theorem~\ref{thm:sksd_consistency}]
	By the definition of the SKSD test statistic, we have
	\begin{align*}
		T_n(X,\hat\theta_n) = \sup_{g \in B_{\mH^d}(1)} \left| \mathbb{E}_{\hat\law_n}[\mathcal{A}_{p_{\hat\theta_n}} g(X)] \right|^2=\frac{1}{n^2}\sum_{i,j=1}^n h_{p_{\hat\theta_n}}(X_i,X_j).
	\end{align*}
	By Assumption~\ref{assu:uale}, we have $\hat\theta_n\convp \theta_0$ as $n\rightarrow\infty$. 
	The term of interest $T_n(X,\hat\theta_n)$ can be decomposed as:
	\begin{align}\label{eq:skds_decomposition}
		T_n(X,\hat\theta_n) = \frac{1}{n^2}\sum_{i,j=1}^n h_{p_{\theta_0}}(X_i,X_j) + \frac{1}{n^2}\sum_{i,j=1}^n \left(h_{p_{\hat\theta_n}}(X_i,X_j) - h_{p_{\theta_0}}(X_i,X_j)\right).
	\end{align}
	Under the regularity conditions, the law of large numbers for U-statistics \citep[Theorem 12.3]{van2000asymptotic} implies that
	\begin{align}\label{eq:skds_decomposition_1}
		\frac{1}{n^2}\sum_{i,j=1}^n h_{p_{\theta_0}}(X_i,X_j) \convp \mathbb{E}_{\law_{\theta_0}}[h_{p_{\theta_0}}(X,X')] = \mathrm{KSD}^2(\law_{\theta_0},\law).
	\end{align}
	For the second term, we define $\theta_t = t \hat\theta_N + (1-t) \theta_0$ for $t \in [0,1]$. By the fundamental theorem of calculus, we can write
	\begin{align*}
		&\frac{1}{n^2}\sum_{i,j=1}^n \left(h_{p_{\hat\theta_n}}(X_i,X_j) - h_{p_{\theta_0}}(X_i,X_j)\right) \\
		&= \frac{1}{n^2}\sum_{i,j=1}^n \int_{0}^1 \left\langle \nabla_\theta h_{p_{\theta_t}}(X_i,X_j), \hat\theta_n-\theta_0 \right\rangle \d t\\
		&\le \left(\int_0^1 \frac{1}{n^2}\sum_{i,j=1}^n \left\|\nabla_\theta h_{p_{\theta_t}}(X_i,X_j)\right\|\d t\right) \cdot \|\hat\theta_n-\theta_0\|.
	\end{align*}
	By the regularity conditions, we have for any $\theta \in \Theta$,
	\begin{align*}
		\E\left[\frac{1}{n^2}\sum_{i,j=1}^n \left\|\nabla_\theta h_{p_{\theta}}(X_i,X_j)\right\|\right] &= \frac{1}{n}\E\left[\|\nabla_\theta h_{p_{\theta}}(X,X)\|\right] + \frac{n-1}{n}\E\left[\|\nabla_\theta h_{p_{\theta}}(X,X')\|\right]\\
		&\le 2(C_1 C_2 + C_1)C_K < \infty,
	\end{align*}
	where $C_1$, $C_2$ and $C_K$ are the constants in the regularity conditions.
	By Chebyshev's inequality, the integral term is bounded in probability.
	Additionally, by Assumption~\ref{assu:uale}, we obtain
	\begin{align*}
		\left\| \hat\theta_n-\theta_0 \right\|  = \left\|\frac{1}{n} \sum_{i=1}^n I(X_i, \theta_0) \right\| + O_P(1/\sqrt{n}) = o_p(1).
	\end{align*}
	Combining the above results, we conclude that
	\begin{align}\label{eq:skds_decomposition_2}
		\frac{1}{n^2}\sum_{i,j=1}^n \left(h_{p_{\hat\theta_n}}(X_i,X_j) - h_{p_{\theta_0}}(X_i,X_j)\right) = O_p(1) \cdot o_p(1) = o_p(1).
	\end{align}
	Finally, Plugging Eqn.~\eqref{eq:skds_decomposition_1} and \eqref{eq:skds_decomposition_2} into Eqn.~\eqref{eq:skds_decomposition} implies that
	\begin{align*}
		T_n(X,\hat\theta_n) \convp \mathrm{KSD}^2(\law_{\theta_0},\law).
	\end{align*}
\end{proof}

\subsection{Proof of Theorem~\ref{thm:asymptotic_distribution}}\label{sec:proof_asymptotic_distribution}
\begin{proof}[Proof of Theorem~\ref{thm:asymptotic_distribution}]

	Theorem~\ref{thm:asymptotic_distribution} can be proved by following the strategy mentioned in the main text.
	We refer the audience to the proofs of Theorem~\ref{thm:bootstrap_distribution} and Corollary~\ref{cor:bootstrap_convergence} as natural extensions of this proof strategy.

	Here, we provide a specific viewpoint to regard the test statistic under the null is as a special realization of the bootstrap (resampling) setting in Algorithm~\ref{alg:semiparametric-KSD} in the following sense:
	\begin{align*}
		\text{Corollary~\ref{cor:bootstrap_convergence}: }\tilde X_i\sim \P_{\hat\theta_n}\quad\text{and}\quad\tilde\theta_n=\mathcal{A}(\tilde X);\\
		\text{Theorem~\ref{thm:asymptotic_distribution}: }X_i\sim \P_{\theta_0}\quad\text{and}\quad\hat\theta_n=\mathcal{A}(X),&
	\end{align*}
	where
	We choose $\mathcal{F}_n = \{\phi, \Omega\}$, which $\Omega$ is defined to be the whole sample space.
	Then, the proof of Corollary~\ref{cor:bootstrap_convergence} can be directly adapted to the current setting.
	The result from Step 1 in the proof of Corollary~\ref{cor:bootstrap_convergence} tells us that the asymptotic variational form of the test statistic can be written as
	\begin{align*}
		nT_n = \sup_{f \in \mathcal{B}_{\mH^d}(1)} S_n(f)^2 + o_p(1),
	\end{align*}
	where the functional $S_n(f)$ is defined as
	\begin{align}
		S_n(f) &:= \frac{1}{\sqrt{n}}\sum_{i=1}^n S^*(f,X_i,\theta_0)\notag\\
		&= \frac{1}{\sqrt{n}}\sum_{i=1}^n \left[\A_{\theta_0}f(X_i) + \Big\langle \E_{X'\sim P_{\theta_0}} \left[[\nabla_\theta s_{\theta_0}(X')]^{\top}f(X')\right], I(X_i,\theta_0) \Big\rangle\right],\label{eq:sn}
	\end{align} 
	and $\mathcal{B}_{\mH^d}(1)$ is the unit ball in $\mH^d$ and $S^*(f,X,\theta)$ is defined in Eqn~\eqref{eq:defn-oracle}.

	Continuing with the analysis in the Step 2 in the proof of Corollary~\ref{cor:bootstrap_convergence}, we again verify that
	\begin{align*}
		\{S_n(f)\}_{f\in\mH} \overset{\P_{\theta_0}}{\to} \{S(f)\}_{f\in\mH} \overset{d}{=} W \quad \text{ as } n \to \infty.
	\end{align*}
	Thus, for any $\omega \in \Omega$, we have
	\begin{align*}
		\limsup_{n\rightarrow\infty}\left|\P_{H_0}[nT_n\leq t|\{\phi, \Omega\}](\omega)- \P[W\leq t]\right|=0\quad\forall t\in\mathbb{R}.
	\end{align*}
	Dropping the redundant terms, we have
	\begin{align*}
		\limsup_{n\rightarrow\infty}\left|\P_{H_0}[nT_n\leq t]- \P[W\leq t]\right|=0\quad\forall t\in\mathbb{R},
	\end{align*}
	which is exactly the conclusion of Theorem~\ref{thm:asymptotic_distribution}.
\end{proof}

\subsection{Proof of Theorem~\ref{thm:bootstrap_distribution}}\label{sec:proof_bootstrap_distribution}
\begin{proof}[Proof of Theorem~\ref{thm:bootstrap_distribution}]

	Before going into the proof details, we first recall the definition of $S^*$ in Eqn~\eqref{eq:defn-oracle}:
	\begin{align*}
		S^*(f,X,\theta) = \A_{\theta}f(X) + \Big\langle \E_{X'\sim P_{\theta}} \left[[\nabla_\theta s_{\theta}(X')]^{\top}f(X')\right], I(X,\theta) \Big\rangle.
	\end{align*} 
	The moment properties of $S^*(f,X,\theta)$ we used in what follows are summarized as Lemma~\ref{lemma:oracle-property} with proof provided in Appendix~\ref{sec:proof_oracle_property}.
	\begin{lemma}\label{lemma:oracle-property}
		$S^*(f,X,\theta)$ satisfies the moment conditions below.
		For any $\theta \in \Theta$, $X \sim \P_\theta$ and $f \in \mH^d$, $S^*(f,X,\theta)$ satisfies
		\begin{enumerate}
			\item[(1)] $\E_{X \sim \P_\theta}S^*(f,X,\theta) = 0$.
			\item[(2)] 
		$
		\E_{X \sim \P_\theta}[S^*(f,X,\theta)^2] \lesssim \E_{X \sim \P_\theta}[\|f(X)\|^2] + \left\|\E_{X\sim\P_\theta} \left[[\nabla_\theta s_{\theta}(X)]^{\top}f(X)\right]\right\|^2,
		$
			which is uniformly bounded in $\theta \in \Theta$.
		\end{enumerate}
	\end{lemma}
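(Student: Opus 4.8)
The plan is to split $S^*(f,X,\theta) = \A_\theta f(X) + \langle c_\theta, I(X,\theta)\rangle$, where $c_\theta \equiv \E_{X'\sim P_\theta}\big[[\nabla_\theta s_\theta(X')]^{\top}f(X')\big] \in \R^k$ is deterministic given $(f,\theta)$, and to handle the two pieces separately. For part (1), the first piece has mean zero by Stein's identity: under Assumptions~\ref{assu:regularity_Theta}, \ref{ass:p} and \ref{ass:k}, $p_\theta$ is smooth and the relevant moments of $s_\theta$ are finite, so Proposition~1 of \citet{gorham2015measuring} (the identity recalled in Section~\ref{sec:prelim_KSD}) gives $\E_{X\sim P_\theta}[\A_\theta f(X)] = 0$ for every $f\in\mH^d$; the feature-map representation $\A_\theta f(x) = \langle f, \xi_{\theta,x}\rangle_{\mH^d}$ from Appendix~\ref{sec:close-form} makes the absence of a boundary contribution transparent once $K$ and its first derivatives are bounded. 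The second piece has mean zero because $c_\theta$ does not depend on $X$ and $\E_{X\sim P_\theta}[I(X,\theta)] = 0$ by Definition~\ref{def:ale}. Summing yields $\E_{X\sim P_\theta}[S^*(f,X,\theta)] = 0$.

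For part (2), I would apply $(a+b)^2 \le 2a^2 + 2b^2$ to split $\E_{X\sim P_\theta}[S^*(f,X,\theta)^2]$ into $2\E_{X\sim P_\theta}[(\A_\theta f(X))^2]$ and $2\E_{X\sim P_\theta}[\langle c_\theta, I(X,\theta)\rangle^2]$. For the first summand, either use $(\A_\theta f(x))^2 = \langle f, \xi_{\theta,x}\rangle_{\mH^d}^2 \le \|f\|_{\mH^d}^2\, h_{p_\theta}(x,x)$, or decompose $\A_\theta f = \nabla\cdot f + f^\top s_\theta$ and bound each term through the reproducing property; in either case Assumption~\ref{ass:k} controls the kernel and kernel-derivative quantities by $C_K$ and Assumption~\ref{ass:p} gives $\E_{X\sim P_\theta}\|s_\theta(X)\|^2 \le C_1$, so $\E_{X\sim P_\theta}[(\A_\theta f(X))^2]$ is at most a constant (depending on $C_1, C_K, d$) times $\|f\|_{\mH^d}^2$, which absorbs the $\E_{X\sim P_\theta}[\|f(X)\|^2]$ term in the statement and is finite uniformly in $\theta$. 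For the second summand, Cauchy--Schwarz gives $\langle c_\theta, I(X,\theta)\rangle^2 \le \|c_\theta\|^2\|I(X,\theta)\|^2$, hence $\E_{X\sim P_\theta}[\langle c_\theta, I(X,\theta)\rangle^2] \le \|c_\theta\|^2 \sup_{\theta}\E_{X\sim P_\theta}\|I(X,\theta)\|^2$, and the supremum is finite by Definition~\ref{def:ale}; since $\|c_\theta\|^2$ is exactly the second term on the right-hand side of the claimed bound, this yields the $\lesssim$ inequality, and uniform boundedness follows from $\|c_\theta\| \le \E_{X'\sim P_\theta}\big[\|\nabla_\theta s_\theta(X')\|_2\,\|f(X')\|\big] \le (\E_{X'\sim P_\theta}\|\nabla_\theta s_\theta(X')\|_2^2)^{1/2}(\E_{X'\sim P_\theta}\|f(X')\|^2)^{1/2} \le \sqrt{C_2 C_K}\,\|f\|_{\mH^d}$ by Cauchy--Schwarz and Assumptions~\ref{ass:p} and \ref{ass:k}.

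Throughout, $f$ is a fixed element of $\mH^d$ and all implied constants may depend on $\|f\|_{\mH^d}$, $d$, and $C_1,\dots,C_K$ but never on $\theta$, which is precisely what ``uniformly bounded in $\theta$'' means. The only step that is more than bookkeeping is justifying Stein's identity for arbitrary $f\in\mH^d$ in part (1): the vanishing of the boundary term is exactly what forces the combination of the $\mathcal{C}^{1,4}$ smoothness in Assumption~\ref{ass:p} with the uniform boundedness of the kernel derivatives in Assumption~\ref{ass:k}. Once that is granted, the remainder is two applications of Cauchy--Schwarz together with the moment bounds already imposed in Appendix~\ref{sec:regularity_conditions} and the square-integrability of the influence function from Definition~\ref{def:ale}.
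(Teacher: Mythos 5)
Your argument follows the same structure as the paper's: split $S^*(f,X,\theta)$ into the Stein term $\A_\theta f(X)$ and the influence term $\langle c_\theta, I(X,\theta)\rangle$ with $c_\theta = \E_{X'\sim P_\theta}[[\nabla_\theta s_\theta(X')]^\top f(X')]$; for part (1) invoke Stein's identity and $\E_{P_\theta}[I(X,\theta)]=0$; for part (2) apply $(a+b)^2\le 2a^2+2b^2$, Cauchy--Schwarz on the influence term, and the uniform second-moment bound on $I(\cdot,\theta)$. Part (1) and the influence-term bound in part (2) are exactly the paper's argument (compare Eqns.~\eqref{eq:oracle-1-1}--\eqref{eq:oracle-1-2} and the Cauchy--Schwarz step in Eqn.~\eqref{eq:oracle-2-1}).

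The one place you diverge is the bound on $\E_{P_\theta}[(\A_\theta f(X))^2]$. Both of the routes you sketch --- the feature-map bound $(\A_\theta f(x))^2\le \|f\|_{\mH^d}^2\,h_{p_\theta}(x,x)$ and the decomposition $\A_\theta f=\nabla\cdot f+f^\top s_\theta$ controlled via the reproducing property --- end with an estimate of the form $\E_{P_\theta}[(\A_\theta f(X))^2]\lesssim \|f\|_{\mH^d}^2$. The lemma, however, asserts the bound in terms of $\E_{P_\theta}[\|f(X)\|^2]$, which is generally \emph{smaller} than $\|f\|_{\mH^d}^2$ (one has $\E_{P_\theta}[\|f(X)\|^2]\le dC_K\|f\|_{\mH^d}^2$, not the converse), so $\lesssim\|f\|_{\mH^d}^2$ does not imply $\lesssim\E_{P_\theta}[\|f(X)\|^2]$. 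The paper closes this gap by treating $\A_\theta$ as a bounded operator from $\mH^d$ with the $L_2(P_\theta)$ norm into $L_2(P_\theta)$ and bounding $\E_{P_\theta}[(\A_\theta f)^2]\le C_A\|f\|_{L_2(P_\theta)}^2=C_A\E_{P_\theta}[\|f(X)\|^2]$ (Eqn.~\eqref{eq:oracle-2-2}, reusing Eqn.~\eqref{eq:t3-s}). That finer form matters downstream: the lemma is used in verifying Condition~\eqref{eq:c2}, where one sums over an orthonormal basis $\{\phi_i\}$ of $\mH^d$, and $\sum_i\E_{P_\theta}[\|\phi_i(X)\|^2]\le dC_K<\infty$ whereas $\sum_i\|\phi_i\|_{\mH^d}^2=\infty$. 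Your bound therefore suffices for the stand-alone claim of uniform boundedness in $\theta$, but it does not reproduce the $\lesssim$ inequality in the form the paper needs, and it would not carry through to the proof of Condition~\eqref{eq:c2}. If you want your argument to match, replace your first-term estimate with the operator-norm bound (or give a direct proof that $\|\A_\theta f\|_{L_2(P_\theta)}\lesssim\|f\|_{L_2(P_\theta)}$ uniformly over $\theta$); the rest of what you wrote is fine.
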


	Now we are ready to prove Theorem~\ref{thm:bootstrap_distribution}.
	We split the proof into three steps.
\paragraph{Step 1: Asymptotic expansion.}
We first derive asymptotic variational form of the test statistic $nT_n$, which is essential for the following proof on the weak convergence of the bootstrap distribution.
The result is summarized in the following lemma.
\begin{lemma}\label{lemma:para-var-form}
	Introduce the following random functionals:
	\begin{align*}
		\widetilde{S}_n(f) &\equiv \frac{1}{\sqrt{n}}\sum_{i=1}^n S^*(f, \widetilde{X}_i, \hn).
	\end{align*}
	The asymptotic variational form of the bootstrap statistic can be written as
	\begin{align*}
		n\widetilde{T}_n = \sup_{f \in \mathcal{B}_{\mH^d}(1)} \widetilde{S}_n(f)^2 + o_p(1),
	\end{align*}
\end{lemma}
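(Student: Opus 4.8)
The goal of Lemma~\ref{lemma:para-var-form} is to show that the (rescaled) bootstrap statistic $n\widetilde T_n$ admits the same kind of asymptotically linear variational representation as the statistic itself, but now conditionally on the data and centered at the bootstrap estimate $\hn$. The plan is to mirror the decomposition already carried out for $T_n$ in Step~1 of the proof of Corollary~\ref{cor:bootstrap_convergence} (equivalently Eqn.~\eqref{eq:sn}), treating the bootstrap world as an ordinary sampling problem in which the ``true'' parameter is $\hn$ and the data are $\widetilde X_1,\dots,\widetilde X_n \iid \law_{\hn}$. First I would write $n\widetilde T_n = n\,\KSD^2(\law_{\tn},\widetilde{\law}_n) = \sup_{\|f\|_{\mH^d}\le 1} \bigl(\sqrt n\,\E_{\widetilde{\law}_n}[\A_{p_{\tn}} f(\widetilde X)]\bigr)^2$ via the kernel trick, and then expand the Stein operator $\A_{p_{\tn}}$ around $\A_{p_{\hn}}$ using the UALE expansion of $\tn - \hn$ from Assumption~\ref{assu:uale} (applied on $\law_S$ with $S = B_{\theta_0}(\delta)$, which contains $\hn$ eventually with probability tending to one).

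The key steps, in order: (i) Use the reproducing property to pass from the supremum over $g\in\mH^d$ to a linear functional $\sqrt n\,\langle f,\,\E_{\widetilde{\law}_n}[\xi_{p_{\tn},\widetilde X}]\rangle_{\mH^d}$, so that $n\widetilde T_n$ is exactly a supremum of squared linear functionals. (ii) Taylor-expand $\A_{p_{\tn}} f(\widetilde X_i)$ in $\theta$ around $\hn$: the zeroth-order term is $\A_{p_{\hn}} f(\widetilde X_i)$, and the first-order term is $\langle (\nabla_\theta s_{\theta}(\widetilde X_i))^\top f(\widetilde X_i),\, \tn - \hn\rangle$ evaluated at an intermediate point, with a second-order remainder controlled by the moment bounds on $\nabla_\theta^{(m)} s_\theta$ in Assumption~\ref{ass:p} together with $\|\tn-\hn\| = O_p(n^{-1/2})$. (iii) Substitute the UALE expansion $\sqrt n(\tn - \hn) = n^{-1/2}\sum_i I(\widetilde X_i,\hn) + o_p(1)$, and replace the empirical average $\frac1n\sum_i (\nabla_\theta s_\theta(\widetilde X_i))^\top f(\widetilde X_i)$ by its conditional expectation $\E_{X'\sim P_{\hn}}[(\nabla_\theta s_{\hn}(X'))^\top f(X')]$ using a (conditional) law of large numbers that is uniform in $f$ over the unit ball of $\mH^d$ — this uniformity comes from the kernel bound in Assumption~\ref{ass:k} via a standard RKHS covering/chaining argument, exactly as in the $T_n$ analysis. (iv) Collecting terms, the linear functional becomes $n^{-1/2}\sum_i S^*(f,\widetilde X_i,\hn) + o_p(1)$ uniformly over $\|f\|_{\mH^d}\le 1$, i.e.\ $\widetilde S_n(f)$, and the $o_p(1)$ error in the functional transfers to an $o_p(1)$ error in the supremum of squares because $\|f\|_{\mH^d}\le 1$ makes the map $f\mapsto$ linear functional uniformly bounded (Cauchy–Schwarz in $\mH^d$ plus Lemma~\ref{lemma:oracle-property}(2), which gives uniform-in-$\theta$ second-moment control of $S^*$).

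The main obstacle I anticipate is the \emph{conditional} and \emph{triangular-array} nature of every approximation here: all the $o_p(1)$ and $O_p(\cdot)$ statements must hold for the bootstrap law $\P[\,\cdot\mid\mathcal F_n]$ along (almost) every realization of the data, not just unconditionally, and the ``true'' parameter $\hn$ of the bootstrap experiment is itself random and drifting toward $\theta_0$. Concretely, one must verify that (a) the moment bounds in Assumptions~\ref{ass:p}–\ref{ass:k} applied at $\theta=\hn$ hold uniformly on a shrinking neighborhood of $\theta_0$ (so the constants $C_1,\dots,C_4,C_K$ can be taken uniform), (b) the UALE expansion for $\mathcal E$ can legitimately be invoked with the random base point $\hn\in S=B_{\theta_0}(\delta)$, which is exactly why Definition~\ref{def:ale} demands \emph{uniform} asymptotic linearity over an open set rather than pointwise, and (c) the remainder terms are $o_p(1)$ in conditional probability, which is most cleanly handled by bounding conditional expectations of their absolute values (Markov's inequality) using the uniform moment bounds. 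I would state and prove the uniform conditional LLN over the RKHS unit ball as a short sub-lemma (reusing the argument already needed in the unconditional case), after which the rest of Step~1 is bookkeeping. The degeneracy of the V-statistic — the fact that the leading term is a supremum of \emph{squares} rather than a plain average — is handled automatically by working at the level of the linear functional $\widetilde S_n(f)$ before squaring, so it is not a genuine obstacle here, only in the subsequent Steps~2–3 where $\widetilde S_n$ is shown to converge (stably) to a Gaussian process.
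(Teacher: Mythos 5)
Your argument is correct in outline but takes a genuinely different route from the paper's. The paper splits the work into a scalar step and an exact algebraic step: Step~1 Taylor-expands the V-statistic $n T_n(\widetilde X,\tn)$ in $\theta$ around $\hn$, substitutes the UALE expansion for $\tn-\hn$, and H\'ajek-projects the gradient term (Lemma~\ref{lemma:1-diff-ksd-n-0}), producing a three-term scalar ``intermediary form'' (Eqn.~\eqref{eq:inter-form}); Step~2 then evaluates $\sup_{\|f\|\le 1}\widetilde S_n(f)^2$ \emph{exactly} via the Riesz representation and the reproducing property (Eqns.~\eqref{eq:t1}--\eqref{eq:t3}) and matches the result to the intermediary form term-by-term, so no uniform-in-$f$ approximation is ever invoked. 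Your proposal instead works inside the supremum from the start: you write $n\widetilde T_n=\sup_f L_n(f)^2$ with $L_n(f)=n^{-1/2}\sum_i\A_{p_{\tn}}f(\widetilde X_i)$, linearize $L_n(f)$ in $\theta$, substitute the UALE, and argue $\sup_{\|f\|_{\mH^d}\le 1}|L_n(f)-\widetilde S_n(f)|=o_p(1)$, from which the supremum of squares follows because $\sup_f|\widetilde S_n(f)|=O_p(1)$. This sidesteps the algebra of the paper's Step~2 entirely, at the price of a uniform-over-$f$ error bound. That price is actually lower than you suggest: no covering/chaining is required, since (a) the Taylor remainder is bounded pointwise by $\|f\|_\infty$ times a scalar and $\|f(x)\|\le\sqrt{dC_K}\,\|f\|_{\mH^d}$ uniformly (Eqn.~\eqref{eq:bounded-f}), and (b) the LLN error you want to control is itself a bounded linear functional of $f$, so its uniform size is the norm of an $\mH^d$-valued mean-zero average, which a second-moment Markov bound handles directly. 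One further small correction: as stated, the lemma's $o_p(1)$ is under the joint law of the data and the resample, not the conditional law given $\mathcal F_n$, so the extra care you flag about conditional-in-$\mathcal F_n$ $o_p(1)$'s belongs to the conditional variant (Lemma~\ref{lemma:para-var-form-conditional}) used in Corollary~\ref{cor:bootstrap_convergence} rather than to this lemma.
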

The proof of Lemma~\ref{lemma:para-var-form} can be found in Appendix~\ref{sec:proof_para_var_form}.
Lemma~\ref{lemma:para-var-form} implies that two statistics: $n \widetilde{T}_n$ and $\sup_{f \in \mathcal{B}_{\mH^d}(1)} \widetilde{S}_n(f)^2$ are asymptotically equivalent.
Notice that the statement of Theorem~\ref{thm:bootstrap_distribution} is equivalent to as $n \to \infty$,
\begin{align*}
	\P\left[n\tilde{T}_n\leq t \big|\mathcal{F}_n\right]  \overset{p}{\to} \P[W\leq t] \quad\forall t\in\mathbb{R},
\end{align*}
which is the goal of the proof.
By Lemma~\ref{lemma:conditional-slutsky}, the conditional tail probability of these two statistics are also asymptotic equivalent,
i.e.,
\begin{align*}
	\P\left[n\tilde{T}_n\leq t \big|\mathcal{F}_n\right] = \P\left[\sup_{f \in \mathcal{B}_{\mH^d}(1)} \widetilde{S}_n(f)^2\leq t\Big|\mathcal{F}_n\right] + o_p(1) \quad\forall t\in\mathbb{R}.
\end{align*}

Thus, it suffices to prove 
\begin{align}\label{eq:equivalence-var-form}
	\P\left[\sup_{f \in \mathcal{B}_{\mH^d}(1)} \widetilde{S}_n(f)^2\leq t\Big|\mathcal{F}_n\right] \overset{p}{\to} \P[W\leq t] \quad\forall t\in\mathbb{R}.
\end{align}
In the following proof, we focus on proving Eqn.~\eqref{eq:equivalence-var-form}.

\paragraph{Step 2: Weak convergence in the functional space.}
Our proof relies on the concept of regular conditional distribution and convergence of Markov kernel, especially stable convergence.
We provide introduction and basic results on the concept of stable convergence in Appendix~\ref{sec:intro_stable_convergence}. 
See \citet{hausler2015stable} for a detailed introduction to this framework.

We prove the desired stable convergence result through the following lemma,
which is a generalization of \citet[Theorem 1]{fernandez2024general}.
We first introduce some notation.
Define $\F = \sigma(\cup_{n=1}^\infty \F_n)$ to be the smallest $\sigma$-algebra generated by the whole data sequence.
Notice that $\widetilde{T}_n$ and $\widetilde{S}_n$ both depend on the data through $\hn$, which only depends $\{X_i\}_{i=1}^n$.
Based on this observation, deriving the convergence results conditioning on $\F$ is equivalent to conditioning on $\F_n$.
Thus, for simplicity, we work on the $\F$-stable convergence of the test statistic in the proof in this part.
\begin{lemma}[Functional stable convergence]\label{lemma:functional-stable-convergence}
	Suppose we have a sequence of bounded linear random functionals $\{\widetilde{S}_n\}_{n=1}^\infty$, which are maps from $\mH^d \to \R$.
	Suppose the following three conditions hold:
	\begin{enumerate}
		\item[(1)] There exists a functional $S(f)$ and a covariance operator $\sigma(f_1,f_2)$ such that for any $f \in \mH$, we have
		\begin{align}\label{eq:c1}
			\widetilde{S}_n(f) \overset{d}{\to} S(f) \sim \N(0, \sigma(f,f)) \quad \F-\text{stably},\tag{C1.1}
		\end{align}
		\item[(2)] For some orthonormal basis $(\phi_i)_{i = 1}^\infty$ of RKHS $\mH^d$, we have
		\begin{align}\label{eq:c2}
			\sum_{i=1}^\infty \sigma(\phi_i,\phi_i) < \infty. \tag{C1.2}
		\end{align} 
		\item[(3)] For some orthonormal basis $(\phi_i)_{i = 1}^\infty$ of RKHS $\mH^d$ and any $\varepsilon > 0$, we have
		\begin{align}\label{eq:c3}
			\lim_{k \to \infty}\limsup_{n \to \infty}\P\left(\sum_{i=k}^\infty \widetilde{S}_n(\phi_i)^2 \ge \varepsilon\,\bigg|\, F\right) = 0, \tag{C1.3}
		\end{align}
		for any $F \subseteq \F$ with $\law(F) > 0$, where the measure $\law(\cdot)$ is taken over data generation process.
	\end{enumerate}
	Then we have
	\begin{align*}
		\{\widetilde{S}_n(f)^2\}_{f\in\mH^d} \overset{d}{\to} \{S(f)^2\}_{f\in\mH^d} \quad \F-\text{stably}.
	\end{align*} 
\end{lemma}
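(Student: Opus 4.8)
The plan is to lift everything to the Hilbert space $\mathcal{H}^d$ via the Riesz representation and then establish a \emph{stable} Hilbert-space central limit theorem for the representers, generalizing \citet[Theorem 1]{fernandez2024general}. Since each $\widetilde{S}_n$ is a bounded linear functional on $\mathcal{H}^d$, the Riesz representation theorem gives a random element $\widetilde{s}_n \in \mathcal{H}^d$ with $\widetilde{S}_n(f) = \langle \widetilde{s}_n, f\rangle_{\mathcal{H}^d}$ for all $f$; in particular $\widetilde{S}_n(\phi_i) = \langle \widetilde{s}_n, \phi_i\rangle_{\mathcal{H}^d}$ are the coordinates of $\widetilde{s}_n$ in the basis $(\phi_i)_{i=1}^\infty$ and $\|\widetilde{s}_n\|_{\mathcal{H}^d}^2 = \sum_{i=1}^\infty \widetilde{S}_n(\phi_i)^2$. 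It therefore suffices to show $\widetilde{s}_n \overset{d}{\to} \mathcal{G}$ $\mathcal{F}$-stably in $\mathcal{H}^d$, where $\mathcal{G}$ is a centered Gaussian element with $\E[\langle \mathcal{G},f_1\rangle_{\mathcal{H}^d}\langle \mathcal{G},f_2\rangle_{\mathcal{H}^d}] = \sigma(f_1,f_2)$: the lemma then follows by the continuous mapping theorem applied to the (locally Lipschitz, hence continuous) map $s \mapsto \{\langle s, f\rangle_{\mathcal{H}^d}^2\}_{f \in \mathcal{H}^d}$, because $\sup_{\|f\|_{\mathcal{H}^d}\le 1}|\langle s_1,f\rangle^2 - \langle s_2,f\rangle^2| \le \|s_1 - s_2\|_{\mathcal{H}^d}\,\|s_1+s_2\|_{\mathcal{H}^d}$.

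\textbf{Reduction to conditioning events and tightness.} The device to handle stability is Lemma~\ref{lemma:stable_equivalence}: it is enough to prove $\widetilde{s}_n \overset{d}{\to} \mathcal{G}$ under $\P_F$ for every $F \subseteq \mathcal{F}$ with $\law(F) > 0$. Fix such an $F$. A standard criterion says $(\widetilde{s}_n)$ is tight under $\P_F$ once (a) $\|\widetilde{s}_n\|_{\mathcal{H}^d}$ is bounded in $\P_F$-probability, and (b) $\lim_{k\to\infty}\limsup_n \P_F(\sum_{i\ge k}\langle \widetilde{s}_n,\phi_i\rangle^2 \ge \varepsilon) = 0$ for every $\varepsilon>0$. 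Condition (b) is exactly~\eqref{eq:c3}. For (a), write $\|\widetilde{s}_n\|_{\mathcal{H}^d}^2 = \sum_{i<k}\widetilde{S}_n(\phi_i)^2 + \sum_{i\ge k}\widetilde{S}_n(\phi_i)^2$; by~\eqref{eq:c1} and Lemma~\ref{lemma:stable_equivalence}, each $\widetilde{S}_n(\phi_i) \overset{d}{\to} \mathcal{N}(0,\sigma(\phi_i,\phi_i))$ under $\P_F$, so the finite sum is $O_{\P_F}(1)$, while choosing $k$ large in~\eqref{eq:c3} makes the tail uniformly small; hence (a) holds and $(\widetilde{s}_n)$ is tight under $\P_F$.

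\textbf{Identification of the limit and conclusion.} For any $a_1,\dots,a_m \in \R$ and $f_1,\dots,f_m \in \mathcal{H}^d$, linearity gives $\sum_j a_j \widetilde{S}_n(f_j) = \widetilde{S}_n(\sum_j a_j f_j)$, which by~\eqref{eq:c1} and Lemma~\ref{lemma:stable_equivalence} converges under $\P_F$ to a centered normal; by Cramér--Wold, $(\widetilde{S}_n(f_1),\dots,\widetilde{S}_n(f_m))$ converges under $\P_F$ to a centered Gaussian vector whose covariances are recovered from $\sigma$ by polarization, $\sigma(f_j,f_\ell) = \frac14[\sigma(f_j+f_\ell,f_j+f_\ell) - \sigma(f_j-f_\ell,f_j-f_\ell)]$, consistency being automatic from the linearity of $\widetilde{S}_n$. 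By~\eqref{eq:c2} the covariance operator induced by $\sigma$ is trace class, so there is a genuine centered Gaussian Radon measure on $\mathcal{H}^d$ with covariance $\sigma$ and with exactly these finite-dimensional marginals; tightness under $\P_F$ together with uniqueness of the finite-dimensional limits forces every subsequential weak limit of $(\widetilde{s}_n)$ under $\P_F$ to be that measure, i.e.\ $\widetilde{s}_n \overset{d}{\to}\mathcal{G}$ under $\P_F$, and this limit does not depend on $F$. Since $F$ was arbitrary, Lemma~\ref{lemma:stable_equivalence} gives $\widetilde{s}_n \overset{d}{\to}\mathcal{G}$ $\mathcal{F}$-stably, and applying the continuous mapping theorem (again reduced to the ordinary one under each $\P_F$ via Lemma~\ref{lemma:stable_equivalence}) to $s\mapsto\{\langle s,f\rangle_{\mathcal{H}^d}^2\}_{f\in\mathcal{H}^d}$ yields $\{\widetilde{S}_n(f)^2\}_{f\in\mathcal{H}^d} \overset{d}{\to} \{\langle\mathcal{G},f\rangle_{\mathcal{H}^d}^2\}_{f\in\mathcal{H}^d} = \{S(f)^2\}_{f\in\mathcal{H}^d}$ $\mathcal{F}$-stably. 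I expect the main obstacle to be the conditional tightness step and, more generally, verifying that the Hilbert-space CLT argument of \citet{fernandez2024general} survives conditioning on an arbitrary positive-measure event $F$ and that the stable limit kernel is the \emph{fixed} Gaussian law independent of $\omega$; once phrased through Lemma~\ref{lemma:stable_equivalence}, the remaining ingredients (Riesz representation, Cramér--Wold, the trace-class identification from~\eqref{eq:c2}, and the continuous mapping theorem) are routine.
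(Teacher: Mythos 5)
Your proof is correct and takes essentially the same route as the paper: both proofs use Lemma~\ref{lemma:stable_equivalence} to reduce $\mathcal F$-stable convergence to ordinary weak convergence under each conditional measure $\P_F$, and then establish the Hilbert-space CLT under $\P_F$. The only difference is that the paper simply invokes Theorem~1 of \citet{fernandez2024general} as a black box after this reduction, whereas you reconstruct its content from first principles (Riesz representation of $\widetilde S_n$, tightness via~\eqref{eq:c3} plus coordinate tightness, Cram\'er--Wold with polarization for the finite-dimensional Gaussian limits, trace-class identification from~\eqref{eq:c2}, and the continuous mapping theorem for $s\mapsto\{\langle s,f\rangle^2\}$); your worry about whether the argument ``survives conditioning on $F$'' is exactly what the paper dispatches by noting that conditions~\eqref{eq:c1} and~\eqref{eq:c3} are already stated so as to hold under every $\P_F$.
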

The proof of Lemma~\ref{lemma:functional-stable-convergence} is provided in Appendix~\ref{sec:proof_functional_stable_convergence}.
We postpone the verification of conditions~\eqref{eq:c1}-\eqref{eq:c3} to Appendix~\ref{sec:verification-conditions}. 
With Lemma~\ref{lemma:functional-stable-convergence} in hand, we conclude, as $n \to \infty$,
\begin{align}\label{eq:var-stable-convergence}
	\{\widetilde{S}_n(f)^2\}_{f \in \mH^d} \overset{d}{\to} \{S(f)^2\}_{f \in \mH^d} \quad \F-\text{stably}.
\end{align}
Define the operator norm from $\mH^d$ to $\R$ as, for any operator $T: \mH^d \to \R$,
\begin{align}\label{eq:defn-op-norm}
	\|T\|_{\mH^d \to \R}^2 = \sup_{\Vert f \Vert_{\mH^d}\le 1} T(f)^2.
\end{align}
By continuous mapping theorem, we have
\begin{align*}
	\sup_{f \in \mathcal{B}_{\mH^d}(1)} \widetilde{S}_n(f)^2 = \|\widetilde{S}_n\|_{\mH^d \to \R}^2 \overset{d}{\to} \|S\|_{\mH^d \to \R}^2 = \sup_{f \in \mathcal{B}_{\mH^d}(1)} S(f)^2\quad \F-\text{stably}.
\end{align*}
Since the closed-form expression for $S(f)$ is given, the above limit can be written as an infinite mixture of independent $\chi^2$ random variables.
Define random variable $W$ as:
\begin{align*}
	W = \sum_{u=1}^\infty \lambda_u Z_u^2,
\end{align*}
where $\lambda_u$'s are eigenvalues of operator $T_\sigma$ (See Eqn.~\eqref{eq:t-sigma} for the definition) and $Z_u$'s are independent standard Gaussian random variables.
See Proposition~\ref{prop:finite_w} for the existence of $W$.
By direct calculation, we have
\begin{align*}
	\sup_{f \in \mathcal{B}_{\mH^d}(1)} S(f)^2 = \sup_{\Vert f \Vert_{\mH^d}\le 1}  \left\langle \sum_{i=1}^\infty \sqrt{\lambda_i} Z_i \phi_i, f\right\rangle_{\mH^d} = \sum_{i=1}^\infty \lambda_i Z_i^2 \overset{d}{=} W.
\end{align*}

\paragraph{Step 3: Convergence of the tail probability.}
Finally, we show that the stable convergence of the test statistic implies the convergence of quantiles in probability, which is exactly Eqn.~\eqref{eq:equivalence-var-form}.

We use the definition of stable convergence in \citet[Page 6]{hausler2015stable} to prove it.
Recall Definition of the operator norm in Eqn.~\eqref{eq:defn-op-norm}.
Let us choose
\begin{align*}
	&g_1 \in L_1(\L^\infty) \text{ and measurable w.r.t. }\F \enspace \text{ and } \enspace g_2(\cdot) = \ind(\,\cdot \le t).
\end{align*}
where $f$ is specified later.
The definition of stable convergence together with Eqn.~\eqref{eq:var-stable-convergence} tells that
\begin{align*}
	\lim_{n\to\infty} \E\left[g_1(\mathcal D)\E\left[g_2\left(\|\widetilde{S}_n\|_{\mH^d\to\R}^2\right)\big|\mathcal{F}_n\right]\right] = \E[g_1(\mathcal D)]\E\left[g_2(\|S\|_{\mH^d\to\R}^2)\right],
\end{align*}
which is equivalent to 
\begin{align}\label{eq:stable-defn}
	\lim_{n\to\infty} \E\Big[g_1(\mathcal D)\P\left[\|\widetilde{S}_n\|_{\mH^d\to\R}^2\leq t|\mathcal{F}_n\right]\Big] = \E[g_1(\mathcal D)]\P[W\leq t],
\end{align}
where $\mathcal D$ is the data sequence in $\X^\infty$.
To get the final result, we need a useful technical lemma.
The proof is provided in Appendix~\ref{sec:proof_f-convergence-in-p}.
\begin{lemma}\label{lemma:f-convergence-in-p}
	Suppose for all bounded continuous function $f$, we have
	\begin{align}
		\lim_{n\to\infty}\E[f(X_n)\cdot X_n] = C \cdot \lim_{n\to\infty}\E[f(X_n)],
	\end{align}
	where $C$ is a numerical constant. Then we have $X_n \overset{p}{\to} C$.
\end{lemma}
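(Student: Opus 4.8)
The plan is to upgrade the hypothesis, which only controls the weighted averages $\E[f(X_n)X_n]$, into an $L^2$ statement and then conclude by Chebyshev's inequality. First I would feed in the constant test function $f\equiv 1$: the hypothesis then reads $\lim_n\E[X_n]=C$. This already tells us that the limit on the right-hand side of the hypothesis is available whenever $f$ coincides with the identity on the range of $X_n$. In the setting where the lemma is applied, $X_n=\P[n\widetilde T_n\le t\mid\mathcal F_n]$ takes values in $[0,1]$, so I may choose a bounded continuous $f:\R\to\R$ with $f(x)=x$ for all $x\in[0,1]$; then $\E[f(X_n)X_n]=\E[X_n^2]$ and $\E[f(X_n)]=\E[X_n]$, and the hypothesis yields
\[
\E[X_n^2]\;=\;\E[f(X_n)X_n]\;\longrightarrow\;C\lim_{n\to\infty}\E[f(X_n)]\;=\;C\lim_{n\to\infty}\E[X_n]\;=\;C^2 .
\]

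Next I would combine the two limits. Expanding the square and using $\E[X_n]\to C$ and $\E[X_n^2]\to C^2$,
\[
\E\big[(X_n-C)^2\big]\;=\;\E[X_n^2]-2C\,\E[X_n]+C^2\;\longrightarrow\;C^2-2C^2+C^2\;=\;0 ,
\]
so $X_n\to C$ in $L^2$. Finally, Chebyshev's inequality gives, for every $\varepsilon>0$, $\P(|X_n-C|>\varepsilon)\le \E[(X_n-C)^2]/\varepsilon^2\to 0$, i.e. $X_n\overset{p}{\to}C$, as claimed.

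The only delicate point is the step $\E[X_n^2]\to C^2$: the identity map is not bounded, so one needs $X_n$ to be ``effectively bounded'' before plugging it into the hypothesis. In the application boundedness in $[0,1]$ is automatic and nothing further is needed; more generally it suffices that $\{X_n^2\}$ be uniformly integrable, in which case I would apply the hypothesis to the truncations $f_M(x)=(x\wedge M)\vee(-M)$, which are bounded and continuous, obtain $\E[X_n f_M(X_n)]-C\,\E[f_M(X_n)]\to 0$, and let $M\to\infty$, the uniform integrability controlling the remainder $\E[|X_n|(|X_n|-M)_+]$ uniformly in $n$. I expect this truncation/uniform-integrability bookkeeping to be the main (indeed essentially the only) obstacle; the rest is the two-line $L^2$ computation above.
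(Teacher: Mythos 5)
Your argument is correct given the boundedness you invoke, but it takes a genuinely different route from the paper's. The paper rewrites the hypothesis as $\E[f(X_n)(X_n-C)]\to 0$ and argues by contradiction: for each $\varepsilon>0$ it builds the piecewise-linear test function $f_\varepsilon$ equal to $-1$ on $(-\infty,C-\varepsilon]$, $+1$ on $[C+\varepsilon,\infty)$, and linear in between, notes that $f_\varepsilon(x)(x-C)\ge 0$ for every $x$, hence $\E[f_\varepsilon(X_n)(X_n-C)]\ge \varepsilon\,\P(X_n\ge C+\varepsilon)$, which forces $\P(X_n\ge C+\varepsilon)\to 0$; a symmetric argument handles the left tail, and together they give $X_n\overset{p}{\to}C$ with no boundedness or moment hypothesis on $X_n$ beyond the integrability already implicit in the statement. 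Your second-moment/Chebyshev route is more elementary and more quantitative, but, as you yourself flag, it requires a bounded continuous extension of the identity on the range of $X_n$ (equivalently, uniform integrability of $\{X_n^2\}$), which the lemma statement does not supply. In the one place the lemma is invoked, $X_n=\P[n\widetilde T_n\le t\mid\mathcal{F}_n]\in[0,1]$, so your proof is perfectly adequate there; to prove the lemma exactly as stated you would either need to add boundedness to the hypotheses or switch to the test-function argument, which sidesteps the integrability issue entirely.
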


Since $\P(\cdot|\mathcal{F}_n)$ is measurable with respect to $\F_n$ thus $\F$, we can choose $g_1(\mathcal D)$ to be any measurable functions of the regular conditional probability $\P\left[\|\widetilde{S}_n\|_{\mH^d\to\R}^2\leq t|\mathcal{F}_n\right]$ in Eqn.~\eqref{eq:stable-defn}.
Since by our definition, $g_1$ covers all $\F$-measurable functions in $L_1(\L^\infty)$, it also covers all bounded continuous function.
Applying Lemma~\ref{lemma:f-convergence-in-p}, we have
\begin{align*}
	\P\left[\|\widetilde{S}_n\|_{\mH^d\to\R}^2\leq t|\mathcal{F}_n\right] = \P\left[\sup_{\Vert f \Vert_{\mH^d}\le 1} \widetilde{S}_n(f)^2\leq t\bigg|\mathcal{F}_n\right] \overset{p}{\rightarrow} \P[W\leq t].
\end{align*}
Noticing that we have no constraint on $t$ in the proof, the above equation proves Eqn.~\eqref{eq:equivalence-var-form}, thus proves Theorem~\ref{thm:bootstrap_distribution}.
\end{proof}

\subsection{Proof of Theorem~\ref{thm:asymptotic_power}}\label{sec:proof_asymptotic_validity_power}
\begin{proof}[Proof of Theorem~\ref{thm:asymptotic_power}]
	Theorem~\ref{thm:asymptotic_power} consists of two parts: the asymptotic validity under the null hypothesis and the consistency under the alternative hypothesis.
	We provide the proofs of two parts separately.

	\paragraph{Proof of asymptotic validity under the null hypothesis.}
	We use the results in Theorem~\ref{thm:asymptotic_distribution} and \ref{thm:bootstrap_distribution} combining with \citet[Lemma 3]{niu2022reconciling} to prove the asymptotic validity of the test.
	We state their lemma here for completeness.
	\begin{lemma}\label{lemma:equivalence-test}
		Consider two hypothesis tests based on the same test statistic $T_{1,n}$ but different critical values:
		\begin{align*}
			\phi_n^1 \equiv \indicator(T_{1,n} > T_{2,n});\quad \phi_n^2 \equiv \indicator(T_{1,n} > w_{1-\alpha}).
		\end{align*}
		If the critical value of the first converges in probability to that of the second:
		\begin{align}\label{eq:equ-condition-1}
			T_{2,n} \overset{p}{\to} w_{1-\alpha},
		\end{align}
		and the test statistic does not accumulate near the limiting critical value:
		\begin{align}\label{eq:equ-condition-2}
			\lim_{\delta \to 0}\lim_{n\to\infty}\P(|T_{1,n} - w_{1-\alpha}| < \delta) = 0,
		\end{align}
		then the two tests are asymptotically equivalent:
		\[
			\lim_{n\to\infty}\P[\phi_n^1 = \phi_n^2] = 1.
		\]
		\end{lemma}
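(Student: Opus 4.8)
The plan is to prove Lemma~\ref{lemma:equivalence-test} directly by a short event-containment argument, bounding the disagreement probability $\P[\phi_n^1 \neq \phi_n^2]$ through a truncation on the event $\{|T_{2,n} - w_{1-\alpha}| < \delta\}$ and then sending $n\to\infty$ and $\delta\downarrow 0$ in that order.

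First I would establish the deterministic inclusion
\[
	\{\phi_n^1 \neq \phi_n^2\} \subseteq \big\{\, |T_{1,n} - w_{1-\alpha}| \le |T_{2,n} - w_{1-\alpha}| \,\big\}.
\]
This follows by examining the two ways in which $\phi_n^1$ and $\phi_n^2$ can disagree. If $\phi_n^1 = 1$ and $\phi_n^2 = 0$, then $T_{1,n} > T_{2,n}$ and $T_{1,n} \le w_{1-\alpha}$, so $T_{2,n} < T_{1,n} \le w_{1-\alpha}$; both statistics lie weakly below $w_{1-\alpha}$ and $w_{1-\alpha} - T_{1,n} < w_{1-\alpha} - T_{2,n}$. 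If $\phi_n^1 = 0$ and $\phi_n^2 = 1$, then $T_{1,n} \le T_{2,n}$ and $T_{1,n} > w_{1-\alpha}$, so $w_{1-\alpha} < T_{1,n} \le T_{2,n}$; both lie above $w_{1-\alpha}$ and $T_{1,n} - w_{1-\alpha} \le T_{2,n} - w_{1-\alpha}$. In either case the displayed inequality holds (with boundary cases such as $T_{1,n}=T_{2,n}$ or $T_{1,n}=w_{1-\alpha}$ absorbed by the non-strict inequality).

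Next, for an arbitrary fixed $\delta > 0$ I would split
\[
	\P[\phi_n^1 \neq \phi_n^2] \le \P\big[\phi_n^1 \neq \phi_n^2,\ |T_{2,n} - w_{1-\alpha}| < \delta\big] + \P\big[|T_{2,n} - w_{1-\alpha}| \ge \delta\big].
\]
On the event in the first term, the inclusion above forces $|T_{1,n} - w_{1-\alpha}| < \delta$, so that term is at most $\P[|T_{1,n} - w_{1-\alpha}| < \delta]$; the second term tends to $0$ as $n\to\infty$ by hypothesis~\eqref{eq:equ-condition-1}. Hence $\limsup_{n\to\infty}\P[\phi_n^1 \neq \phi_n^2] \le \limsup_{n\to\infty}\P[|T_{1,n} - w_{1-\alpha}| < \delta]$ for every $\delta>0$. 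Since $\delta \mapsto \P[|T_{1,n} - w_{1-\alpha}| < \delta]$ is nondecreasing, letting $\delta \downarrow 0$ and invoking hypothesis~\eqref{eq:equ-condition-2} yields $\limsup_{n\to\infty}\P[\phi_n^1 \neq \phi_n^2] = 0$, i.e. $\P[\phi_n^1 = \phi_n^2] \to 1$, which is the claim.

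There is no substantive obstacle here; the argument is elementary. The only places needing care are getting the boundary inequalities right in the event-containment step (which of $T_{1,n}, T_{2,n}$ sits on which side of $w_{1-\alpha}$ in each disagreement case, and strict versus non-strict inequalities), and respecting the order of limits — one first sends $n\to\infty$ with $\delta$ fixed and only afterwards lets $\delta\downarrow 0$, matching exactly the iterated-limit form of hypothesis~\eqref{eq:equ-condition-2}.
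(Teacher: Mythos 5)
Your proof is correct: the event containment $\{\phi_n^1 \neq \phi_n^2\} \subseteq \{|T_{1,n} - w_{1-\alpha}| \le |T_{2,n} - w_{1-\alpha}|\}$ holds in both disagreement cases (including the boundary cases you flag), and the truncation on $\{|T_{2,n}-w_{1-\alpha}|<\delta\}$ followed by $n\to\infty$ then $\delta\downarrow 0$ is exactly the right order of limits for invoking \eqref{eq:equ-condition-1} and \eqref{eq:equ-condition-2}. Note that the paper itself does not prove this lemma — it is imported verbatim from \citet[Lemma 3]{niu2022reconciling} and stated "for completeness" — so there is no internal proof to compare against; your elementary containment argument is the standard one and supplies a self-contained justification. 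The only cosmetic point is that \eqref{eq:equ-condition-2} presupposes the inner limit in $n$ exists, whereas your argument naturally produces a $\limsup$; these coincide under the hypothesis as written, so nothing is lost.
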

	
	To apply this lemma, we denote the $(1-\alpha)$-quantile of the distribution of $W$ as $w_{1-\alpha}$.
	Consider the case that $T_{1,n} = nT_n$ defined in Eqn.~\eqref{eq:sksd_test_stat} and $T_{2,n} = n\Q_{1-\alpha}(\widetilde{T}_n)$.
	Now we can check two conditions in the lemma.

	To check Eqn.~\eqref{eq:equ-condition-1}, we first directly apply the result from Theorem~\ref{thm:bootstrap_distribution} to obtain
	\begin{align*}
		\P[n\tilde{T}_n\leq t|\mathcal{F}_n] \overset{p}{\to} \P[W\leq t]\quad\forall t\in\mathbb{R},
	\end{align*}
	which is equivalent to $n\widetilde{T}_n |\F_n \overset{d,p}{\to} W$ with notation used in \citet{niu2022reconciling}.
	Applying Lemma~\ref{lemma:quantile-converge-in-prob}, we have
	\begin{align*}
		\Q_{1-\alpha}(n\widetilde{T}_n) \overset{p}{\to} \Q_{1-\alpha}(W) = w_{1-\alpha}.
	\end{align*}

	To check Eqn.~\eqref{eq:equ-condition-2}, we notice that the limit distribution of $T_{1,n} = nT_n$ is a mixture of $\chi$-squared distributions, which is a continuous distribution.
	Define the density of $W$ as $p_W(w)$ and $C_W = \sup_{w\in[w_{1-\alpha} - 1, w_{1-\alpha} + 1]}p_W(w)$.
	Then we have
	\begin{align*}
		\lim_{\delta \to 0}\lim_{n\to\infty}\P(|n T_{n} - w_{1-\alpha}| < \delta) \le \lim_{\delta \to 0} 2C_W\delta = 0. 
	\end{align*}

	In conclusion, Lemma~\ref{lemma:equivalence-test} gives us the asymptotic equivalence between the proposed test~\eqref{eq:bootstrap_test} and the test $\phi_{n,\alpha}$ which is asymptotically valid by Theorem~\ref{thm:bootstrap_distribution}.
	Thus, we conclude the asymptotic validity of the test $\phi_{n,\alpha}$, i.e.,
	\begin{align*}
		\lim_{n\rightarrow\infty}\P_{H_0}[\phi_{n,\alpha}=1]=\alpha.
	\end{align*}

	\paragraph{Proof of consistency under the alternative hypothesis.}
	We first use the following statement to characterize the limit behavior of statistic $T_n$ under the alternative hypothesis $H_1$.
		For any sequence of estimator $\hat{\theta}_n \in \Theta$, there exists $\varepsilon > 0$ such that
	\begin{align}\label{eq:alt}
		\lim_{n\rightarrow \infty}\P_{H_1}\left(T_n > \varepsilon\right) = 1.
	\end{align}
	\begin{proof}[Proof of Statement~\eqref{eq:alt}]
			In the proof, we explicitly choose
			\begin{align*}
				\varepsilon = \frac{1}{2}\inf_{\theta\in\Theta}\mathrm{KSD}(\law_\theta,\law).
			\end{align*}
			By the condition $\inf_{\theta \in \Theta}\mathrm{KSD}(\law_\theta, \law) > 0$, we confirm that $\varepsilon > 0$.
			We decompose the target probability in the following way:
			\begin{align*}
				\P_{H_1}\left(T_n \ge \varepsilon\right) &= \P_{H_1}\left(\mathrm{KSD}(\law_{\hn},\law) + (T_n - \mathrm{KSD}(\law_{\hn},\law))> \varepsilon\right)\\
				&\ge \P_{H_1}\left(2\varepsilon + (T_n - \mathrm{KSD}(\law_{\hn},\law))> \varepsilon\right)\\
				&= \P_{H_1}\left(T_n - \mathrm{KSD}(\law_{\hn},\law)> -\varepsilon\right)\\
				&\ge \P_{H_1}\left(\left|T_n - \mathrm{KSD}(\law_{\hn},\law)\right| < \varepsilon\right),
			\end{align*}
			where in the first inequality, we use the fact that $\mathrm{KSD}(\law_{\hn},\law) \ge \inf_{\theta\in\Theta}\mathrm{KSD}(\law_\theta,\law) = 2\varepsilon$.
			The last expression can be bounded by the following lemma, which is indeed a stronger result.
			The proof of this result can be found in \citet[Lemma 2]{matsubara2022robust}.
			\begin{lemma}
				With Assumption~\ref{assu:regularity_Theta} and \ref{ass:p}, we have
				\begin{align*}
					\lim_{n\to\infty}\sup_{\theta \in \Theta}|\mathrm{KSD}(\law_{\theta},\law^n) - \mathrm{KSD}(\law_{\theta},\law)| = 0 \quad \text{almost surely},
				\end{align*}
				where $\law^n$ is the empirical distribution of $X_1,\ldots,X_n$.
			\end{lemma}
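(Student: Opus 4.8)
The plan is to reduce the claim to a uniform strong law of large numbers (SLLN) for the $V$-statistic representation of squared KSD and then run a Glivenko--Cantelli argument for the $U$-process indexed by $\theta$. First, since $\mathrm{KSD}(\law_\theta,\law^n)=\sqrt{T_n(X,\theta)}\ge 0$ with $T_n(X,\theta)=\tfrac1{n^2}\sum_{i,j=1}^n h_{p_\theta}(X_i,X_j)$ the $V$-statistic from Eqn.~\eqref{eq:ksd-kernel}, and $\mathrm{KSD}(\law_\theta,\law)=\sqrt{\E_{X,X'\sim\law}[h_{p_\theta}(X,X')]}\ge 0$ (both finite under Assumptions~\ref{ass:p} and~\ref{ass:k}), the elementary bound $|\sqrt a-\sqrt b|\le\sqrt{|a-b|}$ reduces the lemma to
\[
\sup_{\theta\in\Theta}\Bigl|\tfrac1{n^2}\sum_{i,j=1}^n h_{p_\theta}(X_i,X_j)-\E_{X,X'\sim\law}[h_{p_\theta}(X,X')]\Bigr|\convas 0,
\]
where $h_p$ is the Stein kernel of Appendix~\ref{sec:close-form}, symmetric in its arguments because $K$ is symmetric.

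Second, I would split this $V$-statistic into its diagonal and off-diagonal parts. The diagonal part is $\tfrac1n\cdot\tfrac1n\sum_{i=1}^n h_{p_\theta}(X_i,X_i)$; using the kernel bounds of Assumption~\ref{ass:k} together with the score-moment bounds of Assumption~\ref{ass:p}, and exploiting convexity of $\Theta$ (Assumption~\ref{assu:regularity_Theta}) and $p_\theta\in\mathcal{C}^{1,4}(\X,\Theta)$ to pass from pointwise- to uniform-in-$\theta$ control via a first-order Taylor expansion in $\theta$, one obtains a $\law$-integrable envelope $G$ with $\sup_{\theta\in\Theta}|h_{p_\theta}(x,x)|\le G(x)$; the SLLN then gives $\tfrac1n\sum_i G(X_i)\convas\E_\law[G]$, so the diagonal part vanishes uniformly in $\theta$ almost surely. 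The off-diagonal part equals $\tfrac{n-1}{n}U_n(\theta)$ with $U_n(\theta)$ the order-two $U$-statistic of $h_{p_\theta}$.

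Third, I would establish the uniform SLLN for $\{U_n(\theta):\theta\in\Theta\}$. Pointwise, the SLLN for $U$-statistics (Theorem~12.3 of \citet{van2000asymptotic}, already used in the proof of Theorem~\ref{thm:sksd_consistency}) gives $U_n(\theta)\convas\mathrm{KSD}^2(\law_\theta,\law)$ for each fixed $\theta$. To upgrade to uniformity, cover the (totally bounded) set $\Theta$ by finitely many $\delta$-balls $B_1,\dots,B_N$ with centers $\theta_1,\dots,\theta_N$; inside $B_k$ a Taylor expansion of $\theta\mapsto h_{p_\theta}(x,y)$ combined with the bounds on $\nabla_\theta s_\theta$ in Assumption~\ref{ass:p} bounds $\sup_{\theta\in B_k}|U_n(\theta)-U_n(\theta_k)|$ by $\delta$ times a $U$-statistic of an integrable envelope, which converges a.s., uniformly over the finitely many $k$, to $\delta$ times a finite constant; the analogous estimate for $\sup_{\theta\in B_k}|\mathrm{KSD}^2(\law_\theta,\law)-\mathrm{KSD}^2(\law_{\theta_k},\law)|$ follows by dominated convergence. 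Chaining these with the pointwise convergence at the centers gives $\limsup_n\sup_{\theta\in\Theta}|U_n(\theta)-\mathrm{KSD}^2(\law_\theta,\law)|\le C\delta$ almost surely for every $\delta>0$; letting $\delta\downarrow0$, combining with the diagonal bound, and taking square roots completes the proof.

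The main obstacle is precisely the passage from the pointwise-in-$\theta$ moment bounds of Assumption~\ref{ass:p} to the uniform-in-$\theta$ envelopes and equicontinuity moduli invoked above: one must show that $\sup_{\theta\in\Theta}|h_{p_\theta}(x,y)|$ and the oscillation $\sup_{\|\theta-\theta'\|\le\delta}|h_{p_\theta}(x,y)-h_{p_{\theta'}}(x,y)|$ are $\law\times\law$-integrable, the latter with expectation $o(1)$ as $\delta\downarrow0$. This is where convexity of $\Theta$, the regularity $p_\theta\in\mathcal{C}^{1,4}$, and the moment bounds on the higher $\theta$-derivatives of $s_\theta$ are genuinely used; once these integrable envelopes are secured, the remainder is a routine Glivenko--Cantelli argument for $U$-processes. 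A careful implementation of exactly this scheme appears as Lemma~2 of \citet{matsubara2022robust}, which the present lemma cites.
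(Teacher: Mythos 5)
Your reconstruction is correct and matches the paper's treatment: the paper itself gives no proof here but defers entirely to Lemma~2 of \citet{matsubara2022robust}, and your square-root reduction via $|\sqrt{a}-\sqrt{b}|\le\sqrt{|a-b|}$, the diagonal/off-diagonal $V$-to-$U$ statistic split, and the Glivenko--Cantelli covering argument with $\theta$-uniform integrable envelopes are precisely the structure of that cited result, as you yourself observe in the final paragraph. No gaps.
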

			Noticing that $\mathrm{KSD}(\law_{\hn},\law^n) = T_n$ and that the almost sure convergence implies convergence in probability, we have
			\begin{align*}
				\lim_{n\rightarrow \infty}\P_{H_1}\left(T_n \ge \varepsilon\right) = \lim_{n\rightarrow \infty} \P_{H_1}\left(\left|T_n - \mathrm{KSD}(\law_{\hn},\law)\right| < \varepsilon\right) = 1.
			\end{align*}
	\end{proof}
		With the help of Lemma~\ref{lemma:equivalence-test} and the same analysis in the proof of validity above, we can get an asymptotically equiavlent test $\phi'_{n} = \indicator(nT_n > w_{1-\alpha})$.
		By Eqn.~\eqref{eq:alt}, we have
		\begin{align*}
			\lim_{n\to\infty} \P_{H_1}\left(nT_n > w_{1-\alpha} \right) \ge \lim_{n\to\infty} \P_{H_1}\left(nT_n > n\varepsilon \right) = 1.
		\end{align*} 
		Thus, the asymptotic power can be calculated as
		\begin{align*}
			\lim_{n\rightarrow\infty}\P_{H_1}[\phi_{n,\alpha}=1]= \lim_{n\rightarrow\infty}\P_{H_1}[\phi'_{n}=1] = 1.
		\end{align*}
\end{proof}

\subsection{Proof of Proposition~\ref{prop:finite_w}}\label{sec:finite_w_proof}
\begin{proof}[Proof of Proposition~\ref{prop:finite_w}]
	We will verify the conditions in the Kolmogorov's three series theorem \citep[Page 150]{durrett2019probability}.
	By choosing the truncation threshold $A$ in the theorem to be $\infty$, the first condition in the three series holds automatically.
	Thus, it suffices to show that 
	\begin{align*}
		\sum_{i=1}^\infty \E[\lambda_i Z_i^2] < \infty, \quad \sum_{i=1}^\infty \mathrm{Var}[\lambda_i Z_i^2] < \infty.
	\end{align*}
	In the proof of Theorem \ref{thm:bootstrap_distribution}, we have shown that $\lambda_i \geq 0$ and $\sum_{i=1}^\infty \lambda_i = \text{Trace}(T_\sigma) = \sum_{i=1}^\infty \sigma(\phi_i,\phi_i) < \infty$.
	Since $Z_i^2$ is a chi-squared random variable with degree 1, we have $\E[\lambda_i Z_i^2] = \lambda_i$ and $\mathrm{Var}[\lambda_i Z_i^2] = 2\lambda_i^2$, which implies the later two conditions.
\end{proof}

\subsection{Proof of Corollary~\ref{cor:bootstrap_convergence}}\label{sec:proof_bootstrap_as_convergence}
\begin{proof}[Proof of Corollary~\ref{cor:bootstrap_convergence}]
	Eqn.~\eqref{eq:main_strong_convergence} is a direct consequence of Theorem~\ref{thm:bootstrap_distribution} and Lemma~\ref{lemma:conditioanl-polya}.
	Thus, we omit the proof here, and focus on proving Eqn.~\eqref{eq:main_as_convergence} as follows.

	Define $\Omega$ to be data sample space, i.e., $\Omega = \mathcal{X}^\mathbb{N} = \{(x_1,x_2,\cdots):x_i \in \mathcal{X} \}$. Define an event $\mathcal{E}=\{\omega \in \Omega:\lim_{n\rightarrow\infty}\hat\theta_n(\omega)\rightarrow\theta_0\}$. By our additional assumption, we have $\P(\mathcal{E})=1$. Thus we can work under this event.
	
	For a fixed $\omega \in \mathcal{E}$, $\hat{\theta}_n$ is a fixed sequence in $\Theta$ converging to $\theta_0$. 
	We aim to prove that
	\begin{align*}
		\lim_{n\rightarrow\infty}\P[n\tilde{T}_n\leq t|\mathcal{F}_n](\omega) = \P[W\leq t].
	\end{align*} 
	Under this fixed $\omega$, the data generation processes of $\widetilde{X}_i$ is a fixed sequence of distributions, denoted as $\P_{\hat\theta_n}$.
	With this notation, we can equivalently write the target in a clearer form:
	\begin{align}\label{eq:clear-conditional}
		n\tilde{T}_n \overset{\P_{\hat\theta_n}}{\rightarrow} W \quad \text{ as } n \to \infty.
	\end{align}

	\paragraph{Step 1: Asymptotic expansion.} The step 1 in this proof is a correspondence to the step 1 of the proof of Theorem~\ref{thm:bootstrap_distribution}.
	Instead of a decomposition with respect to probability $\L^n \times \P_{\hn}^n$, we need to work with decomposition with respect to conditional probability.
	\begin{lemma}\label{lemma:para-var-form-conditional}
		Conditional on data sequence $\mathcal{D}_n$ (i.e., for a fixed sequence of estimator $\hn$), the bootstrap statistic admits the following asymptotic variational form:
		\begin{align*}
			n\widetilde{T}_n = \sup_{f \in \mathcal{B}_{\mH}(1)} \widetilde{S}_n(f)^2 + o_{\P_{\hn}}(1),
		\end{align*}
		where $\widetilde{S}_n(f)$ is defined in Lemma~\ref{lemma:para-var-form}.
	\end{lemma}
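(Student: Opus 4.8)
The plan is to re-run the proof of Lemma~\ref{lemma:para-var-form}, but now with the data realization $\omega$ in the probability-one event $\mathcal{E}=\{\lim_{n}\hn=\theta_0\}$ held fixed, so that $(\hn)_{n\ge1}$ becomes a deterministic sequence that eventually lies in the ball $B_{\theta_0}(\delta)$ of Assumption~\ref{assu:uale}. Under this conditioning the only remaining randomness is the i.i.d.\ bootstrap sample $\widetilde X_1,\dots,\widetilde X_n\sim\P_{\hn}$, and the task is to show that every ``$o_p(1)$ under $\law^n\times\P_{\hn}^n$'' appearing in the proof of Lemma~\ref{lemma:para-var-form} can be upgraded to a conditional ``$o_{\P_{\hn}}(1)$''. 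This upgrade is legitimate because Assumptions~\ref{ass:p}--\ref{ass:k} supply moment and kernel bounds uniform over $\theta\in\Theta$, while Definition~\ref{def:ale} (invoked via Assumption~\ref{assu:uale}) supplies a linear expansion uniform over $\theta\in B_{\theta_0}(\delta)$; together these let the law-of-large-numbers and central-limit steps be carried out along the drifting sampling law $\P_{\hn}$.

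Concretely I would proceed in four steps. \textbf{(i) Stein/RKHS representation.} Using the identity in Appendix~\ref{sec:close-form}, write $n\widetilde T_n=\bigl\|\tfrac{1}{\sqrt n}\sum_{i=1}^n\xi_{p_{\tn},\widetilde X_i}\bigr\|_{\mH^d}^2$, where $\xi_{p_\theta,x}(\cdot)=s_\theta(x)K_x(\cdot)+\nabla K_x(\cdot)$ and $K_x=K(x,\cdot)$, and note $\langle\xi_{p_\theta,x},f\rangle_{\mH^d}=\A_\theta f(x)$. \textbf{(ii) Taylor expansion in $\theta$.} Since $\nabla_\theta\xi_{p_\theta,x}=\bigl(\nabla_\theta s_\theta(x)\bigr)K_x(\cdot)$, expanding $\xi_{p_{\tn},x}$ around $\theta=\hn$ produces a first-order term linear in $\tn-\hn$ plus a second-order remainder; pairing with $f$ and using $\|f(x)\|^2\le C_K\|f\|_{\mH^d}^2$, the remainder is bounded by $\|f\|_{\mH^d}\cdot\|\tn-\hn\|^2\cdot\tfrac{1}{n}\sum_i\|\nabla_\theta^{(2)}s_{\theta_{t,i}}(\widetilde X_i)\|_2$, which is $O_{\P_{\hn}}(n^{-1})\cdot\sqrt n\cdot O_{\P_{\hn}}(1)=o_{\P_{\hn}}(1)$ uniformly over $\|f\|_{\mH^d}\le1$ (the intermediate points $\theta_{t,i}$ are handled by first restricting to $\{\|\tn-\hn\|\le r_n\}$ with $r_n\to0$ and using continuity of the moments in $\theta$). \textbf{(iii) Substituting the UALE expansion.} Replace $\sqrt n(\tn-\hn)$ by $\tfrac{1}{\sqrt n}\sum_iI(\widetilde X_i,\hn)+o_{\P_{\hn}}(1)$ and the empirical average $\tfrac{1}{n}\sum_i[\nabla_\theta s_{\hn}(\widetilde X_i)]^\top f(\widetilde X_i)$ by $\E_{X'\sim\P_{\hn}}\bigl[[\nabla_\theta s_{\hn}(X')]^\top f(X')\bigr]$; the cross terms vanish because both factors are $O_{\P_{\hn}}(1)$ uniformly in $\|f\|_{\mH^d}\le1$, using the uniform bounds and Lemma~\ref{lemma:oracle-property}. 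After these substitutions, pairing with $f$ recovers exactly $\tfrac{1}{\sqrt n}\sum_iS^*(f,\widetilde X_i,\hn)=\widetilde S_n(f)$, so $\tfrac{1}{\sqrt n}\sum_i\xi_{p_{\tn},\widetilde X_i}$ is, up to an $\mH^d$-norm error of order $o_{\P_{\hn}}(1)$, the Riesz representer of the functional $\widetilde S_n$. \textbf{(iv) Conclusion.} Both $\mH^d$-elements have norm $O_{\P_{\hn}}(1)$, so by the continuous mapping theorem their squared norms agree up to $o_{\P_{\hn}}(1)$, which gives $n\widetilde T_n=\sup_{\|f\|_{\mH^d}\le1}\widetilde S_n(f)^2+o_{\P_{\hn}}(1)$, the assertion of the lemma.

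I expect the main obstacle to be the triangular-array bookkeeping: every estimate that was an unconditional $o_p(1)$ must now be shown to hold uniformly along an \emph{arbitrary} deterministic sequence $\hn\to\theta_0$ — i.e.\ for a sampling law $\P_{\hn}$ that itself moves with $n$ — and simultaneously uniformly over the infinite-dimensional index set $\{f\in\mH^d:\|f\|_{\mH^d}\le1\}$. The two kinds of uniformity needed (over the parameter neighborhood $B_{\theta_0}(\delta)$ and over the unit ball of $\mH^d$) are precisely what the uniformity clause of Definition~\ref{def:ale} and the $\theta$-uniform bounds of Assumptions~\ref{ass:p}--\ref{ass:k}, together with the uniform second-moment bound on $S^*$ from Lemma~\ref{lemma:oracle-property}, were set up to provide. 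Once these are in hand every remaining manipulation is identical to the corresponding step in the proof of Lemma~\ref{lemma:para-var-form}, and the conclusion then feeds into the subsequent steps of the proof of Corollary~\ref{cor:bootstrap_convergence} toward~\eqref{eq:clear-conditional}.
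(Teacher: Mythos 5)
Your plan is correct and follows exactly what the paper prescribes: the paper omits this proof and says only that it is ``similar to the proof of Lemma~\ref{lemma:para-var-form},'' and your strategy of fixing a realization $\omega$ in the probability-one event $\{\lim_n\hn=\theta_0\}$, treating $(\hn)_{n\ge1}$ as a deterministic sequence eventually inside $B_{\theta_0}(\delta)$, and rerunning Lemma~\ref{lemma:para-var-form}'s argument under the drifting law $\P_{\hn}$ is precisely the intended instantiation. Where you depart, and in a way that is arguably cleaner, is in the internal organization: the paper's proof of Lemma~\ref{lemma:para-var-form} proceeds by deriving a scalar \emph{intermediary form} (a three-term decomposition involving $nT_n(\widetilde X,\hn)$, $G$, and the Hessian $H_n$) from $n\widetilde T_n$ by Taylor expansion, and then separately showing that $\sup_f\widetilde S_n(f)^2$ evaluates to the \emph{same} intermediary form by computing $\widetilde T_1+2\widetilde T_2+\widetilde T_3$ via the reproducing property; all quantities are explicit double sums, so no uniformity over $f$ ever arises. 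You instead Taylor-expand the $\mH^d$-valued empirical functional $\tfrac{1}{\sqrt n}\sum_i\xi_{p_{\tn},\widetilde X_i}$ directly and show it is $o_{\P_{\hn}}(1)$-close, in $\mH^d$-norm, to the Riesz representer of $\widetilde S_n$, and then square. This one-pass argument is shorter and avoids the intermediate-form matching, but it does require that the $o_{\P_{\hn}}(1)$ errors in steps (ii)--(iii) be controlled \emph{uniformly} over the unit ball. Your bounds are indeed linear (or quadratic) in $f$ with $\|f\|_{\mH^d}$-controlled coefficients via $\|f(x)\|^2\le C_K\|f\|^2_{\mH^d}$, so the sup over the unit ball is just the $\mH^d$-norm of an element computable from kernel evaluations and the LLN carries through; it would be worth one sentence making this explicit (e.g., the empirical-minus-population step is the $\mH^d\otimes\R^k$-norm of $\tfrac1n\sum_i[\nabla_\theta s_{\hn}(\widetilde X_i)]K_{\widetilde X_i}-\E_{\P_{\hn}}[\cdot]$, which is $o_{\P_{\hn}}(1)$ by the second-moment bound of Assumption~\ref{ass:p}). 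Two small bookkeeping notes: the lemma statement's $\mathcal B_{\mH}(1)$ should read $\mathcal B_{\mH^d}(1)$ as you use, and the remainder bound in step (ii) should carry the $\tfrac{1}{\sqrt n}\sum_i$ normalization rather than $\tfrac1n\sum_i$, though your final $o_{\P_{\hn}}(1)$ conclusion is unaffected.
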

	The proof of Lemma~\ref{lemma:para-var-form-conditional} is similar to the proof of Lemma~\ref{lemma:para-var-form}, which we will not repeat here.
	Lemma~\ref{lemma:para-var-form-conditional} implies that  $n \widetilde{T}_n$ and $\sup_{f \in \mathcal{B}_{\mH}(1)} \widetilde{S}_n(f)^2$ are asymptotic equivalent under conditional measure $\pn$.
	By Slutsky's Theorem and Eqn.~\eqref{eq:clear-conditional}, it suffices to prove
	\begin{align}\label{eq:clear-var-conditional}
		\sup_{f \in \mathcal{B}_{\mH}(1)} \widetilde{S}_n(f)^2 \overset{\P_{\hat\theta_n}}{\rightarrow} W \quad \text{ as } n \to \infty.
	\end{align}
	
	\paragraph{Step 2: Weak convergence in the functional space.} We directly prove Eqn.~\eqref{eq:clear-var-conditional} in this part.
	\citet[Theorem 1]{fernandez2024general} is used to prove this result.
	We state it here for convenience.
	\begin{lemma}\label{lemma:functional-convergence}
		With the same setup in Lemma~\ref{lemma:functional-stable-convergence}, we introduce the new conditions:
		\begin{enumerate}
			\item[(1)'] There exists a covariance functional $\sigma(f_1,f_2)$ such that for any $f \in \mH$, we have
			\begin{align}\label{eq:c1'}
				\widetilde{S}_n(f) \overset{d}{\to} S(f) \sim \N(0, \sigma(f,f)),\tag{C2.1}
			\end{align}
			\item[(3)'] For some orthonormal basis $(\phi_i)_{i = 1}^\infty$ of RKHS $\mH$ and any $\varepsilon > 0$, we have
			\begin{align}\label{eq:c3'}
				\lim_{k \to \infty}\limsup_{n \to \infty}\P_{\hn}\left(\sum_{i=k}^\infty \widetilde{S}_n(\phi_i)^2 \ge \varepsilon\right) = 0, \tag{C2.3}
			\end{align}
		\end{enumerate}
		Suppose the conditions~\eqref{eq:c1'}, \eqref{eq:c2} and \eqref{eq:c3'} hold, then we have
		\begin{align*}
			\{\widetilde{S}_n(f)^2\}_{f\in\mH} \overset{d}{\to} \{S(f)^2\}_{f\in\mH}.
		\end{align*}
	\end{lemma}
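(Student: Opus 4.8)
The plan is to prove Lemma~\ref{lemma:functional-convergence} — the non-stable counterpart of Lemma~\ref{lemma:functional-stable-convergence}, and itself a restatement of Theorem~1 of \citet{fernandez2024general} — by transporting the problem into the Hilbert space $\mH$ and then invoking the continuous mapping theorem. Since each $\widetilde S_n$ is a bounded linear functional on $\mH$, the Riesz representation theorem supplies a (random) element $\hat s_n \in \mH$ with $\widetilde S_n(f) = \langle \hat s_n, f\rangle_{\mH}$ for all $f \in \mH$; similarly the Gaussian limit functional $S$ corresponds to a centered Gaussian random element $s \in \mH$ whose covariance operator $\Sigma$ is determined by $\langle \Sigma f_1, f_2\rangle_{\mH} = \sigma(f_1,f_2)$. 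Condition~\eqref{eq:c2}, i.e.\ $\sum_i \sigma(\phi_i,\phi_i) = \tr(\Sigma) < \infty$, is exactly the statement that $\Sigma$ is trace class, which is what makes $s$ a bona fide $\mH$-valued Gaussian element with $\|s\|_{\mH}^2 = \sum_i S(\phi_i)^2 < \infty$ almost surely (cf.\ Proposition~\ref{prop:finite_w}). With this in hand it suffices to show $\hat s_n \overset{d}{\to} s$ in $\mH$: applying the continuous mapping theorem to the continuous map $u \mapsto (\langle u, f\rangle_{\mH}^2)_{f\in\mH}$ — and in particular to $u \mapsto \|u\|_{\mH}^2 = \sup_{\|f\|_{\mH}\le 1}\langle u,f\rangle_{\mH}^2$ — then yields $\{\widetilde S_n(f)^2\}_{f\in\mH} \overset{d}{\to}\{S(f)^2\}_{f\in\mH}$, which is the claim.

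To establish $\hat s_n \overset{d}{\to} s$ in $\mH$ I would use the standard two-part criterion for weak convergence in a separable Hilbert space: convergence of finite-dimensional projections together with uniform tightness. For the first part, fix $k$; by linearity, for any $a \in \R^k$ we have $\sum_{i=1}^k a_i\widetilde S_n(\phi_i) = \widetilde S_n\big(\sum_{i=1}^k a_i\phi_i\big)$, and since $\sum_i a_i\phi_i \in \mH$, condition~\eqref{eq:c1'} gives convergence in distribution to $S\big(\sum_i a_i\phi_i\big) = \sum_i a_i S(\phi_i)$, which is centered Gaussian with variance $\sum_{i,j}a_i a_j\sigma(\phi_i,\phi_j)$; Cramér–Wold then yields $(\widetilde S_n(\phi_1),\dots,\widetilde S_n(\phi_k)) \overset{d}{\to}(S(\phi_1),\dots,S(\phi_k))$ for every $k$. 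For tightness it suffices, by the Prokhorov-type criterion for Hilbert spaces (coordinatewise tightness, already obtained, plus uniform smallness of the high-index tail), to verify that for every $\varepsilon>0$,
\begin{align*}
	\lim_{k\to\infty}\limsup_{n\to\infty}\P_{\hn}\Big(\sum_{i=k}^{\infty}\langle\hat s_n,\phi_i\rangle_{\mH}^2 \ge \varepsilon\Big) = 0,
\end{align*}
and since $\langle\hat s_n,\phi_i\rangle_{\mH} = \widetilde S_n(\phi_i)$ this is precisely condition~\eqref{eq:c3'}. Combining the two parts gives $\hat s_n \overset{d}{\to} s$ under $\P_{\hn}$, and the continuous-mapping step above finishes the argument. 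Diagonalizing $\Sigma$ with eigenpairs $(\lambda_u,\phi_u)$ and writing $\langle s,\phi_u\rangle_{\mH} = \sqrt{\lambda_u}Z_u$ with $Z_u$ i.i.d.\ standard normal identifies $\sup_{\|f\|_{\mH}\le 1}S(f)^2 = \sum_u \lambda_u Z_u^2 \overset{d}{=} W$, matching the limit in Theorem~\ref{thm:asymptotic_distribution}.

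The substantive content is hidden in condition~\eqref{eq:c3'}: finite-dimensional convergence is a routine consequence of linearity and~\eqref{eq:c1'}, but tightness — equivalently, ensuring that the supremum defining $\|\hat s_n\|_{\mH}^2$ is not inflated, uniformly in $n$, by high-index coordinates — is where the explicit structure of $\widetilde S_n$ (a normalized sum of the $S^*$ terms) together with the moment and kernel bounds in Assumptions~\ref{ass:p}–\ref{ass:k} must be used; that verification is carried out separately in Appendix~\ref{sec:verification-conditions}, so within this lemma~\eqref{eq:c3'} is taken as given. One harmless point of bookkeeping: \eqref{eq:c1'} and~\eqref{eq:c3'} are stated under the stage-dependent conditional law $\P_{\hn}$ rather than one fixed measure, so ``weak convergence'' throughout should be read as convergence of the law of $\hat s_n$ computed under $\P_{\hn}$; the Riesz/Cramér–Wold/tightness/continuous-mapping chain above is insensitive to this relabeling, which is also why the argument mirrors, \emph{mutatis mutandis}, the proof of the stable version in Lemma~\ref{lemma:functional-stable-convergence} specialized to the trivial sub-$\sigma$-algebra.
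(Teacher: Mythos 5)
The paper does not in fact supply a proof of this lemma: the text immediately following its statement says the proof is ``similar to the proof of Lemma~\ref{lemma:functional-stable-convergence}, which we will not repeat here,'' and that proof in turn amounts to invoking \citet[Theorem~1]{fernandez2024general}. So the paper's argument is effectively a citation. Your proposal takes a genuinely different — and more self-contained — route: you unpack the cited theorem from first principles via the classical criterion for weak convergence of random elements in a separable Hilbert space. The chain you give (Riesz representative $\hat s_n$, Cram\'er--Wold from~\eqref{eq:c1'} for finite-dimensional convergence, the trace-class observation from~\eqref{eq:c2} making $s$ a legitimate $\mH$-valued Gaussian, tightness from~\eqref{eq:c3'}, then continuous mapping through $u\mapsto\|u\|_{\mH}^2$) is precisely the skeleton of the proof of \citet[Theorem~1]{fernandez2024general}, so you have reconstructed the cited result rather than deferring to it. Both routes are correct; yours buys transparency about where each of the three conditions enters, at the cost of importing the Prokhorov-type tightness criterion for Hilbert spaces as a known fact, while the paper's is shorter but offloads all the content. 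Your parenthetical about $\P_{\hn}$ being a stage-dependent (triangular-array) law is the right reading and matches how the paper handles this in the proof of Corollary~\ref{cor:bootstrap_convergence}, where the argument is run for a fixed realization so $\hn$ is a deterministic sequence. Two harmless bookkeeping remarks: the lemma is stated over $\mH$ where the ambient space is elsewhere $\mH^d$ (a notational slip in the paper, not in your argument); and the identity $S\bigl(\sum_i a_i\phi_i\bigr)=\sum_i a_i S(\phi_i)$ requires that the Gaussian limit be \emph{constructed} as a linear process — Cram\'er--Wold does deliver this in the way you use it, but it is worth being explicit that~\eqref{eq:c1'} alone only fixes marginal laws, not linearity of the limit.
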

	The proof of Lemma~\ref{lemma:functional-convergence} is similar to the proof of Lemma~\ref{lemma:functional-stable-convergence}, which we will not repeat here.
	Notice that \eqref{eq:c2} is not a probabilistic statement, and it is verified in the proof of Theorem~\ref{thm:bootstrap_distribution} and still holds in this case.
	The verification of \eqref{eq:c1'} is very similar the verification of \eqref{eq:c1} in the proof of Theorem~\ref{thm:bootstrap_distribution}.
	We can check that, as $n \to \infty$,
	\begin{align*}
		&\E[S^*(f,\widetilde{X},\hn)^2] \to \sigma(f,f) \qquad \text{and}\\
		&\E\left[S^*(f,\widetilde{X},\hn)^2 \ind\left(S^*(f,\widetilde{X},\hn) \ge \varepsilon\sqrt{n}\right)\right] \to 0
	\end{align*}
	still hold for any $\varepsilon > 0$.
	Then by standard Lindeberg-Feller Central Limit Theorem \citet[Theorem 3.4.10]{durrett2019probability}, we verify Condition~\eqref{eq:c1'}.

	For Condition~\eqref{eq:c3'}, we can verify it by the Markov inequality:
	For the third condition, we use the Markov's inequality to obtain as $n \to \infty$,
\begin{align*}
	\P_{\hn}\left(\sum_{k \ge i}\widetilde{S}_n(\phi_k)^2 \ge \varepsilon\right) 
	\le \frac{1}{\varepsilon} \sum_{k=i}^\infty \E_{\hn}\left[\widetilde{S}_n(\phi_k)^2\right]
	\to \frac{1}{\varepsilon} \sum_{k=i}^\infty \sigma(\phi_k,\phi_k).
\end{align*}
Since $\sigma(\phi_k,\phi_k) > 0$ is summable by Condition~\eqref{eq:c2}, we obtain
\begin{align*}
	\lim_{i\to\infty}\sum_{k=i}^\infty\sigma(\phi_k,\phi_k)= 0,
\end{align*}
which leads to Condition~\eqref{eq:c3'}:
\begin{align*}
	\lim_{i\to\infty}\limsup_{n\to\infty}\P_{\theta_0}\left(\sum_{k \ge i}S_n(\phi_k)^2 \ge \varepsilon\right) = \lim_{i\to\infty}\sum_{k=i}^\infty\sigma(\phi_k,\phi_k)= 0.
\end{align*}
	After verifying the three conditions, we can directly apply Lemma~\ref{lemma:functional-convergence} to obtain
	\begin{align*}
		\{\widetilde{S}_n(f)^2\}_{f\in\mH} \overset{d}{\to} \{S(f)^2\}_{f\in\mH}.
	\end{align*}
	As it is shown in the Step 3 in the proof of Theorem~\ref{thm:bootstrap_distribution}, the limit distribution of the variational form can be expressed as
	\begin{align*}
		\sup_{\Vert f \Vert_{\mH}\le 1} S(f)^2 = \sum_{i=1}^\infty \lambda_i Z_i^2 \overset{d}{=} W.
	\end{align*}
	Then by the continuous mapping theorem, we obtain
	\begin{align*}
		\sup_{\Vert f \Vert_{\mH}\le 1} \widetilde{S}_n(f)^2 \overset{\P_{\hat\theta_n}}{\rightarrow} \sup_{\Vert f \Vert_{\mH}\le 1} S(f)^2 \overset{d}{=} W \quad \text{ as } n \to \infty,
	\end{align*}
	which proves Eqn.~\eqref{eq:clear-var-conditional}, thus completes the proof of Corollary~\ref{cor:bootstrap_convergence}.

\end{proof}

\subsection{Verification of the conditions in Lemma \ref{lemma:functional-stable-convergence}}\label{sec:verification-conditions}
\subsubsection{Proof of Condition~\eqref{eq:c1}}

We mainly use Lemma~\ref{lemma:stable-clt} to prove this result, which is stated in Appendix~\ref{sec:intro_stable_convergence} with background of stable convergence for reader's convenience.

	We first explicitly define the $S(f)$ and recall the definition of $\sigma(f_1,f_2)$ on the right hand side of Eqn.~\eqref{eq:c1}:
	\begin{align*}
		S(f) = \sum_{i=1}^\infty \sqrt{\lambda_i} \langle \phi_i, f\rangle_{\mH^d} \,Z_i \quad \text{and} \quad \sigma(f_1,f_2) = \E_{X\sim \p0}[S^*(f_1,X,\theta_0)S^*(f_2,X,\theta_0)],
	\end{align*}
	where $\lambda_i$'s are the eigenvalues of the operator $T_{\sigma}: \mH^d \to \mathcal{F}_{d,1}$ defined as
	\begin{align}\label{eq:t-sigma}
		T_{\sigma}(f)(x) = \sigma(f,\K_x) \quad \text{for any } f \in \mH^d, x \in \X,
	\end{align} 
	where $K_x$ is the feature map, $\K_x = K_x \ind_d$ and $Z_i$'s are independent standard normal random variables.
	In the following proof, we stick to the notation $\widetilde{X}$ generated from $\pn$ without additional explanation.

	First notice a direct consequence of Lemma~\ref{lemma:oracle-property} is that for any $f \in \mH^d$,
\begin{align}
	\E\left[S^*(f,\widetilde{X},\hn)|\F_n\right]= 0,
\end{align}
which ensures that we can fit our setting into the martingale framework in Lemma~\ref{lemma:stable-clt} and apply its result.
Thus, we only need to verify the conditional moment condition~\eqref{eq:stable-clt-1} and conditional Lindeberg's condition~\eqref{eq:stable-clt-2}.
\paragraph{Verification of Condition~\eqref{eq:stable-clt-1}}
To verify the conditional moment condition~\eqref{eq:stable-clt-1}, we need to show for any $f \in \mH$, as $n \to \infty$,
\begin{align}\label{eq:stable-clt-1-eq}
	\E[S^*(f,\widetilde{X},\hn)^2|\F_n] \overset{p}{\to} \sigma(f,f) = \E_{X\sim \p0}[S^*(f,X,\theta_0)^2] < \infty,
\end{align}
by Lemma~\ref{lemma:oracle-property}.
Actually, we have
\begin{align}
	&\E[S^*(f,\widetilde{X},\hn)^2|\F_n]\notag\\
	&= \int_{\X}S^*(f,x,\hn)S^*(f,x,\hn)p_{\hn}(x)\lambda(\d x)\notag\\
	&\overset{p}{\to} \int_{\X}S^*(f,x,\theta_0)S^*(f,x,\theta_0)p_{\theta_0}(x)\lambda(\d x) = \sigma(f,f),\label{eq:s*-second-moment}
\end{align}
where the last line follows from the continuous mapping theorem with the convergence of $\hn \to \theta_0$ in probability.

\paragraph{Verification of Condition~\eqref{eq:stable-clt-2}}
To verified the conditional Lindeberg's condition, we need to show that for any $f\in\mH$ and any $\varepsilon > 0$, as $n \to \infty$,
\begin{align}\label{eq:stable-clt-2-eq}
	\E\left[S^*(f,\widetilde{X},\hn)^2 \ind\left(S^*(f,\widetilde{X},\hn) \ge \varepsilon\sqrt{n}\right)\Big|\F_n\right] \overset{p}{\to} 0.
\end{align}
Through the same argument as in Eqn.~\eqref{eq:s*-second-moment}, we have for $n > n_0$,
\begin{align}
	&\E\left[S^*(f,\widetilde{X},\hn)^2 \ind\left(S^*(f,\widetilde{X},\hn) \ge \varepsilon\sqrt{n}\right)\Big|\F_n\right]\notag\\
	&\le \E\left[S^*(f,\widetilde{X},\hn)^2 \ind\left(S^*(f,\widetilde{X},\hn) \ge \varepsilon\sqrt{n_0}\right)\Big|\F_n\right]\notag\\
	&\overset{p}{\to} \E_{\p0}\left[S^*(f,X,\theta_0)^2 \ind\left(S^*(f,X,\theta_0) \ge \varepsilon\sqrt{n_0}\right)\right].\label{eq:ind-in-prob}
\end{align}
By Lemma~\ref{lemma:oracle-property}, we know that $\E_{\P_\theta}[S^*(f,X,\theta)^2]$ is uniformly bounded almost surely.
For any $\varepsilon, \delta > 0$, we can find a $n_0$ large enough such that
\begin{align*}
	\E_{\p0}\left[S^*(f,X,\theta_0)^2 \ind\left(S^*(f,X,\theta_0) \ge \varepsilon\sqrt{n_0}\right)\right] < \frac{\delta}{2}.
\end{align*}
Thus, we have
\begin{align*}
	&\P\left(\E\left[S^*(f,\widetilde{X},\hn)^2 \ind\left(S^*(f,\widetilde{X},\hn) \ge \varepsilon\sqrt{n}\right)\Big|\F_n\right] \ge \delta\right)\\
	&\le \P\left(\E\left[S^*(f,\widetilde{X},\hn)^2 \ind\left(S^*(f,\widetilde{X},\hn) \ge \varepsilon\sqrt{n_0}\right)\Big|\F_n\right] \ge \delta\right)\\
	&\le \P\bigg(\E\left[S^*(f,\widetilde{X},\hn)^2 \ind\left(S^*(f,\widetilde{X},\hn) \ge \varepsilon\sqrt{n_0}\right)\Big|\F_n\right]\\
	&\qquad \qquad \qquad \qquad -\E_{\p0}\left[S^*(f,X,\theta_0)^2 \ind\left(S^*(f,X,\theta_0) \ge \varepsilon\sqrt{n_0}\right)\right] > \delta/2\bigg).
\end{align*}
The last probability goes to $0$ by the convergence in probability result in Eqn.~\eqref{eq:ind-in-prob}.
Thus, we finally get for any $\varepsilon, \delta > 0$, as $n \to \infty$,
\begin{align*}
	\P\left[\E\left[S^*(f,\widetilde{X},\hn)^2 \ind\left(S^*(f,\widetilde{X},\hn) \ge \varepsilon\sqrt{n}\right)\Big|\F_n\right] \ge \delta\right] \to 0,
\end{align*}
which verifies Eqn.~\eqref{eq:stable-clt-2-eq}.

With Eqns.~\eqref{eq:stable-clt-1-eq} and \eqref{eq:stable-clt-2-eq} in hands, we can apply Lemma~\ref{lemma:stable-clt} to get 
\begin{align*}
		\{\widetilde{S}_n(f)^2\}_{f\in\mH} \overset{d}{\to} \{S(f)^2\}_{f\in\mH} \quad \F-\text{stably},
\end{align*}
which recovers the desired Condition~\eqref{eq:c1}.

\subsubsection{Proof of Condition~\eqref{eq:c2}}

Lemma~\ref{lemma:oracle-property} tells us that
	\begin{align}\label{eq:sigma-phi-i}
		\sigma(\phi_i,\phi_i) = \E_{\P_{\theta_0}}[S^*(\phi_i, X, \theta_0)^2] \lesssim \E_{X \sim \p0}[\|\phi_i(X)\|^2] + \left\|\E_{X\sim\p0} \left[\left[\nabla_\theta s_{\theta_0}(X)\right]^\top \phi_i(X)\right]\right\|^2.
	\end{align}
	To prove the desired condition, we sum both sides of Eqn.~\eqref{eq:sigma-phi-i} over index $i$ to get
	\begin{align}\label{sigma_phi}
		\sum_{i=1}^\infty \sigma(\phi_i,\phi_i) \lesssim \underbrace{\sum_{i=1}^\infty \E_{X \sim \p0}[\|\phi_i(X)\|^2]}_{T_1} + \underbrace{\sum_{i=1}^\infty \left\|\E_{X\sim\p0} \left[\left[\nabla_\theta s_{\theta_0}(X)\right]^\top \phi_i(X)\right]\right\|^2}_{T_2}.
	\end{align}
	We now bound terms $T_1$ and $T_2$ separately.

	\paragraph{Bound of $T_1$.}
	Notice that $\{\phi_i\}_{i=1}^\infty$ is an orthonormal basis of $\mH^d$, by the reproduction property of RKHS, we have
	\begin{align*}
		\sum_{i=1}^\infty \|\phi_k(x)\|^2 = \left\|\sum_{i=1}^\infty \phi_i(x)\right\|^2 = \tr(K(x,x)I_d) = d \cdot K(x,x).
	\end{align*}
	Taking expectation on both sides leads to
	\begin{align}\label{eq:t1-sigma}
		T_1 = \sum_{i=1}^\infty \E_{\theta_0}[\|\phi_i(X)\|^2] =  \E_{\theta_0}\sum_{i=1}^\infty[\|\phi_i(X)\|^2]\le d \cdot \sup_{x\in \X} K(x,x) \le d C_K < \infty,
	\end{align}
	where Assumption~\ref{ass:k} is applied in the last step.

	\paragraph{Bound of $T_2$.}
	Applying Cauchy-Schwarz inequality gives
	\begin{align*}
		\left\|\E_{X\sim\p0} \left[\left[\nabla_\theta s_{\theta_0}(X)\right]^\top \phi_i(X)\right]\right\|^2 
		&\le \E_{X\sim\p0}\left[\|\nabla_\theta s_{\theta_0}(X)\|_{\mathrm{F}}^2\right] \E_{X\sim\p0}\left[\|\phi_i(X)\|^2\right]\\
		&\le \E_{X\sim\p0}\left[d\|\nabla_\theta s_{\theta_0}(X)\|_{\mathrm{2}}^2\right] \E_{X\sim\p0}\left[\|\phi_i(X)\|^2\right]\\
		&\le d C_2 \E_{X\sim\p0}\left[\phi_i(X)^2\right],
	\end{align*}
	where we apply Assumption~\ref{ass:p} in the last inequality.
	Similar to the bound of $T_1$, we take summation over index $i$ to obtain
	\begin{align}\label{eq:t2-sigma}
		T_2 \le d C_2 \sum_{i=1}^\infty \E_{X\sim\p0}\left[\|\phi_i(X)\|^2\right] \le d C_2 C_K < \infty.
	\end{align}

	Combining Eqns.~\eqref{eq:t1-sigma} and \eqref{eq:t2-sigma}, we conclude that
	\begin{align*}
		\sum_{i=1}^\infty \sigma(\phi_i,\phi_i) \lesssim d (C_2 + 1) C_K< \infty.
	\end{align*}

\subsubsection{Proof of Condition~\eqref{eq:c3}}

We will mainly use a slightly modified version of \citet[Proposition 6]{fernandez2024general}.
	The version of usage is stated as below:
	\begin{lemma}\label{lemma:condition-c3}
		Suppose there exists a sequence of random measures $\nu_n$ on $\mathcal X$, such that for all $n \ge 1$, it holds that
		\begin{align}\label{eq:c3-1}
			\E\left[\widetilde{S}_n(f)^2 | \F\right] \le C \int_{\mathcal X} \|f(x)\|^2 \d \nu_n(x) \quad \text{a.s.},
		\end{align}
		for some numerical constant $C$.
		Moreover suppose there exists a measure $\nu$ such that for every continuous $g \in L_1(\X,\nu)$, we have
		\begin{align}\label{eq:c3-2}
			\int_{\mathcal X} g(x) \d \nu_n(x) \overset{p}{\to} \int_{\mathcal X} g(x) \d \nu(x),
		\end{align}
		Then, Condition~\eqref{eq:c3} holds.
	\end{lemma}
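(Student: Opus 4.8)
The plan is to deduce Condition~\eqref{eq:c3} from a Markov-inequality bound on the conditional tail probability, after which the $n\to\infty$ limit is controlled by the weak-convergence hypothesis~\eqref{eq:c3-2} applied to a single, fixed continuous envelope function, and the remaining $k\to\infty$ limit is handled by dominated convergence. Fix $\varepsilon>0$ and a set $F\in\F$ with $\P(F)>0$. Since each summand $\widetilde{S}_n(\phi_i)^2$ is nonnegative, Markov's inequality under $\P(\cdot\mid F)$ gives
\[
\P\Big(\sum_{i\ge k}\widetilde{S}_n(\phi_i)^2\ge\varepsilon\,\Big|\,F\Big)\le\frac{1}{\varepsilon\,\P(F)}\,\E\Big[\ind_F\sum_{i\ge k}\widetilde{S}_n(\phi_i)^2\Big].
\]
Because $\ind_F$ is $\F$-measurable, I would use Tonelli's theorem and the tower property to move the sum and the conditional expectation inside, and then apply~\eqref{eq:c3-1} term by term; writing $g_k(x)\coloneqq\sum_{i\ge k}\|\phi_i(x)\|^2$, this bounds the right-hand side by $\tfrac{C}{\varepsilon\,\P(F)}\,\E\big[\ind_F\!\int_{\X}g_k(x)\,\d\nu_n(x)\big]$.

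Next I would record the properties of the envelope $g_k$. From the reproducing-property computation already carried out in the proof of Condition~\eqref{eq:c2}, $\sum_{i\ge1}\|\phi_i(x)\|^2=d\,K(x,x)$, which by Assumption~\ref{ass:k} is finite and uniformly bounded by $dC_K$; moreover $K$ is (differentiable, hence) continuous, so $g_k=d\,K(\cdot,\cdot)-\sum_{i<k}\|\phi_i(\cdot)\|^2$ is a \emph{fixed} continuous function with $0\le g_k\le dC_K$ and $g_k(x)\downarrow0$ pointwise as $k\to\infty$. Being continuous and bounded, $g_k\in L_1(\X,\nu)$, so~\eqref{eq:c3-2} applies with $g=g_k$ and yields $\int_{\X}g_k\,\d\nu_n\overset{p}{\to}c_k\coloneqq\int_{\X}g_k\,\d\nu$ as $n\to\infty$.

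It then remains to pass through the two limits. For a fixed $k$, I would combine the convergence in probability just obtained with the domination $0\le\int_{\X}g_k\,\d\nu_n\le d\!\int_{\X}K(x,x)\,\d\nu_n(x)\le dC_K\,\nu_n(\X)$ and the fact that the $\nu_n$ produced when verifying~\eqref{eq:c3-1} are (sub-)probability measures, so that $\nu_n(\X)\le1$. Then $\{\int g_k\,\d\nu_n\}_n$ is uniformly bounded, and a short truncation argument (equivalently, bounded convergence) yields $\limsup_{n}\E\big[\ind_F\!\int g_k\,\d\nu_n\big]\le c_k\,\P(F)$, hence $\limsup_{n}\P\big(\sum_{i\ge k}\widetilde{S}_n(\phi_i)^2\ge\varepsilon\mid F\big)\le Cc_k/\varepsilon$. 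Finally, $c_k=\int_{\X}g_k\,\d\nu\to0$ as $k\to\infty$ by dominated convergence ($g_k\downarrow0$, $g_k\le dC_K$, $\nu$ finite), which is exactly Condition~\eqref{eq:c3} and hence proves Lemma~\ref{lemma:condition-c3}.

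I expect the only genuinely delicate point to be the interchange of the $n\to\infty$ and $k\to\infty$ limits in the last step: hypothesis~\eqref{eq:c3-2} only gives convergence \emph{in probability} of the random integrals $\int g_k\,\d\nu_n$, so to control $\E[\ind_F\!\int g_k\,\d\nu_n]$ one needs a uniform-integrability (here, uniform-mass) input on the $\nu_n$, which is supplied by the explicit construction of $\nu_n$ used to establish~\eqref{eq:c3-1}. Everything else---Markov's inequality, Tonelli, the tower property, and the identity $\sum_i\|\phi_i(\cdot)\|^2=d\,K(\cdot,\cdot)$---is routine given the earlier parts of the paper.
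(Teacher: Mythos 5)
Your proof is correct and is essentially the same argument the paper outsources to Proposition~6 of \citet{fernandez2024general}: the Markov-inequality-on-the-conditional-tail computation with the envelope $g_k=\sum_{i\ge k}\|\phi_i(\cdot)\|^2=dK(\cdot,\cdot)-\sum_{i<k}\|\phi_i(\cdot)\|^2$ is precisely the content of that cited proposition, and your Step~5 truncation argument is exactly the paper's auxiliary Statement~\eqref{eq:conditional-bounded}, which is the only modification the paper adds in order to replace a.s.\ convergence by the convergence-in-probability hypothesis~\eqref{eq:c3-2}. The uniform-mass input on $\nu_n$ that you flag as the one delicate step is likewise only implicit in the paper's proof (its auxiliary statement assumes $Y_n$ is a.s.\ bounded, which holds in the application because $\nu_n=\P_{\hat{\theta}_n}$ is a probability measure), so that remark is a valid clarification of a subtlety the lemma statement and the paper's proof both leave unstated, not a gap.
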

	The proof of Lemma~\ref{lemma:condition-c3} is provided in Appendix~\ref{sec:proof_condition-c3}.
	We now check Statements~\eqref{eq:c3-1} and \eqref{eq:c3-2} hold in this setting.
	\begin{proof}[Verification of Statement~\eqref{eq:c3-1}]
		Choose $\nu_n = \P_{\hn}$.
		The right hand side of Eqn.~\eqref{eq:c3-1} is
		\begin{align*}
			C \int_{\mathcal X} \|f(x)\|^2 \d \nu_n(x) = C \,\E_{\hn}\left[\|f(X)\|^2\right] = C\, \|f\|^2_{L_2(\pn)}.
		\end{align*}
		Recall we use $\widetilde{X}$ as sample from $\P_{\hat{\theta}_n}$.
		We begin derivation with the left hand side.
		By AM-GM inequality, we obtain
		\begin{align*}
			\E\left[\widetilde{S}_n(f)^2 | \F\right] &\le 
			2\underbrace{\E\left[\left[\A_{\hn}f(\widetilde{X})\right]^2\Big|\F\right]}_{T_3} + 2\underbrace{\E\left[\Big\langle \E_{\hn} \left[\nabla_\theta s_{\hn}(\widetilde{X})f(\widetilde{X})\right], I(\widetilde{X}, \hn) \Big\rangle^2\Big|\F\right]}_{T_4}.
		\end{align*}
		We bound $T_3$ and $T_4$ separately.

		\paragraph{Bound of $T_3$.}
		Applying Cauchy-Schwarz inequality implies that
		\begin{align}\label{eq:t3-s}
			T_3 = \E_{\widetilde{X} \sim \pn}[\A_{\hn}f(\widetilde{X})]^2 \le \sup_{\theta' \in \Theta}\|\A_{\theta'}\|_{\text{op}}^2 \E_{\widetilde{X} \sim \hn}[\|f(\widetilde{X})\|^2] \le C_A^2 \|f\|^2_{L_2(\pn)},
		\end{align}
		where $\|\A_{\theta}\|_{\text{op}}$ is the operator norm of $\A_\theta$ as an linear operator in $\mH$ equipped with $L_2(\P_\theta)$ norm,
		and since $\overline{\Theta}$ is a compact set in $\R^p$ and $\A_\theta$ is continuous in $\theta$, we conclude that for some universal constant $C_A$, i.e., $\sup_{\theta' \in \Theta}\|A_{\theta'}\|_{\text{op}} \le C_A$.
		
		Write $f = (f_1, \ldots, f_d)^\top$ as a vector in $\mH^d$.
		The existence of the $L_2(\pn)$ norm is guaranteed noticing
		\begin{align}\label{eq:bounded-f}
			\|f(x)\|^2 = \sum_{i=1}^d \langle f_i, K_x\rangle^2 \le \sum_{i=1}^d \|f_i\|_{\mH}^2 \|K_x\|_{\mH}^2 = \sum_{i=1}^d \|f_i\|_{\mH}^2 K(x,x) \le \sum_{i=1}^d\|f_i\|_{\mH}^2 C_K < \infty.
		\end{align}
		where we apply Cauchy-Schwarz inequality in the first inequality and Assumption~\ref{ass:k} in the last inequality.

		\paragraph{Bound of $T_4$.}
		A direct calculation reveals us
		\begin{align}\label{eq:t4-s}
			T_4 &= \E_{\hn}\left[\Big\langle \E_{\hn} \left[\left[\nabla_\theta s_{\hn}(\widetilde{X})\right]^\top f(\widetilde{X})\right], I(\widetilde{X}, \hn) \Big\rangle^2\right]\notag\\
			&\overset{\mathrm{(a)}}{\le} \left\|\E_{\hn} \left[\left[\nabla_\theta s_{\hn}(\widetilde{X})\right]^\top f(\widetilde{X})\right]\right\|^2 \ E_{\hn} \left[\left\|I(\widetilde{X}, \hn)\right\|^2\right]\notag\\
			&\overset{\mathrm{(b)}}{\lesssim} \left\|\E_{\hn} \left[\left[\nabla_\theta s_{\hn}(\widetilde{X})\right]^\top f(\widetilde{X})\right]\right\|^2\notag\\
			&\overset{\mathrm{(c)}}{\le} \E_{\hn} \left[\left\|\nabla_\theta s_{\hn}(\widetilde{X})\right\|_{\mathrm{F}}^2\right]\E_{\hn} \left[\|f(\widetilde{X})\|^2\right] \le d C_2 \|f\|^2_{L_2(\pn)},
		\end{align} 
		where we apply Cauchy-Schwarz inequality in steps (a) and (c) and Definition~\ref{def:ale} in step (b).
		
		Combining the upper bounds in Eqns.~\eqref{eq:t3-s} and \eqref{eq:t4-s}, we have
		\begin{align}
			\E\left[\widetilde{S}_n(f)^2|\F\right] \le 2T_3 + 2T_4 \le 2C_A^2\|f\|_{L_2(\P_{\hat{\theta}_n})}^2 + 2C_2\|f\|_{L_2(\P_{\hat{\theta}_n})}^2 \lesssim \|f\|_{L_2(\P_{\hat{\theta}_n})}^2,
		\end{align}
		which concludes the Statement~\eqref{eq:c3-1}.
	\end{proof}
	
	\begin{proof}[Verification of Statement~\eqref{eq:c3-2}]
	Choose $\nu = \P_{\theta_0}$ for $\theta_0$ specified in Assumption~\ref{assu:uale}. By our Assumption $\ref{assu:uale}$, we have \mbox{$\hat{\theta}_n \overset{p}{\to} \theta_0$}.
	By Assumption~\ref{ass:p}, we have the density function $p_{\theta}(x)$ is a continuous function in both $\theta$ and $x$.
	By continuous mapping theorem, Statement~\eqref{eq:c3-2} holds.
	\end{proof}
	 

\section{Proofs of results in Section~\ref{sec:asymptotic_power}}\label{sec:proof_asymptotic_power_local}
\subsection{Proof of Theorem~\ref{thm:power_local_alternative}}
\begin{proof}[Proof of Theorem~\ref{thm:power_local_alternative}]
	We present the proof for the multiplicative local alternatives.
	For the additive local alternatives, the proof is similar and we briefly discuss the differences at the end of the section.
	
	We divide the proof into three steps.
	Recall the definition of $\mu_i$ in Theorem~\ref{thm:power_local_alternative}:
	\begin{align}\label{eq:defn-mui}
		\mu_i &\equiv \E_{\theta_0}[S^*(\phi_i,X,\theta_0)h(X)],
	\end{align}
	where $\phi_i$ is the $i$-th orthonormal basis of $\mH^d$.
	By the expression for the multiplicative local alternatives, we can write the full density function of $\L_n$ as
	\begin{align*}
		p_{1,n}(x) = \frac{1}{Z_n} p_{\theta_0}(x)\left(1 + \frac{h(x)}{\sqrt{n}}\right),
	\end{align*} 
	where $Z_n \equiv 1 + \E_{\theta_0}[h(X)]/\sqrt{n} < \infty$. We stick to this notation of density in the proof.
    \paragraph{Step 1: Estimation under local alternative.}
	The first step is to characterize the asymptotic performance of the estimation procedure under the local alternative.
	It turns out that the estimator shares the asymptotic linear expansion as in the null hypothesis with a non-centered influence function.
	The result is stated in the following lemma with proof in Appendix~\ref{sec:proof_local-alt-est}.
	\begin{lemma}\label{lemma:local-alt-est}
		The estimator $\hn$ respects the following asymptotically linear expansion:
		\begin{align}\label{eq:local-alt-if}
			\sqrt{n}(\hn - \theta_0) = \frac{1}{\sqrt{n}}\sum_{i=1}^{n}I(X_i,\theta_0) + o_p\left(1\right),
		\end{align}
		and it weakly converges to a normal distribution with a non-zero mean $\tau = (\tau_1, \ldots, \tau_k)$ and covariance matrix $\Sigma$.
		\begin{align}\label{eq:local-alt-asy}
			\sqrt{n}(\hn - \theta_0) \overset{d}{\rightarrow} \N(\tau,\Sigma),
		\end{align}
		where $\tau_j = \mathrm{Cov}_{\theta_0}[[I(X,\theta_0)]_j,h(X)]$ for $j \in [k]$ and $\Sigma = \mathrm{Cov}_{\theta_0}[I(X,\theta_0)]$.
	\end{lemma}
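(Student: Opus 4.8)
The plan is to invoke Le Cam's theory and reduce the lemma to two classical ingredients: contiguity of the local-alternative product measures with respect to the null ones, and Le Cam's third lemma to identify the shifted Gaussian limit. Throughout I work with the multiplicative alternative; the additive case is a cosmetic modification discussed at the end.

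First I would establish that $\mathcal{L}_n^{\otimes n}$ and $\mathcal{L}_{\theta_0}^{\otimes n}$ are mutually contiguous. Writing the log-likelihood ratio as $\Lambda_n = \sum_{i=1}^n \log\!\big((1+h(X_i)/\sqrt n)/Z_n\big)$ with $Z_n = 1+\E_{\theta_0}[h(X)]/\sqrt n$, a second-order Taylor expansion carried out under $\mathcal{L}_{\theta_0}^{\otimes n}$, together with $\E_{\theta_0}[h^2(X)]<\infty$, should give
\[
\Lambda_n = \frac{1}{\sqrt n}\sum_{i=1}^n\big(h(X_i)-\E_{\theta_0}[h(X)]\big) - \tfrac12\,\mathrm{Var}_{\theta_0}(h(X)) + o_p(1),
\]
so that $\Lambda_n \convd \mathcal{N}(-\tfrac12\sigma_h^2,\sigma_h^2)$ with $\sigma_h^2 = \mathrm{Var}_{\theta_0}(h(X))$. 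Since $\E\,e^{\mathcal{N}(-\sigma_h^2/2,\,\sigma_h^2)}=1$, Le Cam's first lemma yields the desired mutual contiguity.

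Next I would transport the asymptotically linear expansion from the null to the alternative. Assumption~\ref{assu:uale}, specialized to $\theta=\theta_0$, says the remainder $R_n \equiv \sqrt n(\hn-\theta_0)-n^{-1/2}\sum_{i=1}^n I(X_i,\theta_0)$ is $o_p(1)$ under $\mathcal{L}_{\theta_0}^{\otimes n}$; contiguity upgrades this to $R_n=o_p(1)$ under $\mathcal{L}_n^{\otimes n}$, which is exactly Eqn.~\eqref{eq:local-alt-if}. It then remains to find the limit of $n^{-1/2}\sum_{i=1}^n I(X_i,\theta_0)$ under the local alternative. Using the expansion of $\Lambda_n$, the pair $\big(n^{-1/2}\sum_i I(X_i,\theta_0),\,\Lambda_n\big)$ is, up to the deterministic shift $-\sigma_h^2/2$ in the second coordinate, asymptotically equivalent under $\mathcal{L}_{\theta_0}^{\otimes n}$ to $n^{-1/2}\sum_i\big(I(X_i,\theta_0),\,h(X_i)-\E_{\theta_0}[h(X)]\big)$; the multivariate CLT (second moments finite by $\sup_{\theta}\E\|I(X,\theta)\|^2<\infty$ and $\E_{\theta_0}[h^2]<\infty$) gives joint asymptotic normality with first-block covariance $\Sigma=\mathrm{Cov}_{\theta_0}(I(X,\theta_0))$ and cross-covariance with $\Lambda_n$ equal to $\mathrm{Cov}_{\theta_0}\big([I(X,\theta_0)]_j,h(X)\big)=\tau_j$ (using $\E_{\theta_0}[I(X,\theta_0)]=0$). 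Le Cam's third lemma then delivers $n^{-1/2}\sum_i I(X_i,\theta_0)\convd \mathcal{N}(\tau,\Sigma)$ under $\mathcal{L}_n^{\otimes n}$, and combining this with the expansion via Slutsky proves Eqn.~\eqref{eq:local-alt-asy}.

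The main obstacle I expect is making the second-order expansion of $\Lambda_n$ rigorous: because $h$ need not be bounded, $\log(1+h(X_i)/\sqrt n)$ is not uniformly controllable, so one must truncate $h$ at a slowly growing threshold and bound the discarded contribution using $\E_{\theta_0}[h^2]<\infty$ (alternatively, invoke a standard LAN-type expansion lemma). Everything downstream is routine. Finally, for the additive alternative of Assumption~\ref{ass:local}b, the whole argument applies verbatim after identifying the tilt as $h(x)=\gamma\big(g(x)/p_{\theta_0}(x)-1\big)$, for which $\E_{\theta_0}[h^2]=\gamma^2\,\mathrm{Var}_{\theta_0}\big(g(X)/p_{\theta_0}(X)\big)<\infty$ and the cross-covariance collapses to $\tau_j=\gamma\,\E_{\mathcal{L}_g}[[I(X,\theta_0)]_j]$, matching the form of $\mu_u$ used later in Theorem~\ref{thm:power_local_alternative}.
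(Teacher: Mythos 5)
Your proposal follows essentially the same high-level path as the paper: mutual contiguity plus Le Cam's first lemma transfers the $o_p(1)$ remainder in the ALE expansion from $\P_{\theta_0}^n$ to $\mathcal{L}_n^n$ (giving \eqref{eq:local-alt-if}), and a joint CLT for $\bigl(n^{-1/2}\sum_i I(X_i,\theta_0),\,\Lambda_n\bigr)$ under $\P_{\theta_0}^n$ together with Le Cam's third lemma gives \eqref{eq:local-alt-asy}; the reduction $h(x)=\gamma\bigl(g(x)/p_{\theta_0}(x)-1\bigr)$ for the additive case matches Appendix~\ref{sec:proof_asymptotic_power_local_2}.

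Where you are actually tighter than the paper: the contiguity step. The paper argues for $\mathcal{L}_n\lhd\P_{\theta_0}$ from $\E_{\theta_0}[d\mathcal{L}_n/dP_{\theta_0}]=1$ for a \emph{single} observation, and for $\mathcal{L}_n\rhd\P_{\theta_0}$ from $d\mathcal{L}_n/dP_{\theta_0}\to 1$ in distribution under $\mathcal{L}_n$, again for a single observation. Neither of these controls the $n$-fold product measures: the first is automatically true for any absolutely continuous probability measure and carries no asymptotic information, and the second would hold even for sequences that are not contiguous. The correct invocation of Le Cam's first lemma requires the limit law of $\Lambda_n=\log(d\mathcal{L}_n^{\otimes n}/d\P_{\theta_0}^{\otimes n})$ to satisfy $\E[e^{\Lambda}]=1$, which is exactly what your $\Lambda_n\convd\N(-\sigma_h^2/2,\sigma_h^2)$ delivers. (The paper does derive this expansion, in Eqn.~\eqref{eq:density-ratio-expansion}, but only later, for the third-lemma step; the ordering is what is off.) You also correctly flag the truncation needed to justify the Taylor expansion of $\log(1+h(X_i)/\sqrt n)$ when $h$ is unbounded, and that on the event $1+h(X_i)/\sqrt n\le 0$ the logarithm is undefined. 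The paper handles the cubic term via the Marcinkiewicz--Zygmund SLLN on $n^{-3/2}\sum_i h(X_i)^3$, which tacitly assumes the Taylor remainder is $O(h(X_i)^3/n^{3/2})$ pointwise without restricting to the good event; your note is the more honest account of what makes that step rigorous.
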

	Lemma~\ref{lemma:local-alt-est} is used to characterize the influence of the estimating procedure on the test statistic under the local alternative.

	\paragraph{Step 2: Asymptotic expansion.}
	The second step of the proof is to derive the asymptotic expansion of the test statistic under the local alternative.
	\begin{lemma}\label{lemma:alt-var-form}
		Define functional $S_{1,n}: \mH^d \to \R$ as
		\begin{align*}
			S_{1,n}(f) = \frac{1}{\sqrt{n}}\sum_{i=1}^n \A_{\theta_0} f(X_i) + \Big\langle \E_{X \sim \L_n} \left[\left[\nabla_\theta s_{\theta_0}(X)\right]^\top f(X)\right], I(X_i, \theta_0) \Big\rangle.
		\end{align*}
		The asymptotic variational form of the test statistic under local alternatives can be written as
		\begin{align*}
			nT_n = \sup_{f \in \mathcal{B}_{\mH^d}(1)} S_{1,n}(f)^2 + o_p(1).
		\end{align*}
	\end{lemma}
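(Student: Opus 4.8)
The plan is to mirror the asymptotic-expansion argument already used under the null (Step~1 in the proof of Theorem~\ref{thm:bootstrap_distribution}, i.e.\ the expansion $nT_n = \sup_{f\in\mathcal B_{\mH^d}(1)} S_n(f)^2 + o_p(1)$), but now tracking the extra drift that the multiplicative tilt $1+h/\sqrt n$ introduces into both the V-statistic and the estimator $\hn$. Recall $nT_n = n\,\KSD^2(\law_{\hn},\hat\law_n) = \sup_{f\in\mathcal B_{\mH^d}(1)}\big(\sqrt n\,\E_{\hat\law_n}[\A_{p_{\hn}}f(X)]\big)^2$. First I would Taylor-expand $\A_{p_{\hn}}f$ around $\theta_0$: writing $\A_{p_\theta}f(x) = \A_{\theta}f(x)$ and using that $\partial_\theta \A_\theta f(x) = [\nabla_\theta s_\theta(x)]^\top f(x)$ (since the Stein operator depends on $\theta$ only through the score $s_\theta$), we get
\begin{align*}
\sqrt n\,\E_{\hat\law_n}[\A_{\theta_0}f(X)] + \big\langle \sqrt n(\hn-\theta_0),\, \E_{\hat\law_n}[[\nabla_\theta s_{\theta_0}(X)]^\top f(X)]\big\rangle + R_n(f),
\end{align*}
with $R_n(f)$ a second-order remainder controlled, uniformly over the unit ball of $\mH^d$, by the regularity conditions in Appendix~\ref{sec:regularity_conditions} (in particular the bounds on $\|\nabla_\theta^{(m)} s_\theta\|_2$ for $m=2,3$ and the kernel bound $C_K$) together with $\|\hn-\theta_0\| = O_p(n^{-1/2})$ from Lemma~\ref{lemma:local-alt-est}. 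The first term is the degenerate V-statistic part: as under the null, $\sqrt n\,\E_{\hat\law_n}[\A_{\theta_0}f(X)] = \tfrac1{\sqrt n}\sum_i \A_{\theta_0}f(X_i) + o_p(1)$ uniformly, since $\E_{\theta_0}[\A_{\theta_0}f(X)]=0$ by Stein's identity and the $1/\sqrt n$-tilt only perturbs this mean by an $O(1/n)$ term after centering (using $\E_{\theta_0}[h^2]<\infty$). For the second term I would replace $\sqrt n(\hn-\theta_0)$ by its influence-function expansion $\tfrac1{\sqrt n}\sum_i I(X_i,\theta_0)+o_p(1)$ from Lemma~\ref{lemma:local-alt-est}, and replace $\E_{\hat\law_n}[[\nabla_\theta s_{\theta_0}(X)]^\top f(X)]$ by its population counterpart under the alternative, $\E_{X\sim\L_n}[[\nabla_\theta s_{\theta_0}(X)]^\top f(X)]$ — note this is exactly the coefficient appearing in the definition of $S_{1,n}(f)$, so that no extra $h$-dependent drift term is hidden here.

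Assembling these pieces gives $\sqrt n\,\E_{\hat\law_n}[\A_{p_{\hn}}f(X)] = S_{1,n}(f) + o_p(1)$ uniformly over $\|f\|_{\mH^d}\le 1$, and then the continuous mapping theorem applied to the operator norm $\|\cdot\|_{\mH^d\to\R}$ (cf.\ Eqn.~\eqref{eq:defn-op-norm}) yields $nT_n = \sup_{f\in\mathcal B_{\mH^d}(1)} S_{1,n}(f)^2 + o_p(1)$. The only genuinely nontrivial point, and the one I expect to be the main obstacle, is establishing that the remainder and the several approximation errors are $o_p(1)$ \emph{uniformly over the unit ball of $\mH^d$} rather than merely pointwise in $f$; this is where one must exploit the RKHS structure — writing $\E[\cdot] = \langle f, \E[\xi_{\cdot}]\rangle_{\mH^d}$ via the reproducing property so that each error functional is linear in $f$ and its sup-norm over the ball is just the $\mH^d$-norm of a single (random) element — and then bound that norm using the moment conditions of Assumption~\ref{ass:p} and Assumption~\ref{ass:k}. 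I would also need to verify that the contiguity of $\L_n$ to $\law_{\theta_0}$ (immediate from $\E_{\theta_0}[h^2]<\infty$ and Le~Cam's first lemma) lets me transfer the $o_p(1)$ statements computed under $\law_{\theta_0}$ to statements under $H_{1,n}$, which is standard but must be stated. For the additive local alternative (Assumption~\ref{ass:local}b) the same scheme applies verbatim, with $\L_n = (1-\gamma/\sqrt n)\law_{\theta_0} + (\gamma/\sqrt n)\L_g$ and contiguity guaranteed by $\E_{\theta_0}[(g/p_{\theta_0})^2]<\infty$; only the explicit form of the mean shift in $S_{1,n}$ differs, which is recorded separately in the statement of Theorem~\ref{thm:power_local_alternative}.
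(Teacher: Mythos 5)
Your proposal is correct and arrives at the right conclusion, but it takes a genuinely different route from the paper's. The paper's proof of Lemma~\ref{lemma:alt-var-form} follows the two-step template of Lemma~\ref{lemma:para-var-form}: it Taylor-expands the \emph{squared} quantity $T_n(X,\theta)$ around $\theta_0$ to second order to obtain an ``intermediary form''
\begin{align*}
nT_n = nT_n(X,\theta_0) + \tfrac{1}{n}\sum_{i,j} G_{n}(X_j,\theta_0)^\top I(X_i,\theta_0) + \tfrac{1}{2n}\sum_{i,j} I(X_i,\theta_0)^\top H_{0} I(X_i,\theta_0) + o_p(1),
\end{align*}
and then separately verifies, by expanding $\sup_{\|f\|\le 1}S_{1,n}(f)^2$ via the reproducing property, that the variational form yields the same three terms. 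You instead Taylor-expand the \emph{pre-squared}, $f$-linear functional $\sqrt n\,\E_{\hat\law_n}[\A_{p_\theta}f(X)]$ around $\theta_0$ to \emph{first} order, show that after substituting the influence-function expansion of $\hn$ and replacing the empirical coefficient by its $\L_n$-population counterpart it equals $S_{1,n}(f)+o_p(1)$ uniformly over the unit ball, and then invoke continuous mapping for the operator norm. This is a cleaner organization: because the perturbation in $\theta$ enters $\A_\theta f$ only through $s_\theta$ and not through $f$, the Taylor remainder is linear in $f$, so its supremum over the unit ball is a single random $\mH^d$-norm, and a first-order expansion suffices; you avoid both the second-order Taylor control of $\nabla_\theta^2 T_n$ and the separate ``intermediary $=$ variational'' matching computation. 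What you give up is explicitness — the paper's route produces the concrete $G_n,H_0$ objects that are reused elsewhere — but your argument is mathematically sound and substantively equivalent. Two small remarks: the sentence ``$\sqrt n\,\E_{\hat\law_n}[\A_{\theta_0}f(X)] = \tfrac1{\sqrt n}\sum_i \A_{\theta_0}f(X_i) + o_p(1)$'' is an exact identity (by definition of the empirical measure), so the $o_p(1)$ and the accompanying discussion of the tilt perturbing a centering are superfluous and slightly confusing — what the finiteness of $\E_{\theta_0}[h^2]$ actually buys is contiguity (so $o_p(1)$ statements derived under $\P_{\theta_0}$ transfer to $\L_n$) plus a bounded mean shift in $S_{1,n}$, which you do note later; and, when replacing $\E_{\hat\law_n}[\nabla_\theta s_{\theta_0}^\top f]$ by $\E_{\L_n}[\nabla_\theta s_{\theta_0}^\top f]$, you should record that the difference is $O_p(n^{-1/2})$ \emph{uniformly} over the unit ball — again by the RKHS linearity trick you describe — since it is multiplied by the $O_p(1)$ factor $\sqrt n(\hn-\theta_0)$.
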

	The proof of Lemma~\ref{lemma:alt-var-form} is similar to the proof of Theorem~\ref{thm:bootstrap_distribution}, so we omit the proof here.
	The variational form of the test statistic provides a convenient way to analyze the asymptotic behavior of the test statistic in the functional space, which we focus on in the remaining proof.
	
	\paragraph{Step 3: Weak convergence in the functional space.}
	To prove the weak convergence of the test statistic, we modify the conditions in Lemma~\ref{lemma:functional-stable-convergence} to fit the local alternative setting.
	\begin{lemma}\label{lemma:functional-convergence-alt}
		With the same setup in Lemma~\ref{lemma:functional-stable-convergence}, we introduce the new conditions:
		\begin{enumerate}
			\item[(1)''] There exists a mean functional $\mu(f)$ and a covariance functional $\sigma(f_1,f_2)$ such that for any $f \in \mH^d$, we have
			\begin{align}\label{eq:c1''}
				S_{1,n}(f) \overset{d}{\to} S_1(f) \sim \N(\mu(f), \sigma(f,f)),\tag{C3.1}
			\end{align}
			\item[(3)''] For some orthonormal basis $(\phi_i)_{i = 1}^\infty$ of RKHS $\mH$ and any $\varepsilon > 0$, we have
			\begin{align}\label{eq:c3''}
				\lim_{k \to \infty}\limsup_{n \to \infty}\P_{\L_n}\left(\sum_{i=k}^\infty S_{1,n}(\phi_i)^2 \ge \varepsilon\right) = 0, \tag{C3.3}
			\end{align}
		\end{enumerate}
		Suppose the Conditions~\eqref{eq:c1''}, \eqref{eq:c2} and \eqref{eq:c3''} hold, then under measure $\L_n$, we have
		\begin{align*}
			\{S_{1,n}(f)\}_{f\in\mH^d} \overset{d}{\to} \{S_1(f)\}_{f\in\mH^d}.
		\end{align*}
	\end{lemma}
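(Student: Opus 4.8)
The plan is to mimic the proof of Lemma~\ref{lemma:functional-convergence} (which itself follows Lemma~\ref{lemma:functional-stable-convergence}) essentially verbatim, the only structural difference being that the limiting process $S_1$ now carries a non-trivial mean $\mu(\cdot)$. First I would use the Riesz representation theorem to identify each bounded linear functional $S_{1,n}$ with a random element $G_n\in\mH^d$ through $S_{1,n}(f)=\langle G_n,f\rangle_{\mH^d}$, and the candidate limit $S_1$ with $G=m+\sum_{u\ge1}\sqrt{\lambda_u}Z_u\phi_u$, where $(\lambda_u,\phi_u)$ is the eigensystem of the covariance operator $T_\sigma$ of Theorem~\ref{thm:asymptotic_distribution}, the $Z_u$ are i.i.d.\ standard normals, and $m$ is the Riesz representer of the linear functional $\mu(\cdot)$ (so $\mu(\phi_u)=\mu_u$). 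Here Condition~\eqref{eq:c2} gives $\sum_u\lambda_u<\infty$, so the Gaussian part of $G$ is a genuine element of $\mH^d$ almost surely, and $\sum_u\mu_u^2<\infty$ (established in the proof of Theorem~\ref{thm:power_local_alternative}, and in any event forced by combining~\eqref{eq:c1''} with~\eqref{eq:c3''} through a portmanteau argument) makes $m\in\mH^d$. Since the evaluation maps $G\mapsto\langle G,f\rangle_{\mH^d}$ are continuous, it then suffices to prove $G_n\overset{d}{\to}G$ weakly in the separable Hilbert space $\mH^d$ under $\L_n$.

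For this I would supply the two standard ingredients. Finite-dimensional convergence follows from linearity, $\sum_j c_j S_{1,n}(f_j)=S_{1,n}\big(\sum_j c_j f_j\big)$, so that Condition~\eqref{eq:c1''} applied to arbitrary linear combinations, together with the Cram\'er--Wold device, yields joint convergence of $(S_{1,n}(f_1),\dots,S_{1,n}(f_m))$ to the Gaussian vector with mean $(\mu(f_1),\dots,\mu(f_m))$ and covariance $(\sigma(f_i,f_j))_{i,j}$. For asymptotic tightness I would note that $\|G_n\|_{\mH^d}^2=\sum_u S_{1,n}(\phi_u)^2$, so that Condition~\eqref{eq:c3''} is exactly the uniform negligibility of the norm tails $\sum_{u\ge k}S_{1,n}(\phi_u)^2$; combined with finite-dimensional tightness this yields tightness of $\{G_n\}$. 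Feeding finite-dimensional convergence and tightness into the ``finite-dimensional convergence plus tightness implies weak convergence'' argument, in the form of \citet[Theorem~1 and Proposition~6]{fernandez2024general}, then gives $G_n\overset{d}{\to}G$ in $\mH^d$, hence $\{S_{1,n}(f)\}_{f\in\mH^d}\overset{d}{\to}\{S_1(f)\}_{f\in\mH^d}$ under $\L_n$.

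Within this lemma the adaptation from the centered case is mechanical; the place where real care is needed --- and which I regard as the main obstacle of the surrounding argument --- is the verification, carried out when establishing the hypotheses~\eqref{eq:c1''} and~\eqref{eq:c3''}, that everything behaves correctly under the drifting measure $\L_n\propto p_{\theta_0}(1+h/\sqrt n)$ of the triangular array. The summands of $S_{1,n}$ have a non-vanishing $\L_n$-mean that produces the shift, and the hard part will be checking that this shift is deterministic and converges to $\mu(f)=\E_{\theta_0}[S^*(f,X,\theta_0)h(X)]$ while the limiting variance $\sigma(f,f)$ and the Lindeberg condition are unaffected: using $\E_{\theta_0}[h^2(X)]<\infty$ together with $\E_{\theta_0}[\A_{\theta_0}f(X)]=0$ and $\E_{\theta_0}[I(X,\theta_0)]=0$, one shows $\E_{\L_n}[\,\cdot\,]-\E_{\theta_0}[\,\cdot\,]=O(n^{-1/2})$ on functionals with uniformly bounded second moments under $\p0$ (via Lemma~\ref{lemma:oracle-property} and Assumptions~\ref{ass:p} and~\ref{ass:k}), controls the estimation term through Lemma~\ref{lemma:local-alt-est}, and concludes that all drifting-measure corrections are $o(1)$. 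Once these are in place, the Hilbert-space machinery above is identical to the null-hypothesis case.
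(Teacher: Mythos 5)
Your proposal is correct and takes essentially the same route as the paper: the paper's own treatment of this lemma is simply a pointer to the proof of Lemma~\ref{lemma:functional-stable-convergence} (and Lemma~\ref{lemma:functional-convergence}), both of which reduce to \citet[Theorem~1 and Proposition~6]{fernandez2024general}, exactly as you do. Your added detail — identifying $S_{1,n}$ with a random element of $\mH^d$ via Riesz, handling the non-zero mean via its Riesz representer $m$ with $\sum_u\mu_u^2<\infty$, and reading (C3.3) as tail negligibility of $\|G_n\|_{\mH^d}^2$ to get tightness — is a faithful unpacking of what that cited theorem does, and your closing remark that the genuine work lies in verifying (C3.1) and (C3.3) under the drifting measure $\L_n$ (not in this lemma itself) matches the paper's structure.
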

	The proof of Lemma~\ref{lemma:functional-convergence-alt} is similar to the proof of Lemma~\ref{lemma:functional-stable-convergence}, so we omit the proof here.
	To apply Lemma~\ref{lemma:functional-convergence-alt}, we need to verify the conditions in the lemma.
	Notice that \eqref{eq:c2} is not a probabilistic statement, and it is already verified in the proof of Theorem~\ref{thm:bootstrap_distribution}, and we can directly use it here.
	The verification of conditions~\eqref{eq:c1''} and \eqref{eq:c3''} can be found in section \ref{sec:verification_c''}. After verifying Conditions~\eqref{eq:c1''} and \eqref{eq:c3''}, we conclude that 
	\begin{align*}
		\{S_{1,n}(f)\}_{f\in\mH^d} \overset{d}{\to} \{S_1(f)\}_{f\in\mH^d}.
	\end{align*}
	thus by continuous mapping theorem, we have under $\L_n$,
	\begin{align*}
		n T_n \overset{d}{\to} \|S_1(f)\|_{\mH^d \to \R} \equiv \sup_{\|f\|_{\mH^d} \le 1}S_1(f).
	\end{align*}

	To get the explicit expression for $\|S_1(f)\|_{\mH^d \to \R}$, we notice that 
	\begin{align*}
		\sup_{\Vert f \Vert_{\mH^d}\le 1} S_{1}(f)^2 &= \left(\sup_{\Vert f \Vert_{\mH^d}\le 1} \Big\langle \sum_{i=1}^{\infty}\sqrt{\lambda_i}(Z_i+\mu_i)\phi_i, f\Big\rangle\right)^2\\
		&=\left\|\sum_{i=1}^{\infty}\sqrt{\lambda_i}(Z_i+\mu_i)\phi_i\right\|^2_{\mH^d}\\
		&= \sum_{i=1}^{\infty}\lambda_i(Z_i+\mu_i)^2 = W_1
	\end{align*}
	which is a non-central chi-squared distribution with infinite degrees of freedom.
\end{proof}

\subsection{Verification of conditions in Lemma~\ref{lemma:functional-convergence-alt}}\label{sec:verification_c''}

\subsubsection{Proof of Condition~\eqref{eq:c1''}}
In our setting, the limit functional $S_1(f)$ can be explicitly constructed as
	\begin{align*}
		S_1(f) = \sum_{i=1}^{\infty}\langle \phi_i, f\rangle_{\mH^d}\sqrt{\lambda_i}(Z_i + \mu_i) \quad \text{and} \quad \mu(f) = \sum_{i=1}^\infty \langle \phi_i, f\rangle_{\mH^d}\,\mu_i,
	\end{align*}
	where $\mu_i = \E_{\theta_0}[S^*(\phi_i,X,\theta_0)h(X)]$ and other notations are defined the same as in Condition~\eqref{eq:c1}.
	Recall in Eqn.~\eqref{eq:sn}, we define the functional $S_n(f)$. 
	We prove this result by showing the weak convergence of the functional $S_n(f)$ under $\L_n$, then consider the difference between $S_n(f)$ and $S_{1,n}(f)$.

	\paragraph{Weak convergence of $S_n(f)$ under $\L_n$.}
	With the same proof as in Lemma~\ref{lemma:functional-convergence}, we show that under $\P_{\theta_0}$ the functional $S_n(f)$ satisfies
	\begin{align*}
		S_n(f) \overset{d}{\to} S(f) \sim \N(0, \sigma(f,f)).
	\end{align*}
	Furthermore, recall that in Eqn.~\eqref{eq:density-ratio-expansion}, we prove that
	\begin{align*}
		\log\left(\frac{\d \L_n}{\d P_{\theta_0}} (X)\right) = \frac{1}{\sqrt{n}} \sum_{i=1}^n \left(h(X_i) - \E_{\theta_0}h(X)\right) - \frac{1}{2}\Var_{\theta_0}[h(X)] + o_p(1).
	\end{align*} 
	Combining the expression of $S_n(f)$ and the above expansion, we have under measure $\P_{\theta_0}$,
	\begin{align*}
		\begin{pmatrix}
			S_n(f)\\
			\log \left(\frac{\d \L_n}{\d P_{\theta_0}}(X)\right)
		\end{pmatrix}
		&\overset{d}{\rightarrow} \N\left(\begin{pmatrix}0\\- \frac{1}{2}M_2^h\end{pmatrix}, \begin{pmatrix}\sigma(f,f) & \mu(f)\\ \mu(f) & M_2^h\end{pmatrix}\right).
	\end{align*}
		By Le Cam's third lemma \citep[Theorem 6.6]{van2000asymptotic}, we conclude that under local alternative $\L_n$,
		\begin{align}\label{eq:alt-sn-convergence}
			S_n(f) \overset{d}{\to} S_1(f) \sim \N(\mu(f), \sigma(f,f)).
		\end{align}

	\paragraph{Difference between $S_n(f)$ and $S_{1,n}(f)$.}
	Comparing the expression of $S_n(f)$ and $S_{1,n}(f)$, we have
	\begin{align*}
		&S_{1,n}(f) - S_n(f)\\
		&= \frac{1}{\sqrt{n}}\sum_{i=1}^n \Big\langle \E_{X \sim \L_n} \left[\left[\nabla_\theta s_{\theta_0}(X)\right]^\top f(X)\right] - \E_{X \sim \P_{\theta_0}} \left[\left[\nabla_\theta s_{\theta_0}(X)\right]^\top f(X)\right], I(X_i, \theta_0) \Big\rangle\\
		&= \frac{1}{\sqrt{n}}\sum_{i=1}^n \Big\langle \int_{\X}\nabla_\theta s_{\theta_0}(x)f(x)p_{\theta_0}(x)\left(\frac{1+\frac{1}{\sqrt{n}}h(x)}{Z_n}-1\right)\lambda(\d x), I(X_i, \theta_0) \Big\rangle\\
		&\le \frac{1}{\sqrt{n}}\sum_{i=1}^n \sum_{j=1}^k \left(\E_{\theta_0} \left[\left[\left[\nabla_\theta s_{\theta_0}(X)\right]^\top f(X)\right]_j^2\right]\right)^{1/2} \cdot \left[I(X_i, \theta_0)\right]_j\\
		&\qquad \qquad \qquad \qquad \qquad \qquad \qquad \qquad \qquad\cdot \E_{\theta_0}\left[\left(1+\frac{1}{\sqrt{n}}h(x)-Z_n\right)^2\right]^{1/2},
	\end{align*}
	where we use Cauchy-Schwarz inequality in the inequality above.
	
	To get the final bound the above expression, we analysis terms separately in the above inequality.
	By Cauchy-Schwarz inequality, we reach
	\begin{align}
		&\sum_{j=1}^k \left(\E_{\theta_0} \left[\left[\left[\nabla_\theta s_{\theta_0}(X)\right]^\top f(X)\right]_j^2\right]\right)^{1/2} \cdot \left[I(X_i, \theta_0)\right]_j \notag \\
		&\le \sum_{j=1}^k \E_{\theta_0} \left[\left[\left[\nabla_\theta s_{\theta_0}(X)\right]^\top f(X)\right]_j^2\right] \cdot \sum_{j=1}^k \left[I(X_i, \theta_0)\right]_j^2 \notag \\
		&\le \sum_{j=1}^k \E_{\theta_0} \left[\left[\left[\nabla_\theta s_{\theta_0}(X)\right]^\top f(X)\right]_j^2\right] \cdot \|I(X_i, \theta_0)\|^2.\label{eq:diff-sn-0}
	\end{align}
	By the boundness of $f \in \mH^d$ (See Eqn.~\eqref{eq:bounded-f}), we have, for any $j \in [k]$,
	\begin{align}
		\E_{\theta_0} \left[\left[\left[\nabla_\theta s_{\theta_0}(X)\right]^\top f(X)\right]_j^2\right] &\le \E_{\theta_0} \left[\left\|\nabla_\theta s_{\theta_0}(X)_{j,\cdot} \right\|^2\right]\sup_{x \in \X}\|f(x)\|^2 \notag\\
		&\le \E_{\theta_0} \left[\left\|\nabla_\theta s_{\theta_0}(X)_{j,\cdot} \right\|^2\right]C_K^2\|f\|^2_{\mH^d}. \label{eq:diff-sn-1}
	\end{align}
	Plugging Eqn.~\eqref{eq:diff-sn-1} into Eqn.~\eqref{eq:diff-sn-0} gives
	\begin{align*}
		&\sum_{j=1}^k \left(\E_{\theta_0} \left[\left[\left[\nabla_\theta s_{\theta_0}(X)\right]^\top f(X)\right]_j^2\right]\right)^{1/2} \cdot \left[I(X_i, \theta_0)\right]_j \notag \\
		&\le \E_{\theta_0}\left[\|\nabla_\theta s_{\theta_0}(X)\|^2_\mathrm{F}\right]C_K^2 \|f\|_{\mH^d} \cdot \|I(X_i, \theta_0)\|^2 \notag\\
		&\le d C_2 C_K^2 \|f\|_{\mH^d}^2 \cdot \|I(X_i, \theta_0)\|^2,
	\end{align*}
	where we apply Assumption~\ref{ass:p} in the last step.
	Recall Assumption~\ref{assu:uale} implies a uniform bound on $\E[\|I(X_i, \theta_0)\|^2]$, we have
	\begin{align}\label{eq:diff-sn-3}
		\frac{1}{\sqrt{n}}\sum_{i=1}^n \sum_{j=1}^k \left(\E_{\theta_0} \left[\left[\left[\nabla_\theta s_{\theta_0}(X)\right]^\top f(X)\right]_j^2\right]\right)^{1/2} \cdot \left[I(X_i, \theta_0)\right]_j = O_p(1)
	\end{align}
	Also, by Assumption~\ref{ass:local}, we have
	\begin{align}\label{eq:diff-sn-2}
		\E_{\theta_0}\left[\left(1+\frac{1}{\sqrt{n}}h(x)-Z_n\right)^2\right] = O_p(n^{-1/2}).
	\end{align}
	Combining Eqns.~\eqref{eq:diff-sn-3} and \eqref{eq:diff-sn-2} with Assumption~\ref{assu:uale}, we have, for any fixed $f \in \mH^d$,
	\begin{align*}
		S_{1,n}(f) - S_n(f) = O_p(1) \cdot O_p(n^{-1/2}) = O_p(n^{-1/2}).
	\end{align*}
	Plugging it into Eqn.~\eqref{eq:alt-sn-convergence} implies that under measure $\L_n$,
	\begin{align*}
		S_{1,n}(f) \overset{d}{\to} S_1(f) \sim \N(\mu(f), \sigma(f,f)),
	\end{align*}
	which verifies Condition~\eqref{eq:c1''}.

\subsubsection{Proof of Condition~\eqref{eq:c3''}}
	By Markov ineuqlaity, we have
	\begin{align*}
		\limsup_{n \to \infty}\P_{\L_n}\left(\sum_{i=k}^\infty S_{1,n}(\phi_i)^2 \ge \varepsilon\right)
		&\le \limsup_{n \to \infty}\frac{1}{\varepsilon}\sum_{i=k}^\infty\E_{\L_n}[S_{1,n}(\phi_i)^2]\\
		& = \frac{1}{\varepsilon} \sum_{i=k}^\infty \left(\mu_i^2 + \sigma(\phi_i, \phi_i)\right).
	\end{align*}
	It is proven in Condition~\eqref{eq:c2} that $\sum_{i=1}^\infty \sigma(\phi_i,\phi_i) < \infty$.
	Noticing that 
	\begin{align*}
		\sum_{i=1}^\infty \mu_i^2 \le \sum_{i=1}^\infty \E_{\theta_0}\left[S^*(\phi_i,X,\theta_0)^2\right]\E_{\theta_0}\left[h(X)^2\right] \le \sum_{i=1}^\infty \sigma(\phi_i,\phi_i) \E_{\theta_0}\left[h(X)^2\right] < \infty,
	\end{align*}
	we can conclude that 
	\begin{align*}
		\lim_{k\to\infty}\limsup_{n \to \infty}\P_{\L_n}\left(\sum_{i=k}^\infty S_{1,n}(\phi_i)^2 \ge \varepsilon\right)
		\le \lim_{k\to\infty}\frac{1}{\varepsilon} \sum_{i=k}^\infty \left(\mu_i^2 + \sigma(\phi_i, \phi_i)\right) = 0,
	\end{align*}
	which proves Condition~\eqref{eq:c3''}.

\subsection{Proof of the additive case in Theorem~\ref{thm:power_local_alternative}}\label{sec:proof_asymptotic_power_local_2}
\begin{proof}[Proof of the additive case in Theorem~\ref{thm:power_local_alternative}]
	We now prove Theorem~\ref{thm:power_local_alternative} for the additive case with 
	\begin{align*}
		\mu_i &= h \E_{\L_g}[S^*(\phi_i,X,\theta_0)],
	\end{align*}
	where $\phi_i$ is the $i$-th orthonormal basis of $\mH$.
	The density function of $\L_n$ in this case can be written as
	\begin{align*}
		p_{1,n}(x) &= \left(1-\frac{h}{\sqrt{n}}\right)p_{\theta_0}(x) + \frac{h}{\sqrt{n}} g(x)\\
		&= p_{\theta_0}(x)\left(1 + \frac{h}{\sqrt{n}}\left(\frac{g(x)}{p_{\theta_0}(x)} - 1\right)\right),
	\end{align*}
	which can be fitted into the multiplicative local alternatives with $h(x) = h(g(x)/p_{\theta_0}(x) - 1)$.
	It can be directly checked that Assumption~\ref{ass:local} hold.
	Thus, we immediately have Theorem~\ref{thm:power_local_alternative} holds with
	\begin{align*}
		\mu_i &= \E_{\theta_0}\left[S^*(\phi_i,X,\theta_0)\cdot h\left(\frac{g(x)}{p_{\theta_0}(x)} - 1\right)\right]\\
		&= h \E_{\L_g}[S^*(\phi_i,X,\theta_0)] - h\E_{\theta_0}[S^*(\phi_i,X,\theta_0)]\\
		&= h \E_{\L_g}[S^*(\phi_i,X,\theta_0)],
	\end{align*}
	where we apply Lemma~\ref{lemma:oracle-property} in the last line.
\end{proof}

\section{Proofs of main lemmas}
\subsection{Proof of Lemma~\ref{lemma:oracle-property}}\label{sec:proof_oracle_property}
\begin{proof}[Proof of Lemma~\ref{lemma:oracle-property}(1)]
	By noticing that $X \sim \P_\theta$ and the property of kernelized Stein class in Eqn.~\eqref{eq:kernel_stein_discrepancy}, we have
	\begin{align}\label{eq:oracle-1-1}
		\E_{X\sim \P_\theta}\left[\mathcal{A}_\theta f(X)\right] = 0.
	\end{align}
	Additionally, Assumption~\ref{assu:uale} ensures that $\E_{X \sim \P_\theta}[I(X, \theta)] = 0$, which leads to
	\begin{align}
		&\E_{X \sim \P_\theta}\left[\Big\langle \E_{X'\sim P_{\theta}} \left[[\nabla_\theta s_{\theta}(X')]^{\top}f(X')\right], I(X,\theta) \Big\rangle\right]\notag\\
		&= \Big\langle \E_{X'\sim P_{\theta}} \left[[\nabla_\theta s_{\theta}(X')]^{\top}f(X')\right], \E_{X\sim \P_\theta} \left[I(X,\theta)\right] \Big\rangle = 0. \label{eq:oracle-1-2}
	\end{align}
	Combining Eqn.~\eqref{eq:oracle-1-1} and \eqref{eq:oracle-1-2} gives $\E_{X \sim \P_\theta}[S^*(f,X,\theta)] = 0$.
\end{proof}

\begin{proof}[Proof of Lemma~\ref{lemma:oracle-property}(2)]
	The second moment of $S^*(f,X,\theta)$ can be bounded as
	\begin{align}
		&\E_{X \sim \P_\theta}[S^*(f,X,\theta)^2]\notag\\
		&\le 2\E_{X \sim \P_\theta}[|\A_\theta f(X)|^2] + 2\E_{X \sim \P_\theta}\left[\Big\langle \E_{X'\sim P_{\theta}} \left[[\nabla_\theta s_{\theta}(X')]^{\top}f(X')\right], I(X,\theta) \Big\rangle^2\right]\notag\\
		&\le 2\E_{X \sim \P_\theta}[|\A_\theta f(X)|^2] + 2\E_{X' \sim \P_\theta}\left[\left\|[\nabla_\theta s_{\theta}(X')]^{\top}f(X')\right\|^2\right]\E_{X \sim \P_\theta}[\|I(X,\theta)\|^2],\label{eq:oracle-2-1}
	\end{align}
	where AM-GM inequality is applied in the first line and Cauchy-Schwarz inequality is applied in the second line.
	With the same argument as in Eqn.~\eqref{eq:t3-s}, we obtain for some constant $C_A $ independent of $f$ and $\theta$,
	\begin{align}\label{eq:oracle-2-2}
		\E_{X \sim \P_\theta}[|\A_\theta f(X)|^2] &\le C_A \|f\|^2_{L^2(\P_\theta)}.
	\end{align}
	Moreover, by Assumption~\ref{assu:uale}, we have, for some constant $C_I$ independent of $\theta$,
	\begin{align}\label{eq:oracle-2-3}
		\E_{X \sim \P_\theta}[\|I(X,\theta)\|^2] \le C_I.
	\end{align}
	Combining Eqns.~\eqref{eq:oracle-2-1}, \eqref{eq:oracle-2-2} and \eqref{eq:oracle-2-3} gives
	\begin{align*}
		\E_{X \sim \P_\theta}[S^*(f,X,\theta)^2] &\lesssim \E_{X \sim \P_\theta}\left[\|f(X)\|^2\right] + \E_{X' \sim \P_\theta}\left[\left\|[\nabla_\theta s_{\theta}(X')]^{\top}f(X')\right\|^2\right]
	\end{align*}
	which proves the bound Lemma~\ref{lemma:oracle-property}(2).
	Also, the right-hand side of the above inequality can be uniformly bounded in $\theta$ by the boundness of $f$ in $\mH^d$ (i.e., definition of RKHS) and Assumption~\ref{assu:uale}.
\end{proof}

\subsection{Proof of Lemma~\ref{lemma:para-var-form}}\label{sec:proof_para_var_form}
\begin{proof}[Proof of Lemma~\ref{lemma:para-var-form}]
	We split the derivation the variational form of the bootstrap statistic into two steps.
	\paragraph{Step 1: Derive the intermediary form.}
	Recall the KSD kernel $h_p(\cdot,\cdot)$ defined in Eqn.~\eqref{eq:ksd-kernel}.
	We denote the kernel under measure $\P_\theta$ as $h_\theta(\cdot,\cdot) \coloneq h_{\P_\theta}(\cdot,\cdot)$.
	Define $F_\theta(x,x') = \nabla_\theta h_\theta(x,x')$  and $G(x,\theta) = \E_{X\sim \P_\theta} F_{\theta}(x,X)$.
	Let $\widetilde{X} = \{\widetilde{X}_1,\ldots,\widetilde{X}_n\}$ be the bootstrap sample.

	The bootstrap statistic admits the following intermediary decomposition:
	\begin{align}
		n\widetilde{T}_n &= nT_n(\widetilde{X}, \hn) + \frac{1}{n}\sum_{i=1}^{n}\sum_{j=1}^{n} G(\widetilde{X}_j,\hn)^\top I(\widetilde{X}_i, \hn)\notag\\
		&\qquad\qquad\qquad+ \frac{1}{2n}\sum_{i=1}^{n}\sum_{j=1}^{n} I(\widetilde{X}_i, \hn)^\top H_n I(\widetilde{X}_i, \hn) + o_{p}(1),\label{eq:inter-form}
	\end{align}
	where $H_n = \nabla_\theta^2 T_n(\widetilde{X}, \hn)$.  

We prove Statement~\eqref{eq:inter-form} in the following part of this step.
We first use Taylor expansion around $\hat{\theta}_n$, which leads to 
\begin{align}
	n\widetilde{T}_n &= nT_n(\widetilde{X}, \hn) + \big(n\widetilde{T}_n - nT_n(\widetilde{X}, \hn)\big)\notag\\
	&= nT_n(\widetilde{X}, \hn) + n\nabla_\theta T_n(\widetilde{X}, \hn)^\top (\tn - \hn)\notag\\ 
	&\qquad\qquad + \frac{n}{2}(\tn - \hn)^\top \nabla_\theta^2 T_n(\widetilde{X}, \hat{\theta}_\gamma) (\tn - \hn) + o_p(n\|\tn - \hn\|^2)\notag\\
	&= nT_n(\widetilde{X}, \hn) + n\nabla_\theta T_n(\widetilde{X}, \hn)^\top (\tn-\hn)\notag\\ 
	&\qquad\qquad\qquad\qquad + \frac{n}{2}(\tn - \hn)^\top H_n (\tn - \hn) + o_p(n\|\tn - \hn\|^2).\label{eq:inter-begin}
\end{align}
where the last inequality uses Lemma~\ref{lemma:high-diff-ksd-n-0}.
To further address the last term, we first use Assumption~\ref{assu:uale} to get that $\hat{\theta}_n$ is a asymptotically linear estimator of $\theta_0$.
Thus, we have $\hat{\theta}_n \overset{p}{\rightarrow} \theta_0$, which implies
\begin{align}\label{eq:high-prob-u}
	\lim_{n\to\infty}\P\left(\widehat{\theta}_n \notin B_{\theta_0}(\delta)\right) = 0.
\end{align}
In the next step, we apply the definition of uniform asymptotically linear estimator (Def.~\ref{def:ale}) on the estimator $\hn$. 
Working on the event $\left\{\widehat{\theta}_n \in B_{\theta_0}(\delta)\right\}$, we recall Assumption~\ref{assu:uale} to get
For any $\varepsilon > 0$, we have 
\begin{align*}
	&\P\left(\left|\sqrt{n}(\tn - \hn) - \frac{1}{\sqrt{n}} \sum_{i=1}^{n}I(\widetilde{X}_i, \hn)\right| \ge \varepsilon \right)\\
	&\le \P\left(\left|\sqrt{n}(\tn - \hn) - \frac{1}{\sqrt{n}} \sum_{i=1}^{n}I(\widetilde{X}_i, \hn)\right| \ge \varepsilon \bigg| \left\{\widehat{\theta}_n \in B_{\theta_0}(\delta)\right\}\right) + \P\left(\widehat{\theta}_n \notin B_{\theta_0}(\delta)\right).
\end{align*}
Then first term is a vanishing term by the definition of uniform asymptotically linear estimator.
Combining this result with equation~\eqref{eq:high-prob-u}, we have
\begin{align*}
	\lim_{n\to\infty}\P\left(\left|\sqrt{n}(\tn - \hn) - \frac{1}{\sqrt{n}} \sum_{i=1}^{n}I(\widetilde{X}_i, \hn)\right| \ge \varepsilon \right) = 0.
\end{align*}
Writing into a compact form:
\begin{align}\label{eq:hat-theta-al}
	\sqrt{n}(\tn - \hn) = \frac{1}{\sqrt{n}} \sum_{i=1}^{n} I(\widetilde{X}_i, \hn) + o_p(1),
\end{align}
which in turn implies that
\begin{align}\label{eq:tilde-theta-rate}
	n \|\tn - \hn\|^2 &= O_p\left(\frac{1}{\sqrt{n}} \sum_{i=1}^{n} I(\widetilde{X}_i, \hn)\right) = O_p(1).
\end{align}
Plugging Eqns.~\eqref{eq:hat-theta-al} and \eqref{eq:tilde-theta-rate} into the previous Eqn.~\eqref{eq:inter-begin}, we have
\begin{align}
	n\widetilde{T}_n &= nT_n(\widetilde{X}, \hn) + \big[\sqrt{n}\nabla_\theta T_n(\widetilde{X}, \hn)^\top\big] \left[\sqrt{n}(\tn - \hn)\right]\notag\\
	&\qquad\qquad\qquad\qquad+ \frac{1}{2} \left[\sqrt{n}(\tn - \hn)^{\top}\right]H_n \left[\sqrt{n}(\tn - \hn)\right]+ o_p(1)\notag\\
	&= nT_n(\widetilde{X}, \hn) + \bigg[\frac{1}{\sqrt{n}}\sum_{i=1}^{n} G (\widetilde{X}_i,\hn)\bigg]\left[\frac{1}{\sqrt{n}}\sum_{i=1}^n I(\widetilde{X}_i, \hn)\right]\notag\\
	&\qquad\qquad\qquad\qquad+ \frac{1}{2}\bigg[\frac{1}{\sqrt{n}}\sum_{i=1}^{n} I(\widetilde{X}_i, \hn)\bigg] H_n \bigg[\frac{1}{\sqrt{n}}\sum_{i=1}^{n} I(\widetilde{X}_i, \hn)\bigg] + o_p(1)\notag\\
	&= nT_n(\widetilde{X}, \hn) + \frac{1}{n}\sum_{i=1}^{n}\sum_{j=1}^{n} G (\widetilde{X}_j,\hn)^\top I(\widetilde{X}_i, \hn)\notag\\
	&\qquad\qquad\qquad\qquad\qquad\qquad+ \frac{1}{2n}\sum_{i=1}^{n}\sum_{j=1}^{n} I(\widetilde{X}_i, \hn)^\top H_n I(\widetilde{X}_i, \hn) + o_p(1),\label{eq:inter-final}
x\end{align}
where in the second equality we use Lemma~\ref{lemma:1-diff-ksd-n-0}.

\paragraph{Step 2: Equivalence of intermediary form and variational form.}
To prove the equivalence between the intermediary form and the variational form, it suffices to show that
\begin{align}
	\sup_{f \in \mathcal{B}_{\mH^d}(1)} \widetilde{S}_n(f)^2 &= nT_n(\widetilde{X}, \hn) + \frac{1}{n}\sum_{i=1}^{n}\sum_{j=1}^{n} G(\widetilde{X}_j,\hn)^\top I(\widetilde{X}_i, \hn)\notag\\
	&\qquad\qquad\qquad+ \frac{1}{2n}\sum_{i=1}^{n}\sum_{j=1}^{n} I(\widetilde{X}_i, \hn)^\top H_n I(\widetilde{X}_i, \hn) + o_{p}(1).\label{eq:inter-var-form}
\end{align}

We prove Statement~\eqref{eq:inter-var-form} in the following part of this step.
	To simplify the notation in the proof, we define auxiliary functionals
	\begin{align*}
		\widetilde{A}_n(f) &\equiv \frac{1}{\sqrt{n}}\sum_{i=1}^n \A_{\hn} f(\widetilde{X}_i)\\
		\widetilde{B}_n(f) &\equiv \frac{1}{\sqrt{n}}\sum_{i=1}^n \Big\langle \E_{X \sim P_{\hn}} \left[[\nabla_\theta s_{\hn}(X)]^{\top}f(X)\right], I(\widetilde{X}_i,\hn) \Big\rangle
	\end{align*}
	Since we work on bivariate kernel function $K(\cdot,\cdot)$, we use the superscript $1,2$ to denote the order of arguments the functional applied on.
	For example, for functional $\widetilde{A}_n: \mH^d \to \R$, notation $\widetilde{A}_n^1 K(x,y)$ means taking $y$ as a constant and applying $\widetilde{A}_n$ onto $K(\cdot,y)$.

	We begin by explicitly calculate the left hand side of Equation~\eqref{eq:inter-var-form}.
	It is straightforward to check that $\widetilde{S}_n(f)$ is a linear functional on $\mH^d$
	By Riesz representation theorem \citet[Page 81]{yosida2012functional}, there exists a unique $f_n^{\star} \in \mH^d$ such that $\widetilde{S}_n(f) = \langle f_n^{\star}, f \rangle_{\mH^d}$.
	Then we obtain
\begin{align*}
	\sup_{f \in \mathcal{B}_{\mH^d}(1)} \widetilde{S}_n(f)^2 = \sup_{\Vert f \Vert_{\mH^d}\le 1} \langle f_n^{\star}, f \rangle_{\mH^d}^2 = \Vert f_n^{\star} \Vert_{\mH^d}^2,
\end{align*}
where we apply Cauchy-Schwarz inequality.
To get the expression of $f_n^{\star}$, we observe 
\begin{align}\label{eq:sn-fstar}
	\widetilde{S}_n(f_n^{\star}) = \langle f_n^{\star}, f_n^{\star} \rangle_{\mH^d} = \|f_n^{\star}\|_{\mH^d}^2,
\end{align}
and by reproducing property, we have for any $j \in [d]$,
\begin{align}\label{eq:fstar}
	[f_n^{\star}(x)]_j = \langle [f_n^{\star}]_j, K(\cdot,x) \rangle_{\mH} = \langle [f_n^{\star}]_j, K_x \rangle_{\mH}.
\end{align}
Directly from equation~\eqref{eq:sn-fstar} and \eqref{eq:fstar}, we notice
\begin{align*}
	S_n(K_x) = \langle f_n^{\star}, K_x \ind_d \rangle_{\mH^d} = \sum_{j=1}^d [f_n^{\star}(x)]_j,
\end{align*}
where $\ind_d$ is the vector in $\R^d$ with all entries one.
Finally, we achieve
\begin{align*}
	\|f_n^{\star}\|_{\mH^d}^2 = S_n(f_n^{\star}) = S_n^{1}S_n^{2}[\K(x,x')],
\end{align*}
where we define $\K(x,x') \equiv K(x,x')I_d$, and the superscript indicates the argument operator is applied on.
We follow this notation below.
\begin{align*}
	\sup_{f \in \mathcal{B}_{\mH^d}(1)} \widetilde{S}_n(f)^2 & =(\widetilde{A}_n^1 + \widetilde{B}_n^1)(\widetilde{A}_n^{2} + \widetilde{B}_n^{2})[\K(x,x')]\\
	&= \underbrace{\widetilde{A}_n^1 \widetilde{A}_n^{2}[\K(x,x')]}_{\widetilde{T}_1} + 2\underbrace{\widetilde{A}_n^1 \widetilde{B}_n^{2}[\K(x,x')]}_{\widetilde{T}_2} + \underbrace{\widetilde{B}_n^1 \widetilde{B}_n^{2}[\K(x,x')]}_{\widetilde{T}_3}.
\end{align*}
We then turn to derive the explicit formula for $\widetilde{T}_1, \widetilde{T}_2$ and $\widetilde{T}_3$ separately.
\paragraph{Calculation of $\widetilde{T}_1$.}
We notice that
\begin{align}
	\widetilde{T}_1 &= \widetilde{A}_n^1 \widetilde{A}_n^{2}[\K(x,x')]\notag\\
	&= \widetilde{A}_n^1\left[\frac{1}{\sqrt{n}}\sum_{j=1}^{n}\A_{\hn}^2 \K(x,\widetilde{X}_j)\right]\notag\\
	&= \frac{1}{n} \sum_{i=1}^{n} \sum_{j=1}^{n} \A_{\hn}^1 \A_{\hn}^2 \K(\widetilde{X}_i,\widetilde{X}_j) = nT_n(\widetilde{X},\hn).\label{eq:t1}
\end{align}
\paragraph{Calculation of $\widetilde{T}_2$.}
We again have
\begin{align}
	\widetilde{T}_2 &= \widetilde{A}_n^1 \widetilde{B}_n^{2}[\K(x,x')]\notag\\
	&= \widetilde{A}_n^1\bigg[\frac{1}{\sqrt{n}}\sum_{j=1}^{n} \Big\langle \E_{\widetilde{X}\sim\pn} [\nabla_\theta \A_{\theta}^2 \K(x,\widetilde{X})], I(\widetilde{X}_j,\hn) \Big\rangle\bigg]\Big|_{\theta = \hn}\notag\\
	&= \frac{1}{n}\sum_{i=1}^{n} \sum_{j=1}^{n}\Big\langle \E_{\widetilde{X}\sim\pn} [\nabla_{\theta} \A_{\hn}^1\A_{\theta}^2 \K(\widetilde{X}_i,\widetilde{X})], I(\widetilde{X}_j,\hn) \Big\rangle\Big|_{\theta = \hn}\notag\\
	&= \frac{1}{2n}\sum_{i=1}^{n} \sum_{j=1}^{n}\Big\langle G(\widetilde{X}_i,\hn), I(\widetilde{X}_j,\hn) \Big\rangle\label{eq:lead-to-g}\\
	&= \frac{1}{2n}\sum_{i=1}^{n} \sum_{j=1}^{n} G(\widetilde{X}_i,\hn)^\top I(\widetilde{X}_j,\hn),\label{eq:t2}
\end{align}
where Eqn.~\eqref{eq:lead-to-g} is due to direct calculation on the derivative.

\paragraph{Calculation of $\widetilde{T}_3$.}
we can similarly compute that
\begin{align}
	\widetilde{T}_3 &= \widetilde{B}_n^1 \widetilde{B}_n^{2}[K(x,x')]\notag\\
	&= \widetilde{B}_n^1\bigg[\frac{1}{\sqrt{n}}\sum_{j=1}^{n} \Big\langle \E_{\widetilde{X}\sim\pn} \left[\nabla_\theta \A_{\theta_1}^2 \K(x,X)\right], I(\widetilde{X}_j,\hn) \Big\rangle\bigg]\bigg|_{\theta_1 = \hn}\notag\\
	&= \frac{1}{n}\sum_{i=1}^{n}\sum_{j=1}^{n}\bigg\langle\E_{\widetilde{X}'\sim\pn} \bigg[\nabla_{\theta_2} \A_{\theta_2}^1\Big\langle \E_{\widetilde{X}\sim\pn} [\nabla_{\theta_1} \A_{\theta_1}^2 \K(\widetilde{X}',\widetilde{X})], I(\widetilde{X}_j,\hn) \Big\rangle\bigg],I(\widetilde{X}_j,\hn)\bigg\rangle\bigg|_{\theta_1 =\theta_2 = \hn}\notag\\
	&= \frac{1}{n}\sum_{i=1}^{n}\sum_{j=1}^{n}\bigg\langle\Big\langle \E_{\widetilde{X},\widetilde{X}'\sim\pn}\left[\nabla_\theta s_{\hn}(X)\nabla_\theta s_{\hn}(X') \K(\widetilde{X}',\widetilde{X})\right], I(\widetilde{X}_j,\hn) \Big\rangle,I(\widetilde{X}_i,\hn)\bigg\rangle\notag\\
	&= \frac{1}{2n}\sum_{i=1}^{n}\sum_{j=1}^{n}\bigg\langle\Big\langle \nabla_\theta^2 \mathrm{KSD}(\P_{\theta}, \pn)|_{\theta = \hn}, I(\widetilde{X}_j,\hn) \Big\rangle,I(\widetilde{X}_i,\hn)\bigg\rangle\label{eq:lead-to-hessian}\\
	&= \frac{1}{2n}\sum_{i=1}^{n}\sum_{j=1}^{n}\bigg\langle\Big\langle H_n, I(\widetilde{X}_j,\hn) \Big\rangle,I(\widetilde{X}_i,\hn)\bigg\rangle +o_p(1)\label{eq:hessian-t3}\\
	&= \frac{1}{2 n}\sum_{i=1}^{n} \sum_{j=1}^{n} I(\widetilde{X}_j,\hn)^\top H_n I(\widetilde{X}_i,\hn) + o_p(1),\label{eq:t3}
\end{align}
where Eqn.~\eqref{eq:lead-to-hessian} is due to direct calculation on the derivative and in Eqn.~\eqref{eq:hessian-t3} we use U-statistics law of large numbers \citep[Theorem 12.3]{van2000asymptotic}).
Combining Eqn.\eqref{eq:t1}, \eqref{eq:t2} and \eqref{eq:t3} directly yields that
\begin{align*}
	\sup_{f \in \mathcal{B}_{\mH}(1)} \widetilde{S}_n(f)^2 &= nT_n(\widetilde{X},\hn) + \frac{1}{n}\sum_{i=1}^{n}\sum_{j=1}^{n} G(\widetilde{X}_j,\hn)^\top I(\widetilde{X}_i, \hn)\\
	&\qquad\qquad\qquad+ \frac{1}{2n}\sum_{i=1}^{n}\sum_{j=1}^{n} I(\widetilde{X}_j, \hn)^\top H_n I(\widetilde{X}_i, \hn) + o_p(1)\\
	&= nT_n(\widetilde{X},\tn) + o_p(1).
\end{align*}
\end{proof}

\subsection{Proof of Lemma~\ref{lemma:functional-stable-convergence}}\label{sec:proof_functional_stable_convergence}
\begin{proof}[Proof of Lemma~\ref{lemma:functional-stable-convergence}]
	In \citet[Theorem 1]{fernandez2024general}, the similar result is shown under standard weak convergence instead of stable convergence.
	We aim to translate all the conditions and result into stable convergence setting.

	In Lemma~\ref{lemma:stable_equivalence}, it is proved that stable convergence of random elements is equivalent to weak convergence holds for all measurable events $F$ with $\P(F) > 0$, where $\P$ is the data generating measure.
	Thus, to get the stable convergence result, we only need Conditions~\eqref{eq:c1}, \eqref{eq:c2} and \eqref{eq:c3} in Lemma~\ref{lemma:functional-stable-convergence} to hold under any measurable event $F$ with $\P(F) > 0$.

	Using Lemma~\ref{lemma:stable_equivalence} again, we can see Condition~\eqref{eq:c1} is equivalent to the weak convergence of the sequence $\widetilde{S}_n(f)$ under any $F$ with $\P(F) > 0$.
	Condition~\eqref{eq:c2} is not a probabilistic statement, so no modification is needed here.
	Also, Condition~\eqref{eq:c3} is modified according to $F$.
	With all these modifications, we can directly apply \citet[Theorem 1]{fernandez2024general} to get the stable convergence result.
\end{proof}

\subsection{Proof of Lemma~\ref{lemma:f-convergence-in-p}} \label{sec:proof_f-convergence-in-p}
\begin{proof}[Proof of Lemma~\ref{lemma:f-convergence-in-p}]
	Rewrite the given condition as
  \[
    \lim_{n\to\infty}\mathbb{E}\bigl[f(X_n)\,(X_n - C)\bigr] = 0
    \quad 
    \text{for all bounded continuous } f.
  \]
  We now show that for any $t > C$, $\lim_{n\to \infty}\P(X_n \ge t) = 0$.
  Fix $\varepsilon = t - C > 0$. Define a bounded continuous function 
  $f_\varepsilon \colon \mathbb{R} \to \mathbb{R}$ by
  \[
    f_\varepsilon(x)
    \;=\;
    \begin{cases}
      -1, & x \le C - \varepsilon, \\
      \text{linear interpolation}, & x \in [C - \varepsilon, \, C + \varepsilon], \\
      +1, & x \ge C + \varepsilon.
    \end{cases}
  \]
	We apply contradiction method here. Suppose for some $t > C$, we have
	\begin{align}
		\limsup_{n \to \infty}\P(X_n \ge t) = \lim_{k \to \infty}\P(X_{n_k} \ge t) = \delta > 0,
	\end{align}
	where $n_k$ is a subsequence of $\{1,2,\cdots\}$.
	Then we obtain
	\begin{align*}
		&\mathbb{E}\bigl[f_{\varepsilon}(X_{n_k})\,(X_{n_k} - C)\bigr]\\ 
		&= \mathbb{E}\bigl[f_{\varepsilon}(X_{n_k})\,(X_{n_k} - C)\ind(X_{n_k} > C)\bigr] + \mathbb{E}\bigl[f_{\varepsilon}(X_{n_k})\,(X_{n_k} - C)\ind(X_{n_k} < C)\bigr]\\
		&\ge \mathbb{E}\bigl[f_{\varepsilon}(X_{n_k})\,(X_{n_k} - C)\ind(X_{n_k} \ge t)\bigr]\\
		&\ge \varepsilon \cdot \P(X_{n_k}\ge t) \rightarrow \varepsilon \delta > 0 \quad \text{ as }k \rightarrow \infty,
	\end{align*}
	where in the first inequality, we use the fact that $f(x) \le 0$ for $x \le C$ and $f(x) \ge 0$ for all $x \ge C$.
	This directly contradicts the assumption in Lemma~\ref{lemma:f-convergence-in-p}.
	Thus, we conclude that $\lim_{n\rightarrow\infty}\P(X_n \le t) = 0$.

	By symmetric analysis, we also get that $\lim_{n\rightarrow\infty}\P(X_n \ge t) = 1$ for all $t < C$.
	Thus, we conclude that $X_n \overset{d}{\to} C$, which is equivalent to $X_n \overset{p}{\to} C$.

\end{proof}

\subsection{Proof of Lemma~\ref{lemma:condition-c3}}\label{sec:proof_condition-c3}
\begin{proof}[Proof of Lemma~\ref{lemma:condition-c3}]
	The difference between the statement of Lemma~\ref{lemma:condition-c3} and Proposition 6 in \citet{fernandez2024general} is we only require convergence in probability in Eqn.~\eqref{eq:c3-2} instead of almost sure convergence.
	However, it can be directly checked that the whole proof of Proposition 6 in \citet{fernandez2024general} still goes through with this weaker condition.
	
	In fact, we need an additional argument that if a sequence of almost surely bounded random variable $Y_n \overset{p}{\to} C_4$ for some constant $C_4$, then we have
	\begin{align}\label{eq:conditional-bounded}
		\limsup_{n\to\infty}\E[Y_n|F] \le C_4 \quad \forall F \subseteq \F, \enspace \P(F) > 0.
	\end{align}
	\begin{proof}[Proof of Statement~\eqref{eq:conditional-bounded}]
		For any $\varepsilon > 0$, define event $\mathcal E_n' = \{Y_n \le C_4 + \varepsilon\}$.
		Since $Y_n \overset{p}{\to} C_4$, we have $\P(\mathcal E_n') \to 1$.
		Suppose $Y_n \le B < \infty$ almost surely.
		Thus, we have
		\begin{align*}
			\E[Y_n|F] &= \E[Y_n \indicator(\mathcal E_n')|F] + \E[Y_n \indicator(\mathcal E_n^c)|F]\\
			&\le C_4 + \varepsilon + (\sup Y_n)\cdot \P(\mathcal E_n'|F)\\
			&\le C_4 + \varepsilon + B \cdot \P(\mathcal E_n')/\P(F)\\
			&\to C_4 + \varepsilon.
		\end{align*}
		Since $\varepsilon$ is arbitrary, we have for any $F \subseteq \F$ with $\P(F) > 0$, 
		\begin{align*}
			\limsup_{n\to\infty}\E[Y_n|F] \le C_4.
		\end{align*}
	\end{proof}
	In the original proof of Proposition 6 in \citet{fernandez2024general}, the Statement~\eqref{eq:conditional-bounded} is derived used under almost sure convergence assumption. 
	Therefore, the original proof is still valid under convergence in probability.

	In order to apply Proposition 6 in \citet{fernandez2024general} to prove Lemma~\ref{lemma:condition-c3}, we pick $\mathcal G = \mH$, $Q = \mathrm{id}$ and $L = K$ in the original statement. 
	The whole proof goes through, which straightly implies Condition~\eqref{eq:c3} holds.
\end{proof}

\subsection{Proof of Lemma~\ref{lemma:local-alt-est}}\label{sec:proof_local-alt-est}
\begin{proof}[Proof of Lemma~\ref{lemma:local-alt-est}]
	We apply Le Cam's first lemma \citep[Lemma 6.4]{van2000asymptotic} to prove this result.
	By Assumption~\ref{ass:local}, we have
	\begin{align*}
		\E_{\theta_0}\left[\frac{\d \L_n}{\d P_{\theta_0}}\right] = \frac{1}{Z_n}\E_{\theta_0}\left[1 + \frac{h(X)}{\sqrt{n}}\right] = \frac{1 + \E_{\theta_0}[h(X)]/\sqrt{n}}{Z_n} = 1,
	\end{align*}
	which implies $\L_n \lhd \P_{\theta_0}$. 
	Conversely, under measure $X \sim \L_n$, we conclude
	\begin{align*}
		\frac{\d \L_n}{\d P_{\theta_0}} = \frac{1}{Z_n} \left(1 + \frac{h(X)}{\sqrt{n}}\right) \overset{d}{\to} 1
	\end{align*}
	by noticing that $Z_n \to 1$ and $h(X)$ is a finite random variable, so it also holds that $\L_n \rhd \P_{\theta_0}$.
	Thus measures $P_{\theta_0}$ and $\L_n$ are mutually contiguous.
	Since under measure $\P_{\theta_0}$, Assumption~\ref{assu:uale} tells us
	\begin{align*}
		\left|\sqrt{n}(\hn - \theta_0) - \frac{1}{\sqrt{n}}\sum_{i=1}^{n}I(X_i,\theta_0)\right| \overset{p}{\to} 0,
	\end{align*}
	Le Cam's first lemma confirms that it also holds under measure $\L_n$, which is exactly Eqn.~\eqref{eq:local-alt-if}.
	
	To further derive the asymptotic distribution of the estimator $\hn$, we need to apply the Le Cam's third lemma \citep[Theorem 6.6]{van2000asymptotic}.
	For simplicity, we denote 
	\begin{align*}
		\frac{\d \L_n}{\d P_{\theta_0}} (X) \equiv \prod_{i=1}^n \left[\frac{\d \L_n}{\d P_{\theta_0}}(X_i)\right].
	\end{align*}
	It holds that under measure $\p0$,
	\begin{align*}
		\log\left(\frac{\d \L_n}{\d P_{\theta_0}} (X)\right)
		&= \sum_{i=1}^{n} \log\left(1 + \frac{h(X_i)}{\sqrt{n}}\right) - n\log Z_n\\
		&= \frac{1}{\sqrt{n}} \sum_{i=1}^n \left(h(X_i) - \E_{\theta_0}h(X)\right) - \frac{1}{2n}\sum_{i=1}^n h(X_i)^2 + \frac{1}{2}\left(\E_{\theta_0}[h(X)]\right)^2\\ 
		&\qquad \qquad \qquad \qquad \qquad + O_p\left(\frac{1}{n^{3/2}}\sum_{i=1}^n h(X_i)^3 + \frac{1}{n^{1/2}}\left(\E_{\theta_0}[h(X)]\right)^3\right),
	\end{align*}
	where we use the Taylor expansion $\log(1+x) = x - x^2/2 + O(x^3)$ in the second line.
	To address remaining term, we first notice that by strong law of large numbers \citep[Theorem 2.4.1]{durrett2019probability}, we have
	\begin{align}\label{eq:density-ratio-term-1}
		\lim_{n\to\infty}\frac{1}{2n}\sum_{i=1}^n h(X_i)^2 = \frac{1}{2} \E_{\theta_0}[h(X)^2] \quad \text{almost surely}.
	\end{align}
	Then by Lemma~\ref{lemma:mz-slln} with choice $p = 2/3$ in the lemma, we obtain
	\begin{align}\label{eq:density-ratio-term-2}
		\frac{1}{n^{3/2}}\sum_{i=1}^n h(X_i)^3 \to 0 \quad \text{almost surely}.
	\end{align}
	Combining Eqns.~\eqref{eq:density-ratio-term-1}, \eqref{eq:density-ratio-term-2} and the fact that $\E_{\theta_0}[h(X)] < \infty$, we conclude
	\begin{align}\label{eq:density-ratio-expansion}
		\log\left(\frac{\d \L_n}{\d P_{\theta_0}} (X)\right) = \frac{1}{\sqrt{n}} \sum_{i=1}^n \left(h(X_i) - \E_{\theta_0}h(X)\right) - \frac{1}{2}\Var_{\theta_0}[h(X)] + o_p(1),
	\end{align}
	Under measure $\P_{\theta_0}$, we have the following jointly weak convergence
\begin{align*}
	\begin{pmatrix}
		\sqrt{n}(\hn - \theta_0)\\
		\log \left(\frac{\d \L_n}{\d P_{\theta_0}}(X)\right)
	\end{pmatrix}
	&= 
	\begin{pmatrix}
		0\\
		- \frac{1}{2}\Var_{\theta_0}[h(X)]
	\end{pmatrix}
	+
	\begin{pmatrix}
		\frac{1}{\sqrt{n}}\sum_{i=1}^{n}I(X_i,\theta_0)\\
		\frac{1}{\sqrt{n}} \sum_{i=1}^n \left(h(X_i) - \E_{\theta_0}h(X)\right)
	\end{pmatrix}  + o_{p}(1)\\
	&\overset{d}{\rightarrow} \N\left(\begin{pmatrix}0\\- \frac{1}{2}M_2^h\end{pmatrix}, \begin{pmatrix}\Sigma & \tau\\ \tau^\top & M_2^h\end{pmatrix}\right),
\end{align*}
where $M_2^h = \Var_{\theta_0}[h(X)]$ and $\tau = \mathrm{Cov}_{\theta_0}[I(X,\theta_0),h(X)]$.
Eqn.~\eqref{eq:density-ratio-expansion} is applied in the first line, and the standard CLT \citep[Theorem 3.4.1]{durrett2019probability} is used in the second line.
By Le Cam's third lemma, when under measure $\L_n$,
\begin{align}
	\sqrt{n}(\hn - \theta_0) \overset{d}{\rightarrow} \N(\tau,\Sigma),
\end{align}
which concludes Eqn.~\eqref{eq:local-alt-asy}.
		
\end{proof}

\subsection{Proof of Lemma~\ref{lemma:alt-var-form}}\label{sec:proof_alt-var-form}
\begin{proof}[Proof of Lemma~\ref{lemma:alt-var-form}]
	The proof here is similar to the proof of Lemma~\ref{lemma:para-var-form}, where we derive the intermediary form of expansion.
	Thus, we state the intermediary results and omit the detailed proof.
	
	We introduce the following notation.
	Recall that $F_\theta(x,x') = \nabla_\theta h_\theta(x,x')$, and we define $G_n(x,\theta) \equiv G_{\L_n}(x,\theta) = \E_{X\sim \L_n} F_{\theta}(x,X)$.
	Under local alternative hypothesis $H_{1,n}$, we apply the similar calculation in the proof of Statement~\eqref{eq:inter-form} to obtain
	\begin{align}
		nT_n = nT_n(X,\theta_0) &+ \frac{1}{n}\sum_{i=1}^{n}\sum_{j=1}^{n} G_{n} (X_j,\theta_0)^\top I(X_i,\theta_0)\notag\\
		&\qquad\qquad+ \frac{1}{2n}\sum_{i=1}^{n}\sum_{j=1}^{n} I(X_i,\theta_0)^\top H_{0} I(X_i,\theta_0) + o_p(1),\label{eq:alt-asy-expansion}
	\end{align}
	where $H_{0} = \nabla_\theta^2 T_n(X,\theta)|_{\theta = \theta_0}$.
	Furthermore, the calculation in the proof of Statement~\eqref{eq:inter-var-form} gives that 
	\begin{align}
		\sup_{f \in \mathcal{B}_{\mH}(1)} S_{1,n}(f)^2 = nT_n(X,\theta_0) &+ \frac{1}{n}\sum_{i=1}^{n}\sum_{j=1}^{n} G_{n} (X_j,\theta_0)^\top I(X_i,\theta_0)\notag\\
		&\qquad+ \frac{1}{2n}\sum_{i=1}^{n}\sum_{j=1}^{n} I(X_i,\theta_0)^\top H_{0} I(X_i,\theta_0) + o_p(1),\label{eq:alt-asy-expansion-2}
	\end{align}
	Combining Eqn.~\eqref{eq:alt-asy-expansion} and Eqn.~\eqref{eq:alt-asy-expansion-2} proves Lemma~\ref{lemma:alt-var-form}.
\end{proof}

\section{Auxiliary lemmas}\label{sec:auxiliary-proofs}

\begin{lemma}\label{lemma:high-diff-ksd-n-0}
	Under Assumption~\ref{assu:uale}, for any $0 \le \gamma \le 1$, let $\theta_\gamma = \gamma \theta_0 + (1-\gamma)\hn$ and $\hat{\theta}_\gamma = \gamma \hn + (1-\gamma)\tn$, it holds that
	\begin{align}\label{eq:high-diff-ksd-n-0-1}
		\|\nabla_\theta^2 T_n(X,\theta_0) - \nabla_\theta^2 T_n(X,\theta_\gamma)\|_2 = o_p(1),
	\end{align}
	Moreover, for the bootstrap statistics, we have
	\begin{align}\label{eq:high-diff-ksd-n-0-2}
		\|\nabla_\theta^2 T_n(\widetilde{X}, \hn) - \nabla_\theta^2 T_n(\tilde{X}, \hat{\theta}_\gamma)\|_2 = o_p(1).
	\end{align}
\end{lemma}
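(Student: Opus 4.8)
The plan is to reduce both displays to a first-order Taylor expansion of the map $\theta\mapsto\nabla_\theta^2 T_n(\cdot,\theta)$ along the relevant line segment, combined with two ingredients: the parameter displacement along that segment is $o_p(1)$, and the third $\theta$-derivative of $T_n$ stays $O_p(1)$ along it. Recall from~\eqref{eq:ksd-kernel} that $T_n(x_1,\dots,x_n;\theta)=n^{-2}\sum_{i,j}h_{p_\theta}(x_i,x_j)$ and that $h_{p_\theta}$ is a fixed polynomial expression in $s_\theta$, $\nabla_x K$ and $\nabla_x\nabla_{x'}K$; since $p_\theta\in\mathcal C^{1,4}(\X,\Theta)$ (Assumption~\ref{ass:p}), the score $s_\theta$, and hence $h_{p_\theta}(x,y)$, is $\mathcal C^3$ (indeed $\mathcal C^4$) in $\theta$ on $\Theta$, so $T_n(\cdot,\cdot)\in\mathcal C^3(\Theta)$ and the Taylor expansion below is legitimate.

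First I would dispose of the displacement factors. For~\eqref{eq:high-diff-ksd-n-0-1}, $\theta_\gamma=\gamma\theta_0+(1-\gamma)\hn$ lies on $[\theta_0,\hn]$ and $\|\theta_\gamma-\theta_0\|=(1-\gamma)\|\hn-\theta_0\|\le\|\hn-\theta_0\|=o_p(1)$ by Assumption~\ref{assu:uale}. For~\eqref{eq:high-diff-ksd-n-0-2}, $\hat\theta_\gamma=\gamma\hn+(1-\gamma)\tn$ lies on $[\hn,\tn]$ and $\|\hat\theta_\gamma-\hn\|\le\|\tn-\hn\|$; I would argue $\|\tn-\hn\|=o_p(1)$ directly from Assumption~\ref{assu:uale} — both $\hn$ and $\tn$ converge in probability to $\theta_0$, the latter because the uniform asymptotic linearity of $\mathcal E$ applies to the bootstrap sample $\widetilde X_i\iidsim\P_{\hn}$ on the event $\{\hn\in B_{\theta_0}(\delta)\}$, whose probability tends to one — rather than from~\eqref{eq:tilde-theta-rate}, to avoid circularity, since this lemma is invoked inside the proof of Lemma~\ref{lemma:para-var-form} before~\eqref{eq:tilde-theta-rate} is established.

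Next I would write the Hessian difference, via the fundamental theorem of calculus, as $\int_0^1\nabla_\theta^3 T_n(X,\theta_0+t(\theta_\gamma-\theta_0))[\theta_\gamma-\theta_0]\,dt$, the bracket denoting contraction of the third-order tensor with a vector in $\R^k$, so that
\begin{align*}
\|\nabla_\theta^2 T_n(X,\theta_\gamma)-\nabla_\theta^2 T_n(X,\theta_0)\|_2\le\Bigl(\int_0^1\|\nabla_\theta^3 T_n(X,\theta_0+t(\theta_\gamma-\theta_0))\|_2\,dt\Bigr)\,\|\theta_\gamma-\theta_0\|.
\end{align*}
On $\{\hn\in B_{\theta_0}(\delta)\}$ the integration path stays in the convex set $B_{\theta_0}(\delta)\subseteq\Theta$, and — exactly as in the proof of Theorem~\ref{thm:sksd_consistency} — the moment bounds of Assumption~\ref{ass:p} on $s_\theta$ and on $\nabla_\theta^{(m)}s_\theta$, $m=1,2,3$, combined via Cauchy--Schwarz (separately for the diagonal pairs $i=j$ and the off-diagonal pairs, where independence applies) with the kernel bound of Assumption~\ref{ass:k}, give a constant $C$ depending only on $C_1,\dots,C_4,C_K$ with $\E[n^{-2}\sum_{i,j}\|\nabla_\theta^3 h_{p_\theta}(X_i,X_j)\|_2]\le C$ uniformly in $\theta\in\Theta$; hence the integral factor is $O_p(1)$ and the difference is $O_p(1)\cdot o_p(1)=o_p(1)$. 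For~\eqref{eq:high-diff-ksd-n-0-2} the identical argument applies with $X,\theta_0,\theta_\gamma$ replaced by $\widetilde X,\hn,\hat\theta_\gamma$, the expectations now taken under $\P_{\hn}$ (the regularity of Assumption~\ref{ass:p} being posited uniformly over the family) and $\hat\theta_\gamma$ lying in $B_{\theta_0}(\delta)$ on the same high-probability event.

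The hard part will be the third-derivative bound: exhibiting a $\theta$-uniform, $L_1$-integrable envelope for $\nabla_\theta^3 h_{p_\theta}$. Differentiating $h_{p_\theta}$ three times in $\theta$ produces terms of the form $\nabla_\theta^a s_\theta(x)^\top\nabla_\theta^b s_\theta(y)$ with $a+b\le3$ and $a,b\le3$, each multiplied by a bounded kernel factor, and one must check that each is integrable after separating the diagonal and off-diagonal contributions (invoking the weakened $4/3+\kappa$ moment condition from the remark following Assumption~\ref{ass:p} for the $m=3$ terms if necessary). Everything else is bookkeeping entirely parallel to the $\nabla_\theta$ estimate carried out in the proof of Theorem~\ref{thm:sksd_consistency}.
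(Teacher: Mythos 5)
Your proposal is correct and follows the same route as the paper's proof: a Newton--Leibniz representation of the Hessian difference, an operator-norm bound on $\nabla_\theta^3 T_n(\cdot,\theta)$ reduced to a $\theta$-uniform first-moment bound on $\|\nabla_\theta^3 h_\theta(X_i,X_j)\|_2$ via Assumptions~\ref{ass:p} and~\ref{ass:k}, then Markov combined with the $o_p(1)$ parameter displacement. Two points deserve emphasis. First, your observation about circularity is a genuine improvement over the paper's exposition: the paper proves~\eqref{eq:high-diff-ksd-n-0-2} by pointing to ``Step 1'' of the proof of Lemma~\ref{lemma:para-var-form} for $\|\tn-\hn\|=o_p(1)$, yet Lemma~\ref{lemma:high-diff-ksd-n-0} is itself invoked at Eqn.~\eqref{eq:inter-begin} in that very step, before Eqns.~\eqref{eq:hat-theta-al}--\eqref{eq:tilde-theta-rate} appear. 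Your fix --- deriving $\|\tn-\hn\|=o_p(1)$ directly from the uniformity in Definition~\ref{def:ale} applied to the bootstrap sample on the high-probability event $\{\hn\in B_{\theta_0}(\delta)\}$ --- is precisely the argument behind Eqn.~\eqref{eq:hat-theta-al}, and pulling it forward untangles the dependency. Second, you are right to flag that Assumption~\ref{ass:p} states its moment bounds with $\E_{X\sim\law}[\cdot]$, whereas the bootstrap claim needs the same bounds under $\P_{\hn}$; the paper uses them as though they hold uniformly over the entire family $\{\P_\theta\}_{\theta\in\Theta}$, so stating that reading explicitly is the correct move. The remaining bookkeeping for the third-derivative envelope, which you defer, is exactly the computation the paper also leaves terse (the bound $\E\|\nabla_\theta^3 h_\theta(X_i,X_j)\|_2\lesssim(C_1+C_2+1)C_3C_K$), so no gap there.
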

\begin{proof}[Proof of Lemma~\ref{lemma:high-diff-ksd-n-0}]
	Recall the KSD kernel $h_p(\cdot, \cdot)$ defined in Eqn.~\eqref{eq:ksd-kernel}. 
	We write $h_{\P_\theta}$ as $h_\theta$ for simplicity in the proof.
	By Newton-Leibniz formula, we have
	\begin{align*}
		\nabla_\theta^2 T_n(X,\theta_0) - \nabla_\theta^2 T_n(X,\theta_\gamma) 
		&= \int_0^{1-\gamma} \nabla_\theta^3 T_n(X, \theta_0 + t(\hn - \theta_0)) \cdot (\theta_0 - \theta_\gamma) \d t,
	\end{align*}
	which leads to 
	\begin{align*}
		\left\|\nabla_\theta^2 T_n(X,\theta_0) - \nabla_\theta^2 T_n(X,\theta_\gamma)\right\|_2 
		&\le \int_0^{1-\gamma} \left\|\nabla_\theta^3 T_n(X, \theta_0 + t(\hn - \theta_0))\right\|_2 \cdot \|\theta_0 - \theta_\gamma\|_2 \d t\\
		&\le \int_0^1 \left\|\nabla_\theta^3 T_n(X, \theta_0 + t(\hn - \theta_0))\right\|_2 \cdot \|\hn - \theta_0\|_2 \d t.
	\end{align*}
	Using the definition of $h_p$ and Assumption~\ref{ass:p}, we obtain the expression
	\begin{align*}
		\|\nabla_\theta^3 T_n(X,\theta)\|_2 &\le \frac{1}{n^2}\sum_{i=1}^n \sum_{j=1}^n \|\nabla_\theta^3 h_{\theta}(X_i, X_j)\|_2,
	\end{align*}
	which implies the bound on the operator norm of of the term above can be derive from the moment bound on $\nabla^3 h_\theta(X_i, X_j)$. For any $\theta \in \Theta$, we have
	\begin{align*}
		\E\left[\left\|\nabla_\theta^3 h_\theta(X_i, X_j)\right\|_2\right] \lesssim (C_1  + C_2 + 1)C_3 C_K < \infty,
	\end{align*}
	where H\"{o}lder inequality is applied, and some numerical constants are omitted.
	The moment bound above combined with Markov inequality reveals that $\|\nabla_\theta^3 T_n(X,\theta)\|_2 = o_p(n^{1/2})$ uniformly in $\theta \in \Theta$.
	Thus, we achieve
	\begin{align*}
		\left\|\nabla_\theta^2 T_n(X,\theta_0) - \nabla_\theta^2 T_n(X,\theta_\gamma)\right\|_2
		&\le \left(\int_0^1 \left\|\nabla_\theta^3 T_n(X, \theta_0 + t(\hn - \theta_0))\right\|_2 \d t\right) \cdot \|\hn - \theta_0\|_2\\
		&= o_p(n^{1/2}) \cdot O_p(n^{-1/2}) = o_p(1),
	\end{align*}
	where the last line is due to Assumption~\ref{assu:uale} that $\|\hn - \theta_0\|_2 = o_p(1)$.
	Thus, we conclude Eqn.~\eqref{eq:high-diff-ksd-n-0-1} holds.

	For the second part, we can apply the similar argument as above.
	The only distinction is that we need $\|\hn - \tn\| = o_p(1)$, which is proved in Appendix~\ref{sec:proof_para_var_form}, Step 1.
	With this additional result, we can directly apply Lemma~\ref{lemma:conditioanl-unconditional-convergence} to obtain Eqn.~\eqref{eq:high-diff-ksd-n-0-2}.
\end{proof}

\begin{lemma}\label{lemma:1-diff-ksd-n-0}
	Suppose we have $X_i \overset{i.i.d.}{\sim} \law$ and denote $G_{\law}(x,\theta) = \E_{X\sim\law}[F_\theta(x,X)]$.
	Then we have
	\begin{align}\label{eq:u-clt-res-1}
		\sqrt{n}\nabla_\theta T_n(X, \theta) = \frac{1}{\sqrt{n}}\sum_{i=1}^{n} G_{\law} (X_i,\theta) + o_p\left(1\right).
	\end{align}
	Specifically, conditioning on a fixed sequence of data $\dn$, we have
	\begin{align}\label{eq:u-clt-res-2}
		\sqrt{n}\nabla_\theta T_n(\widetilde{X},\hn) = \frac{1}{\sqrt{n}}\sum_{i=1}^{n} G_{\pn} (\widetilde{X}_i) + o_{p}\left(1\right),
	\end{align}
	and it also holds unconditionally.
\end{lemma}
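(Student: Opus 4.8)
The plan is to treat $\nabla_\theta T_n(X,\theta)=\tfrac1{n^2}\sum_{i,j=1}^n F_\theta(X_i,X_j)$ as an $\R^k$-valued V-statistic with symmetric kernel $F_\theta=\nabla_\theta h_\theta$ — symmetry being inherited from $h_{p_\theta}(x,y)=\langle\xi_{p_\theta,x},\xi_{p_\theta,y}\rangle_{\mH^d}$ — and to derive its asymptotically linear (Hájek) representation by the classical Hoeffding-decomposition argument for degree-two U-/V-statistics. Concretely, I would first split off the diagonal, $\tfrac1{n^2}\sum_{i,j}F_\theta(X_i,X_j)=\big(1-\tfrac1n\big)U_n(\theta)+\tfrac1{n^2}\sum_i F_\theta(X_i,X_i)$ with $U_n(\theta)=\binom n2^{-1}\sum_{i<j}F_\theta(X_i,X_j)$ the associated U-statistic. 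Using the explicit form of $h_p$ in Appendix~\ref{sec:close-form} and the moment bounds of Assumptions~\ref{ass:p} and~\ref{ass:k}, the kernel satisfies $\E\|F_\theta(X,X)\|<\infty$ and $\E\|F_\theta(X,X')\|^2<\infty$ uniformly in $\theta$, so the diagonal term is $O_p(1/n)$ and contributes $o_p(1)$ after multiplication by $\sqrt n$.

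For the U-statistic part I would use the Hoeffding decomposition of $U_n(\theta)$ into its mean, first-order projection and a completely degenerate second-order remainder $R_n(\theta)$. The first-order projection is, after the usual combinatorial normalization, the linear statistic $\tfrac1{\sqrt n}\sum_i G_\law(X_i,\theta)$: the constant-mean contribution drops out because $\E F_\theta(X,X')$ is annihilated by Stein's identity under the relevant measure (the ``first-argument'' Stein derivative integrates to zero). The degenerate remainder obeys $\E\|R_n(\theta)\|^2=O(1/n^2)$ — the one quantitative input, coming from $\E\|F_\theta(X,X')\|^2<\infty$ — hence $\sqrt n\,R_n(\theta)=o_p(1)$. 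Combined with the diagonal estimate this yields Eqn.~\eqref{eq:u-clt-res-1}.

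For Eqn.~\eqref{eq:u-clt-res-2} I would rerun this conditionally on $\dn$, with $\hn$ held fixed and $\widetilde X_1,\dots,\widetilde X_n\overset{\mathrm{i.i.d.}}{\sim}\P_{\hn}$, replacing $\law$ by $\P_{\hn}$ and $\theta$ by $\hn$; the population objects are anchored by $\hn\overset{p}{\to}\theta_0$ from Assumption~\ref{assu:uale}. The key is that the moment hypotheses of Appendix~\ref{sec:regularity_conditions} are uniform over $\Theta$, so the $O_p$ and $o_p$ bounds on the diagonal and on $R_n$ hold with constants independent of the conditioning value $\hn$; this turns a family of pointwise conditional statements into a single $o_{\P_{\hn}}(1)$ bound. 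The unconditional version then follows from the conditional-to-unconditional transfer already invoked in the proof of Lemma~\ref{lemma:high-diff-ksd-n-0} (Lemma~\ref{lemma:conditioanl-unconditional-convergence}).

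The hard part is not the algebra of the decomposition — which is standard, e.g.\ as in the U-statistic theory cited in the paper — but making every negligibility estimate uniform in $\theta\in\Theta$ so that it survives the substitution $\theta=\hn$ of a \emph{data-dependent} parameter, and then cleanly transferring the resulting conditional convergence to an unconditional statement. Both are routine given the uniform moment bounds of Appendix~\ref{sec:regularity_conditions} and the transfer lemma, but keeping track of which bounds must be made uniform is where the bookkeeping care lies.
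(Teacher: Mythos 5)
Your proposal follows the same route as the paper's proof: (i) split the V-statistic into its diagonal and off-diagonal parts and argue the diagonal is $o_p(1)$ after the $\sqrt n$ scaling because $\E\|\nabla_\theta h_\theta(X,X)\|<\infty$; (ii) apply the Hájek/Hoeffding projection to the off-diagonal U-statistic to extract the linear term $\tfrac1{\sqrt n}\sum_i G_\law(X_i,\theta)$ — the paper compresses this into a citation of van der Vaart's Theorem 12.3, you spell out the decomposition and the remainder rate $\E\|R_n\|^2=O(1/n^2)$; and (iii) for \eqref{eq:u-clt-res-2} rerun conditionally with uniform-in-$\theta$ moment bounds and invoke the conditional-to-unconditional transfer of Lemma~\ref{lemma:conditioanl-unconditional-convergence}. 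The structure is identical, and your added bookkeeping simply makes explicit what the paper outsources to the cited theorem.

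One caution on a detail you added that the paper does not claim: you justify dropping the ``constant-mean contribution'' by saying $\E F_\theta(X,X')$ is annihilated by Stein's identity. That does not follow. Stein's identity gives $\E_{X\sim\P_\theta}[h_\theta(X,x')]=0$ for every $x'$, but $F_\theta=\nabla_\theta h_\theta$ carries a $\theta$-derivative that does not commute through the $\theta$-dependent measure: differentiating $\E_{\P_\theta}[h_\theta(X,x')]=0$ yields $\E_{\P_\theta}[\nabla_\theta h_\theta(X,x')]=-\int h_\theta(x,x')\nabla_\theta p_\theta(x)\,\d x$, which is not zero in general; and in \eqref{eq:u-clt-res-1} the expectation is over $\law$, which need not equal $\P_\theta$ at all. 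The paper simply cites the U-statistic CLT and never invokes Stein here — your extra rationale is the one step that would not survive scrutiny as written. (Both your write-up and the paper's display share the same centering and combinatorial-factor conventions for the projection, so that is not a discrepancy between you and the paper, just an idiosyncrasy you inherited faithfully.)
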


\begin{proof}[Proof of Lemma~\ref{lemma:1-diff-ksd-n-0}]
	By definition, we have 
	\begin{align}
		\sqrt{n}\nabla_\theta T_n(X, \theta) &= \sqrt{n}\left(\frac{1}{n^2}\sum_{i=1}^n \nabla_\theta h_{\theta}(X_i, X_i) + \frac{1}{n^2}\sum_{i=1}^n \sum_{j \neq i} \nabla_\theta h_{\theta}(X_i, X_j)\right) \notag \\
		&= \frac{1}{n^{3/2}}\sum_{i=1}^n \nabla_\theta h_{\theta}(X_i, X_i) + \frac{1}{n^{3/2}}\sum_{i=1}^n \sum_{j \neq i} \nabla_\theta h_{\theta}(X_i, X_j) \notag \\
		&= \frac{1}{\sqrt{n}(n-1)} \sum_{i=1}^n \sum_{j \neq i} \nabla_\theta h_{\theta}(X_i, X_j) + o_p(1), \label{eq:u-clt-1}
	\end{align}
	where the last equality is due to the fact that $\E[\nabla_\theta h_{\theta}(X_i, X_i)] < \infty$, which can be directly obtained from Assumption~\ref{ass:p}.

	By the central limit theorem of U-statistic \citep[Theorem 12.3]{van2000asymptotic}, we have
	\begin{align}
		\frac{1}{\sqrt{n}(n-1)} \sum_{i=1}^n \sum_{j \neq i} \nabla_\theta h_{\theta}(X_i, X_j) &= \frac{1}{\sqrt{n}}\sum_{i=1}^n \E_{X\sim \law}[\nabla_\theta h_\theta(X_i, X)] + o_p(1) \notag\\
		&= \frac{1}{\sqrt{n}}\sum_{i=1}^{n} G_{\law} (X_i,\theta) + o_p\left(1\right). \label{eq:u-clt-2}
	\end{align}
	Combining Eqn.~\eqref{eq:u-clt-1} and Eqn.~\eqref{eq:u-clt-2}, we obtain the desired result in Eqn.~\eqref{eq:u-clt-res-1}.
	The Eqn.~\eqref{eq:u-clt-res-2} can be directly obtained by applying Lemma~\ref{lemma:conditioanl-unconditional-convergence} and repeat the proof above.
\end{proof}

To be consistent with the notation in \citet{niu2022reconciling}, we use the notation $W_n | \F_n \overset{d,p}{\to} W$ to indicate that $W_n$ converges in distribution to a random variable $W$ conditionally on $\F_n$, i.e., for all $t \in \R$, at which the CDF of $W$ is continuous, we have 
\begin{align}
	\P[W_n\leq t|\F_n] \overset{p}{\to} \P[W\leq t].
\end{align}

\begin{lemma}[Conditional convergence implies unconditional convergence]\label{lemma:conditioanl-unconditional-convergence}
	For a sequence of random variables $W_n$ and a fixed continuous random variable $W$,
	\begin{align*}
		W_n | \F_n \overset{d,p}{\to} W  \quad \text{implies} \quad W_n \overset{d}{\to} W.
	\end{align*}
\end{lemma}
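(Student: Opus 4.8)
The plan is to pass from the conditional statement to the unconditional one via the tower property, reducing everything to the convergence of a uniformly bounded sequence of random variables. Fix $t \in \R$. Since $W$ is a continuous random variable, $t$ is automatically a continuity point of its CDF, so the hypothesis $W_n \mid \F_n \convdp W$ gives $\P[W_n \le t \mid \F_n] \convp \P[W \le t]$. By the tower property of conditional expectation, $\P[W_n \le t] = \E\big[\P[W_n \le t \mid \F_n]\big]$, so it suffices to show $\E\big[\P[W_n \le t \mid \F_n]\big] \to \P[W \le t]$.

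Write $Y_n := \P[W_n \le t \mid \F_n]$, so that $0 \le Y_n \le 1$ almost surely and $Y_n \convp \P[W \le t]$, which is a constant. A uniformly bounded sequence that converges in probability also converges in $L^1$ (equivalently, apply the bounded convergence theorem along an arbitrary subsequence, using that convergence in probability yields almost sure convergence along a further subsequence); hence $\E[Y_n] \to \P[W \le t]$. Therefore $\P[W_n \le t] \to \P[W \le t]$ for every $t \in \R$, which is exactly $W_n \convd W$.

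I do not anticipate a genuine obstacle here; the proof is essentially immediate. The only subtlety worth flagging is that $\convdp$, as defined in the excerpt, asserts convergence of the conditional CDFs only at continuity points of the limiting CDF, but since $W$ is continuous this is all of $\R$, and in any case the continuity points are precisely the values of $t$ at which pointwise CDF convergence is needed to conclude weak convergence. The boundedness of $Y_n$ in $[0,1]$ is what makes the interchange of limit and expectation automatic, so no additional moment or uniform-integrability condition is required.
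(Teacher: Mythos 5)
Your proof is correct and is essentially identical to the paper's: both reduce the claim to $\E\big[\P[W_n \le t \mid \F_n]\big] \to \P[W\le t]$ via the tower property and then pass the limit inside the expectation using boundedness of the conditional probabilities in $[0,1]$. Your added remark about using the subsequence argument to justify bounded convergence under convergence in probability (rather than almost sure convergence) is a slightly more careful version of the same step the paper invokes by citing the bounded convergence theorem directly.
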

\begin{proof}[Proof of Lemma~\ref{lemma:conditioanl-unconditional-convergence}]
	By iterated expectation formula, we have
	\begin{align}\label{eq:iterated-expectation}
		\E[\P[W_n \le t | \F_n]]  = \E[\indicator(W_n \le t)] = \P[W_n \le t].
	\end{align}
	Notice that $\P[W_n \le t | \F_n] \le 1$. We apply the bounded convergence theorem \citep[Theorem 1.5.3]{durrett2019probability} to obtain
	\begin{align}\label{eq:dominated-convergence}
		\lim_{n\to\infty}\E[\P[W_n \le t | \F_n]] = \E[\P[W \le t]] = \P[W \le t].
	\end{align}
	Combining Eqn.~\eqref{eq:iterated-expectation} and Eqn.~\eqref{eq:dominated-convergence}, we have
	\begin{align*}
		\lim_{n\to\infty}\P[W_n \le t] = \P[W \le t], \quad \forall t \in \R,
	\end{align*}
	which implies the unconditional convergence in distribution.

\end{proof}

\begin{lemma}[Lemma 5, \citet{niu2022reconciling}]\label{lemma:conditioanl-polya}
	Let $W_n$ be a sequence of random variables and $W$ be a continuous random variable. 
	If $W_n | \F_n \overset{d,p}{\to} W$, we have
	\begin{align*}
		\limsup_{n\rightarrow\infty}\P\left[\sup_{t\in\R}\Big|\P[W_n\leq t|\mathcal{F}_n]- \P[W\leq t]\Big|>\delta\right]=0.
	\end{align*}
\end{lemma}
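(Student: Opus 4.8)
The plan is to carry out the classical Pólya uniform-convergence argument $\omega$-by-$\omega$ in the conditional (random-CDF) setting. Write $G_n(t) \equiv \P[W_n \le t \mid \F_n]$ and $F(t) \equiv \P[W \le t]$. First I would fix a regular version of the conditional distribution, which exists because the $W_n$ take values in $\R$ (a Polish space); then for $\P$-a.e.\ $\omega$ the map $t \mapsto G_n(\omega,t)$ is a genuine cumulative distribution function — non-decreasing, right-continuous, with $G_n(\omega,-\infty)=0$ and $G_n(\omega,+\infty)=1$. Since $W$ is continuous, $F$ is a continuous CDF, so every $t\in\R$ is a continuity point of $F$, and the hypothesis $W_n \mid \F_n \overset{d,p}{\to} W$ gives $G_n(t) \overset{p}{\to} F(t)$ for \emph{every} $t \in \R$.

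Next I would set up the deterministic grid. Given $\delta>0$, using the continuity and monotonicity of $F$ together with $F(-\infty)=0$, $F(+\infty)=1$, choose finitely many points $-\infty = t_0 < t_1 < \cdots < t_{m-1} < t_m = +\infty$ with $F(t_j) - F(t_{j-1}) \le \delta/2$ for all $j$. Then a purely real-analytic computation, valid for every $\omega$ for which $G_n(\omega,\cdot)$ is a CDF, yields for any $t\in[t_{j-1},t_j]$ the two-sided bound
\begin{align*}
	G_n(\omega,t_{j-1}) - F(t_j) \;\le\; G_n(\omega,t) - F(t) \;\le\; G_n(\omega,t_j) - F(t_{j-1}),
\end{align*}
so that $|G_n(\omega,t) - F(t)| \le \delta/2 + \max_{0\le j\le m}|G_n(\omega,t_j) - F(t_j)|$, where the $j=0$ and $j=m$ terms vanish identically. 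Since $t\mapsto G_n(\omega,t)-F(t)$ is right-continuous, the supremum over $\R$ equals the supremum over $\mathbb{Q}$, which is a countable supremum of $\F_n$-measurable random variables and hence $\F_n$-measurable; taking that supremum gives
\begin{align*}
	\sup_{t\in\R}|G_n(t) - F(t)| \;\le\; \frac{\delta}{2} + \max_{1\le j \le m-1} |G_n(t_j) - F(t_j)| \quad \text{a.s.}
\end{align*}

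Finally I would convert this into the stated probability bound: for each $n$,
\begin{align*}
	\P\!\left[\sup_{t\in\R}|G_n(t) - F(t)| > \delta\right] \le \P\!\left[\max_{1\le j\le m-1}|G_n(t_j) - F(t_j)| > \tfrac{\delta}{2}\right] \le \sum_{j=1}^{m-1} \P\!\left[|G_n(t_j) - F(t_j)| > \tfrac{\delta}{2}\right].
\end{align*}
Because $m$ is fixed and $G_n(t_j) \overset{p}{\to} F(t_j)$ at each of the finitely many grid points, every summand tends to $0$, so the right-hand side tends to $0$, which is the claim (and the $\limsup$ is then automatically $0$). The only genuinely delicate point is the measurability/path-wise bookkeeping in the first step — justifying that one may work with a version of $\P[\cdot\mid\F_n]$ that is a CDF in $t$ for a.e.\ $\omega$ and that $\sup_{t}|G_n(t)-F(t)|$ is a well-defined $\F_n$-measurable random variable; once that is in place, the remainder is the textbook Pólya argument applied at fixed $\omega$.
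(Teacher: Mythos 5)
Your proof is correct, and it is the standard conditional/in-probability version of Pólya's theorem. Note that the paper itself does not prove this statement—it is imported verbatim as Lemma 5 of \citet{niu2022reconciling}—so there is no in-paper proof to compare against; your argument (fix a regular conditional version so $G_n(\omega,\cdot)$ is a genuine CDF a.e., use continuity of $F$ to build a finite $\delta/2$-net, exploit monotonicity to reduce $\sup_{t\in\R}|G_n(t)-F(t)|$ to a finite max over the grid, then apply the union bound and pointwise convergence in probability) is precisely the canonical route by which that cited lemma is established, and the measurability care you take (right-continuity gives $\sup_{\R}=\sup_{\Q}$) is exactly the bookkeeping needed to make the path-wise Pólya argument rigorous in the random-CDF setting.
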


\begin{lemma}[Lemma 1, \citet{niu2022reconciling}]\label{lemma:quantile-converge-in-prob}
	Let $W_n$ be a sequence of random variables and $\alpha \in (0, 1)$. 
	If $W_n | \F_n \overset{d,p}{\to} W$,
	and $W$ is a continuous random variable with strictly increasing CDF,
	then we have
	\begin{align*}
		\Q_{\alpha}(W_n) \overset{p}{\to} \Q_{\alpha}(W).
	\end{align*}
\end{lemma}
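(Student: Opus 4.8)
The plan is to deduce convergence of the random, $\F_n$-measurable conditional $\alpha$-quantile $\Q_\alpha(W_n)$ from the uniform conditional CDF convergence already recorded in Lemma~\ref{lemma:conditioanl-polya}, combined with the elementary deterministic fact that, when the limiting CDF is continuous and strictly increasing near the level $\alpha$, two distribution functions that are uniformly $\delta$-close have $\alpha$-quantiles that differ by at most some $\varepsilon(\delta)$ with $\varepsilon(\delta)\to 0$. An alternative that avoids Lemma~\ref{lemma:conditioanl-polya} is to argue directly from the pointwise-in-$t$ statement $\P[W_n\le t\mid\F_n]\convp\P[W\le t]$ applied at the two continuity points $t=q_\alpha\pm\varepsilon$, but the P\'olya route is cleaner and is the one I would write up.

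First I would fix notation: let $\hat F_n(t)\equiv\P[W_n\le t\mid\F_n]$ be a regular conditional distribution function, which exists and can be chosen with $\P$-a.s.\ nondecreasing, right-continuous sample paths in $t$ since $\R$ is Polish, so that $\Q_\alpha(W_n)=\inf\{t\in\R:\hat F_n(t)\ge\alpha\}$ is a well-defined $\F_n$-measurable random variable; and let $q_\alpha=\Q_\alpha(W)$, which by continuity and strict monotonicity of $F_W$ is the unique point with $F_W(q_\alpha)=\alpha$. Next, fixing $\varepsilon>0$ small enough that $F_W(q_\alpha-\varepsilon)<\alpha<F_W(q_\alpha+\varepsilon)$ and setting $\delta=\tfrac12\min\{\alpha-F_W(q_\alpha-\varepsilon),\,F_W(q_\alpha+\varepsilon)-\alpha\}>0$, I would work on the event $A_n=\{\sup_{t\in\R}|\hat F_n(t)-F_W(t)|\le\delta\}$: there, $\hat F_n(q_\alpha+\varepsilon)\ge F_W(q_\alpha+\varepsilon)-\delta>\alpha$ forces $\Q_\alpha(W_n)\le q_\alpha+\varepsilon$, while $\hat F_n(q_\alpha-\varepsilon)\le F_W(q_\alpha-\varepsilon)+\delta<\alpha$ together with monotonicity of $\hat F_n$ forces $\Q_\alpha(W_n)\ge q_\alpha-\varepsilon$, so $A_n\subseteq\{|\Q_\alpha(W_n)-q_\alpha|\le\varepsilon\}$. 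Finally, since the hypothesis $W_n\mid\F_n\overset{d,p}{\to}W$ is exactly the input of Lemma~\ref{lemma:conditioanl-polya}, we get $\P(A_n^{c})=\P[\sup_{t}|\hat F_n(t)-F_W(t)|>\delta]\to 0$, hence $\P(|\Q_\alpha(W_n)-q_\alpha|>\varepsilon)\to 0$; as $\varepsilon>0$ is arbitrary, $\Q_\alpha(W_n)\convp q_\alpha=\Q_\alpha(W)$.

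The only delicate points are bookkeeping rather than conceptual: getting the quantile convention right and the resulting one-sided implications ($\hat F_n(s)>\alpha\Rightarrow\Q_\alpha(W_n)\le s$ and $\hat F_n(s)<\alpha\Rightarrow\Q_\alpha(W_n)\ge s$, using monotonicity and right-continuity of the a.s.\ version of $\hat F_n$), and confirming that a regular conditional distribution can be chosen with a.s.\ monotone right-continuous paths so that the infimum defining $\Q_\alpha(W_n)$ is legitimate and $\F_n$-measurable. Both are standard because $\R$ is Polish and $F_W$ is continuous and strictly increasing near $q_\alpha$, so no genuine obstacle arises; the argument is essentially the classical ``uniform CDF convergence implies quantile convergence'' fact, transported to the random conditional CDF via the conditional P\'olya lemma.
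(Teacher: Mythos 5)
The paper does not prove this statement; it is imported verbatim as Lemma 1 of \citet{niu2022reconciling}, so there is no in-paper proof to compare against. Your argument, however, is correct and is the natural one: you pass from the $\convdp$ hypothesis to uniform convergence of the random conditional CDF $\hat F_n$ to $F_W$ via Lemma~\ref{lemma:conditioanl-polya}, and then run the standard deterministic ``uniform CDF convergence plus strict monotonicity of the limit implies quantile convergence'' sandwich on the high-probability event $A_n$. The choice of $\delta$ strictly below both $\alpha-F_W(q_\alpha-\varepsilon)$ and $F_W(q_\alpha+\varepsilon)-\alpha$ makes the two one-sided implications clean, the use of monotonicity of $\hat F_n$ to upgrade $\hat F_n(q_\alpha-\varepsilon)<\alpha$ to a lower bound on the infimum is handled correctly, and the measurability remarks about choosing a regular conditional distribution with nondecreasing right-continuous paths on a Polish space are the right things to say. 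The alternative you mention (applying the pointwise-in-$t$ convergence only at $t=q_\alpha\pm\varepsilon$ and using a union bound) would also work and would avoid invoking the conditional P\'olya lemma, but your write-up via Lemma~\ref{lemma:conditioanl-polya} is the cleaner route given that the paper already has that lemma available.
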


\begin{lemma}[Theorem 6, \citet{niu2022reconciling}]\label{lemma:conditional-slutsky}
	Let $W_n$ be a sequence of random variables satisfies $W_n | \F_n \overset{d,p}{\to} W$. 
	For any random variable $\epsilon_n$ satisfies $\epsilon_n \overset{p}{\to} 0$, we have
	\begin{align*}
		W_n + \epsilon_n | \F_n \overset{d,p}{\to} W.
	\end{align*}
\end{lemma}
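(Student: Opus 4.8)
The plan is to prove Lemma~\ref{lemma:conditional-slutsky} by transporting the classical Slutsky sandwich to the level of conditional distribution functions. Recalling the definition of $\overset{d,p}{\to}$ given just above, it suffices to show that $\P[W_n+\epsilon_n\le t\mid\F_n]\convp\P[W\le t]$ at every $t$ where the CDF $F_W$ of $W$ is continuous. The first ingredient I would establish is the auxiliary convergence $\P[|\epsilon_n|>\delta\mid\F_n]\convp 0$ for every fixed $\delta>0$. This follows because the conditional probability is nonnegative with expectation $\E\bigl[\P[|\epsilon_n|>\delta\mid\F_n]\bigr]=\P[|\epsilon_n|>\delta]\to 0$ (tower property, together with $\epsilon_n\convp 0$), and a nonnegative random variable with vanishing mean vanishes in probability by Markov's inequality. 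Crucially this argument does not use $\F_n$-measurability of $\epsilon_n$, which is the one place where the conditional setting differs from the classical one.

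Next I would combine this with the elementary event inclusions $\{W_n+\epsilon_n\le t\}\subseteq\{W_n\le t+\delta\}\cup\{|\epsilon_n|>\delta\}$ and $\{W_n\le t-\delta\}\setminus\{|\epsilon_n|>\delta\}\subseteq\{W_n+\epsilon_n\le t\}$. Applying $\P[\,\cdot\mid\F_n]$ gives the two-sided bound
\begin{align*}
	\P[W_n\le t-\delta\mid\F_n]-\P[|\epsilon_n|>\delta\mid\F_n]\;\le\;\P[W_n+\epsilon_n\le t\mid\F_n]\;\le\;\P[W_n\le t+\delta\mid\F_n]+\P[|\epsilon_n|>\delta\mid\F_n].
\end{align*}
For $\delta$ small enough that both $t+\delta$ and $t-\delta$ are continuity points of $F_W$, the hypothesis $W_n\mid\F_n\overset{d,p}{\to}W$ forces the outer conditional CDFs to converge in probability to $F_W(t\pm\delta)$, while the $\P[|\epsilon_n|>\delta\mid\F_n]$ terms are $o_p(1)$ by the first step. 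Letting $\delta\downarrow 0$ along continuity points (all but countably many points are such) and using continuity of $F_W$ at $t$ then closes the sandwich and yields $\P[W_n+\epsilon_n\le t\mid\F_n]\convp F_W(t)$.

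I do not expect a genuine obstacle here: the statement is exactly Theorem~6 of \citet{niu2022reconciling}, so in the worst case a citation suffices, and I include the short argument only for self-containedness. The two points that demand care are (i) the non-$\F_n$-measurability of $\epsilon_n$, handled by the tower-property computation above, and (ii) the fact that $\overset{d,p}{\to}$ only provides pointwise-in-probability convergence at continuity points of $F_W$, so the shifts $t\pm\delta$ must be chosen to avoid the atoms of $W$; the standard $\varepsilon$--$\delta$ manipulation is then needed to deduce convergence in probability of the middle term of a sandwich whose two outer bounds converge in probability.
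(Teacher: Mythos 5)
Your proof is correct. Because the paper does not supply its own argument for this lemma---it is stated purely as a citation to Theorem~6 of \citet{niu2022reconciling}---there is no internal proof to compare against, and your derivation serves as a legitimate self-contained substitute. The two ingredients you isolate are exactly what is needed: the tower-property-plus-Markov step correctly converts $\epsilon_n\convp 0$ into $\P[|\epsilon_n|>\delta\mid\F_n]\convp 0$ without requiring $\epsilon_n$ to be $\F_n$-measurable, and the event-inclusion sandwich, specialized to $\delta$ chosen so that $t\pm\delta$ avoid the (at most countably many) atoms of $W$, reduces the claim to the hypothesis $W_n\mid\F_n\overset{d,p}{\to}W$ applied at $t\pm\delta$. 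The concluding $\varepsilon$--$\delta$ manipulation to pass from convergence in probability of the outer bounds to convergence in probability of the squeezed middle term is routine and you flag it appropriately. This is the natural adaptation of the classical Slutsky lemma to the conditional-in-probability convergence mode, and it is how one would expect \citet{niu2022reconciling} to prove it.
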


\begin{lemma}[Marcinkiewicz-Zygmund strong law of large numbers, Theorem 2.5.12 and Theorem 2.5.13, \citet{durrett2019probability}]\label{lemma:mz-slln}
	Let $\{X_i\}_{i=1}^n$ be i.i.d. random variables with $\E[|X_i|^p] < \infty$ for some $0 < p < 2$, then
	\begin{align*}
		&n^{-1/p} \sum_{i=1}^n (X_i - \E[X_i]) \to 0 \quad \text{almost surely if }\enspace 1 < p < 2.\\
		&n^{-1/p} \sum_{i=1}^n X_i \overset{a.s.}{\to} 0 \quad \text{almost surely if }\enspace 0 < p < 1.
	\end{align*}
\end{lemma}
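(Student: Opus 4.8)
The plan is to prove the Marcinkiewicz--Zygmund strong law by the classical truncation argument, combining the Borel--Cantelli lemma, Kolmogorov's one-series theorem, and Kronecker's lemma; the i.i.d.\ structure is used both for the independence required by the one-series theorem and for the moment estimates below. Throughout write $b_n = n^{1/p} \uparrow \infty$ and set $Y_k = X_k\,\indicator\{|X_k| \le k^{1/p}\}$. \textbf{First}, I would show the truncation is harmless: from the elementary bound $\sum_{k\ge 1}\P(|X_1|^p \ge k) \le \E|X_1|^p < \infty$ one gets $\sum_{k\ge 1}\P(X_k \ne Y_k) = \sum_{k\ge 1}\P(|X_1| > k^{1/p}) < \infty$, so by Borel--Cantelli $X_k = Y_k$ for all large $k$ almost surely, and since finitely many altered summands divided by $b_n$ vanish in the limit, it suffices to prove the conclusion with $X_k$ replaced by $Y_k$.

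\textbf{Second}, I would prove almost sure convergence of the centered random series $\sum_k (Y_k - \E Y_k)/k^{1/p}$. Bounding $\Var(Y_k) \le \E[X_1^2\,\indicator\{|X_1| \le k^{1/p}\}]$ and interchanging the sum over $k$ with the expectation by Tonelli,
\begin{align*}
\sum_{k=1}^\infty \frac{\Var(Y_k)}{k^{2/p}}
&\le \E\!\left[ X_1^2 \sum_{k \,\ge\, |X_1|^p} k^{-2/p} \right] \\
&\lesssim \E\!\left[ X_1^2\,(1 \vee |X_1|^p)^{\,1 - 2/p} \right]
\;\lesssim\; \E|X_1|^p + 1 \;<\; \infty,
\end{align*}
where $2/p > 1$ is used to sum the tail and $2 + p(1 - 2/p) = p$. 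Kolmogorov's one-series theorem then gives the claimed a.s.\ convergence, and Kronecker's lemma yields $b_n^{-1}\sum_{k=1}^n (Y_k - \E Y_k) \to 0$ almost surely.

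\textbf{Third}, I would dispose of the deterministic centering term $b_n^{-1}\sum_{k=1}^n \E Y_k$. For $1 < p < 2$ we have $\E|X_1| < \infty$, hence $\E Y_k - \E X_k = -\E[X_1\,\indicator\{|X_1| > k^{1/p}\}] \to 0$; being an average of terms tending to $0$ and scaled by $b_n^{-1}$ with $1 - 1/p < 0$, this forces $b_n^{-1}\sum_{k=1}^n(\E Y_k - \E X_k) \to 0$, and combined with the first two steps gives $b_n^{-1}\sum_{k=1}^n (X_k - \E X_k) \to 0$ a.s. For $0 < p < 1$ I would instead show $\E Y_k = o(k^{1/p - 1})$ by splitting the truncation at level $\varepsilon k^{1/p}$: the inner part is controlled by $|x|\,\indicator\{|x| \le \varepsilon k^{1/p}\} \le \varepsilon^{1-p} k^{1/p - 1}|x|^p$, the outer part by a tail Markov inequality together with $\E[|X_1|^p\,\indicator\{|X_1|^p > \varepsilon^p k\}] \to 0$, and then $\varepsilon \to 0$. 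This gives $\sum_{k=1}^n \E Y_k = o(n^{1/p})$, so $b_n^{-1}\sum_{k=1}^n Y_k \to 0$, and with the first step $b_n^{-1}\sum_{k=1}^n X_k \to 0$ a.s.

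\textbf{Main obstacle.} Steps one and two are routine bookkeeping once the moment condition is exploited through Tonelli. The genuinely delicate point is the centering estimate in the third step for $0 < p < 1$: the crude Markov bound only yields $\E Y_k = O(k^{1/p - 1})$, which merely keeps $b_n^{-1}\sum_{k\le n}\E Y_k$ bounded rather than vanishing, so one really needs the refinement $\E Y_k = o(k^{1/p-1})$ obtained via the $\varepsilon$-split.
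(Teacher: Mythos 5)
This lemma is not proved in the paper at all; it is cited verbatim from Durrett (Theorems 2.5.12 and 2.5.13), so there is no ``paper proof'' to compare against. Your blind proof follows the same standard truncation route that Durrett uses (truncate at $k^{1/p}$, Borel--Cantelli, Kolmogorov one-series plus Kronecker, then control the deterministic centering), and Steps 1 and 2 are correct as written.

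There is, however, a genuine error in Step 3 for the case $1 < p < 2$. You assert that $1 - 1/p < 0$, but for $1 < p < 2$ one has $1/p \in (1/2,1)$ and hence $1 - 1/p \in (0,1/2)$, which is \emph{positive}. Consequently the Ces\`aro argument you invoke does not close the gap: from $c_k := \E Y_k - \E X_k \to 0$ you only get $\sum_{k \le n} c_k = o(n)$, so $b_n^{-1}\sum_{k\le n} c_k = o(n^{1-1/p})$, and since $n^{1-1/p} \to \infty$ this gives nothing. The correct argument is exactly the kind of refinement you deploy (and correctly flag as the crux) in the $0 < p < 1$ case: on $\{|X_1| > k^{1/p}\}$, using $1-p<0$, one has $|X_1| = |X_1|^p |X_1|^{1-p} \le |X_1|^p \, k^{(1-p)/p}$, so $|c_k| \le k^{1/p-1}\,\E\bigl[|X_1|^p\,\indicator\{|X_1|>k^{1/p}\}\bigr] = o(k^{1/p-1})$ by dominated convergence, and then a weighted Ces\`aro summation gives $\sum_{k\le n}|c_k| = o\bigl(\sum_{k\le n} k^{1/p-1}\bigr) = o(n^{1/p})$, noting $1/p - 1 > -1$. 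In other words, the $1<p<2$ centering step is \emph{not} the easy case you believe it to be; it requires the same $o(k^{1/p-1})$ estimate as the $0<p<1$ case, and your ``main obstacle'' paragraph has the asymmetry between the two regimes backwards.
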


\section{Additional results on experiments}\label{sec:additional-experiments}
\subsection{Testing normality}\label{sec:additional-gaussianity}
We provide detailed experimental setups on the normality test in Section~\ref{sec:power_comparison} here.
We consider the following four different distributions to generate data:
\begin{enumerate}
	\item Gaussian distribution $\N(\mu, 1)$, which is under $H_0$ for any $\mu \in \R$.
	\item Non-centered Student-$t$ distribution with degree of freedom $\nu$ and mean shift $\mu = 10/(\nu + 1)$. The density function is given by $p(x) \propto (1 + (x - \mu)^2/\nu)^{-\nu + 1/2}$, $x \in \R$.
	\item Mixture of two Gaussians, i.e., $w\N(0, 1) + (1-w)\N(\delta, (1 + \delta)^2)$. 
	\item Non-centered generalized $\chi^2$ distribution with power parameter $\alpha$ and a mean shift, where the density function is given by $p(x) \propto x^{\alpha - 1} \exp(-(x - 1)^2/2)$, $x \in \R$.
\end{enumerate}

\subsection{Order detection for the kernel exponential family}\label{sec:additional-kef}
We provide additional details on the order detection for the kernel exponential family in Section~\ref{sec:kef-experiment}.
The null models and data generating processes in the two experiments are set as follows:
\begin{enumerate}
	\item In the first setting, we set $r = 2$ and $\theta_1 = 10$, $\theta_2$ is varied from $-4$ to $4$. The null model is the kernel exponential family with $r = 1$ and $\theta_1 \in \R$.
	Thus, when $\theta_2 = 0$, the data generating distribution is from the null model, otherwise it is from the alternative model.
	\item In the second setting, we set $r = 5$ to generate data. The parameters are varied for different instances and $\theta_2, \ldots, \theta_5$ are randomly generated from $\mathrm{Unif}[-10,10]$. 
	We aim to evaluate the power of the SKSD test when the size of null model increases.
\end{enumerate}
To sample from the kernel exponential family, we apply metroplis-adjusted Langevin algorithm (MALA) with standard Gaussian as the proposal distribution.
The burn-in period is set as $10^4$ iterations, and we collect samples every $20$ iterations afterwards.

For our choice of Gaussian kernel in the kernel exponential density family, the basis functions $\{\phi_\ell\}_{\ell=1}^\infty$ can be explicitly expressed as
\begin{align*}
	\phi_\ell(x) = \frac{x^\ell}{\sqrt{\ell!}} \exp\left(-\frac{x^2}{2\sigma^2}\right), \quad \ell = 1, 2,\ldots,
\end{align*}
which allows us to compute the score function $\nabla_x \log p_\theta(x)$ in closed form.

\subsection{Conditional Gaussian graphical model}\label{sec:additional-conditional-gaussian}
In this section, we provide additional details on the conditional Gaussian graphical model experiment in Section~\ref{sec:graphical-model-experiment}.
The data generating process is set as follows:
	for different $d$, we set $\gamma^{(1)}_l = 2$ and $\gamma^{(2)}_k = -0.5$.
	For the interaction terms, we consider the 1D grid structure with reach $r = 2$:
	\begin{align*}
		\Sigma_{ij} &= \begin{cases}
			-w_i, & i - j = 1 \quad \mathrm{mod}\ d,\\
			-w_j, & i - j = d - 1 \quad \mathrm{mod}\ d,\\
			-10^{-2}\varepsilon, & |i - j| = 2 \quad \mathrm{mod}\ d,\\
			0, & \text{otherwise},
		\end{cases}
	\end{align*}
	where $w = (w_1, \ldots, w_d)$ indicates the interaction magnitude between adjacent nodes in the graph.
	We consider two different settings for $w$:
	\begin{enumerate}
		\item For the first experiment, we fix $d = 8$ and independently sample $w_i \sim \mathrm{Unif}([2 , 2 + d/2])$ for $i \in [d]$.
		We vary $\varepsilon$ from $0$ to $1.5$ to generate data from both null and alternative models, where the null model corresponds to $\varepsilon = 0$.
		\item For the second experiment, we set $w_i = 2$ for $i \in [d]$, $\varepsilon = 0.5$, and vary $d \in \mathbb{Z}^{+}$ to generate data from different distributions.
		Here, all the data generating distributions are from alternative models.
	\end{enumerate}
	Throughout both experiments, we take $\gamma^{(1)}$ and $\gamma^{(2)}$ as fixed, known parameters.
	Gibbs sampling is used to sample from the model for both data generation and bootstrap procedure.
	We set the burn-in period as $10^4$ iterations and collect samples every $20$ iterations afterwards.

For completeness of the properties of this model, we establish the conditional distributions for the conditional Gaussian graphical model. For $i = 1, \ldots, d$, we have
\begin{align*}
	&p_\theta(x^{(i)}, x^{(-i)}) = \frac{1}{Z}\exp\left(\sum_{1\le i\neq j\le d}\Sigma_{ij} (x^{(i)})^2 (x^{(j)})^2 + \sum_{k=1}^d \gamma_k^{(2)}(x^{(k)})^2 + \sum_{l=1}^d \gamma_\ell^{(1)}x^{(\ell)}\right)\\
	&= \frac{1}{Z}\exp\Bigg(\left(\sum_{j\neq i}2\Sigma_{ij}(x^{(j)})^2 + \gamma_i^{(2)}\right)(x^{(i)})^2 + \gamma_i^{(1)}x^{(i)}\\
	&\qquad \qquad \qquad + \sum_{\substack{1\le i'\neq j'\le d,\\ i'\neq i, j'\neq i}}\Sigma_{ij} (x^{(i')})^2 (x^{(j')})^2 + \sum_{\substack{1\le k \le d,\\k \neq i}} \gamma_k^{(2)}(x^{(k)})^2 + \sum_{\substack{1\le \ell \le d,\\\ell \neq i}} \gamma_\ell^{(1)}x^{(\ell)}\Bigg),
\end{align*}
for some normalizing constant $Z$.
By Bayes' rule, we have
\begin{align*}
	&p_\theta(x^{(i)}|x^{(-i)}) \propto \exp\left(\left(\sum_{j\neq i}2\Sigma_{ij}(x^{(j)})^2 + \gamma_i^{(2)}\right)(x^{(i)})^2 + \gamma_i^{(1)}x^{(i)}\right)\\
	&\Leftrightarrow X^{(i)} | X^{(-i)} \sim \N\left(-\frac{\gamma_i^{(1)}}{2\left(\sum_{j\neq i}2\Sigma_{ij}(x^{(j)})^2 + \gamma_i^{(2)}\right)}, -\frac{1}{2\left(\sum_{j\neq i}2\Sigma_{ij}(x^{(j)})^2 + \gamma_i^{(2)}\right)}\right),
\end{align*}
which gives the conditional distribution of $X^{(i)}$ given $X^{(-i)}$.

When $\Sigma_{ij} < 0$ and $\gamma_i^{(2)} < 0$, we can check that
\begin{align*}
	&Z = \int_{\R^d} \exp\left(\sum_{1\le i\neq j\le d}\Sigma_{ij} (x^{(i)})^2 (x^{(j)})^2 + \sum_{k=1}^d \gamma_k^{(2)}(x^{(k)})^2 + \sum_{l=1}^d \gamma_\ell^{(1)}x^{(\ell)}\right) \d x\\
	&< \int_{\R^d} \exp\left(\sum_{k=1}^d \gamma_k^{(2)}(x^{(k)})^2 + \sum_{l=1}^d \gamma_\ell^{(1)}x^{(\ell)}\right) \d x \\
	&= \prod_{k=1}^d \int_{\R} \exp\left(\gamma_k^{(2)}(x^{(k)})^2 + \gamma_k^{(1)}x^{(k)}\right) \d x^{(k)}\\
	&= \prod_{k=1}^d \sqrt{\frac{\pi}{-\gamma_k^{(2)}}}\exp\left(-\frac{(\gamma_k^{(1)})^2}{4\gamma_k^{(2)}}\right) < \infty,
\end{align*} 
which confirms that the density $p_\theta$ is well-defined.
When $\Sigma_{ij} > 0$ for some $j \neq i$, the density $p_\theta(x^{(i)} \mid x^{(-i)})$ is not integrable when $x^{(j)} \to \infty$, thus $p_\theta$ is not a valid density function.
Similar argument applies to the case when $\gamma_i^{(2)} > 0$, where the density $p_\theta(x^{(i)} \mid x^{(-i)})$ is not integrable when $x^{(i)} \to 0$.
Thus, when $\Sigma_{ij} > 0$ for any $i \neq j$ or $\gamma_i^{(2)} > 0$ for some $i$, the density $p_\theta$ is not well-defined.

In this setting, both the minimum-KSD and the implicit score matching estimators admit closed-form solutions, which are given by the following:
Denote
\begin{align*}
	t(x) &= \mathrm{vec}\left((x^{(i)} x^{(j)})^2\right)_{1\le i \le j \le d} \in \R^{d^2},\\
	b(x) &= \sum_{k=1}^d \gamma_k^{(2)}(x^{(k)})^2 + \sum_{l=1}^d \gamma_\ell^{(1)}x^{(\ell)} \in \R,
\end{align*} 
where $\mathrm{vec}(\cdot)$ is the vectorization operator that stacks the columns of a matrix into a vector, and $b(x) = \sum_{k=1}^d \gamma_k^{(2)}(x^{(k)})^2 + \sum_{l=1}^d \gamma_l^{(1)}x^{(l)} \in \R$.
Then, the estimators are given by
\begin{align*}
	\mathrm{vec}(\hat{\Sigma}_{\mathrm{KSD}}) &= -\left(\sum_{i,j=1}^n K(x_i, x_j)\nabla t(x_i) \nabla t(x_j)^\top\right)^{-1} \\
	&\qquad \qquad \left(\sum_{i,j=1}^n K(x_i,x_j)\nabla t(x_i) \nabla b(x_j)^\top + \nabla t(x_i) \nabla_2 K(x_i,x_j)^\top\right),\\
	\mathrm{vec}(\hat{\Sigma}_{\mathrm{SM}}) &= -   \left(\sum_{i=1}^n \nabla t(x_i) \nabla t(x_i)^\top\right)^{-1}\left(\sum_{i=1}^n \Delta t(x_i) + \nabla t(x_i)\nabla b(x_i)^\top\right),
\end{align*}
where $\nabla t(x)$, $\Delta t(x)$ are the gradient and Laplacian of $t(x)$, respectively, and $\nabla_2 K(x,x')$ is the gradient of $K(x,x')$ with respect to the second argument $x'$.

Since the model is a little more complex, we validate the assumptions for this model. 
We first choose some convex and open set $\Theta \subseteq \R^{d^2 + 2d}$ with compact closure such that it is large enough and contains the true parameter $\Sigma$.
To ensure Assumption~\ref{assu:regularity_Theta}, we project our estimators onto the set $\Theta$ if the estimators fall outside of $\Theta$.
In numerical experiments, the estimators almost surely fall inside of $\Theta$ when $n$ is large enough, so the projection step does not change the estimators in practice.
The score function for this model is given by
\begin{align*}
	s_\theta(x) &= \nabla_x \log p_\theta(x) \\
	&= \begin{pmatrix}
		\sum_{j\neq 1}4\Sigma_{1j} x^{(1)} (x^{(j)})^2 + 2\theta_1^{(2)} x^{(1)} + \theta_1^{(1)}\\
		\sum_{j\neq 2}4\Sigma_{2j} x^{(2)} (x^{(j)})^2 + 2\theta_2^{(2)} x^{(2)} + \theta_2^{(1)}\\
		\vdots\\
		\sum_{j\neq d}4\Sigma_{dj} x^{(d)} (x^{(j)})^2 + 2\theta_d^{(2)} x^{(d)} + \theta_d^{(1)}
	\end{pmatrix}.
\end{align*}
We then verify Assumption~\ref{ass:p} by checking the moment conditions on $s_\theta(x)$ and its derivatives.
\begin{align*}
	&\E_{X \sim p_\theta} \left[\|s_\theta(X)\|^2\right]\\
	&= \int_{\R^d} \|s_\theta(x)\|^2 p_\theta(x) \, \d x \\
	&= \int_{\R^d} \left(\sum_{i=1}^d \left(\sum_{j\neq i}4\Sigma_{ij} x^{(i)} (x^{(j)})^2 + 2\theta_i^{(2)} x^{(i)} + \theta_i^{(1)}\right)^2\right) p_\theta(x) \, \d x\\
	&= \int_{\R^d} \left(\sum_{i=1}^d \left(\sum_{j\neq i}4\Sigma_{ij} x^{(i)} (x^{(j)})^2 + 2\theta_i^{(2)} x^{(i)} + \theta_i^{(1)}\right)^2\right)\\
	&\qquad \qquad \cdot \frac{1}{Z}\exp\left(\sum_{1\le i\neq j\le d}\Sigma_{ij} (x^{(i)})^2 (x^{(j)})^2 + \sum_{k=1}^d \theta_k^{(2)}(x^{(k)})^2 + \sum_{l=1}^d \theta_\ell^{(1)}x^{(\ell)}\right) \, \d x
\end{align*}
For fixed $x \in \R^d$, the integrand function is continuous in $\Sigma_{ij}$ for all $1 \leq i \neq j \leq d$.
Moreover, the partial derivatives of the integrand with respect to $\Sigma_{ij}$ exist and its $L_1$ norm is uniformly bounded, which can be verified in a similar way as the proof of well-definedness of $p_\theta$ above.
Thus, by Fubini's theorem, $\E_{X \sim p_\theta} \left[\|s_\theta(X)\|^2\right]$ is continuous in $\theta$.
Since the closure of the parameter space $\Theta$ is compact, $\E_{X \sim p_\theta} \left[\|s_\theta(X)\|^2\right]$ is uniformly bounded in the parameter space $\Theta$.
Similar argument applies to the other moment conditions in Assumption~\ref{ass:p}, thus we conclude that Assumption~\ref{ass:p} holds.
The Assumption~\ref{ass:k} holds since we choose Gaussian kernel in this task, so we omit the details.

\section{Local power validation}\label{sec:local_power_validation}

In Section~\ref{sec:asymptotic_power}, we established the theory for asymptotic distribution of the SKSD tests under both multiplicative and additive local alternative hypotheses.
In this section, we conduct simulations to evaluate the finite-sample performance of our tests under local alternatives.

\paragraph{Experiment setup.}
In this experiment, the null model $H_0$ is the Gaussian model $\N(\mu, \sigma^2)$, where $\mu \in \R$ and $\sigma^2 \in \R_{++}$ are the mean and variance of the Gaussian distribution, respectively.
Two families of data generation processes (DGPs) are considered under the alternative hypothesis $H_1$:
\begin{enumerate}
	\item For the multiplicative local alternatives, we consider the generalized $\chi^2$ distribution with power parameter $\alpha$, where the density function is given by $p(x) \propto x^{\alpha - 1} \exp(-x^2/2)$, $x \in \R$.
	\item For the additive local alternatives, we consider the mixture of Gaussian distribution $w \cdot \N(2, 3) + (1-w) \cdot \N(0,1)$, where $w \in (0,1)$ is the mixing weight.
\end{enumerate}
The local alternatives degenerate to the null model when $\alpha \to 1$ for the multiplicative case and $w \to 0$ or $1$ for the additive case.
In the experiment, we start with the degenerated case mentioned above, and then vary the value of $\alpha$ and $w$ to evaluate the local power of the tests.

We perform the tests with level to be $0.05$, and calibrated the $95\%$-cutoff for all four test statistics using the parametric bootstrap procedure with $B = 300$ time resamples and choose the resample size to be $n$ in order to balance the computation cost and power.
The testing Type-I error and power are estimated based on $500$ replications.
We use this bootstrap test setting by default for the following experiments.

\paragraph{Testing Methods.}
The estimation and testing procedures used in this experiment:
\begin{enumerate}
	\item \textbf{Estimation}: minimum-KSD estimator and MLE are used to estimate the parameters of the null model $\N(\mu, \sigma^2)$.
	\item \textbf{Testing}: We consider the SKSD test with $95\%$-parametric bootstrap cutoff.
\end{enumerate}

\begin{figure}[H]
	\begin{subfigure}{0.45\textwidth}
		\centering
		\includegraphics[width=\textwidth]{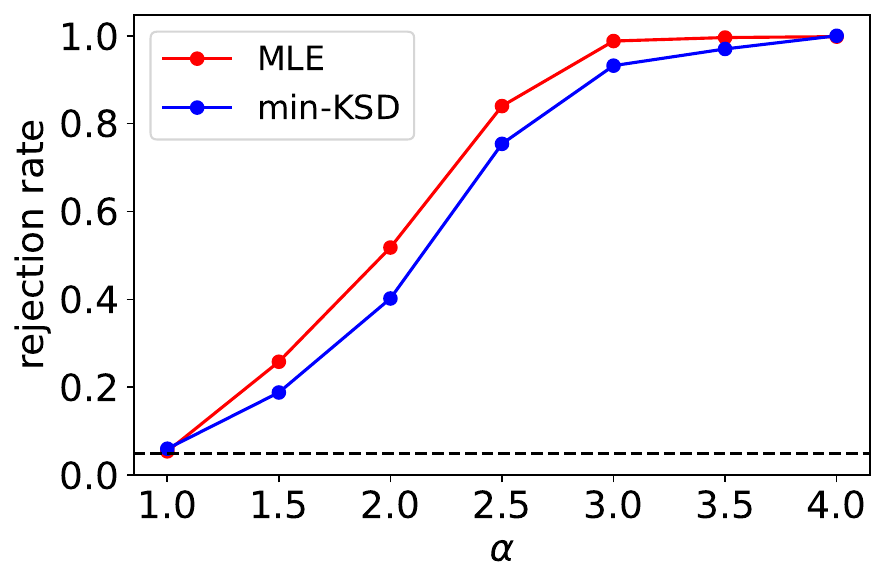}
		\label{fig:mul-local-alternatives}
	\end{subfigure}
	\hfill
	\begin{subfigure}{0.45\textwidth}
		\centering
		\includegraphics[width=\textwidth]{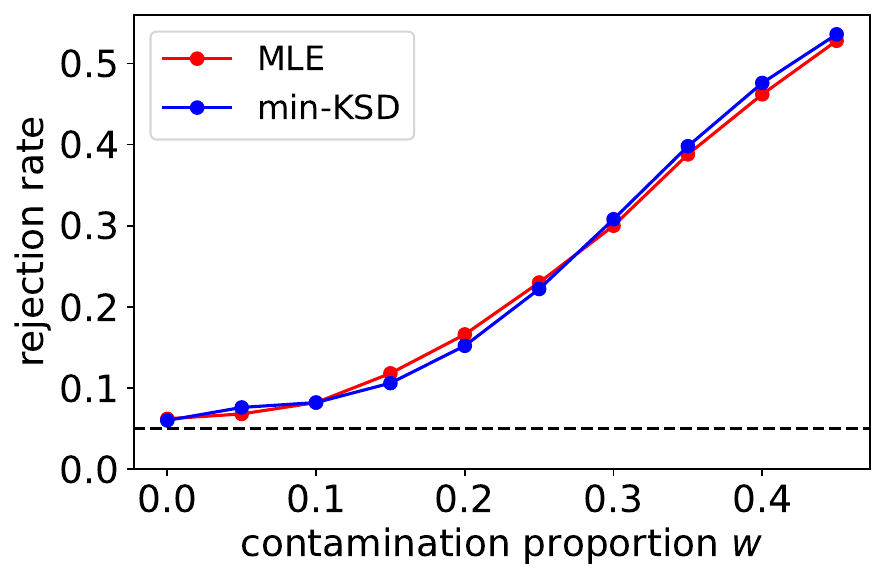}
		\label{fig:add-local-alternatives}
	\end{subfigure}
	\caption{Power curves of SKSD goodness-of-fit tests under local alternatives with different estimation procedures.
	The null model is the Gaussian model $\N(\mu, \sigma^2)$
	Left panel: multiplicative local alternatives; 
	Right panel: additive local alternatives.
	The red dash line indicates the test level.}
	\label{fig:local-alternatives}
\end{figure}

\paragraph{Results.}
In Figure~\ref{fig:local-alternatives}, we present the power curves of SKSD tests under both multiplicative and additive local alternatives with different estimation procedures.
When the observed data are generated from the null model, all tests control the Type-I error at the nominal level $0.05$.
In the left panel of Figure~\ref{fig:local-alternatives}, we consider the multiplicative local alternatives with data generated from generalized $\chi^2$ distributions.
When $\alpha$ increases from $1$, the observed data become more heavy tailed, thus more distinguishable from the null model, and the power of both tests increases to nearly 1, which is consistent with the theoretical justification.
In the right panel of Figure~\ref{fig:local-alternatives}, we consider the additive local alternatives with data generated from the mixture of Gaussian distributions.
When $w \in (0, 1/2)$ increases, the observed data become more apparently multi-modal, thus the power of both tests increases, which again validates our theoretical results.
In addition, we can see that the SKSD test with MLE has a slightly larger power than the one with minimum-KSD estimator in the multiplicative case, which confirms the influence of the estimation procedure on the power of the tests.

\section{Power comparison across score-based tests}\label{sec:simulation_np_score}
In Section~\ref{sec:bridge_score_ipm}, we discussed the connection between score-based tests and distance-based tests, especially the IPM-based tests.
We focus on SKSD tests since it can be viewed as a special case of IPM-based tests with computation efficiency under certain circumstances.
However, the finite-sample power comparison between SKSD tests and other distance-based tests is still under-explored.
In this section, we compare the power of our proposed goodness-of-fit tests with different distance-based matrices for composite null hypothesis.

\paragraph{Experiment setup.}
In this experiment, the null model $H_0$ is the Gaussian model $\N(\mu, \sigma^2)$.
Two families of data generation processes are considered under the alternative hypothesis $H_1$:
\begin{enumerate}
	\item The Student-$t$ distribution with degree of freedom $\nu$, where the density function is given by $p(x) \propto (1 + x^2/\nu)^{-\nu + 1/2}$, $x \in \R$.
	\item The generalized $\chi^2$ distribution with power parameter $\alpha$, where the density function is given by $p(x) \propto x^{\alpha - 1} \exp(-x^2/2)$, $x \in \R$.
\end{enumerate}

Both Student-$t$ and generalized $\chi^2$ distributions are generalizations of the Gaussian family, thus under alternative hypotheses.
They degenerate to the null model when $\nu \to \infty$ ($1/\nu \to 0$) and $\alpha \to 1$.
In the experiment, we start with the degenerated case mentioned above, and then vary the value of $\nu$ and $\alpha$ to evaluate the power of the goodness-of-fit tests under the alternative hypotheses.
For each DGP, we generate $n = 100$ samples from the distribution.

\paragraph{Testing Methods.}
We apply our goodness-of-fit test framework for this experiment setup. Our semiparametric testing consists of two steps, estimation and testing:
\begin{enumerate}
	\item \textbf{Estimation}: For both DGPs, we estimate the parameters of the null model $\N(\mu, \sigma^2)$ using the maximum likelihood estimator (MLE).
	\item \textbf{Testing}: We consider the following distance-based matrices for the goodness-of-fit tests: (1) Kolmogorov-Smirnov distance, (2) Wasserstein-$1$ distance, (3) Maximum Mean Discrepancy (MMD), and (4) Kernel Stein discrepancy (KSD).
\end{enumerate}

\begin{figure}[H]
	\centering
	\begin{subfigure}{0.45\textwidth}
	  \centering
	  \includegraphics[width=\textwidth]{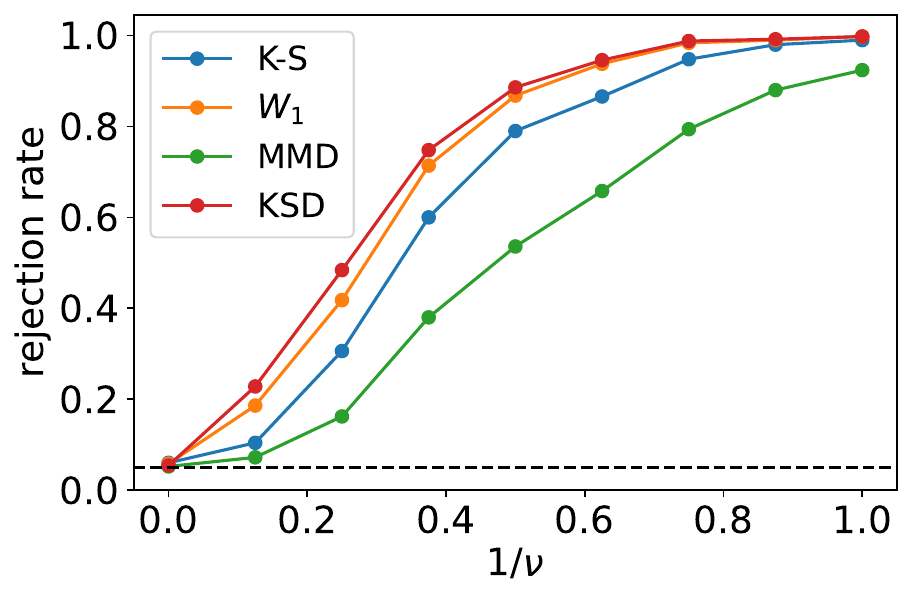}
	  \label{fig:metric-diff1}
	\end{subfigure}
	\hfill
	\begin{subfigure}{0.45\textwidth}
	  \centering
	  \includegraphics[width=\textwidth]{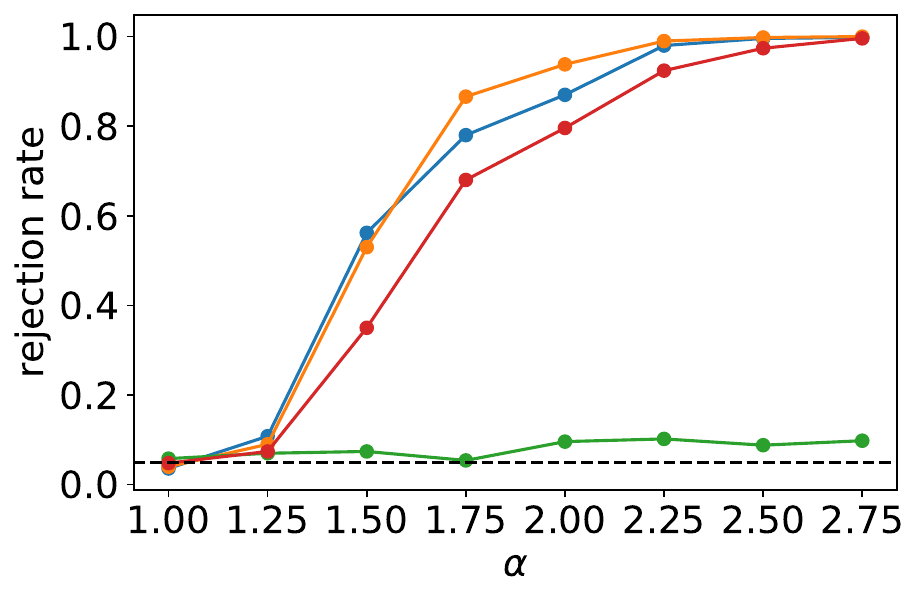}
	  \label{fig:metric-diff2}
	\end{subfigure}
	\caption{Power curves of SKSD Gaussian goodness-of-fit tests with different distance-based matrices. The left panel uses the Student-$t$ distribution with degree of freedom $\nu$, while the right panel uses the generalized $\chi^2$ distribution with power parameter $\alpha$. The red dash line indicates the test level.}
	\label{fig:metric-diff}
\end{figure}

\paragraph{Results.}
In the first case with data generated from Student-$t$ distribution, when $1/\nu$ goes larger, the observed data become more heavy tailed, thus more distinguishable from a Gaussian.
Therefore, we expect the tests to have larger power against larger $1/\nu$.
In Figure~\ref{fig:metric-diff}, we can see for all four matrices, the Type-I error is controlled at the nominal level $0.05$ under the null hypothesis.
When $1/\nu$ increases, the power of all the tests increases to nearly 1, so all the tests are consistent against this alternative hypothesis.
Among the four distance-based matrices, the SKSD test has the largest power, followed by $W_1$ distance, and K-S distance, while the MMD-based test shows a significant gap in power curves.

Next, we consider the case when observed data are generalized $\chi^2$ distribution with power parameter $\alpha$.
Similarly to the previous case, we expect the power of the tests to increase as $\alpha$ goes larger.
Although the Type-I error is controlled at level $0.05$ for all four matrices, the MMD test does not show a consistent power curve as $\alpha$ increases.
The other three tests with KSD, $W_1$, and K-S distance, show a similar consistent power curve, while the $W_1$ distance test has the largest power, followed by K-S distance and SKSD tests.

In summary, we can see that the KSD, $W_1$ and K-S distance-based tests show comparable power against the both two alternative hypotheses, while the MMD-based test is not consistently powerful in our settings.
Taking the computation cost and intractability of other tests in higher dimensionality into account, the SKSD test is the most competitive candidate among the four distance-based matrices.

\end{document}